\newtheorem{Lemma}{Lemma}
\newtheorem{proposition}{Proposition}
\newtheorem{definition}{Definition}
\newtheorem{Theorem}{Theorem}
\newtheorem{Corollary}{Corollary}
\newtheorem{assumption}{Assumption}
\newtheorem{remark}{Remark}
\newtheorem*{Lemma*}{Lemma}
\newtheorem*{Theorem*}{Theorem}
\newtheorem*{Corollary*}{Corollary}
\newcommand{\beq}{\begin{equation}}
\newcommand{\eeq}{\end{equation}}
\newcommand{\del}{\Delta}
\newcommand{\eqdef}{\mathrel{\mathop:}=}
\def\EE{\mathbb{E}}
\newcommand{\norm}[1]{\left\Vert #1 \right\Vert}
\begin{document}
\etocdepthtag.toc{mtchapter}
\etocsettagdepth{mtchapter}{subsection}
\etocsettagdepth{mtappendix}{none}

\title{\bf Analysis of Error Feedback in Federated\\ Non-Convex Optimization with Biased Compression}

\author{\vspace{0.3in}\\\\
\textbf{Xiaoyun Li}, \ \textbf{Ping Li} \\\\
LinkedIn Ads\\
700 Bellevue Way NE, Bellevue, WA 98004, USA\\\\
  \texttt{\{lixiaoyun996,pingli98\}@gmail.com}
}

\date{\vspace{0.5in}}
\maketitle

\begin{abstract}\vspace{0.1in}
\noindent In practical federated learning (FL) systems, e.g., wireless networks, the communication cost between the clients and the central server can often be a bottleneck. To reduce the communication cost, the paradigm of communication compression has become a popular strategy in the literature.  In this paper, we focus on  biased gradient compression techniques in non-convex FL problems. In the classical setting of distributed learning, the method of error feedback (EF) is a common technique to remedy the downsides of biased gradient compression. In this work, we study a compressed FL scheme equipped with error feedback, named Fed-EF. We further propose  two variants: Fed-EF-SGD and Fed-EF-AMS, depending on the choice of the global model optimizer.   We provide a generic theoretical analysis, which shows that directly applying biased compression in FL leads to a non-vanishing bias in the convergence rate. The proposed Fed-EF is able to match the convergence rate of the full-precision FL counterparts under data heterogeneity with a linear speedup w.r.t. the number of clients. Experiments are provided to empirically verify that Fed-EF achieves the same performance as the full-precision FL approach, at the substantially reduced communication cost.

\vspace{0.2in}

\noindent Moreover, we develop a new analysis of the EF under partial client participation, which is an important scenario in FL. We prove that under partial participation, the convergence rate of Fed-EF exhibits an extra slow-down factor due to a so-called ``stale error compensation'' effect. A numerical study is conducted to justify the intuitive impact of stale error accumulation on the norm convergence of Fed-EF under partial participation. Finally, we also demonstrate that incorporating the two-way compression in Fed-EF does not change the convergence results.

\vspace{0.2in}

\noindent In summary, our work conducts a thorough analysis of the error feedback in federated non-convex optimization. Our analysis with partial client participation also provides insights on a theoretical limitation of the error feedback mechanism, and possible directions for improvements.
\end{abstract}

\newpage

\section{Introduction}\label{sec:introduction}

The general framework of federated learning (FL) has seen numerous applications in, e.g., wireless communications (e.g., 5G or 6G),  Internet of Things (IoT), sensor networks, input method editor (IME), advertising, online visual object detection,  public health records~\citep{hard2018federated,yang2019federated,liu2020fedvision,rieke2020future,kairouz2021advances,khan2021federated,zhao2022communication_arxiv}. A centralized FL system includes multiple clients each with local data, and one central server that coordinates the training process. The goal of FL is for $n$ clients to collaboratively find a global model, parameterized by $\theta$, such that
\begin{align}\label{eq:opt}
\theta^*=\arg\min_{\theta\in\mathbb R^d} f(\theta) \eqdef \arg\min_{\theta\in\mathbb R^d} \frac{1}{n} \sum_{i=1}^n f_i(\theta),
\end{align}
where $f_i(\theta)\eqdef \mathbb E_{D\sim \mathcal D_i}\big[F_i(\theta;D)\big]$ is an (in general) non-convex loss function for the $i$-th client w.r.t. the local data distribution $\mathcal D_i$. In a typical FL system design, in each training round, the server first broadcasts the model to the clients. Then, each client trains the model based on the local data, after which the updated local models are transmitted back to the server and aggregated~\citep{mcmahan2017communication,stich2019local,chen2020toward}. The number of clients, $n$, can be tens/hundreds in some applications, e.g., \textit{cross-silo} FL~\citep{marfoq2020throughput,huang2021personalized} where clients are companies/organizations. In other scenarios, $n$ can be as large as millions or even billions, e.g.,  \textit{cross-device} FL~\citep{karimireddy2021breaking,khan2021federated}, where clients can be personal/IoT devices.

There are two primary benefits from using FL: (i) the clients train the model simultaneously, which is efficient in terms of computational resources; (ii) each client's data are kept local throughout training and are not required to be transmitted to other parties, which promotes data privacy. However, the efficiency and broad application scenarios also brings challenges for FL method design:
\begin{itemize}
    \item \textbf{Communication cost:} In FL algorithms, clients are allowed to conduct multiple training steps (e.g., local SGD updates) in each round. While this has reduced the communication frequency, the one-time communication cost is still a challenge in FL systems with limited bandwidth (e.g., portable devices at the wireless network edges), which has gained growing research interest in wireless/satellite  communication~\citep{amiri2020federated,niknam2020federated,yang2020federated,yang2020scheduling,khan2021federated,yang2021federated_6G,chen2022satellite}. In particular, as modern machine learning models (e.g, deep neural networks) often contain millions or billions of model parameters, it has been an emergent task to develop efficient algorithms for  transmitting the model/gradient information.

    \item \textbf{Data heterogeneity:} Unlike in the classical distributed training, the local data distribution in FL ($\mathcal D_i$ in (\ref{eq:opt})) can be different (non-iid), reflecting the typical practical scenarios where the local data held by different clients (e.g., app/website users) are highly personalized~\citep{zhao2018federated,kairouz2021advances,li2022federated_icde}. When multiple local training steps are taken, the local models could become ``biased'' towards minimizing the local losses, instead of the global loss. This phenomenon of data heterogeneity may hinder the global model to quickly converge to a good solution~\citep{mohri2018agnostic,li2020convergence,zhao2018federated,li2020federated}.

    \item \textbf{Partial participation (PP):} Another practical issue of FL systems is the partial participation (PP) where the clients do not join training consistently, e.g., due to unstable connection, user change or active selection~\citep{li2020federated}. That is, only a fraction of clients are involved in each FL training round to update their local models and send the local update information. This may also slow down the convergence of the global model, intuitively because less data/information is used per round~\citep{charles2021large,cho2022towards}.
\end{itemize}

\vspace{0.1in}

\noindent\textbf{FL with compression.} To overcome the main challenge of communication bottleneck, several works have analyzed FL algorithms with compressed message passing. Examples include FedPaQ~\citep{reisizadeh2020fedpaq}, FedCOM~\citep{haddadpour2021federated} and FedZip~\citep{malekijoo2021fedzip}. These algorithms are built upon directly compressing model updates communicated from clients to server. In particular, \cite{reisizadeh2020fedpaq,haddadpour2021federated} proposed to use unbiased stochastic compressors such as stochastic quantization~\citep{alistarh2017qsgd} and sparsification~\citep{wangni2018gradient}, showing that with considerable communication saving, applying unbiased compression in FL could approach the performance of un-compressed FL algorithms.

\vspace{0.1in}
\noindent\textbf{Error feedback (EF) for distributed training.} Besides unbiased compression, biased compressors are
also commonly used in communication-efficient distributed training~\citep{lin2018deep,beznosikov2020biased}. One simpler and popular type of compressor is the deterministic compressor, including fixed quantization~\citep{dettmers20168}, TopK sparsification~\citep{aji2017sparse,alistarh2018convergence,stich2018sparsified}, SignSGD~\citep{bernstein2018signsgd,bernstein2019signsgd}, etc. For these compressors, the output is a biased estimator of the true gradient. In classical distributed learning literature, it has been observed that directly updating with the biased gradients may slow down the convergence or even lead to divergence~\citep{seide20141,karimireddy2019error,beznosikov2020biased}, through experiments or by counter examples. A popular remedy is the so-called \textit{error feedback (EF)} strategy~\citep{seide20141,stich2018sparsified}: in each iteration, the local worker sends a compressed gradient to the server and records the local compression error, which is subsequently used to adjust the gradient computed in next iteration, conceptually ``correcting the bias'' due to compression. With error feedback, using biased compression in distributed training can achieve the same convergence rate as the full-precision counterparts~\citep{karimireddy2019error,li2022distributed}.

\vspace{0.2in}
\noindent\textbf{Our contributions.} Despite the rich literature on EF in classical distributed training settings, EF has not been well explored in the context of federated learning. In this work, we provide a thorough analysis of EF in FL. In particular, the three key features of FL: local steps, data heterogeneity and partial participation, pose challenging questions regarding the performance of EF in federated learning: \textit{(i) Can EF still achieve the same convergence rate as full-precision FL algorithms, possibly with highly non-iid local data distribution? (ii) How does partial participation change the situation and the results?} In this paper, we present new algorithm and results to address these questions:
\begin{itemize}
    \item We study Fed-EF, an FL framework with biased compression and error feedback, with two variants (Fed-EF-SGD and Fed-EF-AMS) depending on the global optimizer (SGD and adaptive AMSGrad~\citep{reddi2019convergence}, respectively). Our investigation starts with an analysis of directly applying biased compression in FL, showing that it does not converge to zero asymptotically. Then we prove under data heterogeneity, Fed-EF has asymptotic convergence rate $\mathcal O(\frac{1}{\sqrt{TKn}})$ where $T$ is the number of communication rounds, $K$ is the number of local training steps and $n$ is the number of clients. Our new algorithms and analysis achieve linear speedup with respect to the number of clients, improving the previous convergence result~\citep{basu2019qsparse} on error compensated FL (see detailed comparisons in Section~\ref{sec:main}). Moreover, Fed-EF-AMS is the first compressed adaptive FL algorithm in the literature.

    \item Partial participation (PP) has not been studied in the literature of distributed learning with (standard) error feedback. We initiate a new analysis of Fed-EF in this setting, by  considering both local steps and non-iid data situations. We show that under PP, Fed-EF exhibits a slow-down factor of $\sqrt{n/m}$ compared with the best full-precision rate, where $m$ is the number of active clients per round. We name this as the  \textit{``delayed error compensation''} effect.

    \item Experiments are conducted to illustrate the effectiveness of the proposed methods. We show that Fed-EF matches the performance of full-precision FL with a significant reduction in communication cost, and the proposed method compares favorably with  FL algorithms using unbiased compression without EF. Numerical examples are also provided to justify our theory.
\end{itemize}

\vspace{-0.2in}

\section{Background and Related Work}\label{sec:related}

\noindent\textbf{Gradient compression in distributed optimization.} In distributed SGD training systems, extensive works have applied various compression techniques to the communicated gradients. For example, the so-called  ``unbiased compressors'' are commonly used, which include the stochastic rounding and QSGD~\citep{alistarh2017qsgd,zhang2017zipml,wu2018error,liu2020double,xu2021agile}, unbiased sketching~\citep{ivkin2019communication,haddadpour2020fedsketch}, and the magnitude based random sparsification~\citep{wangni2018gradient}. The works~\citep{seide20141,bernstein2018signsgd,bernstein2019signsgd,karimireddy2019error,jin2020stochastic} analyzed communication compression using only the sign (1-bit) information of the gradients. Unbiased compressors can be combined with variance reduction techniques for convergence acceleration; see e.g.,~\cite{gorbunov2021marina}. On the other hand, the ``biased compressors'' are also popular. Common examples are the TopK compressor~\citep{alistarh2018convergence,stich2018sparsified,shi2019convergence,li2022distributed} (which only transmits gradient coordinates with largest magnitudes),  fixed (or learned) quantization~\citep{dettmers20168,zhang2017zipml,yu2018gradiveq,malekijoo2021fedzip}, and low-rank approximation~\citep{vogels2019powersgd}. See~\cite{beznosikov2020biased} for a summary of more biased compressors. Note that our analysis assumes a fairly general compressor which   apply to a wide range of compression schemes.

\vspace{0.1in}
\noindent\textbf{Error feedback (EF).} It has been shown that directly implementing biased compression in distributed SGD may lead to divergence, through empirical observations or counter examples~\citep{seide20141,karimireddy2019error,beznosikov2020biased}. Error feedback (EF) are proposed to fix this issue~\citep{seide20141,stich2018sparsified,karimireddy2019error}. In particular, with EF, distributed SGD under biased compression can match the convergence rate of the full-precision distributed SGD, e.g., also achieving linear speedup ($\mathcal O(1/\sqrt{Tn})$) w.r.t. the number of workers $n$ in distributed SGD~\citep{alistarh2018convergence,jiang2018linear,shen2018towards,stich2019error,zheng2019communication}. Among the limited related literature on adopting EF to FL, the most relevant method is QSparse-local-SGD~\citep{basu2019qsparse}, which can be viewed as a special instance of the proposed Fed-EF-SGD variant (with fixed global learning rate). The analysis of~\citet{basu2019qsparse} did not consider data heterogeneity and partial client participation, and their convergence rate $\mathcal O(1/\sqrt{TK})$ does not achieve linear speedup w.r.t. the number of clients. See Section~\ref{sec:main} and Section~\ref{sec:theory} for detailed comparisons. Recently, \citet{richtarik2021ef21} proposed ``EF21'' as an alternative to the standard EF. \citet{fatkhullin2021ef21} applied EF21 to FL. Our work differs from~\citet{fatkhullin2021ef21} in that we study the standard EF (which is a different algorithm from EF21) and our theoretical analysis results exhibit a linear speedup in the number of clients.

\vspace{0.1in}
\noindent\textbf{Distributed adaptive gradient methods.} Our proposed Fed-EF algorithm, in addition to SGD, also exploits AMSGrad~\citep{reddi2019convergence}, which is an adaptive gradient method widely used in distributed and federated learning~\citep{chen2020toward,karimi2021fed,reddi2021adaptive,li2022distributed,chen2022convergence} and industrial massive-scale CTR prediction models~\citep{zhao2022communication_arxiv}. The core idea of adaptive gradient algorithms is to assign different implicit learning rates to different coordinates adaptively guided by the training trajectory, leading to faster convergence and less effort needed for parameter tuning. Readers are referred to the extensive literature on adaptive gradient methods, e.g., \citep{duchi2011adaptive,zeiler2012adadelta,kingma2015adam,chen2019convergence, chen2020toward, zhou2020towards,reddi2021adaptive,wang2021optimistic}.

\newpage

\section{Fed-EF: Compressed Federated Learning with Error Feedback}\label{sec:main}

\subsection{Biased Compression Operators in Federated Learning}

In this section, we introduce some existing and new deterministic compressors used in our paper which are simple and computational efficient. Throughout the paper, $[n]$ will denote the integer set $\{1, ... ,n\}$. $\|\cdot \|$ denotes the $l_2$ norm and $\|\cdot\|_1$ is the $l_1$ norm.

\begin{definition}[$q_{\mathcal C}$-deviate compressor] \label{def:quant}
The biased $q_{\mathcal C}$-deviate compressor $\mathcal C:\mathbb R^d\mapsto \mathbb R^d$ is defined such that for $\forall x\in\mathbb R^d$, $\exists$ $0\leq q_{\mathcal C} < 1$ s.t. $\norm{\mathcal C(x)-x}^2 \leq q_{\mathcal C}^2 \norm{x}^2$. In particular, two examples are~\citet{stich2018sparsified,zheng2019communication}:
\begin{itemize}
    \item Let $\mathcal S=\{i \in [d]: |x_i|\geq t\}$ where $t$ is the $(1-k)$-quantile of $|x_i|$, $i\in [d]$. The \textbf{TopK} compressor with compression rate $k$ is defined as $\mathcal C(x)_i=x_i$, if $i\in\mathcal S$; $\mathcal C(x)_i=0$ otherwise.

    \item  Divide $[d]$ into $M$ groups (e.g., neural network layers) with index sets $\mathcal I_i$, $i=1,...,M$, and $d_i\eqdef |\mathcal I_i|$. The \textbf{(Grouped) Sign} compressor is defined as $\mathcal C(x)=\big[\frac{\|x_{\mathcal I_1}\|_1}{d_1}sign(x_{\mathcal I_1}), ..., \frac{\|x_{\mathcal I_M}\|_1}{d_M}sign(x_{\mathcal I_M})\big]$, with $x_{\mathcal I_i}$ the sub-vector of $x$ at indices $\mathcal I_i$.
\end{itemize}
\end{definition}

Larger $q_\mathcal C$ indicates heavier compression, and $q_{\mathcal C}=0$ implies no compression, i.e. $\mathcal C(x)=x$. The following Proposition~\ref{prop:topk,sign} is well-known, and we include the proof for clarity and completeness.

\begin{proposition} \label{prop:topk,sign}
For the \textbf{TopK} compressor which selects top $k$-percent of coordinates, we have $q_{\mathcal C}^2=1-k$. For the \textbf{(Group) Sign} compressor, $q_{\mathcal C}^2=1-\min_{i\in [M]} \frac{1}{d_i}$.
\end{proposition}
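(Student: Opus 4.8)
The plan is to treat the two compressors separately and in each case directly estimate the squared compression error $\norm{\mathcal C(x)-x}^2$, then read off $q_{\mathcal C}^2$ as the worst-case ratio to $\norm{x}^2$. No general structural machinery is needed; both bounds follow from elementary inequalities once the error is written out explicitly.

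For the \textbf{TopK} compressor, observe that $\mathcal C(x)$ agrees with $x$ on $\mathcal S$ and is zero on $\mathcal S^c$, so the error vector is supported on the dropped coordinates and $\norm{\mathcal C(x)-x}^2=\sum_{i\notin\mathcal S}x_i^2$. Write $a\eqdef\sum_{i\in\mathcal S}x_i^2$ and $b\eqdef\sum_{i\notin\mathcal S}x_i^2$, so that $\norm{x}^2=a+b$ and the goal reduces to $b\le(1-k)\norm{x}^2$. Since $\mathcal S$ collects the entries of largest magnitude, every retained square dominates every dropped square, hence the average retained square is at least the average dropped square: $\frac{a}{|\mathcal S|}\ge\frac{b}{|\mathcal S^c|}$, i.e. $\frac{a}{kd}\ge\frac{b}{(1-k)d}$. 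Rearranging gives $(1-k)a\ge kb$, and adding $(1-k)b$ to both sides yields $(1-k)(a+b)\ge b$, that is $b\le(1-k)\norm{x}^2$. This identifies $q_{\mathcal C}^2=1-k$.

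For the \textbf{(Group) Sign} compressor I would argue group by group and then aggregate. Fix a group, write $y\eqdef x_{\mathcal I_i}$ and $s\eqdef\|y\|_1/d_i$, so that $\mathcal C(y)=s\,\mathrm{sign}(y)$. Expanding the square and using $\mathrm{sign}(y_j)\,y_j=|y_j|$ together with $\sum_j|y_j|=\|y\|_1=d_i s$, the $s^2$ term and the cross term collapse to give the clean identity $\norm{\mathcal C(y)-y}^2=\norm{y}^2-\frac{\|y\|_1^2}{d_i}$. The elementary norm inequality $\|y\|_1\ge\norm{y}$ lower-bounds the subtracted term by $\norm{y}^2/d_i$, so $\norm{\mathcal C(y)-y}^2\le\big(1-\tfrac{1}{d_i}\big)\norm{y}^2$ on each group. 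Summing the per-group errors and factoring out the largest coefficient, $\norm{\mathcal C(x)-x}^2=\sum_{i=1}^M\norm{\mathcal C(x_{\mathcal I_i})-x_{\mathcal I_i}}^2\le\big(\max_i(1-\tfrac{1}{d_i})\big)\sum_i\norm{x_{\mathcal I_i}}^2=\big(1-\min_{i\in[M]}\tfrac{1}{d_i}\big)\norm{x}^2$, which gives the claimed $q_{\mathcal C}^2$.

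Neither step is deep; the work is mostly bookkeeping. The one place to be careful is the averaging inequality in the TopK argument, which relies on $\mathcal S$ consisting exactly of the top-magnitude entries, so ties at the quantile $t$ and integrality of $kd$ should be noted but do not affect the worst-case bound. For the Sign compressor the only subtlety is a coordinate with $y_j=0$, where $\mathrm{sign}(y_j)=0$; this merely decreases the error relative to the identity above, so the stated bound still holds. I expect the collapse of the cross term in the Sign computation to be the single most error-prone calculation, so I would verify that step explicitly before aggregating over groups.
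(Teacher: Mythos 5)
Your proof is correct and follows essentially the same route as the paper: the exact per-group identity $\norm{\mathcal C(y)-y}^2=\norm{y}^2-\|y\|_1^2/d_i$ followed by $\|y\|_1\geq\norm{y}$ for \textbf{Sign}, and the dropped-mass bound for \textbf{TopK}, which the paper dismisses as trivial but you correctly justify via the averaging inequality $\frac{a}{kd}\geq\frac{b}{(1-k)d}$. Your extra care with zero coordinates in the \textbf{Sign} case and ties at the quantile only tightens a step the paper leaves implicit.
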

\begin{proof}
For \textbf{TopK}, the proof is trivial: since $\mathcal C(x)-x$ only contain $(1-k)d$ coordinates with lowest magnitudes, we know $\|C(x)-x\|^2/\|x^2\|\leq 1-k$.

For \textbf{Sign}, recall that $\mathcal I_i$ is the index set of block (group) $i$. By definition, for the $i$-th block (group) $x_{\mathcal I_i}\in\mathbb R^{d_i}$, we have
\begin{align*}
    \|\mathcal C(x_{\mathcal I_i})-x_{\mathcal I_i}\|^2&=\|x_{\mathcal I_i}-\frac{\|x_{\mathcal I_i}\|_1}{d_i}sign(x_{\mathcal I_i})\|^2 \\
    &=\|x_{\mathcal I_i}\|^2+ \frac{\|x_{\mathcal I_i}\|_1^2}{d_i^2}\cdot d_i-\frac{2\|x_{\mathcal I_i}\|_1^2}{d_i}\\
    &=\|x_{\mathcal I_i}\|^2-\|x_{\mathcal I_i}\|_1^2/d_i.
\end{align*}
Since we have $M$ blocks, concatenating the blocks leads to
\begin{align*}
    \|\mathcal C(x)-x\|^2&=\sum_{i=1}^M\Big(\|x_{\mathcal I_i}\|^2-\|x_{\mathcal I_i}\|_1^2/d_i \Big)\\
    &=\|x\|^2-\sum_{i=1}^M\|x_{\mathcal I_i}\|_1^2/d_i \\
    &=\big(1-\frac{\sum_{i=1}^M\|x_{\mathcal I_i}\|_1^2/d_i}{\|x\|^2}  \big) \|x\|^2 \\
    &\leq \big(1-\min_{i\in [M]} \frac{\|x_{\mathcal I_i}\|_1^2}{d_i\|x_{\mathcal I_i}\|^2}\big)\|x\|^2\leq (1-\min_{i\in [M]} \frac{1}{d_i})\|x\|^2,
\end{align*}
where the last inequality is because $l_1$ norm is lower bounded by $l_2$ norm.
\end{proof}

Additionally, these two compression operators can be combined to derive the so-called ``\textbf{heavy-Sign}'' compressor, where we first apply \textbf{TopK} and then \textbf{Sign}, for even higher compression rate.

\begin{definition}[Heavy-Sign compressor]\label{def:heavy-sign}
Let $\mathcal C_k(\cdot)$ and $\mathcal C_s(\cdot)$ be the \textbf{TopK} and \textbf{Sign} operator as in Definition~\ref{def:quant}. Then the \textbf{Heavy-Sign} operator is defined as $\mathcal C_{hv}(x)\eqdef \mathcal C_s\big(\mathcal C_k(x)\big)$ for $x\in \mathbb R^d$.
\end{definition}

\begin{proposition}
The \textbf{heavy-Sign} compressor satisfies Definition~\ref{def:quant} with $q_{\mathcal C}^2=1-\min_{i\in [M]}\frac{k}{d_i}$.
\end{proposition}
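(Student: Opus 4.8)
The plan is to recognize the claimed constant as the composition identity $1 - q_{\mathcal C}^2 = (1-q_k^2)(1-q_s^2)$, where $q_k^2 = 1-k$ and $q_s^2 = 1 - \min_{i\in[M]}\frac{1}{d_i}$ are the single-stage deviation constants of TopK and Sign from Proposition~\ref{prop:topk,sign}. Indeed $(1-q_k^2)(1-q_s^2) = k\cdot\min_{i\in[M]}\frac{1}{d_i} = \min_{i\in[M]}\frac{k}{d_i}$, so proving the proposition amounts to showing the two contraction factors multiply. The mechanism that makes this exact (rather than lossy) is that the two stages act on compatible supports, which I would exploit via a Pythagorean split. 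Write $y\eqdef \mathcal C_k(x)$ for the TopK output and let $\mathcal S\subseteq[d]$ be the retained index set; then $y$ is supported on $\mathcal S$ and $x-y$ is supported on $[d]\setminus\mathcal S$, so these vectors are orthogonal and $\norm{x}^2 = \norm{y}^2 + \norm{y-x}^2$.

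First I would verify the key structural observation: the grouped Sign operator applied to $y$ creates no mass outside $\mathcal S$. For any dropped coordinate $j\notin\mathcal S$ we have $y_j=0$, hence $\mathrm{sign}(y_j)=0$ and $\mathcal C_s(y)_j=0$. Therefore $\mathcal C_{hv}(x)=\mathcal C_s(y)$ is also supported on $\mathcal S$, so $\mathcal C_s(y)-y$ is supported on $\mathcal S$ while $y-x$ is supported on $[d]\setminus\mathcal S$. Since $\mathcal C_{hv}(x)-x = (\mathcal C_s(y)-y) + (y-x)$ is a sum of two orthogonal vectors, I obtain the exact decomposition $\norm{\mathcal C_{hv}(x)-x}^2 = \norm{\mathcal C_s(y)-y}^2 + \norm{y-x}^2$. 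This orthogonality is the crux of the argument and the step I expect to need the most care to state precisely, in particular fixing the convention $\mathrm{sign}(0)=0$ so that the support claim is clean.

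Next I would bound the two pieces using the already-established guarantees. Proposition~\ref{prop:topk,sign} holds for an arbitrary input and so applies verbatim to $y$ (the bound uses the full group sizes $d_i$ irrespective of the zeros in $y$), giving $\norm{\mathcal C_s(y)-y}^2 \le q_s^2\norm{y}^2$ with $q_s^2 = 1-\min_{i\in[M]}\frac{1}{d_i}$; and the TopK guarantee gives $E\eqdef\norm{y-x}^2 \le q_k^2\norm{x}^2$ with $q_k^2 = 1-k$.

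Finally I would combine them. Substituting $\norm{y}^2 = \norm{x}^2 - E$ into the decomposition gives $\norm{\mathcal C_{hv}(x)-x}^2 \le q_s^2(\norm{x}^2-E) + E = q_s^2\norm{x}^2 + (1-q_s^2)E$. Since $1-q_s^2>0$ and $E\le q_k^2\norm{x}^2$, this is at most $[q_s^2 + (1-q_s^2)q_k^2]\norm{x}^2 = [1-(1-q_s^2)(1-q_k^2)]\norm{x}^2$, and the elementary identity $(1-q_s^2)(1-q_k^2) = \min_{i\in[M]}\frac{k}{d_i}$ yields $q_{\mathcal C}^2 = 1-\min_{i\in[M]}\frac{k}{d_i}$ as claimed. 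Everything after the orthogonal split is routine algebra, so the only genuine obstacle is establishing the support compatibility that legitimizes the Pythagorean decomposition; without it one could only invoke a triangle inequality and would lose the clean multiplicative constant.
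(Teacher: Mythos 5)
Your proof is correct and follows essentially the same route as the paper's: the orthogonal (Pythagorean) split $\norm{\mathcal C_{hv}(x)-x}^2=\norm{\mathcal C_s(y)-y}^2+\norm{y-x}^2$ with $y=\mathcal C_k(x)$, the Sign bound from Proposition~\ref{prop:topk,sign} applied to $y$, and the identity $\norm{y}^2+\norm{y-x}^2=\norm{x}^2$. The only cosmetic difference is that you upper-bound $E=\norm{y-x}^2\le(1-k)\norm{x}^2$ where the paper lower-bounds $\norm{\mathcal C_k(x)}^2\ge k\norm{x}^2$ — these are equivalent via the same Pythagorean identity — and your explicit support/sign$(0)=0$ argument is in fact a more careful statement of the paper's one-line justification that the cross term vanishes.
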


\begin{proof}
Recall Definition~\ref{def:heavy-sign} that $\mathcal C_k$ denotes the \textbf{TopK} compressor and $\mathcal C_s$ is the \textbf{Sign} operator. The \textbf{heavy-Sign} operator $\mathcal C(x)=\mathcal C_s\big(\mathcal C_k(x) \big)$ admits
\begin{align*}
    \|\mathcal C_{hv}(x)-x\|^2&=\|\mathcal C_s\big(\mathcal C_k(x) \big)-\mathcal C_k(x) +\mathcal C_k(x) -x\|^2 \\
    &= \|\mathcal C_s\big(\mathcal C_k(x) \big)-\mathcal C_k(x)\|^2 + \|\mathcal C_k(x) -x\|^2,
\end{align*}
where the second equality holds because \textbf{TopK} zeros out the unpicked coordinates. By Proposition~\ref{prop:topk,sign}, we continue to obtain
\begin{align*}
    \|\mathcal C_{hv}(x)-x\|^2&\leq (1-\min_{i\in [M]} \frac{1}{d_i})\|\mathcal C_k(x)\|^2+ \|\mathcal C_k(x) -x\|^2\\
    &= \|x\|^2-\min_{i\in [M]} \frac{1}{d_i}\|\mathcal C_k(x)\|^2 \leq (1-\min_{i\in [M]}\frac{k}{d_i})\|x\|^2,
\end{align*}
where $M$ is the number of blocks in \textbf{Sign} and we use the fact that $\|\mathcal C_k(x)\|^2+\|\mathcal C_k(x)-x\|^2=\|x\|^2$, and $\|\mathcal C(x)\|\geq k\|x\|^2$ by Proposition~\ref{prop:topk,sign}.
\end{proof}

\begin{figure}[h]
  \begin{center}
  \vspace{-0.1in}
    \includegraphics[width=3in]{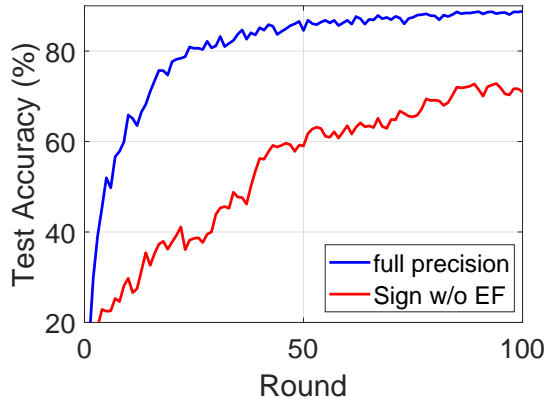}
  \end{center}
  \vspace{-0.15in}
  \caption{Test accuracy of MLP trained by Fed-SGD on MNIST dataset~\citep{lecun1998gradient}: full-precision vs. \textbf{Sign} compression (Algorithm~\ref{alg:no-EF-SGD}), $\eta=1$, $\eta_l=0.01$, $n=200$ non-iid clients.}
  \label{fig:no_EF}
\end{figure}


Given the simplicity of the deterministic compressors, one may ask: can we directly apply biased compressors in communication-efficient FL? As an example, in Figure~\ref{fig:no_EF}, we report the test accuracy of a multi-layer perceptron (MLP) trained on MNIST dataset in non-iid FL environment (see Section~\ref{sec:experiment} for more descriptions), of Fed-SGD~\citep{stich2019local} with full communication (blue) versus compression using \textbf{Sign} (red), i.e., clients directly send compressed local model update to the server for aggregation. In Algorithm~\ref{alg:no-EF-SGD}, we present this simple approach. We observe a catastrophic performance downgrade of using biased compression directly. In Section~\ref{sec:main}, we will demonstrate that adopting biased compression directly leads to an undesirable asymptotically non-vanishing term in the convergence rate, theoretically justifying this empirical performance degradation.

\begin{algorithm}[t]
\caption{Fed-SGD with Biased Compression} \label{alg:no-EF-SGD}
\begin{algorithmic}[1]
\State{\textbf{Input}: learning rates $\eta$, $\eta_l$}
\State{\textbf{Initialize}: central server parameter $\theta_{1} \in \mathbb R^d \subseteq \mathbb R^d$}
\vspace{0.03in}
\State{\textbf{for $t=1, \ldots, T$ do}}
\State{\quad\textbf{parallel for worker $i \in [n]$ do}:}

\State{\quad\quad  Receive model parameter $\theta_{t}$ from central server, set $\theta_{t,i}^{(1)}=\theta_t$}
\State{\quad\quad  \textbf{for $k=1, \ldots, K$ do}}
\State{\quad\quad\quad  Compute stochastic gradient $g_{t,i}^{(k)}$ at $\theta_{t,i}^{(k)}$}
\State{\quad\quad\quad  Local update $\theta_{t,i}^{(k+1)}=\theta_{t,i}^{(k)}-\eta_l g_{t,i}^{(k)}$}
\State{\quad\quad  \textbf{end for}}

\State{\quad\quad Compute the local model update $\del_{t,i}=\theta_{t,i}^{(K+1)}-\theta_{t}$ }
\State{\quad\quad  Send compressed model update $\widetilde\del_{t,i}=\mathcal C(\del_{t,i})$ to central server}

\State{\quad\textbf{end parallel}}

\State{\quad\textbf{Central server do:}}
\State{\quad Global aggregation $\overline{\widetilde\del}_{t}=\frac{1}{n}\sum_{i=1}^n \widetilde\del_{t,i}$  \label{line:g}}
\State{\quad Update the global model $\theta_{t+1}=\theta_{t}-\eta\overline{\widetilde\del}_{t}$}

\State{\textbf{end for}}
\end{algorithmic}

\end{algorithm}

\begin{algorithm}[h]
\caption{Compressed Federated Learning with Error Feedback (Fed-EF)} \label{alg:Fed-EF}
\begin{algorithmic}[1]
\State{\textbf{Input}: learning rates $\eta$, $\eta_l$, \colorbox{blue!20!white}{hyper-parameters $\beta_1$, $\beta_2$, $\epsilon$}  }
\State{\textbf{Initialize}: central server parameter $\theta_{1} \in \mathbb R^d \subseteq \mathbb R^d$; $e_{1,i}=\bm{0}$ the accumulator for each worker; \colorbox{blue!20!white}{$m_0=\bm{0}$, $v_0=\bm{0}$, $\hat v_0=\bm{0}$}}
\vspace{0.03in}
\State{\textbf{for $t=1, \ldots, T$ do}}
\State{\quad\textbf{parallel for worker $i \in [n]$ do}:}

\State{\quad\quad  Receive model parameter $\theta_{t}$ from central server, set $\theta_{t,i}^{(1)}=\theta_t$}
\State{\quad\quad  \textbf{for $k=1, \ldots, K$ do}}
\State{\quad\quad\quad  Compute stochastic gradient $g_{t,i}^{(k)}$ at $\theta_{t,i}^{(k)}$}
\State{\quad\quad\quad  Local update $\theta_{t,i}^{(k+1)}=\theta_{t,i}^{(k)}-\eta_l g_{t,i}^{(k)}$}
\State{\quad\quad  \textbf{end for}}

\State{\quad\quad Compute the local model update $\del_{t,i}=\theta_{t,i}^{(K+1)}-\theta_{t}$ }
\State{\quad\quad  Send compressed adjusted local model update $\widetilde\del_{t,i}=\mathcal C(\del_{t,i}+e_{t,i})$ to central server }
\State{\quad\quad  Update the error $e_{t+1,i}=e_{t,i}+\del_{t,i}-\widetilde\del_{t,i}$}

\State{\quad\textbf{end parallel}}

\State{\quad\textbf{Central server do:}}
\State{\quad Global aggregation $\overline{\widetilde\del}_{t}=\frac{1}{n}\sum_{i=1}^n \widetilde\del_{t,i}$  \label{line:g}}
\State{\quad \colorbox{green!20!white}{Update the global model $\theta_{t+1}=\theta_{t}-\eta\overline{\widetilde\del}_{t}$} \hfill\Comment{\colorbox{green!20!white}{Fed-EF-SGD}} }

\State{\quad \colorbox{blue!20!white}{$m_t=\beta_1 m_{t-1}+(1-\beta_1)\overline{\widetilde\del}_{t}$} \hfill\Comment{\colorbox{blue!20!white}{Fed-EF-AMS}}  }
\State{\quad \colorbox{blue!20!white}{$v_t=\beta_2 v_{t-1}+(1-\beta_2)\overline{\widetilde\del}_{t}^2$,\quad $\hat v_t=\max(v_t,\hat v_{t-1})$} \label{line:v}}
\State{\quad \colorbox{blue!20!white}{Update the global model $\theta_{t+1}=\theta_{t}-\eta\frac{m_t}{\sqrt{\hat v_t+\epsilon}}$} }

\State{\textbf{end for}}
\end{algorithmic}
\end{algorithm}

\subsection{Fed-EF Algorithm}

To resolve this problem, error feedback (EF), which is a popular tool in distributed training, can be adapted to federated learning. In Algorithm~\ref{alg:Fed-EF}, we present the general compressed FL framework named Fed-EF, whose main steps are summarized below. In round $t$: 1) The server broadcast the global model $\theta_t$ to all clients (line 5); 2) The $i$-th client performs $K$ steps of local SGD updates to get local model $\theta_{t,i}^{(K)}$, compute the compressed local model update $\tilde\del_{t,i}$, updates the local error accumulator $e_{t,i}$, and sends the compressed $\tilde\del_{t,i}$ back to the server (line 6-12); 3) The server receives $\tilde\del_{t,i}$, $i\in [n]$ from all clients, takes the average, and perform a global model update using the averaged compressed local model updates (line 15-19).

Depending on the global model optimizer, we propose two variants: Fed-EF-SGD (green) which applies SGD global updates, and Fed-EF-AMS (blue), whose global optimizer is AMSGrad~\citep{reddi2019convergence}. In Fed-EF-AMS, by the nature of adaptive gradient methods, we incorporate momentum ($m_t$) with different implicit dimension-wise learning rates $\eta/\hat v_t$. Additionally, for conciseness, the presented algorithm employs one-way compression (clients-to-server). In Appendix~\ref{app sec:two-way}, we also provide a two-way compressed Fed-EF framework and demonstrate that adding the server-to-clients compression would not affect the convergence rates.

\vspace{0.1in}
\noindent\textbf{Comparison with prior work.} Compared with EF approaches in the classical distributed training, e.g., \cite{stich2018sparsified,karimireddy2019error,zheng2019communication,liu2020double,ghosh2021communication,li2022distributed}, our algorithm allows local steps (more communication-efficiency) and uses two-side learning rates. When $\eta\equiv 1$, the Fed-EF-SGD method reduces to QSparse-local-SGD~\citep{basu2019qsparse}. In Section~\ref{sec:theory}, we will demonstrate how the two-side learning rate schedule improves the convergence analysis of the one-side learning rate approach~\citep{basu2019qsparse}. On the other hand, several recent works considered compressed FL using unbiased stochastic compressors (all of which used SGD as the global optimizer). FedPaQ~\citep{reisizadeh2020fedpaq} applied stochastic quantization without error feedback to local SGD, which is improved by \cite{haddadpour2021federated} using a gradient tracking trick that, however, requires communicating an extra vector from server to clients, which is less efficient than Fed-EF. \cite{malekijoo2021fedzip} provided an empirical study on directly compressing the local updates using various compressors in Fed-SGD, while we use EF to compensate for the bias. \cite{mitra2021linear} proposed FedLin, which only uses compression for synchronizing a local memory term but still requires transmitting full-precision updates. Finally, to our knowledge, Fed-EF-AMS is the first compressed adaptive FL method in literature.

\section{Theoretical Results}\label{sec:theory}

\begin{assumption}[Smoothness] \label{ass:smooth}
For $\forall i \in [n]$, $f_i$ is  L-smooth: $\norm{\nabla f_i (x) - \nabla f_i (y)} \leq L \norm{x-y}$.
\end{assumption}

\begin{assumption}[Bounded variance] \label{ass:var}
For $\forall t \in [T]$, $\forall i \in [n]$, $\forall k\in [K]$: (i) the stochastic gradient is unbiased: $\EE\big[g_{t,i}^{(k)}\big] = \nabla f_i(\theta_{t,i}^{(k)})$; (ii) the \textbf{local variance} is bounded: $\EE\big[\|g_{t,i}^{(k)} - \nabla f_i(\theta_{t,i}^{(k)})\|^2\big] < \sigma^2$; (iii) the \textbf{global variance} is bounded: $\frac{1}{n}\sum_{i=1}^n\|\nabla f_i(\theta_t)-\nabla f(\theta_t)\|^2\leq \sigma_g^2$.
\end{assumption}

Both assumptions are standard in the convergence analysis of stochastic gradient methods. The global variance bound $\sigma_g^2$ in Assumption~\ref{ass:var} characterizes the difference among local objective functions, which, is mainly caused by different local data distribution $\mathcal X_i$ in \eqref{eq:opt}, i.e., data heterogeneity. Following prior works on compressed FL, we also make the following additional assumption.

\begin{assumption}[Compression discrepancy] \label{ass:compress_diff}
There exists some $q_{\mathcal A}<1$ such that $\mathbb E\big[\| \frac{1}{n}\sum_{i=1}^n \mathcal C\big(\del_{t,i}+e_{t,i}\big)-\frac{1}{n}\sum_{i=1}^n (\del_{t,i}+e_{t,i}) \|^2\big]\leq q_{\mathcal A}^2 \mathbb E\big[\| \frac{1}{n}\sum_{i=1}^n (\del_{t,i}+e_{t,i}) \|^2\big]$ in every round $t\in [T]$.
\end{assumption}

In Assumption~\ref{ass:compress_diff}, if we replace ``the average of compression'', $\frac{1}{n}\sum_{i=1}^n \mathcal C\big(\del_{t,i}+e_{t,i}\big)$, by ``the compression of average'', $\mathcal C\big(\frac{1}{n}\sum_{i=1}^n (\del_{t,i}+e_{t,i})\big)$, the statement immediately holds by Definition~\ref{def:quant} with $q_{\mathcal A}=q_{\mathcal C}$. Thus, Assumption~\ref{ass:compress_diff} basically says that the above two terms stay close during training. This is a common assumption in related work on compressed distributed learning, for example, a similar assumption is used in~\cite{alistarh2018convergence} analyzing sparsified SGD. In \cite{haddadpour2021federated}, for unbiased compression without EF, a similar condition is also assumed with an absolute bound. In Section~\ref{sec:discuss_assumption}, we provide more discussion and empirical justification to validate this analytical assumption in practice.

\subsection{Convergence of Directly Using Biased Compression in FL}

In Figure~\ref{fig:no_EF}, we have seen a simple example that naively transmitting the condensed local model updates by biased compressors without EF may perform poorly empirically. We now provide the theoretical convergence results on this naive strategy (Algorithm~\ref{alg:no-EF-SGD}). More specifically, in the standard Fed-SGD algorithm~\citep{stich2019local}, in each round $t$, after conducting $K$ local training steps to get the model update $\del_{t,i}$, the $i$-th client computes $\widetilde\del_{t,i}\eqdef \mathcal C(\del_{t,i})$ and sends it to the server. The server takes the average of the compressed gradients and updates with $\overline{\widetilde\del}_t=\frac{1}{n}\sum_{i=1}^n \widetilde\del_{t,i}$ using SGD. We have the following convergence result.

\begin{Theorem}[Fed-SGD with biased compression] \label{theo:no-EF rate}
Let $\theta^*=\arg\min f(\theta)$, and denote $q=\max\{q_{\mathcal C}, q_{\mathcal A}\}$. Consider Algorithm~\ref{alg:no-EF-SGD} where Fed-SGD is applied with biased communication compression. Under Assumptions~\ref{ass:smooth} to~\ref{ass:compress_diff}, when $\eta_l\leq \frac{1}{8KL\max\{1,8(1+q^2)\eta\}}$, we have
\begin{align*}
    \frac{1}{T}\sum_{t=1}^T\mathbb E\big[\|\nabla f(\theta_t)\|^2\big]
    &\lesssim \frac{f(\theta_1)-\mathbb E[f(\theta_{t+1})]}{\eta\eta_l TK}+\frac{4\eta\eta_l(1+q^2) L}{n}\sigma^2 + \frac{64q^2}{Kn}\sigma^2 \\
    &+ 20\eta\eta_l^3(1+q^2)K^2L^3(\sigma^2+6K\sigma_g^2)+(320q^2+3)\eta_l^2K L^2(\sigma^2+6K\sigma_g^2).
\end{align*}
If we choose $\eta_l=\Theta(\frac{1}{K\sqrt T})$ and $\eta=\Theta(\sqrt{Kn})$, we have
\begin{align}
    \frac{1}{T}\sum_{t=1}^T\mathbb E\big[\|\nabla f(\theta_t)\|^2\big]=\mathcal O\Big( \frac{1+q^2}{\sqrt{TKn}}+\frac{1+q^2}{TK}(\sigma^2+K\sigma_g^2)+\frac{q^2\sigma^2}{Kn} \Big).  \label{eqn:asym-rate-no-EF}
\end{align}
\end{Theorem}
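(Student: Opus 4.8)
The plan is to run the standard descent-lemma argument for non-convex local SGD, tracking precisely where biased compression injects an irreducible error. First I would apply $L$-smoothness (Assumption~\ref{ass:smooth}) to the global step $\theta_{t+1}=\theta_t-\eta\overline{\widetilde\del}_t$ to get $\EE[f(\theta_{t+1})]\leq \EE[f(\theta_t)]-\eta\,\EE[\pscal{\nabla f(\theta_t)}{\overline{\widetilde\del}_t}]+\tfrac{L\eta^2}{2}\EE[\norm{\overline{\widetilde\del}_t}^2]$, so everything reduces to the cross term and the second moment. I then split $\overline{\widetilde\del}_t=\overline{\del}_t+(\overline{\widetilde\del}_t-\overline{\del}_t)$, where $\overline{\del}_t=\frac1n\sum_i\del_{t,i}=-\eta_l\frac1n\sum_{i,k}g_{t,i}^{(k)}$ is the averaged \emph{uncompressed} update. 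For the clean part I take the conditional expectation (so $g_{t,i}^{(k)}\mapsto\nabla f_i(\theta_{t,i}^{(k)})$ by Assumption~\ref{ass:var}(i)), compare $-\eta_l\frac1n\sum_{i,k}\nabla f_i(\theta_{t,i}^{(k)})$ with the ideal direction $-\eta_l K\nabla f(\theta_t)$, and use the polarization identity to extract a genuinely negative $-\tfrac12\eta\eta_l K\norm{\nabla f(\theta_t)}^2$ term (after absorbing the drift cross term by Young's inequality), while the residual $\norm{\nabla f_i(\theta_{t,i}^{(k)})-\nabla f_i(\theta_t)}\leq L\norm{\theta_{t,i}^{(k)}-\theta_t}$ is a local-drift quantity.

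Next I would prove the usual local-drift bound $\frac1n\sum_i\EE[\norm{\theta_{t,i}^{(k)}-\theta_t}^2]\lesssim \eta_l^2 K(\sigma^2+K\sigma_g^2+K\norm{\nabla f(\theta_t)}^2)$ by unrolling the $K$ local SGD steps with Assumption~\ref{ass:var}(ii)--(iii); the restriction $\eta_l\leq\frac{1}{8KL\max\{1,8(1+q^2)\eta\}}$ is exactly what keeps this recursion contractive. For the quadratic term I bound $\norm{\overline{\widetilde\del}_t}^2\leq 2\norm{\overline{\del}_t}^2+2\norm{\overline{\widetilde\del}_t-\overline{\del}_t}^2\leq 2(1+q^2)\norm{\overline{\del}_t}^2$, using Assumption~\ref{ass:compress_diff} with $e_{t,i}\equiv 0$ (no error feedback) and $q=\max\{q_{\mathcal C},q_{\mathcal A}\}$, and then split $\EE[\norm{\overline{\del}_t}^2]$ into its conditional mean (signal-plus-drift, of order $\eta_l^2K^2\norm{\nabla f(\theta_t)}^2$ plus the drift terms) and its variance, where the cross-client, cross-step noise contributes $\mathcal O(\eta_l^2\frac{K}{n}\sigma^2)$.

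The crux is the compression-bias cross term, which is where the absence of error feedback bites. I would bound it by Young's inequality at the scale $\frac{\eta}{\eta_l K}$: $-\eta\,\pscal{\nabla f(\theta_t)}{\overline{\widetilde\del}_t-\overline{\del}_t}\leq \tfrac14\eta\eta_l K\norm{\nabla f(\theta_t)}^2+\frac{\eta}{\eta_l K}\norm{\overline{\widetilde\del}_t-\overline{\del}_t}^2$, absorbing the first piece into the descent. Combined with $\norm{\overline{\widetilde\del}_t-\overline{\del}_t}^2\leq q^2\norm{\overline{\del}_t}^2$ and the $\mathcal O(\eta_l^2K\sigma^2/n)$ variance from the previous step, the second piece contributes a per-round term of order $\frac{\eta\eta_l q^2\sigma^2}{n}$. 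I then collect terms so the net coefficient of $\EE[\norm{\nabla f(\theta_t)}^2]$ is a positive multiple of $\eta\eta_l K$ (guaranteed by the step-size condition, whose $8(1+q^2)\eta$ factor controls the $\tfrac{L\eta^2}{2}$ second-moment contribution), sum over $t=1,\dots,T$ (the function values telescope to $f(\theta_1)-\EE[f(\theta_{T+1})]$), and divide by $\eta\eta_l KT$, which yields the first displayed bound. Crucially, after this normalization the cross-term residual becomes $\frac{q^2\sigma^2}{Kn}$, independent of both $\eta$ and $\eta_l$; substituting $\eta_l=\Theta(\frac{1}{K\sqrt T})$ and $\eta=\Theta(\sqrt{Kn})$ gives \eqref{eqn:asym-rate-no-EF} and shows every other term decays in $T$.

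The delicate point is this last cross term: without error feedback the compression error $\overline{\widetilde\del}_t-\overline{\del}_t$ is correlated with the descent direction and cannot be averaged away, so the Young balancing must be done at precisely the $\frac{\eta}{\eta_l K}$ scale — large enough that the induced $\norm{\nabla f(\theta_t)}^2$ is absorbable into the descent, yet leaving a residual that, after dividing by $\eta\eta_l K$, is manifestly free of both step sizes and hence produces the advertised non-vanishing $\frac{q^2\sigma^2}{Kn}$. A secondary subtlety is the variance accounting: since each $\theta_{t,i}^{(k)}$ depends on earlier noise within the same round, isolating the clean $\eta_l^2K\sigma^2/n$ floor requires conditioning step-by-step rather than treating the $g_{t,i}^{(k)}$ as jointly independent.
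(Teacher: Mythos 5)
Your proposal is correct and follows essentially the same route as the paper's proof: the descent lemma, the split $\overline{\widetilde\del}_t=\bar\del_t+\bar b_t$ with $\EE\big[\norm{\bar b_t}^2\big]\leq q^2\,\EE\big[\norm{\bar\del_t}^2\big]$ from Assumption~\ref{ass:compress_diff}, the standard local-drift (consensus) and second-moment bounds for $\bar\del_t$, and a Young's-inequality treatment of the bias cross term at the $\eta_l K$ scale whose step-size-free residual produces exactly the non-vanishing $q^2\sigma^2/(Kn)$ term after dividing by $\eta\eta_l KT$. One shared caveat worth noting: that same Young step also injects a piece of order $q^2\eta\eta_l K\,\EE\big[\norm{\nabla f(\theta_t)}^2\big]$ (via $\norm{\bar b_t}^2\leq q^2\norm{\bar\del_t}^2$ and the $\eta_l^2K^2\norm{\nabla f(\theta_t)}^2$ component of $\EE\big[\norm{\bar\del_t}^2\big]$), which is of the same order as the extracted descent term and is \emph{not} controlled by the step-size condition, so absorbing it requires $q$ bounded away from $1$ — the paper's own proof quietly imposes $q\leq 1/32$ at this point, a restriction your write-up (like the theorem statement) leaves implicit.
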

\begin{remark}
When $q=0$ (no compression), the constant term in the convergence rate is erased and Theorem~\ref{theo:no-EF rate} recovers the $\mathcal O(\frac{1}{\sqrt{TKn}})$ asymptotic rate of full-precision Fed-SGD~\citep{yang2021achieving}.
\end{remark}
\begin{remark}
If we use unbiased compressors (e.g., stochastic quantization) instead, we can remove the constant bias term in the proof and recover the $\mathcal O(\frac{1}{\sqrt{TKn}})$ rate in the recent work~\citep{haddadpour2021federated} for federated optimization with unbiased compression.
\end{remark}

In Theorem~\ref{theo:no-EF rate}, the left-hand side is the expected squared gradient norm at a uniformly chosen global model from $t=1, ..., T$, which is a standard measure of convergence in non-convex optimization (i.e., ``norm convergence''). In general, we see that larger $q$ (i.e., higher compression) would slow down the convergence. In the asymptotic rate (\ref{eqn:asym-rate-no-EF}), the first term depends on the initialization and local variance. The second term containing $\sigma_g^2$ represents the influence of data heterogeneity. The non-vanishing (as $T\rightarrow \infty$) third term is a consequence of the bias introduced by the biased compressors which decreases with smaller $q$ (i.e., less compression). This constant term (when $q\neq 0$) implies that Fed-SGD does not converge to a stationary point when biased compression is directly applied. When $q=0$ (no compression), we recover the full-precision rate as expected.

\subsection{Convergence of Fed-EF: Linear Speedup Under Data Heterogeneity}

We now analyze our Fed-EF algorithm which compensates the compression bias with error feedback.

\begin{Theorem}[Fed-EF-SGD]  \label{theo:rate SGD}
Let $\theta^*=\arg\min f(\theta)$, and denote $q=\max\{q_{\mathcal C}, q_{\mathcal A}\}$, $C_1\eqdef 2+\frac{4q^2}{(1-q^2)^2}$. Under Assumptions~\ref{ass:smooth} to~\ref{ass:compress_diff}, when $\eta_l\leq  \frac{1}{2KL\cdot\max\{4,\eta(C_1+1)\}}$, the squared gradient norm of Fed-EF-SGD iterates in Algorithm~\ref{alg:Fed-EF} can be bounded by
\begin{align*}
    \frac{1}{T}\sum_{t=1}^T\mathbb E\big[\|\nabla f(\theta_t)\|^2\big]&\lesssim
    \frac{f(\theta_1)-f(\theta^*)}{\eta\eta_lT K}+\frac{2\eta\eta_lC_1 L}{n}\sigma^2  +10\eta\eta_l^3C_1K^2L^3(\sigma^2+6K\sigma_g^2).
\end{align*}
\end{Theorem}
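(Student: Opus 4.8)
The plan is to use the standard ``virtual sequence'' device for error-feedback analysis, adapted to the federated setting with local steps and two-sided learning rates. First I would introduce the averaged error $\bar e_t \eqdef \frac1n\sum_{i=1}^n e_{t,i}$ and the averaged local update $\bar\del_t \eqdef \frac1n\sum_{i=1}^n \del_{t,i}$. Summing the error-update rule $e_{t+1,i} = e_{t,i} + \del_{t,i} - \widetilde\del_{t,i}$ over clients gives $\bar e_{t+1} = \bar e_t + \bar\del_t - \overline{\widetilde\del}_t$, so the global step can be rewritten as $\theta_{t+1} = \theta_t - \eta(\bar e_t + \bar\del_t - \bar e_{t+1})$. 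Defining the virtual iterate $\tilde\theta_t \eqdef \theta_t - \eta\bar e_t$, the compression term telescopes away and $\tilde\theta_t$ obeys the clean recursion $\tilde\theta_{t+1} = \tilde\theta_t - \eta\bar\del_t$, where $\bar\del_t$ is the averaged accumulated local stochastic gradient (up to the sign convention), i.e., exactly \emph{uncompressed} federated SGD. Since $e_{1,i}=\bm 0$, we have $\tilde\theta_1 = \theta_1$, which is what makes the telescoping clean.

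Next I would apply the $L$-smoothness descent lemma (Assumption~\ref{ass:smooth}) to $f(\tilde\theta_{t+1})$ along this recursion, take conditional expectation, and use unbiasedness $\E[g_{t,i}^{(k)}] = \nabla f_i(\theta_{t,i}^{(k)})$. The leading inner-product term is $-\eta\eta_l\pscal{\nabla f(\tilde\theta_t)}{\frac1n\sum_{i,k}\nabla f_i(\theta_{t,i}^{(k)})}$, which I want to convert into a negative multiple of $\norm{\nabla f(\theta_t)}^2$. Two discrepancies must be controlled: (a) the gap $\tilde\theta_t - \theta_t = -\eta\bar e_t$, handled by smoothness via $\norm{\nabla f(\tilde\theta_t)-\nabla f(\theta_t)}\le L\eta\norm{\bar e_t}$; and (b) the client drift $\norm{\theta_{t,i}^{(k)}-\theta_t}$. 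The drift is bounded by the standard local-SGD argument: unrolling $\theta_{t,i}^{(k)}-\theta_t = -\eta_l\sum_{j<k}g_{t,i}^{(j)}$ and using Assumption~\ref{ass:var}(ii)--(iii) gives a bound of order $\eta_l^2K(\sigma^2+K\sigma_g^2)+\eta_l^2K^2\norm{\nabla f(\theta_t)}^2$, which after the $L^3$-type smoothness factors produces the $K^2L^3(\sigma^2+6K\sigma_g^2)$ term.

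The crux is bounding the accumulated error $\E\norm{\bar e_t}^2$, and Assumption~\ref{ass:compress_diff} is exactly the statement needed on the average. Writing $\bar p_t \eqdef \bar\del_t + \bar e_t$, we have $\bar e_{t+1} = \bar p_t - \overline{\widetilde\del}_t$, so Assumption~\ref{ass:compress_diff} yields $\E\norm{\bar e_{t+1}}^2 \le q^2\,\E\norm{\bar e_t + \bar\del_t}^2$. Applying $\norm{a+b}^2\le(1+\gamma)\norm{a}^2+(1+\gamma^{-1})\norm{b}^2$ with $\gamma=\frac{1-q^2}{2q^2}$ turns this into the contraction $\E\norm{\bar e_{t+1}}^2 \le \frac{1+q^2}{2}\E\norm{\bar e_t}^2 + \frac{q^2(1+q^2)}{1-q^2}\E\norm{\bar\del_t}^2$; summing the geometric series gives $\E\norm{\bar e_t}^2 \lesssim \frac{q^2}{(1-q^2)^2}\max_s\E\norm{\bar\del_s}^2$, which is precisely where $C_1 = 2+\frac{4q^2}{(1-q^2)^2}$ enters. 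I would then bound $\E\norm{\bar\del_t}^2=\eta_l^2\,\E\norm{\frac1n\sum_{i,k}g_{t,i}^{(k)}}^2$ by separating mean and variance, the $\frac1n$ averaging being what produces the linear-speedup $\sigma^2/n$ factor.

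Finally I would substitute the drift and error bounds into the per-round descent inequality, sum over $t=1,\dots,T$, and telescope $f(\tilde\theta_1)-f(\tilde\theta_{T+1})\le f(\theta_1)-f(\theta^*)$; dividing by $\eta\eta_l TK$ produces the three stated terms. The step-size condition $\eta_l\le\frac{1}{2KL\max\{4,\eta(C_1+1)\}}$ is chosen so that the positive $\norm{\nabla f(\theta_t)}^2$ contributions from the drift and the $\eta\bar e_t$ displacement are dominated by the negative leading term, and so that the second-order term $\frac{L}{2}\eta^2\eta_l^2\E\norm{\bar\del_t/\eta_l}^2$ is absorbed. The main obstacle is the circular coupling: the error bound depends on $\E\norm{\bar\del_t}^2$, which depends on the gradients and drift, while the descent step already spends a $\norm{\nabla f(\theta_t)}^2$ budget on both the drift and the error displacement. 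Resolving this requires summing over $t$ \emph{first}, so the geometric error series can be exchanged with the round-sum, and then taking $\eta_l$ small enough (the $C_1+1$ factor) that all the $\norm{\nabla f(\theta_t)}^2$ feedback terms become higher order and move to the left-hand side; closing the constants with the correct $1/(1-q^2)^2$ dependence is the delicate bookkeeping.
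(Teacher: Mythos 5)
Your proposal is correct and follows essentially the same route as the paper's own proof: the virtual iterate $\tilde\theta_t=\theta_t-\eta\bar e_t$ (the paper's $x_t$) with the clean recursion $\tilde\theta_{t+1}=\tilde\theta_t-\eta\bar\del_t$, the smoothness descent step split into the inner-product term, the quadratic term, and the $\nabla f(\theta_t)-\nabla f(\tilde\theta_t)$ gap, the error contraction $\mathbb E\|\bar e_{t+1}\|^2\le \frac{1+q^2}{2}\mathbb E\|\bar e_t\|^2+\frac{2q^2}{1-q^2}\mathbb E\|\bar\del_t\|^2$ from Assumption~\ref{ass:compress_diff} with exactly the same Young parameter, and the sum-over-$t$-first exchange of the geometric series with the round sum before absorbing all $\|\nabla f(\theta_t)\|^2$ feedback via the step-size condition. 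The only cosmetic difference is your intermediate $\max_s\mathbb E\|\bar\del_s\|^2$ phrasing, which your final paragraph correctly replaces by the summation-exchange argument that the paper uses.
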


Similar to Theorem~\ref{theo:no-EF rate}, we may decompose the convergence rate into several parts: the first term is dependent on the initialization, the second term $\sigma^2$ comes from the local stochastic variance, and the last term represents the influence of data heterogeneity. We also see that larger $q$ (i.e., higher compression) slows down the convergence as expected. We will present the convergence rate under specific learning rates in Corollary~\ref{coro:rates}.

In our analysis of Fed-EF-AMS, we will make the following additional assumption of bounded stochastic gradients, which is common in the convergence analysis of adaptive methods, e.g.,~\cite{reddi2019convergence,zhou2018convergence,chen2019convergence,li2022distributed}. Note that this assumption is only used for Fed-EF-AMS, but not for Fed-EF-SGD.

\begin{assumption}[Bounded gradients] \label{ass:boundgrad}
It holds that $\|g_{t,i}^{(k)}\| \leq G$, $\forall t >0$, $\forall i \in [n]$, $\forall k\in [K]$.
\end{assumption}

We provide the convergence analysis of adaptive FL with error feedback (Fed-EF-AMS) as below.

\begin{Theorem}[Fed-EF-AMS]  \label{theo:rate AMS}
With same notations as in Theorem~\ref{theo:rate SGD}, let $C_1\eqdef \frac{\beta_1}{1-\beta_1}+\frac{2q}{1-q^2}$. Under Assumptions~\ref{ass:smooth} to~\ref{ass:boundgrad}, if the learning rates satisfy $\eta_l\leq \frac{\sqrt\epsilon}{8KL}\min \big\{ \frac{1}{\sqrt\epsilon}, \frac{2(1-q^2)L}{(1+q^2)^{1.5}G}, \frac{1}{\max\{16,32C_1^2\}\eta},   \frac{1}{3\eta^{1/3}} \big\}$, the Fed-EF-AMS iterates in Algorithm~\ref{alg:Fed-EF} satisfy
\begin{align*}
    \frac{1}{T}\sum_{t=1}^T\mathbb E\big[\| \nabla f(\theta_t) \|^2\big]&\lesssim  \frac{f(\theta_1)- f(\theta^*)}{\eta\eta_l TK} +\Big[ \frac{5\eta_l^2 K L^2}{2\sqrt\epsilon}+\frac{\eta\eta_l^3(30+20C_1^2) K^2L^3}{\epsilon} \Big] (\sigma^2+6K\sigma_g^2)   \\
    &\hspace{0.5in} + \frac{\eta\eta_l L (6+4C_1^2)}{n\epsilon}\sigma^2+ \frac{(C_1+1)  G^2d}{T\sqrt\epsilon}+\frac{3\eta \eta_l C_1^2 L K G^2d}{T\epsilon}.
\end{align*}
\end{Theorem}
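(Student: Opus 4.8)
The plan is to establish the bound through a descent-lemma argument carried out on an auxiliary sequence that simultaneously unrolls the AMSGrad momentum and exploits the telescoping structure of error feedback. First I would pass to aggregate quantities: writing $\overline{\del}_t = \frac{1}{n}\sum_i \del_{t,i}$ and $\bar e_t = \frac{1}{n}\sum_i e_{t,i}$, the per-client recursion $e_{t+1,i}=e_{t,i}+\del_{t,i}-\widetilde\del_{t,i}$ averages to $\bar e_{t+1}=(\overline{\del}_t+\bar e_t)-\overline{\widetilde\del}_t$, so that $\overline{\widetilde\del}_t=\overline{\del}_t+\bar e_t-\bar e_{t+1}$. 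Together with Assumption~\ref{ass:compress_diff} this gives the contraction $\mathbb{E}[\|\bar e_{t+1}\|^2]\le q^2\,\mathbb{E}[\|\overline{\del}_t+\bar e_t\|^2]$, and a Young-inequality split with a weight chosen so that $q^2(1+\gamma)<1$ yields $\sum_t \mathbb{E}[\|\bar e_t\|^2]\lesssim \frac{q^2}{(1-q^2)^2}\sum_t\mathbb{E}[\|\overline{\del}_t\|^2]$. This is exactly the source of the $\frac{2q}{1-q^2}$ contribution to $C_1$ (and of the $C_1^2$ factors in the final bound).

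Next I would set up the descent. Since the global step is $\theta_{t+1}=\theta_t-\eta\,m_t/\sqrt{\hat v_t+\epsilon}$ with $m_t=\beta_1 m_{t-1}+(1-\beta_1)\overline{\widetilde\del}_t$, I would introduce the standard AMSGrad virtual iterate $z_t=\theta_t-\frac{\beta_1}{1-\beta_1}\,\eta\,m_{t-1}/\sqrt{\hat v_{t-1}+\epsilon}$, which removes the momentum bias so that $z_{t+1}-z_t$ is driven by the fresh aggregated update $\overline{\widetilde\del}_t$. Applying $L$-smoothness in the form $f(z_{t+1})\le f(z_t)+\langle\nabla f(z_t),z_{t+1}-z_t\rangle+\frac{L}{2}\|z_{t+1}-z_t\|^2$ and replacing $\nabla f(z_t)$ by $\nabla f(\theta_t)$ (the mismatch $\|z_t-\theta_t\|^2$ is controlled by $G^2 d$ via Assumption~\ref{ass:boundgrad}), the leading cross term becomes proportional to $-\eta(1-\beta_1)\langle\nabla f(\theta_t),\,\overline{\widetilde\del}_t/\sqrt{\hat v_t+\epsilon}\rangle$. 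Substituting the telescoping identity for $\overline{\widetilde\del}_t$ and relating $\overline{\del}_t=-\eta_l\sum_k\frac1n\sum_i g_{t,i}^{(k)}$ to $\nabla f(\theta_t)$ produces the negative term $-c\,\eta\eta_l K\,\|\nabla f(\theta_t)\|^2/\sqrt\epsilon$ that, after summing in $t$, telescopes against $f(\theta_1)-f(\theta^*)$.

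The remaining terms come from three routine bounded-error estimates. The client drift $\mathbb{E}[\|\theta_{t,i}^{(k)}-\theta_t\|^2]$ accumulated over the $K$ local steps is handled in the usual local-SGD manner, producing the $(\sigma^2+6K\sigma_g^2)$ factors with their $\eta_l^2 K$ and $\eta\eta_l^3 K^2$ weights; averaging the stochastic-gradient variance over the $n$ clients contributes the $\frac{\sigma^2}{n\epsilon}$ term; and since the preconditioner is bounded in $[\tfrac{1}{\sqrt{G^2+\epsilon}},\tfrac{1}{\sqrt\epsilon}]$ with $\hat v_t$ monotone, its time variation telescopes as $\sum_t\|1/\sqrt{\hat v_t+\epsilon}-1/\sqrt{\hat v_{t-1}+\epsilon}\|_1\le d/\sqrt\epsilon$, giving the $O(G^2 d/T)$ terms. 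Collecting all contributions, dividing by the coefficient of $\|\nabla f(\theta_t)\|^2$, and imposing the stated ceiling on $\eta_l$—which is precisely what guarantees that every second-order term is dominated by the negative first-order term—yields the claimed rate.

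The hard part will be the adaptive preconditioner. Because $1/\sqrt{\hat v_t+\epsilon}$ is a random, coordinate-wise, time-varying diagonal that is statistically correlated with both $m_t$ and the gradient noise, one cannot pull it out of the expectation in the cross term $\langle\nabla f(\theta_t),\,m_t/\sqrt{\hat v_t+\epsilon}\rangle$. The fix I expect to require the most delicate bookkeeping is to replace $\hat v_t$ by the one-step-stale $\hat v_{t-1}$, which is measurable with respect to the previous round, and to absorb the resulting mismatch into the telescoping preconditioner-variation sum using monotonicity of $\hat v_t$ and Assumption~\ref{ass:boundgrad}. Keeping the momentum unrolling, the error-feedback telescoping, and this preconditioner-staleness argument mutually consistent within a single virtual sequence—so that the $C_1^2$, $G^2 d$, and $\sigma_g^2$ contributions each emerge with the correct powers of $\eta$, $\eta_l$, and $K$—is where the bulk of the technical care lies.
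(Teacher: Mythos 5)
Your plan follows the paper's proof in all of its main moves: the averaged error recursion $\bar e_{t+1}=\bar\del_t+\bar e_t-\overline{\widetilde\del}_t$ with the Young-split contraction under Assumption~\ref{ass:compress_diff} is exactly Lemma~\ref{lemma:bound e_t}; the momentum-debiasing virtual iterate, the stale-preconditioner substitution $\hat v_t\to\hat v_{t-1}$, the telescoping bound $\sum_{t=1}^T\|D_t\|_1\le d/\sqrt\epsilon$ of Lemma~\ref{lemma:bound difference} producing the $G^2d/T$ terms, and the local-SGD drift lemma producing the $(\sigma^2+6K\sigma_g^2)$ terms all appear in the paper in the same roles. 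One organizational difference is worth flagging: you keep the error feedback ``in line'' by substituting $\overline{\widetilde\del}_t=\bar\del_t+\bar e_t-\bar e_{t+1}$ into the cross term, whereas the paper folds the momentum-weighted error $\mathcal E_t=(1-\beta_1)\sum_{\tau\le t}\beta_1^{t-\tau}\bar e_\tau$ into the iterate itself. Your route leaves a sum $\sum_t\langle\nabla f(\theta_t)/\sqrt{\hat v_t+\epsilon},\,\bar e_t-\bar e_{t+1}\rangle$ that does not actually telescope, because the weight varies with $t$; you would need summation by parts together with control of gradient and preconditioner increments. The paper's two-stage construction ($\theta_t'$ subtracting $\eta\mathcal E_{t+1}/\sqrt{\hat v_t+\epsilon}$, then $x_t$ subtracting the momentum tail) avoids this entirely: the error re-enters only through $D_t$-weighted residues and through $\|\mathcal E_t\|^2$ in quadratic terms. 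Your closing remark about keeping everything ``within a single virtual sequence'' is, in effect, the paper's construction, so I read your plan as essentially the same proof once completed.

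There is, however, one genuine gap: the preconditioner lower bound. You assert that the preconditioner lies in $[(G^2+\epsilon)^{-1/2},\,\epsilon^{-1/2}]$ and simultaneously that the leading descent term is $-c\,\eta\eta_l K\|\nabla f(\theta_t)\|^2/\sqrt\epsilon$. From your interval alone you only obtain denominator $\sqrt{G^2+\epsilon}$, which would degrade the first term of the rate to order $G\,(f(\theta_1)-f(\theta^*))/(\eta\eta_l TK)$ and fail to match Theorem~\ref{theo:rate AMS} whenever $G\gg\sqrt\epsilon$. The missing ingredient is Lemma~\ref{lemma:bound v_t}: $\hat v_t$ is built from squares of the \emph{compressed aggregated updates}, and since $\|\overline{\widetilde\del}_t\|\le \frac{2\eta_l(1+q^2)^{3/2}KG}{1-q^2}$ (local updates are $O(\eta_l KG)$ by Assumption~\ref{ass:boundgrad}, and the errors obey the uniform bound $\|e_{t,i}\|\le \frac{2\eta_l qKG}{1-q^2}$ from the contraction), one gets $\hat v_{t,i}\le \frac{4\eta_l^2(1+q^2)^3K^2G^2}{(1-q^2)^2}$. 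The second entry of the min in the stated learning-rate condition, $\eta_l\lesssim \frac{(1-q^2)\sqrt\epsilon}{(1+q^2)^{3/2}KG}$, then forces $\hat v_t+\epsilon=\Theta(\epsilon)$, and this is precisely what legitimizes every $\sqrt\epsilon$ denominator, in particular the descent constant $\eta\eta_l K/(8\sqrt\epsilon)$ in the paper's bound of its term I. Your sketch attributes that entry of the min generically to ``second-order terms being dominated by the negative first-order term,'' but its actual role is this $\hat v$ bound; without it the claimed $1/\sqrt\epsilon$-scaled descent term, and hence the stated rate, does not follow.
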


With some properly chosen learning rates, we have the following simplified results.

\begin{Corollary}[Fed-EF, specific learning rates] \label{coro:rates}
Suppose the conditions in Theorem~\ref{theo:rate SGD} and Theorem~\ref{theo:rate AMS} are satisfied respectively. Choosing $\eta_l=\Theta(\frac{1}{K\sqrt T})$ and $\eta=\Theta(\sqrt{Kn})$, Fed-EF-SGD satisfies
\begin{align*}
    \frac{1}{T}\sum_{t=1}^T\mathbb E\big[\| \nabla f(\theta_t) \|^2\big]=\mathcal O\Big(\frac{f(\theta_1)-f(\theta^*)}{\sqrt{TKn}} + \frac{1}{\sqrt{TKn}}\sigma^2 + \frac{\sqrt n}{T^{3/2}\sqrt K}(\sigma^2+K\sigma_g^2) \Big),
\end{align*}
and for Fed-EF-AMS, it holds that
\begin{align*}
    \frac{1}{T}\sum_{t=1}^T\mathbb E\big[\| \nabla f(\theta_t) \|^2\big]=\mathcal O\Big(\frac{f(\theta_1)-f(\theta^*)}{\sqrt{TKn}}+ \frac{1}{\sqrt{TKn}}\sigma^2+ (\frac{1}{TK}+\frac{\sqrt n}{T^{3/2}\sqrt K})(\sigma^2+K\sigma_g^2) \Big).
\end{align*}
\end{Corollary}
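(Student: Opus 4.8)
The plan is to derive Corollary~\ref{coro:rates} by directly substituting the prescribed schedule $\eta_l=\Theta(\frac{1}{K\sqrt T})$, $\eta=\Theta(\sqrt{Kn})$ into the non-asymptotic bounds already established in Theorem~\ref{theo:rate SGD} and Theorem~\ref{theo:rate AMS}, and then collecting terms in $T$, $K$, $n$. No new estimates are needed: the entire content is bookkeeping the relevant monomials in $(\eta,\eta_l)$, checking that the schedule is admissible, and identifying which lower-order terms get swallowed by the leading one.

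First I would record the elementary identities that drive every term. Under the chosen rates,
$$\eta\eta_l=\Theta\Big(\sqrt{\tfrac{n}{KT}}\Big),\quad \eta\eta_l TK=\Theta\big(\sqrt{TKn}\big),\quad \frac{\eta\eta_l}{n}=\Theta\Big(\tfrac{1}{\sqrt{TKn}}\Big),\quad \eta_l^2K=\Theta\Big(\tfrac{1}{TK}\Big),\quad \eta\eta_l^3K^2=\Theta\Big(\tfrac{\sqrt n}{T^{3/2}\sqrt K}\Big).$$
These map the three terms of Theorem~\ref{theo:rate SGD} onto the claimed Fed-EF-SGD rate: the initialization term becomes $\frac{f(\theta_1)-f(\theta^*)}{\sqrt{TKn}}$, the variance term becomes $\frac{\sigma^2}{\sqrt{TKn}}$, and the heterogeneity term becomes $\frac{\sqrt n}{T^{3/2}\sqrt K}(\sigma^2+K\sigma_g^2)$, with $C_1$, $L$, and the numerical constants absorbed into $\mathcal O(\cdot)$.

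For Fed-EF-AMS I would feed the same monomials into the five-term bound of Theorem~\ref{theo:rate AMS}. The initialization and $\frac{\eta\eta_l}{n}\sigma^2$ terms reproduce the two leading $\frac{1}{\sqrt{TKn}}$ contributions exactly as above, while the bracketed factor contributes $\frac{\eta_l^2K}{\sqrt\epsilon}+\frac{\eta\eta_l^3K^2}{\epsilon}=\Theta(\frac{1}{TK}+\frac{\sqrt n}{T^{3/2}\sqrt K})$ multiplying $(\sigma^2+6K\sigma_g^2)$, which is precisely the stated heterogeneity block. The remaining two terms $\frac{(C_1+1)G^2d}{T\sqrt\epsilon}$ and $\frac{3\eta\eta_l C_1^2LKG^2d}{T\epsilon}$ are of order $\frac{1}{T}$ and $\frac{\sqrt{Kn}}{T^{3/2}}$ respectively; since $\frac{1}{T}\le\frac{1}{\sqrt{TKn}}$ and $\frac{\sqrt{Kn}}{T^{3/2}}\le\frac{1}{\sqrt{TKn}}$ exactly when $T\ge Kn$, I would fold both into the leading $\mathcal O(\frac{1}{\sqrt{TKn}})$ term, which explains their absence from the stated rate.

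The only genuine check—and the step I would treat most carefully—is admissibility of the schedule. For Theorem~\ref{theo:rate SGD} the requirement $\eta_l\le\frac{1}{2KL\max\{4,\eta(C_1+1)\}}$ becomes, after inserting $\eta=\Theta(\sqrt{Kn})$, a condition of the form $\frac{1}{K\sqrt T}\lesssim\frac{1}{K^{3/2}\sqrt n\,L}$, i.e.\ $T\gtrsim KnL^2$; for Theorem~\ref{theo:rate AMS} one verifies the compound bound $\eta_l\le\frac{\sqrt\epsilon}{8KL}\min\{\cdots\}$ term by term, the binding factor (the one proportional to $1/\eta$) again reducing to $T$ larger than a polynomial in $K,n,L,G,\epsilon^{-1},C_1$ whose dominant dependence is the same $Kn$ scale. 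I expect no obstacle beyond this routine large-$T$ threshold, which is exactly the regime $T\gtrsim Kn$ already invoked to discard the $G^2d$ terms, so the whole corollary holds once $T$ exceeds a single explicit threshold.
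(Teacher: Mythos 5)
Your proposal is correct and coincides with how the paper obtains Corollary~\ref{coro:rates} (which is stated without a separate proof precisely because it follows by plugging $\eta_l=\Theta(\frac{1}{K\sqrt T})$, $\eta=\Theta(\sqrt{Kn})$ into Theorems~\ref{theo:rate SGD} and~\ref{theo:rate AMS}): your monomial bookkeeping, the absorption of the two $G^2d$ terms, and the admissibility check all match. Your observation that folding $\frac{1}{T}$ and $\frac{\sqrt{Kn}}{T^{3/2}}$ into $\mathcal O(\frac{1}{\sqrt{TKn}})$ needs $T\gtrsim Kn$ is harmless, since, as you note, the theorems' learning-rate conditions under this schedule already force $T\gtrsim KnL^2$.
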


\vspace{0.1in}
\noindent\textbf{Discussion.} From Corollary~\ref{coro:rates}, we see that when $T\geq K$, Fed-EF-AMS and Fed-EF-SGD have the same rate of convergence asymptotically. Therefore, our following discussion applies to the general Fed-EF scheme with both variants. In Corollary~\ref{coro:rates}, when $T\geq Kn\frac{\sigma_g^2}{\sigma^2}$, the global variance term $\sigma_g^2$ vanishes and the convergence rate becomes $\mathcal O(1/\sqrt{TKn})$\footnote{This also implies $K=\mathcal O(\frac{T\sigma^2}{n\sigma_g^2})$  is needed to achieve $\mathcal O(1/\sqrt{TKn})$ convergence rate which decreases in $K$, i.e., local steps helps. When $\sigma_g^2$ is large (i.e., high data heterogeneity), $K$ needs to be small to achieve this rate; when $\sigma_g^2$ is small (more homogeneous client data), we can tolerate larger $K$, i.e., local steps help even $K$ is already large. Intuitively, this is because in heterogeneous setting, the local losses might be very different from the global loss. Applying too many local steps may not always help for the global convergence.}. Thus, the proposed Fed-EF enjoys linear speedup w.r.t. the number of clients $n$, i.e., it reaches a $\delta$-stationary point (i.e., $\frac{1}{T}\sum_{t=1}^T\mathbb E\big[\| \nabla f(\theta_t) \|^2\big]\leq \delta$) as long as $TK=\Theta(1/n\delta^2)$, which matches the recent results of the full-precision counterparts~\citep{yang2021achieving,reddi2021adaptive} (Note that \cite{reddi2021adaptive} only analyzed the special case $\beta_1=0$, while our analysis is more general). The condition $T\geq Kn$ to reach linear speedup considerably improves $\mathcal O(K^3n^3)$ of the federated momentum SGD analysis in~\cite{yu2019linear}. In terms of communication complexity, by setting $K=\Theta(1/n\delta)$, Fed-EF only requires $T=\Theta(1/\delta)$ rounds of communication to converge. This matches one of the state-of-the-art FL communication complexity results of SCAFFOLD~\citep{karimireddy2020scaffold}.

\vspace{0.1in}
\noindent\textbf{Comparison with prior results on compressed FL.} As a special case of Fed-EF-SGD ($\eta\equiv 1$) and the most relevant previous work, the analysis of QSparse-local-SGD~\citep{basu2019qsparse} did not consider data heterogeneity, and their convergence rate $\mathcal O(1/\sqrt{TK})$ did not achieve linear speedup either. Our new analysis improves this result, showing that EF can also match the best rate of using full communication in federated learning. For FL with direct unbiased compression (without EF), the convergence rate of FedPaQ~\citep{reisizadeh2020fedpaq} is also $\mathcal O(1/\sqrt{TK})$. \cite{haddadpour2021federated} refined the analysis and algorithm of FedPaQ, which matches our $\mathcal O(1/\delta)$ communication complexity. To sum up, both Fed-EF-SGD and Fed-EF-AMS are able to achieve the convergence rates of the corresponding full-precision FL counterparts, as well as the state-of-the-art rates of federated optimization with unbiased compression.

\subsection{Analysis of Fed-EF Under Partial Client Participation}  \label{sec:partial}

Whilst being a popular strategy in classical distributed training, error feedback has not been analyzed under partial participation (PP), which is an important feature of FL. Next, we provide new analysis and results of EF under this setting, considering both local steps and data heterogeneity in federated learning. In each round $t$, assume only $m$ randomly chosen clients (without replacement) indexed by $\mathcal M_t\subseteq [n]$ are active and participate in training (i.e., changing $i\in [n]$ to $i\in \mathcal M_t$ at line 4 of Algorithm~\ref{alg:Fed-EF}). For the remaining $(n-m)$ inactive clients, we simply set $e_{t,i}=e_{t-1,i}$, $\forall i\in [n]\setminus \mathcal M_t$. The convergence rate is given as below.

\begin{Theorem}[Fed-EF, partial participation] \label{theo:partial-simple}
In each round, suppose $m$ randomly chosen clients in $\mathcal M_t$ participate in the training. Under Assumptions~\ref{ass:smooth} to~\ref{ass:compress_diff}, suppose the learning rates satisfy
$\eta_l\leq \min\Big\{ \frac{1}{6},\frac{m}{96C'\eta},\frac{m^2}{53760(n-m) C_1\eta}, \frac{1}{4\eta}, \frac{1}{32C_1\eta}\Big\}\frac{1}{KL}$. Fed-EF-SGD admits
\begin{align*}
    &\frac{1}{T}\sum_{t=1}^T\mathbb E\big[\| \nabla f(\theta_t) \|^2\big]\lesssim \frac{f(\theta_1)-f(\theta^*)}{\eta\eta_l TK}+ \Big[ \frac{\eta\eta_l L}{m}+\frac{8\eta\eta_l C_1Ln}{m^2} \Big]\sigma^2 + \frac{3\eta\eta_lC'KL}{m}\sigma_g^2\\
    &\hspace{1.2in} + \Big[ \frac{5\eta_l^2KL^2}{2}+\frac{15\eta\eta_l^3C'K^2L^3}{m}+\frac{560\eta\eta_lC_1(n-m)L}{m^2} \Big](\sigma^2+6K\sigma_g^2),
\end{align*}
where $C_1=\frac{q^2}{(1-q^2)^3}$ and $C'=\frac{n-m}{n-1}$. Choosing $\eta=\Theta(\sqrt{Km})$, $\eta_l=\Theta(\frac{\sqrt m}{K\sqrt{Tn}})$, we have
\begin{align*}
    \frac{1}{T}\sum_{t=1}^T\mathbb E\big[\| \nabla f(\theta_t) \|^2\big]&=\mathcal O\Big(\frac{\sqrt n}{\sqrt m}\big( \frac{f(\theta_1)-f(\theta^*)}{\sqrt{TKm}} + \frac{1}{\sqrt{TKm}}\sigma^2 +\frac{\sqrt{K}}{\sqrt{Tm}}\sigma_g^2 \big)\Big).
\end{align*}
\end{Theorem}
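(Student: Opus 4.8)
The plan is to use the error-feedback shadow-sequence (perturbed-iterate) technique, adapted to partial participation. Define the corrected iterate $\hat\theta_t\eqdef \theta_t-\eta\bar e_t$, where $\bar e_t\eqdef \frac1n\sum_{i=1}^n e_{t,i}$ averages the error accumulators over \emph{all} $n$ clients (including inactive ones, whose errors are frozen). Using the error update $e_{t+1,i}=e_{t,i}+\del_{t,i}-\widetilde\del_{t,i}$ for $i\in\mathcal M_t$, $e_{t+1,i}=e_{t,i}$ otherwise, together with the (unbiased) aggregation $\overline{\widetilde\del}_t=\frac1m\sum_{i\in\mathcal M_t}\widetilde\del_{t,i}$, a short computation yields the shadow dynamics
\begin{align*}
  \hat\theta_{t+1}=\hat\theta_t-\frac{\eta}{m}\sum_{i\in\mathcal M_t}\del_{t,i}+\eta\,\frac{n-m}{mn}\sum_{i\in\mathcal M_t}\big(\del_{t,i}-\widetilde\del_{t,i}\big).
\end{align*}
The first term is an unbiased estimate (over $\mathcal M_t$) of the full uncompressed average $\frac1n\sum_i\del_{t,i}$, so $\hat\theta_t$ essentially tracks full-precision Fed-SGD with partial participation; the second term is the extra perturbation due to compression, whose coefficient $\frac{n-m}{mn}$ vanishes when $m=n$ and encodes the ``delayed error compensation'' effect. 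I will also exploit the identity $\del_{t,i}-\widetilde\del_{t,i}=e_{t+1,i}-e_{t,i}$ so that this perturbation is controlled by error increments.

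Next I would apply the $L$-smoothness descent lemma to $\hat\theta_t$ and take conditional expectation over both the stochastic gradients and the random active set $\mathcal M_t$. Since $\norm{\hat\theta_t-\theta_t}=\eta\norm{\bar e_t}$, smoothness lets me replace $\nabla f(\hat\theta_t)$ by the target $\nabla f(\theta_t)$ at the cost of an $\eta^2 L^2\norm{\bar e_t}^2$ penalty. The uncompressed part is then handled by the standard federated decomposition: writing each local update around the exact gradient direction produces (a) client drift over the $K$ local steps, contributing the $\eta_l^2K^2L^2(\sigma^2+K\sigma_g^2)$ terms, (b) the local stochastic variance $\sigma^2/m$, and (c) the partial-participation sampling variance of $\frac1m\sum_{i\in\mathcal M_t}\nabla f_i$ about $\nabla f$, which for without-replacement sampling of $m$ out of $n$ clients produces exactly the factor $C'=\frac{n-m}{n-1}$ on the $\sigma_g^2$ term.

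The crux is the error-accumulation lemma, i.e.\ bounding $\sum_t\frac1n\sum_i\EE\norm{e_{t,i}}^2$ and the perturbation $\sum_t\EE\norm*{\sum_{i\in\mathcal M_t}(e_{t+1,i}-e_{t,i})}^2$. Starting from the contraction $\norm{e_{t+1,i}}^2\le q^2\norm{\del_{t,i}+e_{t,i}}^2$ and Young's inequality, under full participation this closes into a geometric recursion with ratio governed by $q^2$, yielding the familiar $\frac{q^2}{(1-q^2)^2}$ constant. Under partial participation each client's error contracts only on the rounds it is active (probability $m/n$) and is frozen otherwise, so the \emph{effective} contraction rate degrades to $\Theta\!\big(\tfrac{m}{n}(1-q^2)\big)$; the geometric sum then loses an extra factor of order $\frac{n}{m(1-q^2)}$, which is precisely what upgrades the constant to $C_1=\frac{q^2}{(1-q^2)^3}$ and injects the $\frac{n-m}{m^2}$ and $\frac{n}{m^2}$ amplifications into the bound. \textbf{This is the main obstacle}: because a client's error is frozen while inactive and the identity of active clients is random and coupled across rounds, the clean per-round contraction of standard EF breaks down. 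I expect to argue the effective contraction through a conditional-expectation recursion over the filtration (or, equivalently, a per-client renewal argument over its active rounds), carefully tracking how freezing for $\sim n/m$ rounds amplifies the accumulated error — the source of the extra $(1-q^2)^{-1}$ and $n/m$ slow-down.

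Finally I would assemble all the pieces, absorb the higher-order terms using the stated learning-rate constraint on $\eta_l$ (which is exactly what keeps the drift and error-feedback corrections subdominant to the descent), telescope $\EE[f(\hat\theta_t)]$ over $t=1,\dots,T$, and divide by $T$ to obtain the first displayed bound. Substituting $\eta=\Theta(\sqrt{Km})$ and $\eta_l=\Theta\!\big(\tfrac{\sqrt m}{K\sqrt{Tn}}\big)$, so that $\eta\eta_l=\Theta\!\big(\tfrac{m}{\sqrt{KTn}}\big)$, makes the initialization and variance terms collapse to $\frac{\sqrt n}{\sqrt m}\cdot\frac{1}{\sqrt{TKm}}$ and the heterogeneity term to $\frac{\sqrt n}{\sqrt m}\cdot\frac{\sqrt K}{\sqrt{Tm}}$, giving the claimed $\mathcal O\!\big(\sqrt{n/m}\,(\cdots)\big)$ rate with its characteristic $\sqrt{n/m}$ partial-participation slow-down.
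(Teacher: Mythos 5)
Your proposal is correct and follows essentially the same route as the paper's proof: a virtual (perturbed) iterate with frozen errors for inactive clients, unbiasedness of the sampled update for the descent term, the without-replacement sampling variance producing $C'=\frac{n-m}{n-1}$, and — the crux — a conditional-expectation recursion over the sampling filtration showing the error contraction degrades to $1-\Theta\big(\tfrac{m}{n}(1-q^2)\big)$, whose geometric sum yields exactly the constant $C_1=\frac{q^2}{(1-q^2)^3}$ and the $n/m^2$, $(n-m)/m^2$ amplifications you predict, followed by the same learning-rate assembly. The only deviation is minor: the paper normalizes the frozen-error average by $1/m$ rather than your $1/n$, which makes the shadow recursion exactly $x_{t+1}=x_t-\eta\,\frac{1}{m}\sum_{i\in\mathcal M_t}\del_{t,i}$ with no residual compression term, whereas your normalization leaves the $\eta\,\frac{n-m}{mn}\sum_{i\in\mathcal M_t}(\del_{t,i}-\widetilde\del_{t,i})$ perturbation — lower order and controllable via error increments as you note, but extra bookkeeping that the paper's choice of scaling avoids.
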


\begin{remark}
We present Fed-EF-SGD for simplicity. With more complicated analysis, similar result applies to Fed-EF-AMS yielding the same asymptotic convergence rate as Fed-EF-SGD.
\end{remark}

\begin{remark}
When $m=n$ (full participation), Theorem~\ref{theo:partial-simple} reduces to the $\mathcal O(1/\sqrt{TKn})$ rate in Corollary~\ref{coro:rates}. When $q=0$ (no compression), we can recover the $\mathcal O(\sqrt{K/Tm})$ rate of full-precision Fed-SGD under PP~\citep{yang2021achieving}.
\end{remark}

The convergence rate in Theorem~\ref{theo:partial-simple} involves $m$ in the denominator, instead of $n$ as in Corollary~\ref{coro:rates}, which is a result of larger gradient estimation variance due to client sampling. Importantly, compared with the $\mathcal O(\sqrt{K/Tm})$ rate of~\cite{yang2021achieving} for full-precision local SGD under partial participation, Theorem~\ref{theo:partial-simple} extracts an additional slow-down factor of $\sqrt{n/m}$.

\vspace{0.1in}
\noindent\textbf{Effect of delayed error compensation.} We argue that this is a consequence of the mechanism of error feedback. Intuitively, with full participation where each client is active in every round, EF itself can, to a large extent, be regarded as subtly ``delaying'' the ``untransmitted'' gradient information ($\mathcal C(\del_t)-\del_t$) to the next iteration. However, under partial participation, in each round $t$, the error accumulator of a chosen client actually contains the latest information from round $t-s$, where $s$ can be viewed as the ``lag'' which follows a geometric distribution with $\mathbb E[s]=n/m$. In some sense, this shares similar spirit to the problem of asynchronous distributed optimization with delayed gradients (e.g.,~\cite{agarwal2011distributed,lian2015asynchronous}). The delayed error information in Fed-EF under PP is likely to pull the model away from heading towards a stationary point (i.e., slower down the norm convergence), especially for highly non-convex loss functions. In Section~\ref{sec:experiment}, we will propose a simple strategy to empirically justify (and to an extent mitigate) the negative impact of the stale error compensation on the norm convergence.

\section{Numerical Study}\label{sec:experiment}

We provide numerical results to show the empirical efficacy of Fed-EF in communication-efficient FL problems and justify our theoretical analysis. Our main objective is to show: 1) Fed-EF is able to provide matching performance as full-precision FL, with significantly less communication; 2) the stale error compensation effect would indeed slower down the norm convergence of Fed-EF under partial participation. We include representative main results here and place the implementation details and more figures and tables in Appendix~\ref{app sec:experiment}.

\subsection{Experiment Setup}

\noindent\textbf{Datasets.} We present experiments on three popular FL datasets. The MNIST dataset~\citep{lecun1998gradient} contains 60000 training examples and 10000 test samples of $28\times 28$ gray-scale hand-written digits from 0 to 9. The FMNIST dataset~\citep{xiao2017fashion} has the same input size and train/test split as MNIST, but the samples are fashion products (e.g., clothes and bags). The CIFAR-10~\citep{cifar} dataset includes 50000 natural images of size $32\times 32$ each with 3 RGB channels for training and 10000 images for testing. There are 10 classes, e.g., airplanes, cars, cats, etc. We follow a standard strategy for CIFAR-10 dataset to pre-process the training images by a random crop, a random horizontal flip and a normalization of the pixel values to have zero mean and unit variance. For test images, we only apply the normalization step.

\vspace{0.1in}
\noindent\textbf{Federated setting.} In our experiments, we test $n=200$ clients. The clients' local data are set to be highly non-iid (heterogeneous), where we restrict the local data samples of each client to come from at most two classes: we first split the data samples into $2n=400$ shards each containing samples from only one class; then each client is assigned with two shards uniformly at random. We run $T=100$ rounds, where one FL training round is finished after all the clients have performed one epoch of local training. The local mini-batch size is $32$, which means that the clients conduct 10 local iterations per round. Regarding partial participation, we uniformly randomly sample $m$ clients in each round. We present the results at multiple sampling proportion $p=m/n$ (e.g., $p=0.1$ means choosing $20$ active clients per round). To measure the communication cost, we report the accumulated number of bits transmitted from the client to server (averaged over all clients), assuming that full-precision gradients are $32$-bit encoded.

\vspace{0.1in}
\noindent\textbf{Methods and compressors.} For both Fed-EF variants, we implement \textbf{Sign} compressor, and \textbf{TopK} compressor\footnote{For \textbf{TopK}, in our implementation we also apply it in a ``layer-wise'' manner similar to \textbf{Sign}. Let $k$ denote the proportion of coordinates selected. For each layer with $d_i$ parameters, we pick $\max(1,\lfloor kd_i\rfloor)$ gradient dimensions. The maximum operator avoids the case where a layer is never updated.} with compression rate $k\in\{0.001,0.01,0.05\}$. We also employ a more compressive strategy \textbf{heavy-Sign} (Definition~\ref{def:heavy-sign}) where \textbf{Sign} is applied after \textbf{TopK} (i.e., a further $32$x compression over \textbf{TopK} under same sparsity). We test \textbf{heavy-Sign} with $k\in\{0.01,0.05,0.1\}$. We compare our method with the analogue federated learning approaches using full-precision updates and using unbiased stochastic quantization \textbf{``Stoc'' without error feedback}~\citep{alistarh2017qsgd} without error feedback. For this compressor, we test parameter $b\in\{1,2,4\}$. For SGD, this algorithm is equivalent to FedCOM/FedPaQ~\citep{reisizadeh2020fedpaq,haddadpour2021federated}. The detailed introduction of the competing methods (with both SGD and AMSGrad variants) can be found in Algorithm~\ref{alg:competing} in Appendix~\ref{app sec:competing}. In our experiments, the reported results are averaged over multiple independent runs.

\subsection{Fed-EF Matches Full-Precision FL with Substantially Less Communication}

Firstly, we demonstrate the feasible performance of Fed-EF in practical FL tasks. For both datasets, we train a ReLU activated CNN with two convolutional layers followed by one max-pooling, one dropout and two fully-connected layers before the softmax output. We test each compression strategy with different compression ratios, and report the test accuracy in Figure~\ref{fig:MNIST-acc-compressor-0.5} and Figure~\ref{fig:FMNIST-acc-compressor-0.5} for MNIST and FMNIST, respectively, when the participation rate $p=0.5$. The set of results when $p=0.1$ is placed in Appendix~\ref{app sec:experiment}. We see that in general, the performance gets worse when we increase the compression rate, as expected from the theory.

\begin{figure}[t!]
    \begin{center}
        \mbox{\hspace{-0.1in}
        \includegraphics[width=2.25in]{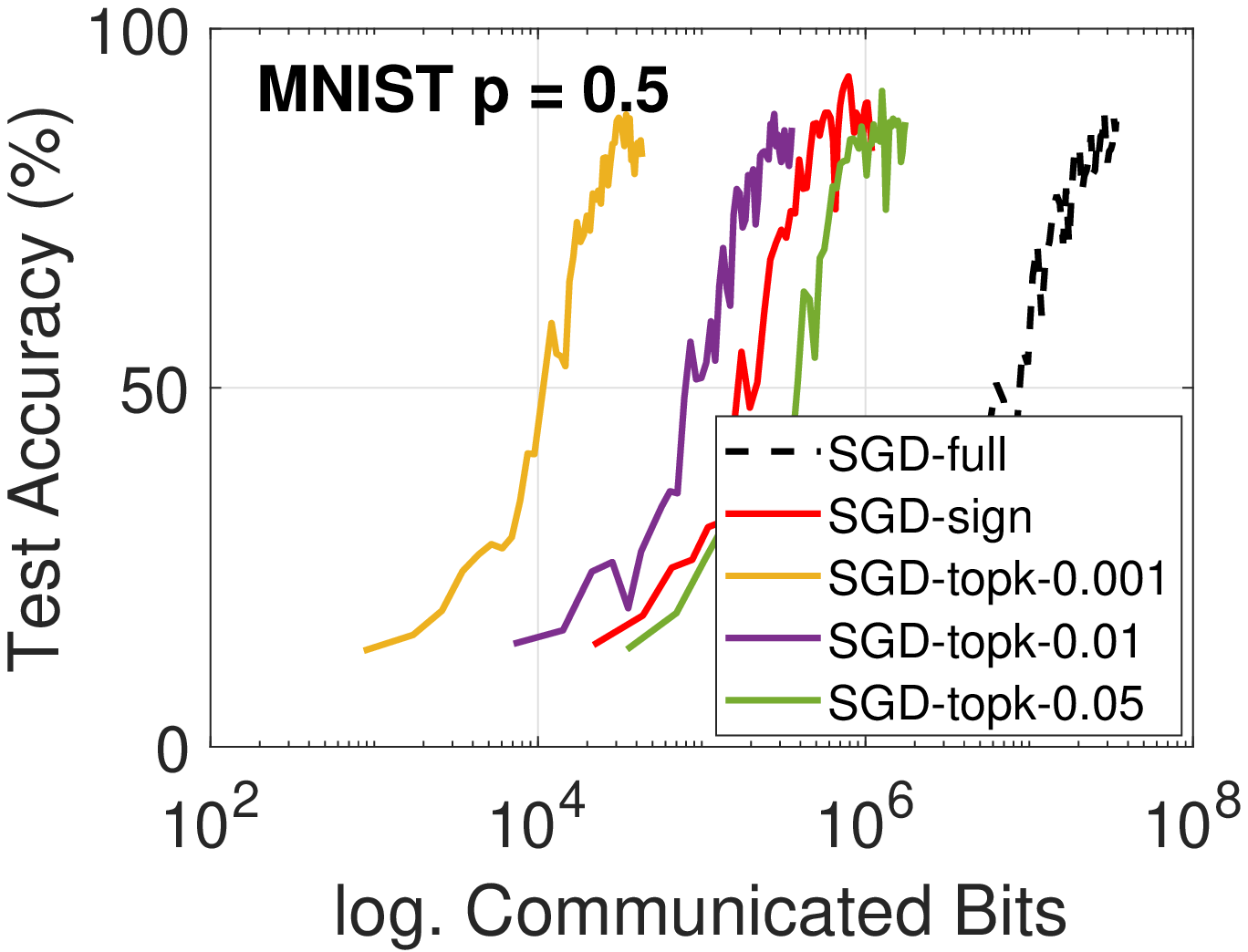}\hspace{-0.1in}
        \includegraphics[width=2.25in]{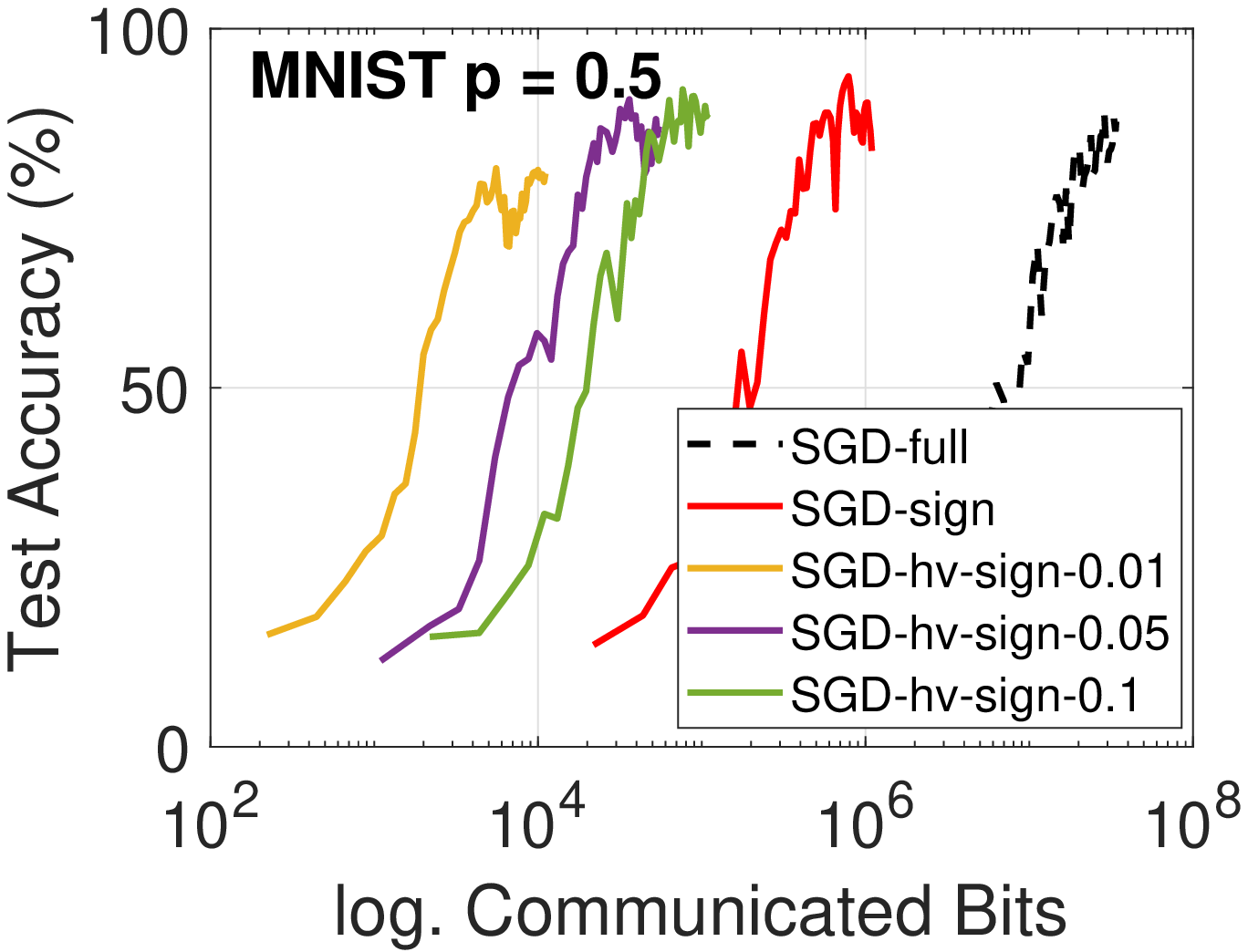}\hspace{-0.1in}
        \includegraphics[width=2.25in]{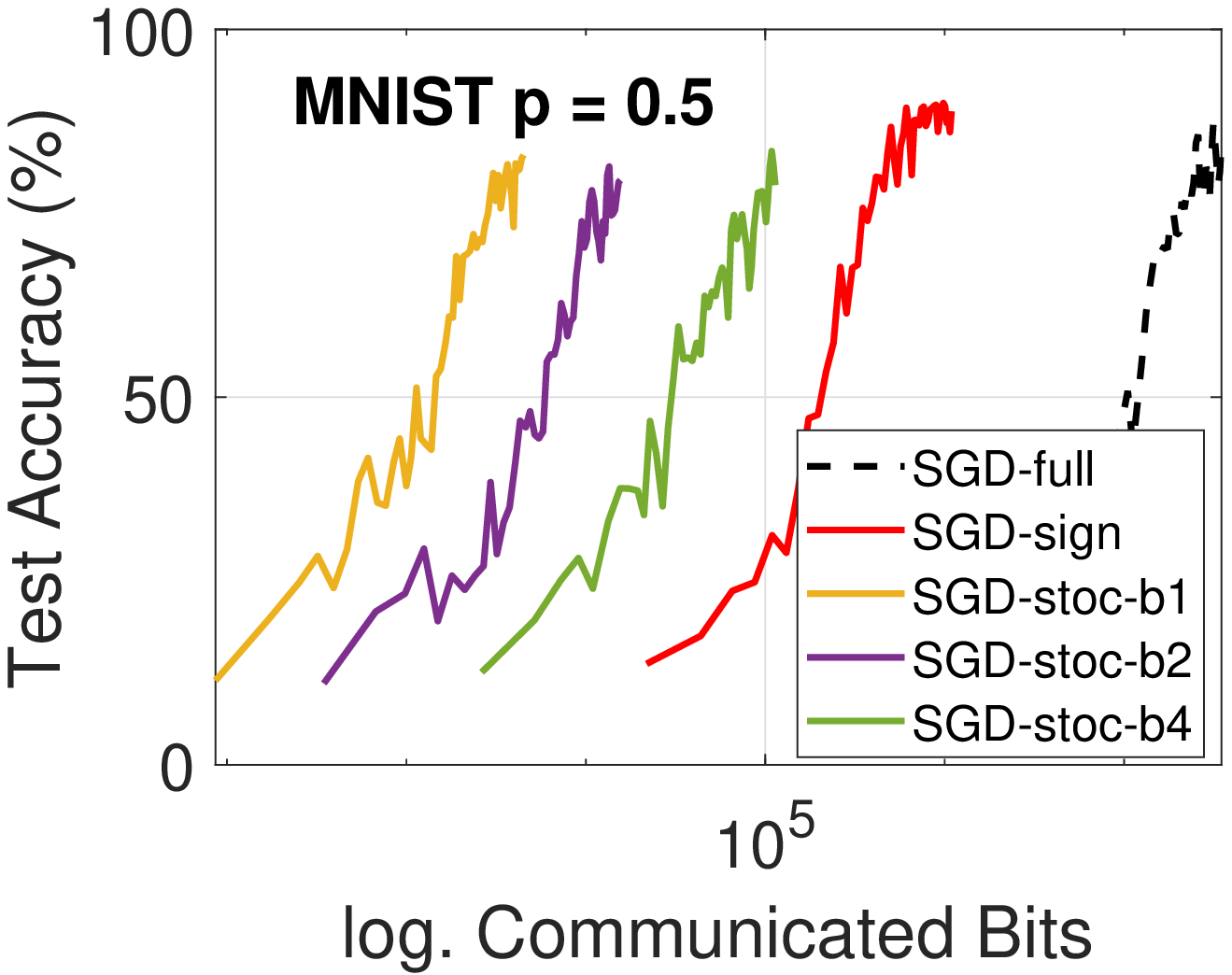}
        }
        \mbox{\hspace{-0.1in}
        \includegraphics[width=2.25in]{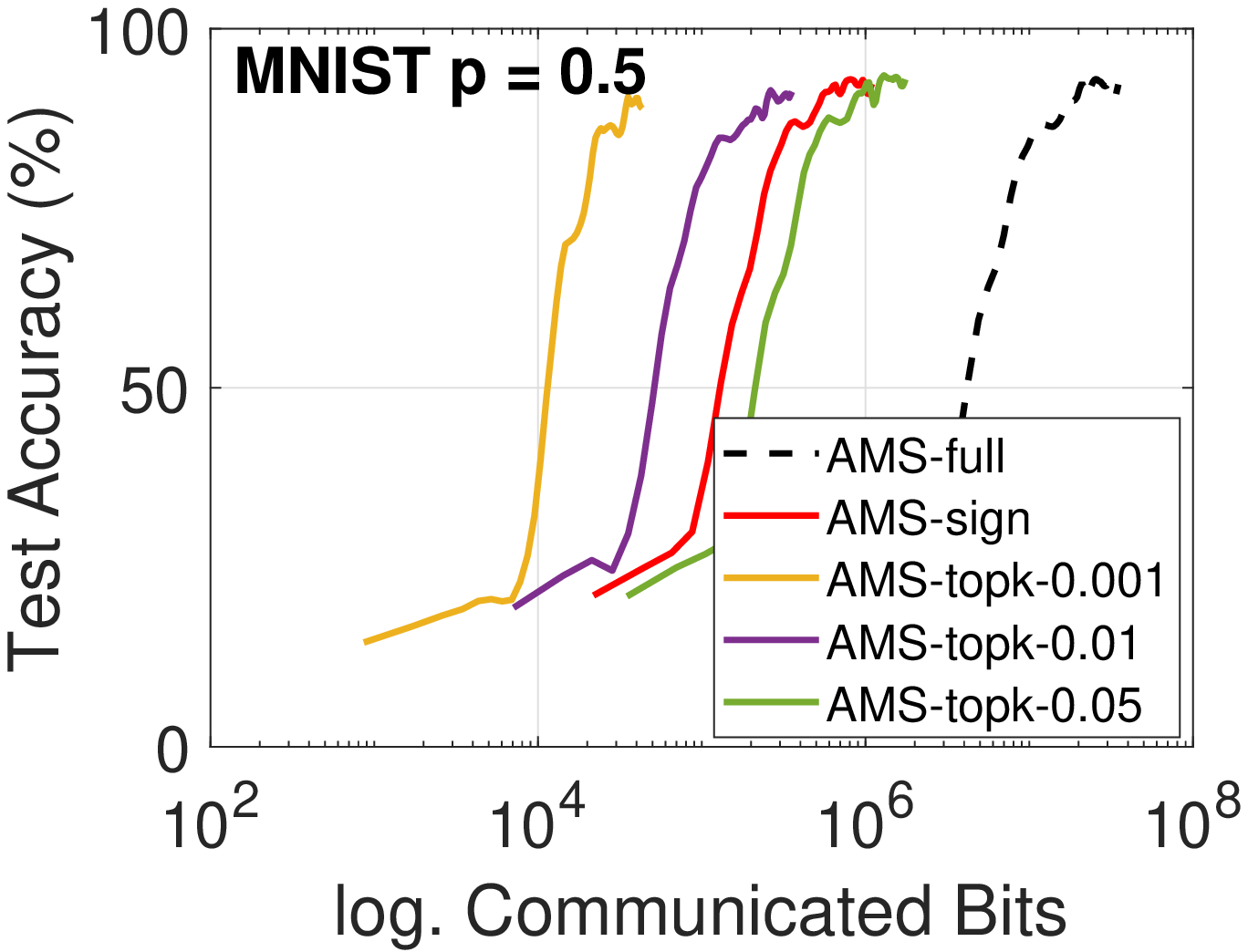}\hspace{-0.1in}
        \includegraphics[width=2.25in]{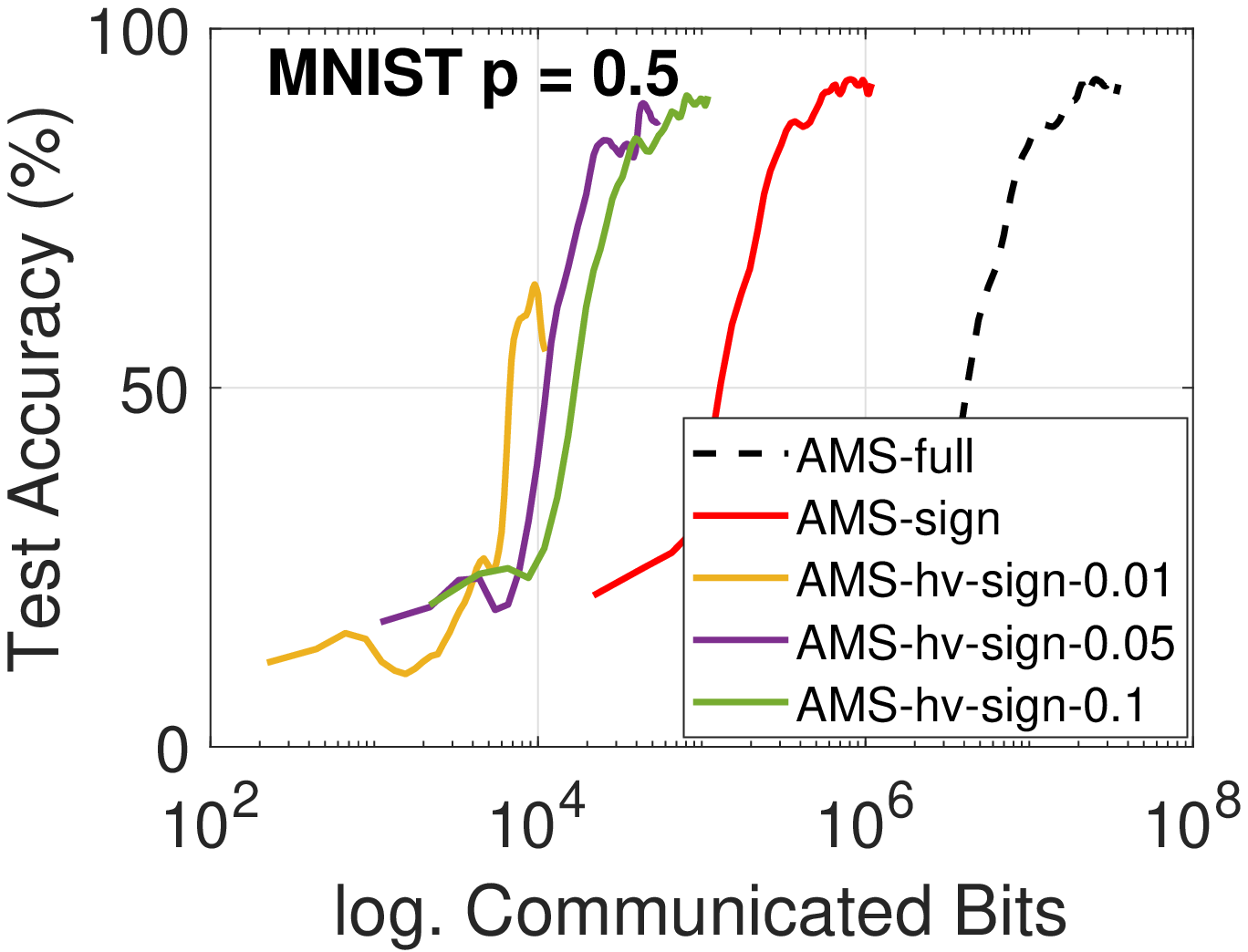}\hspace{-0.1in}
        \includegraphics[width=2.25in]{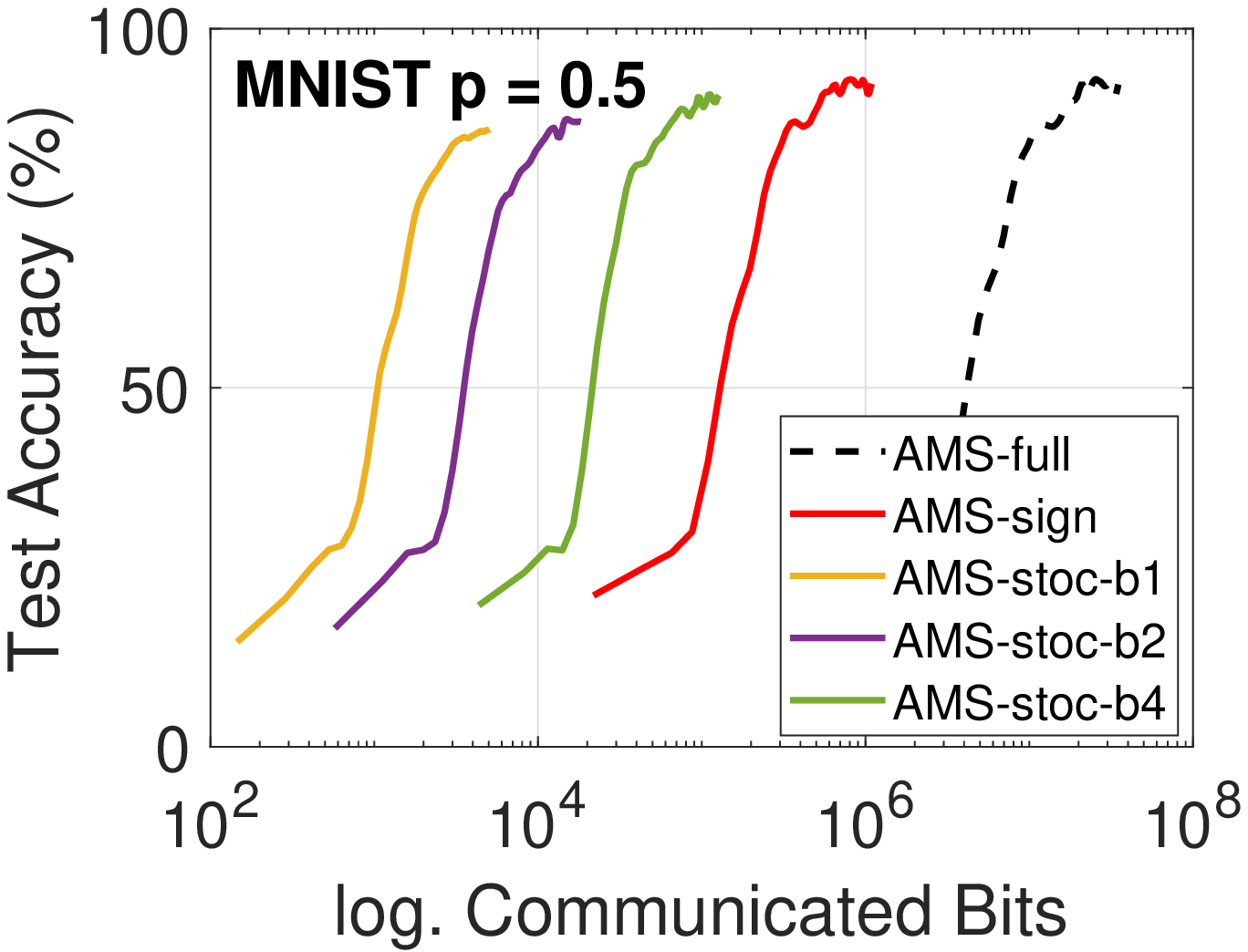}
        }
    \end{center}

\vspace{-0.2in}

	\caption{Test accuracy of Fed-EF on MNIST dataset trained by CNN. ``sign'', ``topk'' and ``hv-sign'' are applied with Fed-EF, while ``Stoc'' is the stochastic quantization without EF. Participation rate $p=0.5$, non-iid data. 1st row: Fed-EF-SGD. 2nd row: Fed-EF-AMS.}
	\label{fig:MNIST-acc-compressor-0.5}\vspace{0.2in}
\end{figure}

\begin{figure}[h!]
    \begin{center}
        \mbox{\hspace{-0.1in}
        \includegraphics[width=2.25in]{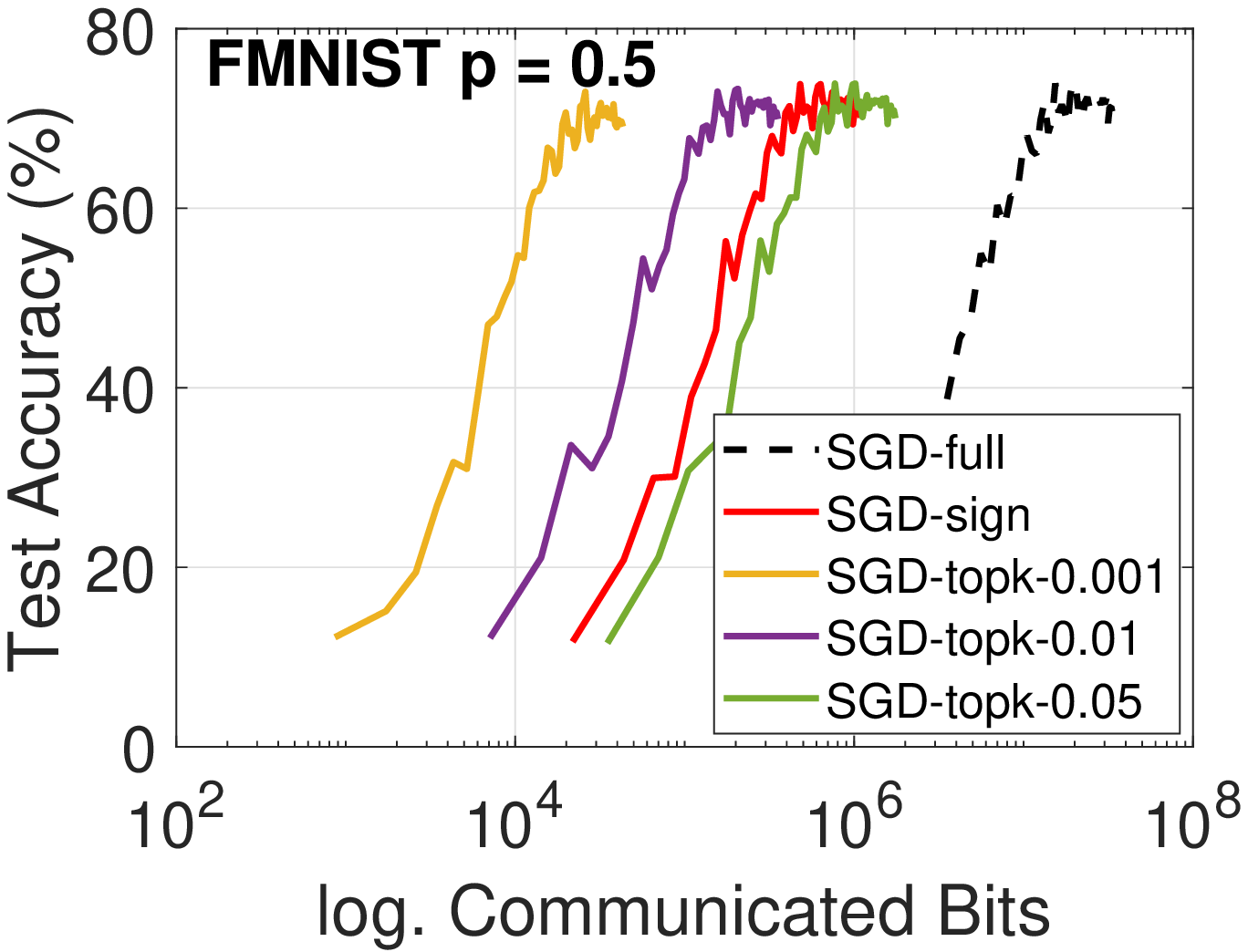}\hspace{-0.1in}
        \includegraphics[width=2.25in]{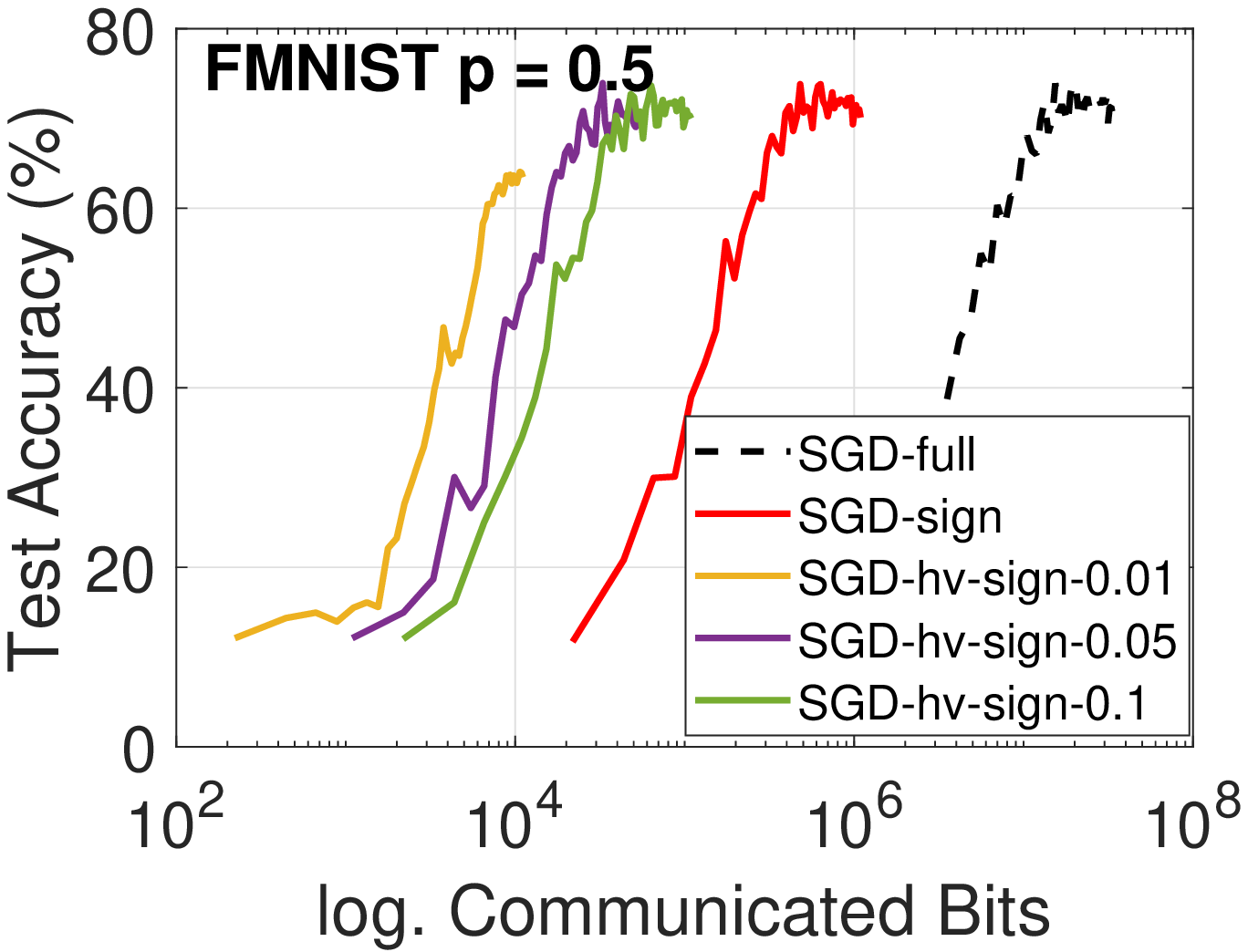}\hspace{-0.1in}
        \includegraphics[width=2.25in]{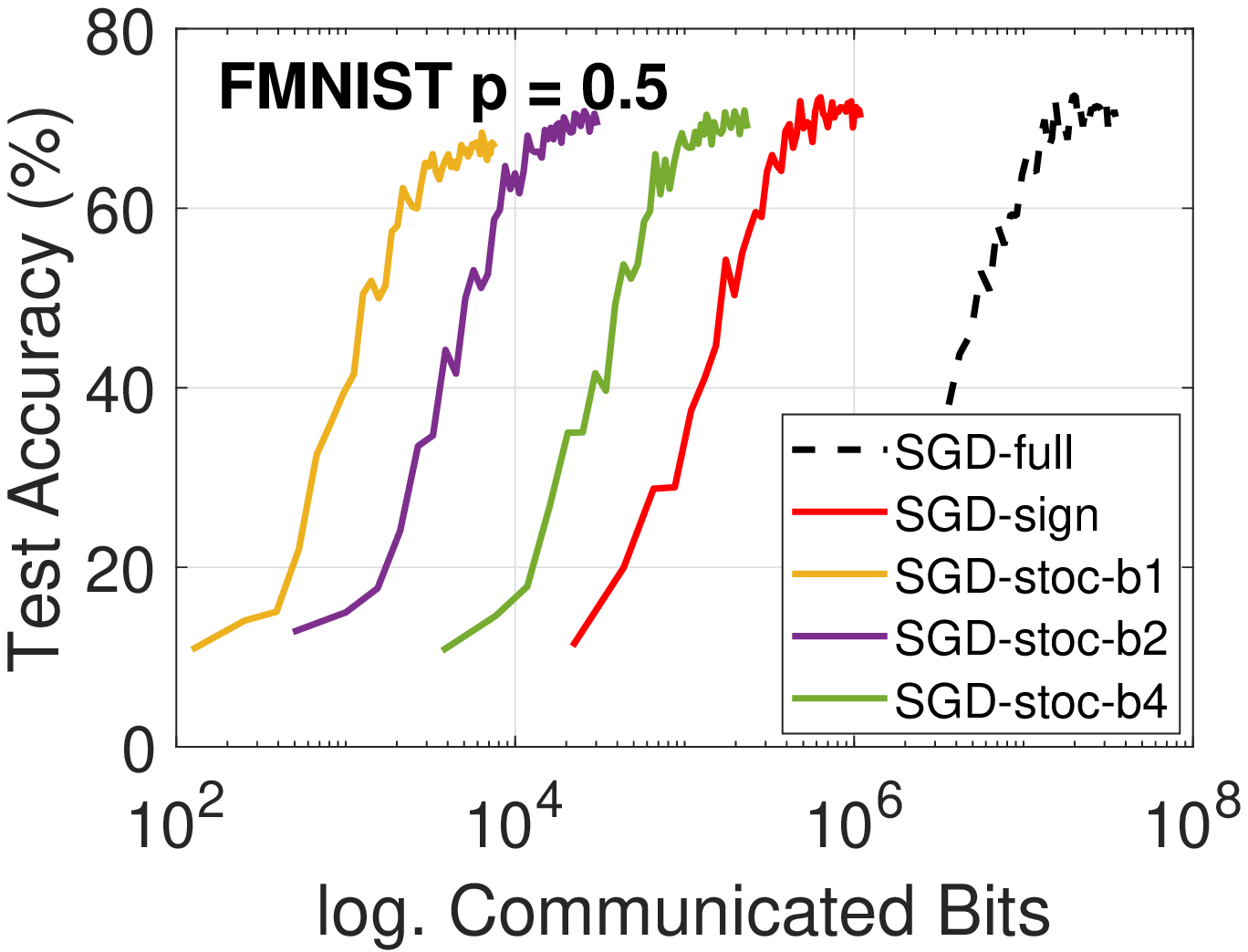}
        }
        \mbox{\hspace{-0.1in}
        \includegraphics[width=2.25in]{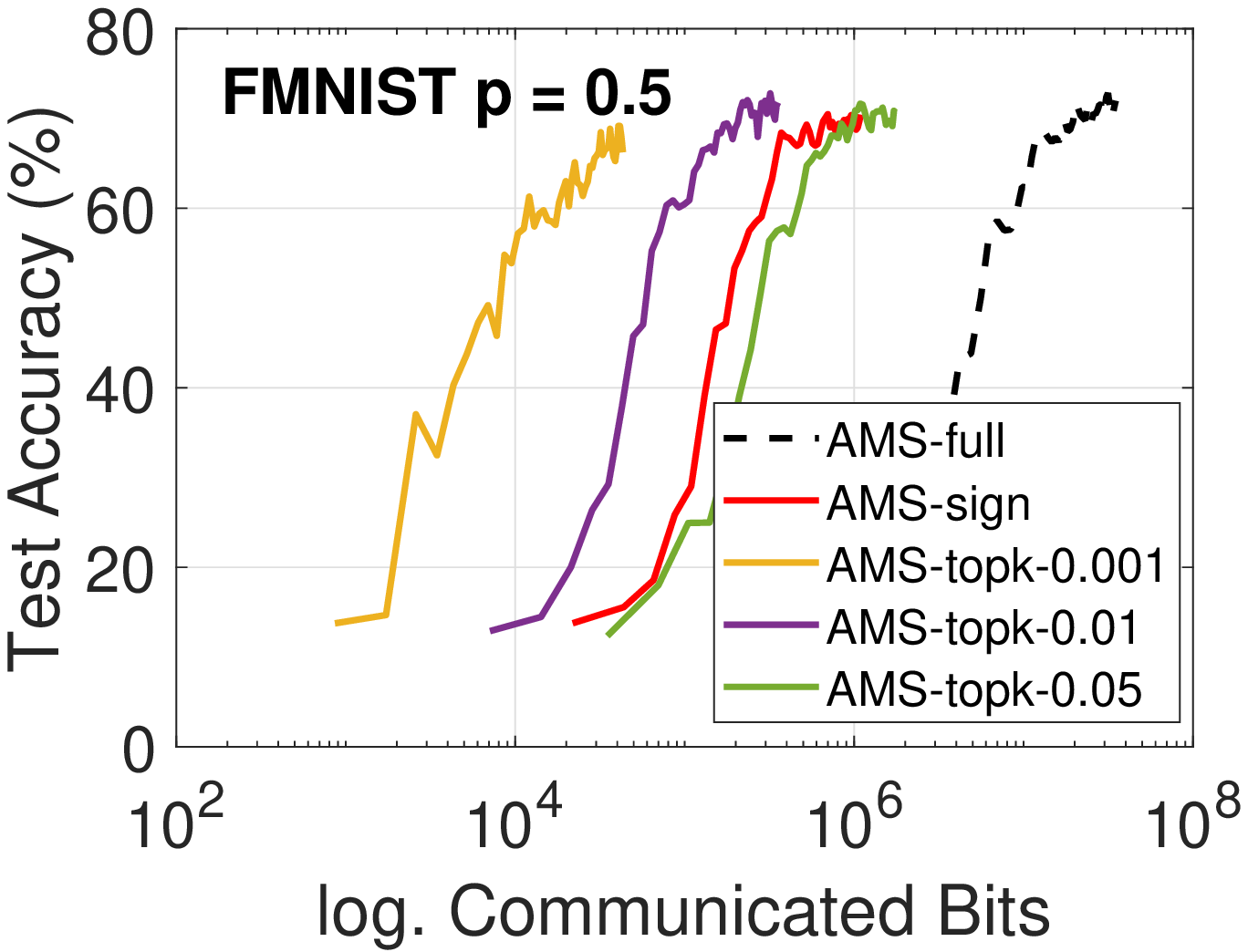}\hspace{-0.1in}
        \includegraphics[width=2.25in]{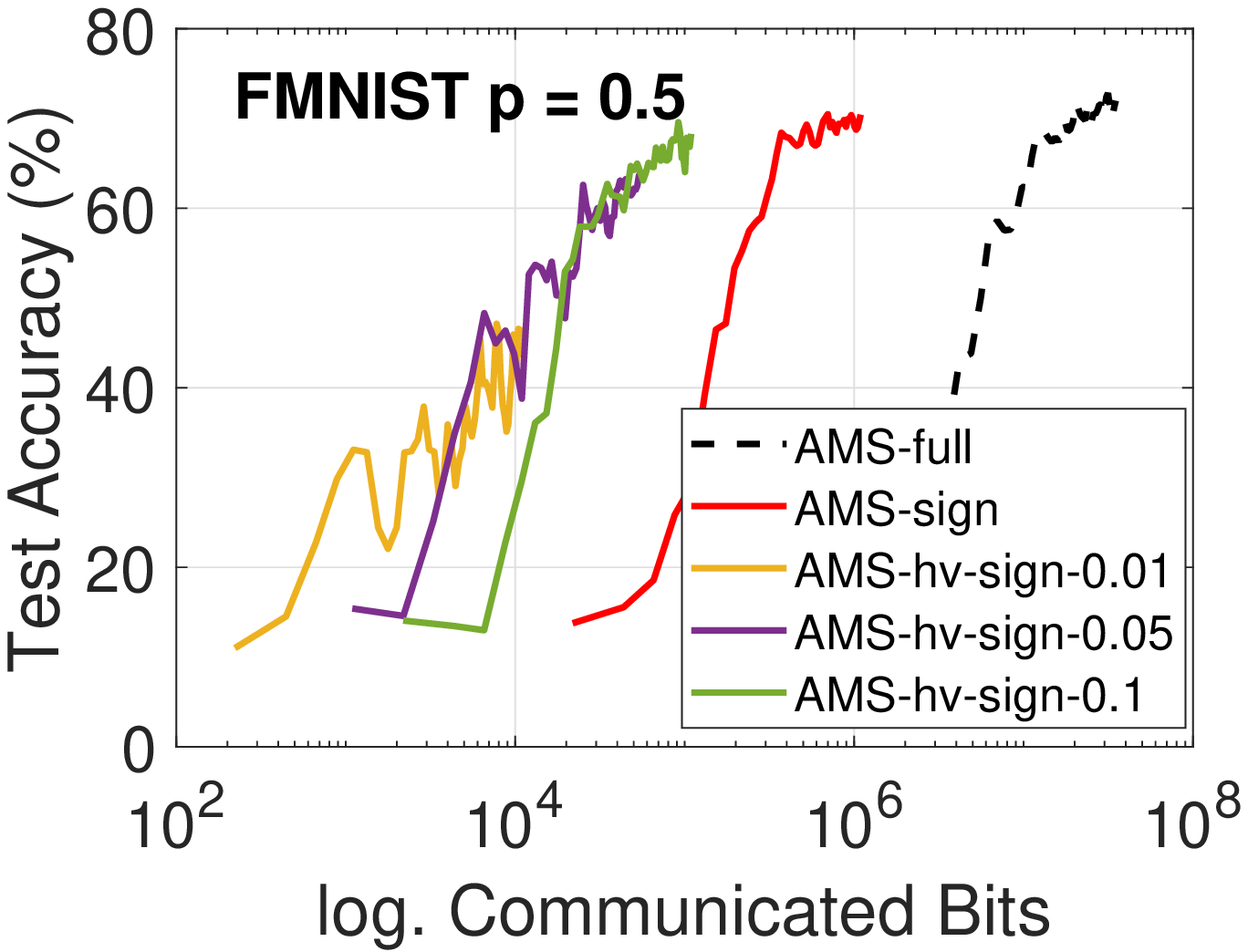}\hspace{-0.1in}
        \includegraphics[width=2.25in]{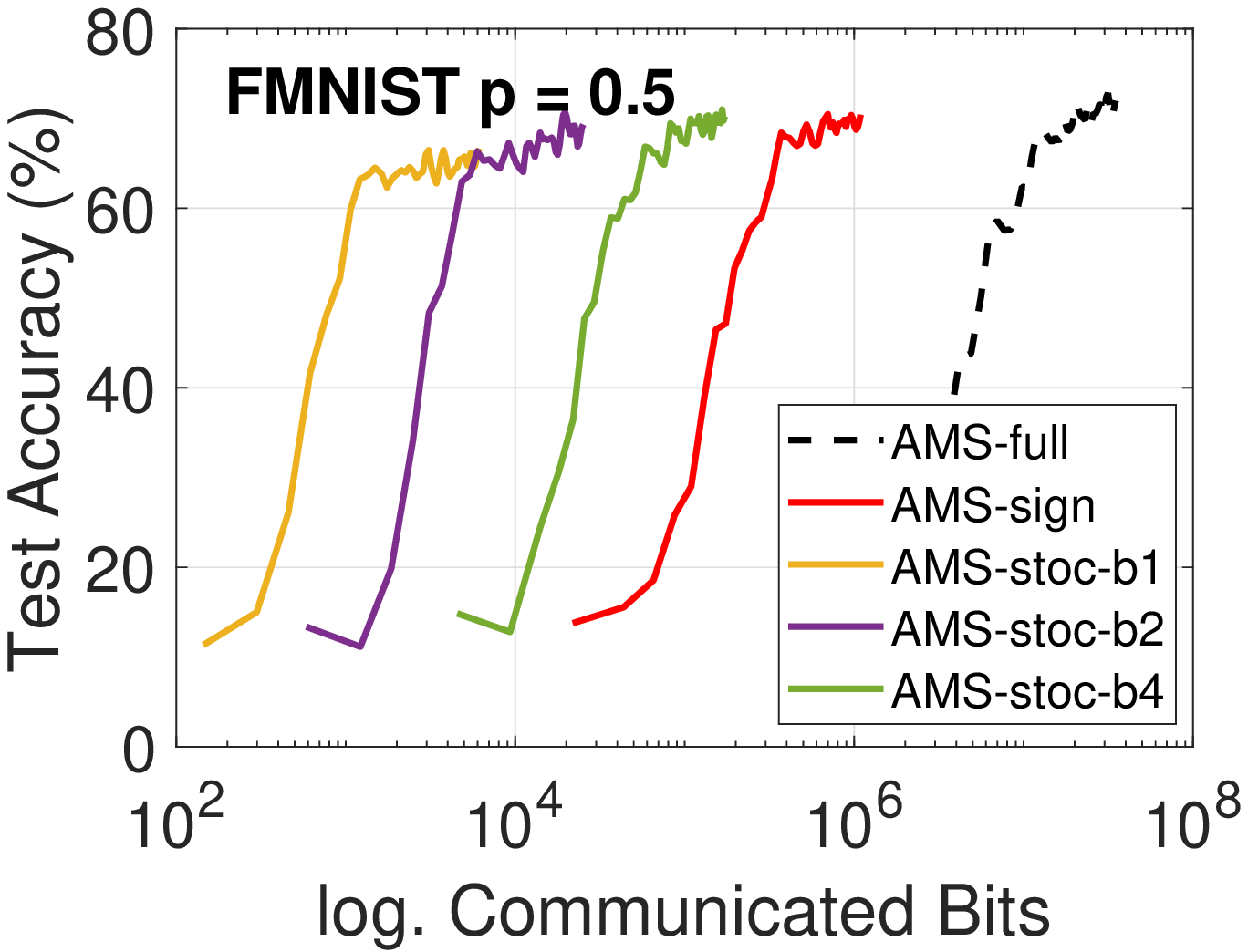}
        }
    \end{center}

\vspace{-0.2in}

	\caption{Test accuracy of Fed-EF on FMNIST dataset trained by CNN. ``sign'', ``topk'' and ``hv-sign'' are applied with Fed-EF, while ``Stoc'' is the stochastic quantization without EF. Participation rate $p=0.5$, non-iid data. 1st row: Fed-EF-SGD. 2nd row: Fed-EF-AMS.}
	\label{fig:FMNIST-acc-compressor-0.5}
\end{figure}


\begin{figure}[t!]
    \begin{center}
        \mbox{\hspace{-0.15in}
        \includegraphics[width=1.75in]{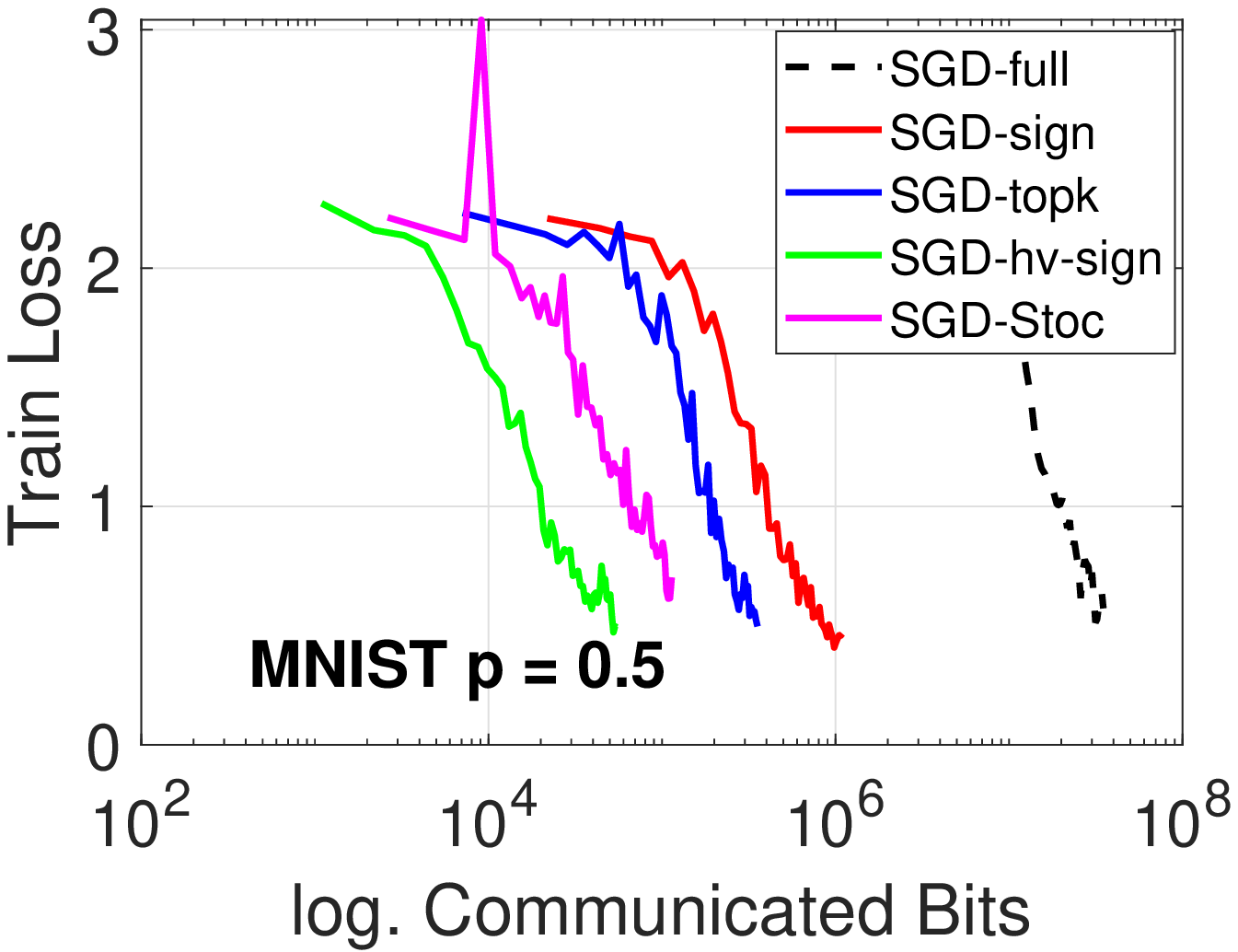}\hspace{-0.12in}
        \includegraphics[width=1.75in]{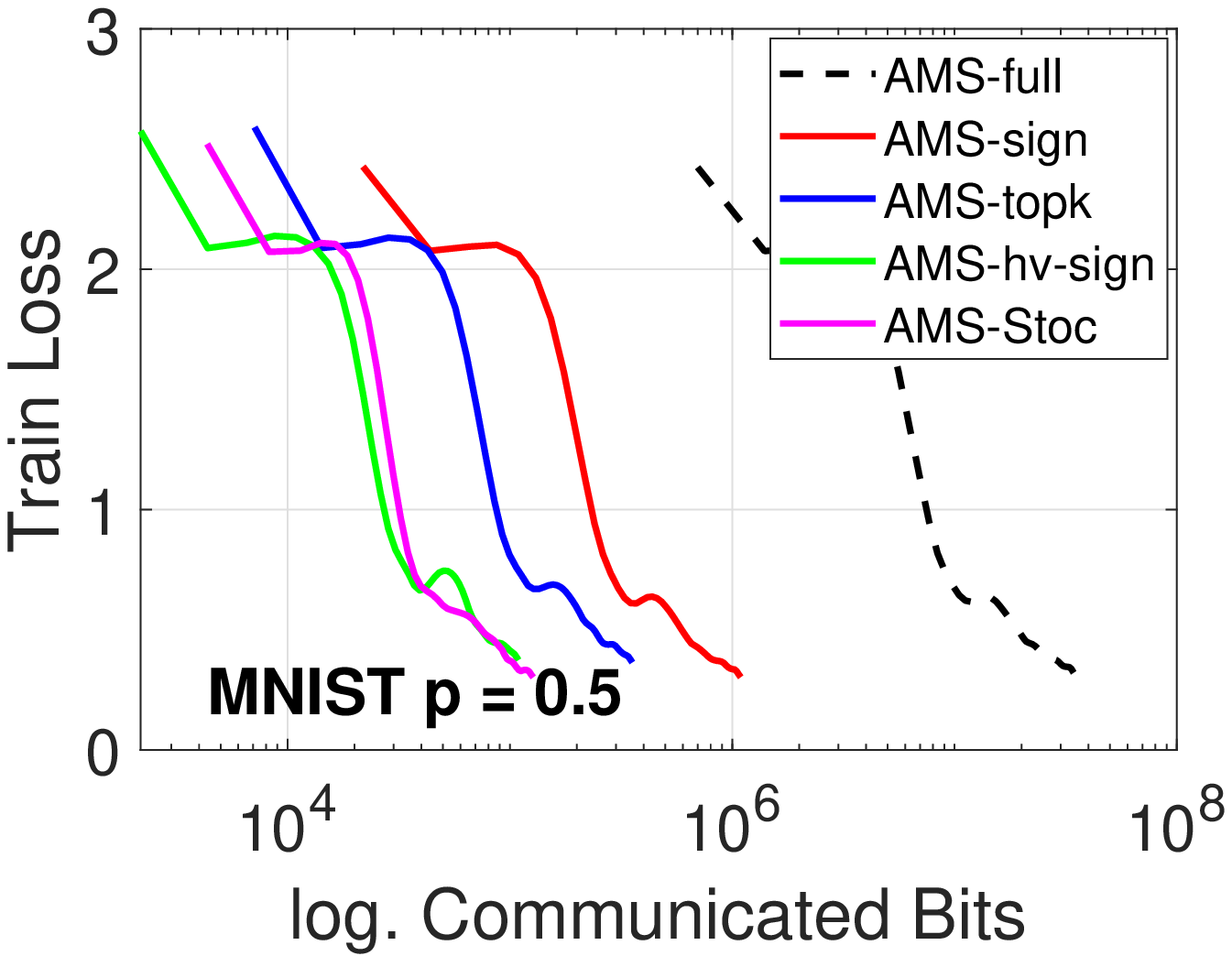}\hspace{-0.12in}
        \includegraphics[width=1.75in]{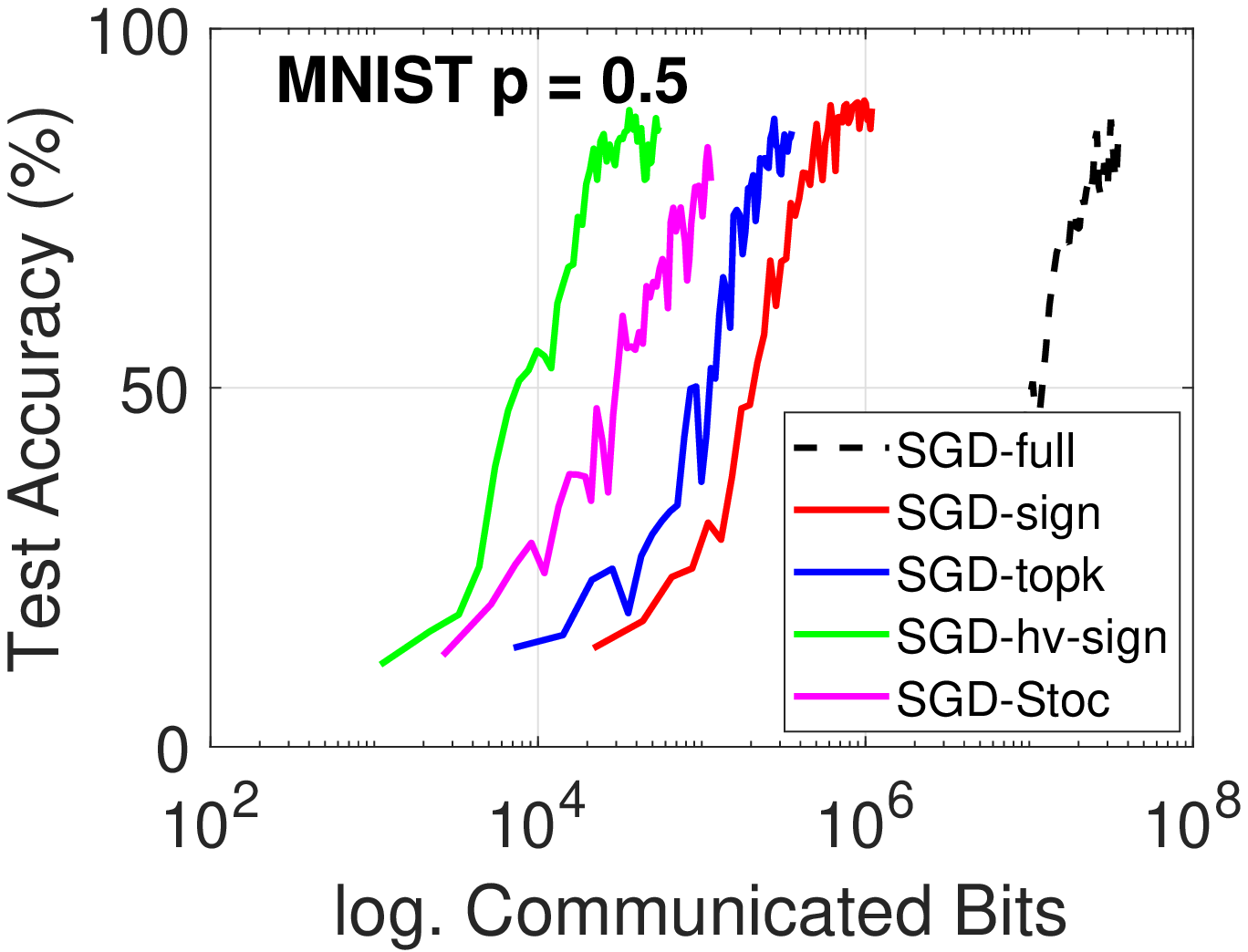}\hspace{-0.12in}
        \includegraphics[width=1.75in]{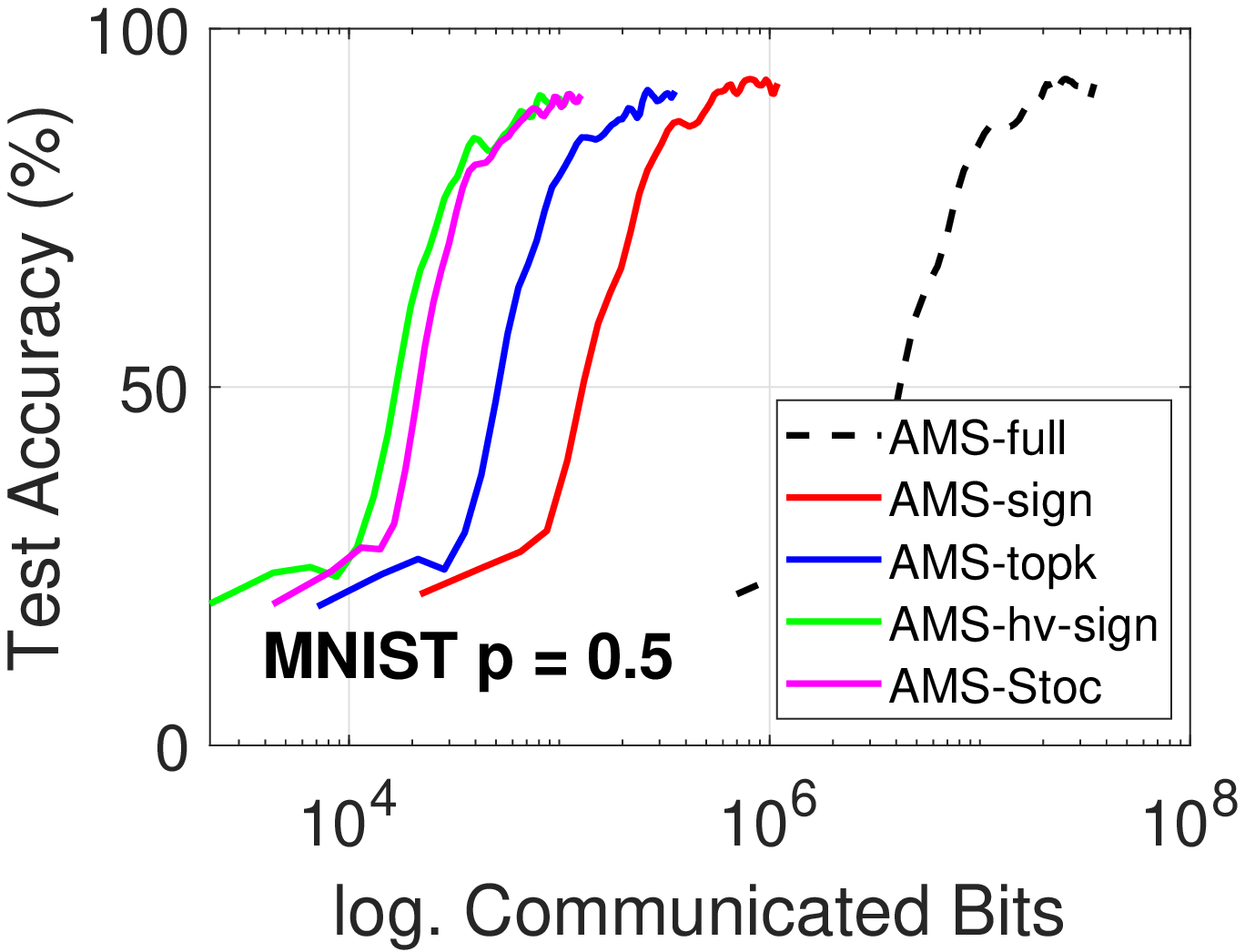}
        }

        \mbox{\hspace{-0.15in}
        \includegraphics[width=1.75in]{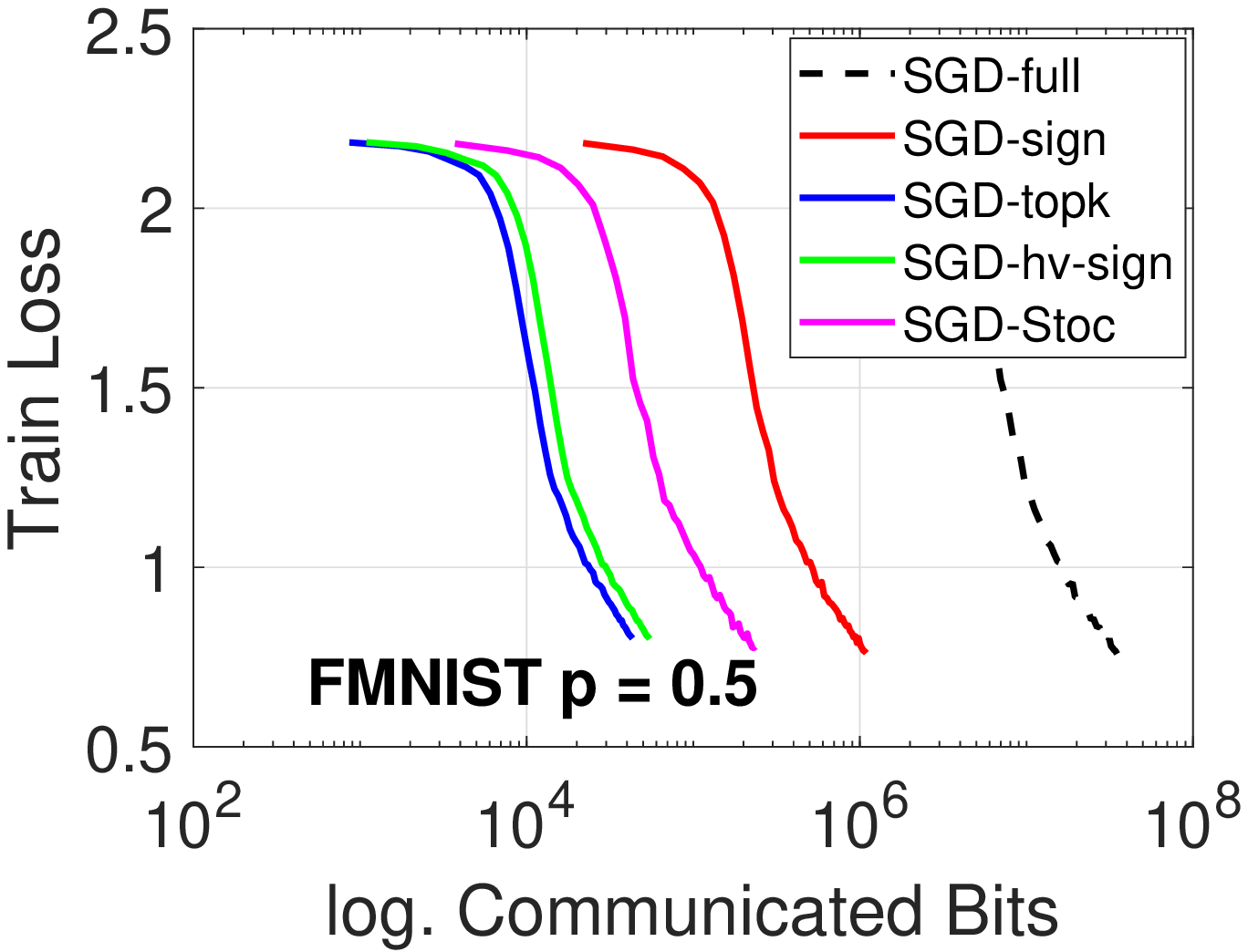}\hspace{-0.12in}
        \includegraphics[width=1.75in]{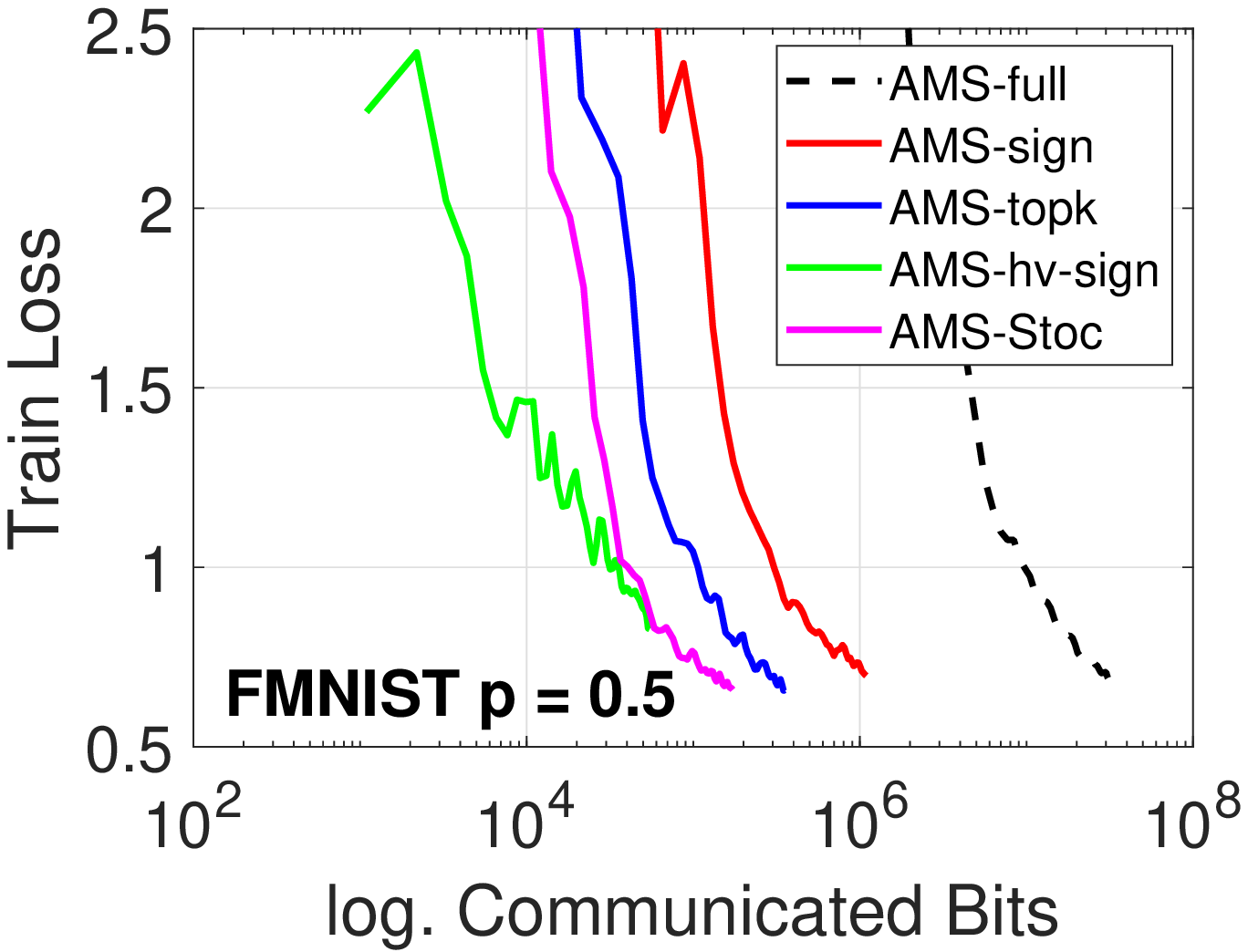}\hspace{-0.12in}
        \includegraphics[width=1.75in]{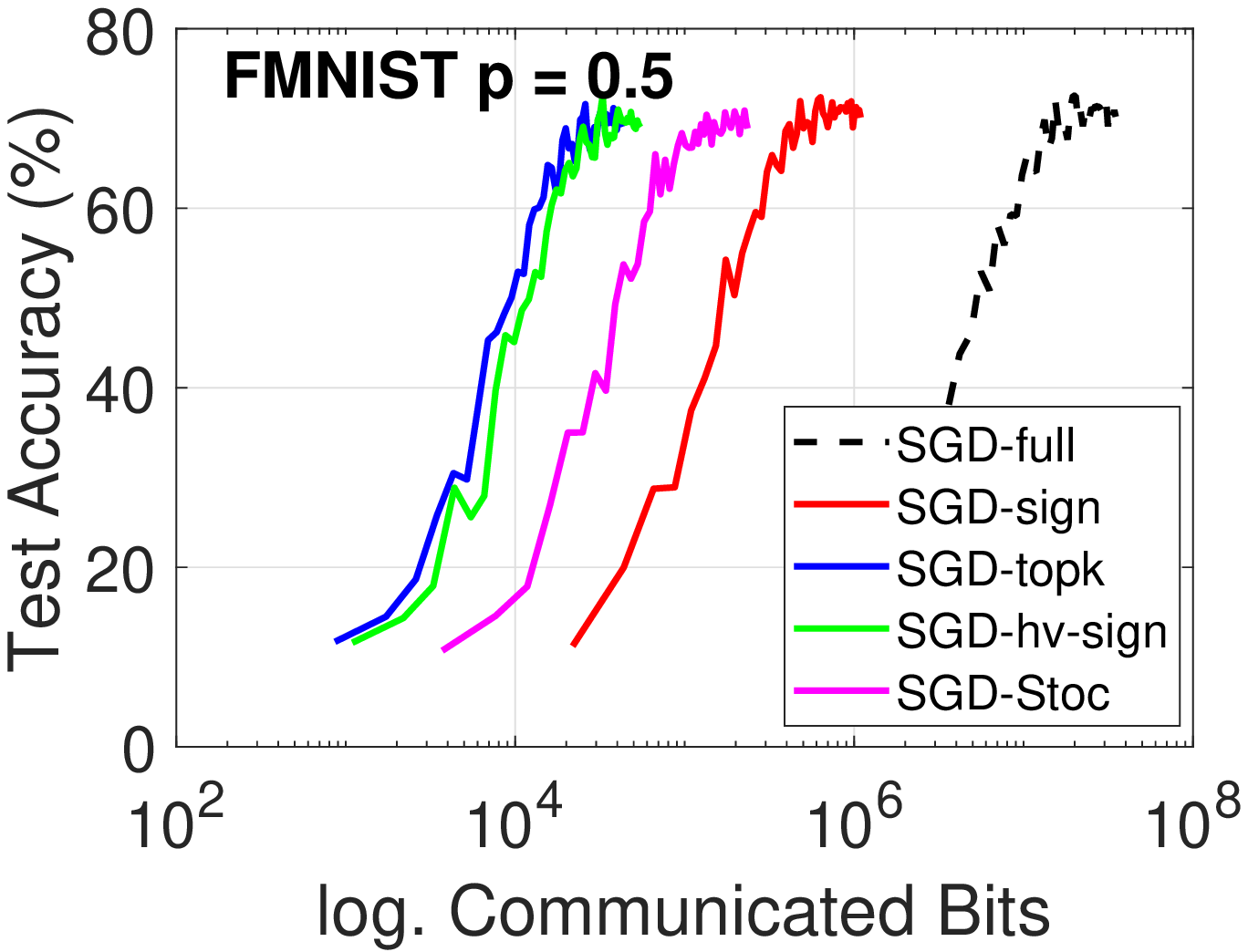}\hspace{-0.12in}
        \includegraphics[width=1.75in]{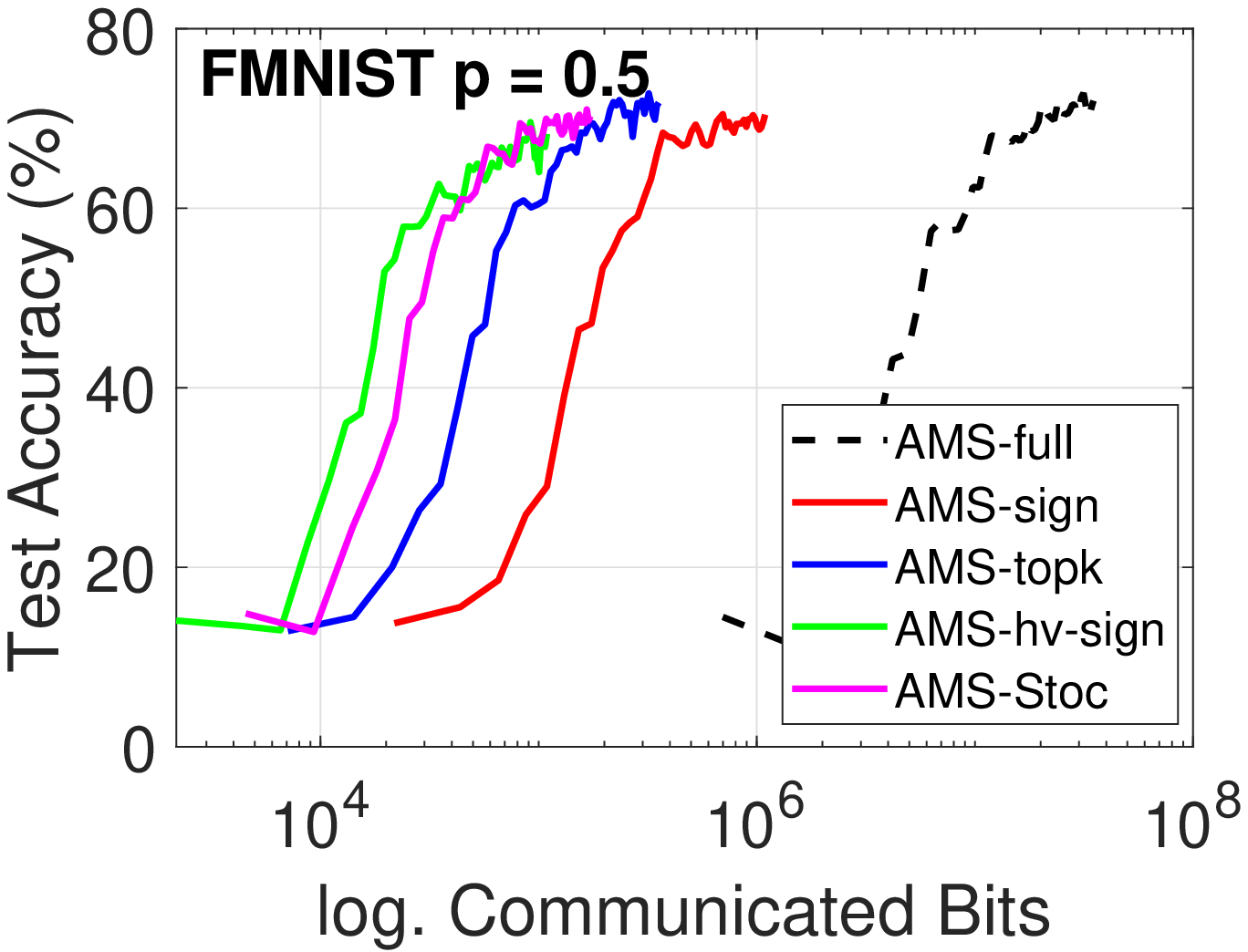}
        }
    \end{center}
    \vspace{-0.2in}

	\caption{Training loss and test accuracy v.s. communicated bits on MNIST and FMNIST datasets, participation rate $p=0.5$. ``sign'', ``topk'' and ``hv-sign'' are applied with Fed-EF, while ``Stoc'' is the stochastic quantization without EF.}
	\label{fig:MNIST-FMNIST-0.5}
\end{figure}

\begin{figure}[h!]
    \begin{center}
        \mbox{\hspace{-0.15in}
        \includegraphics[width=1.75in]{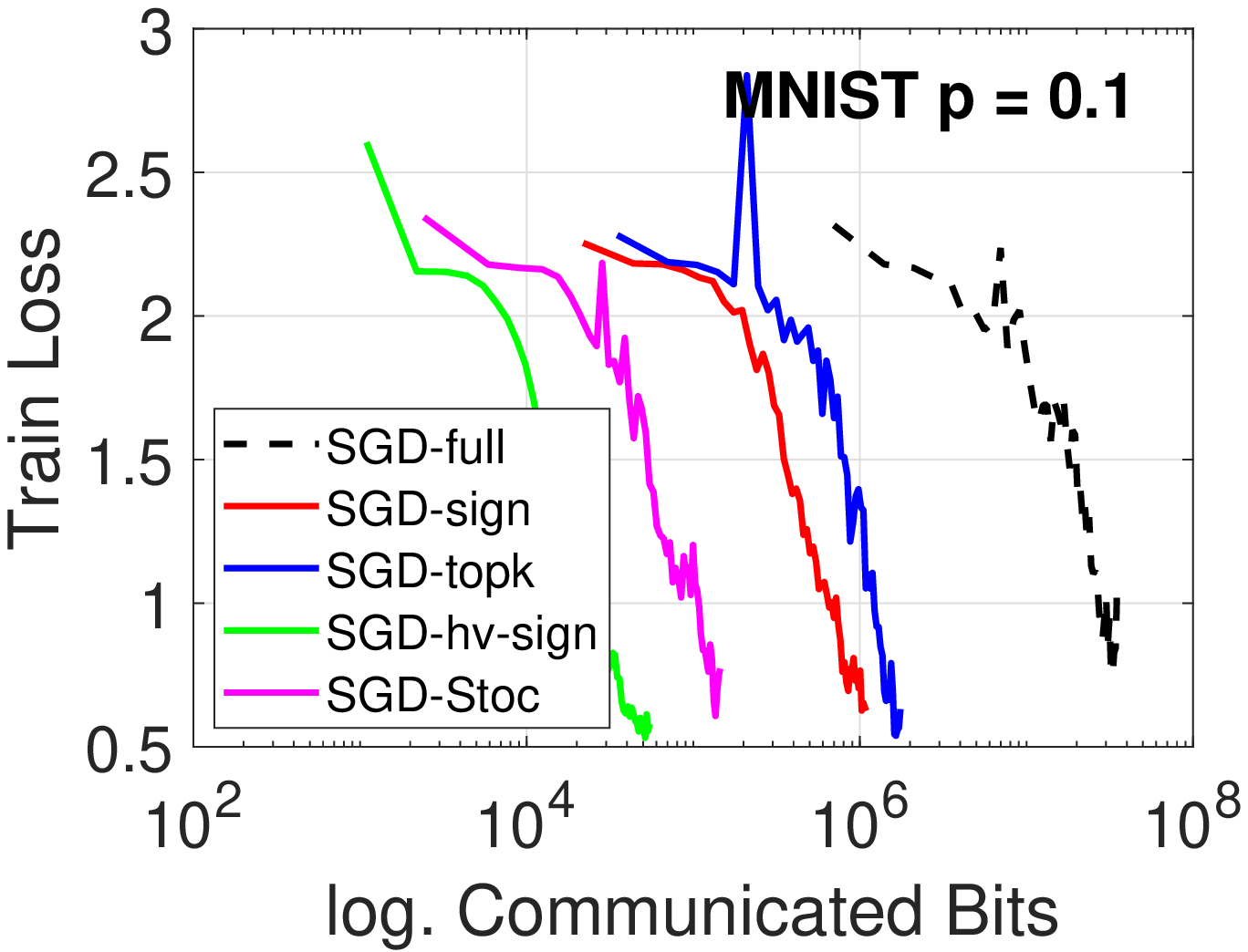}\hspace{-0.12in}
        \includegraphics[width=1.75in]{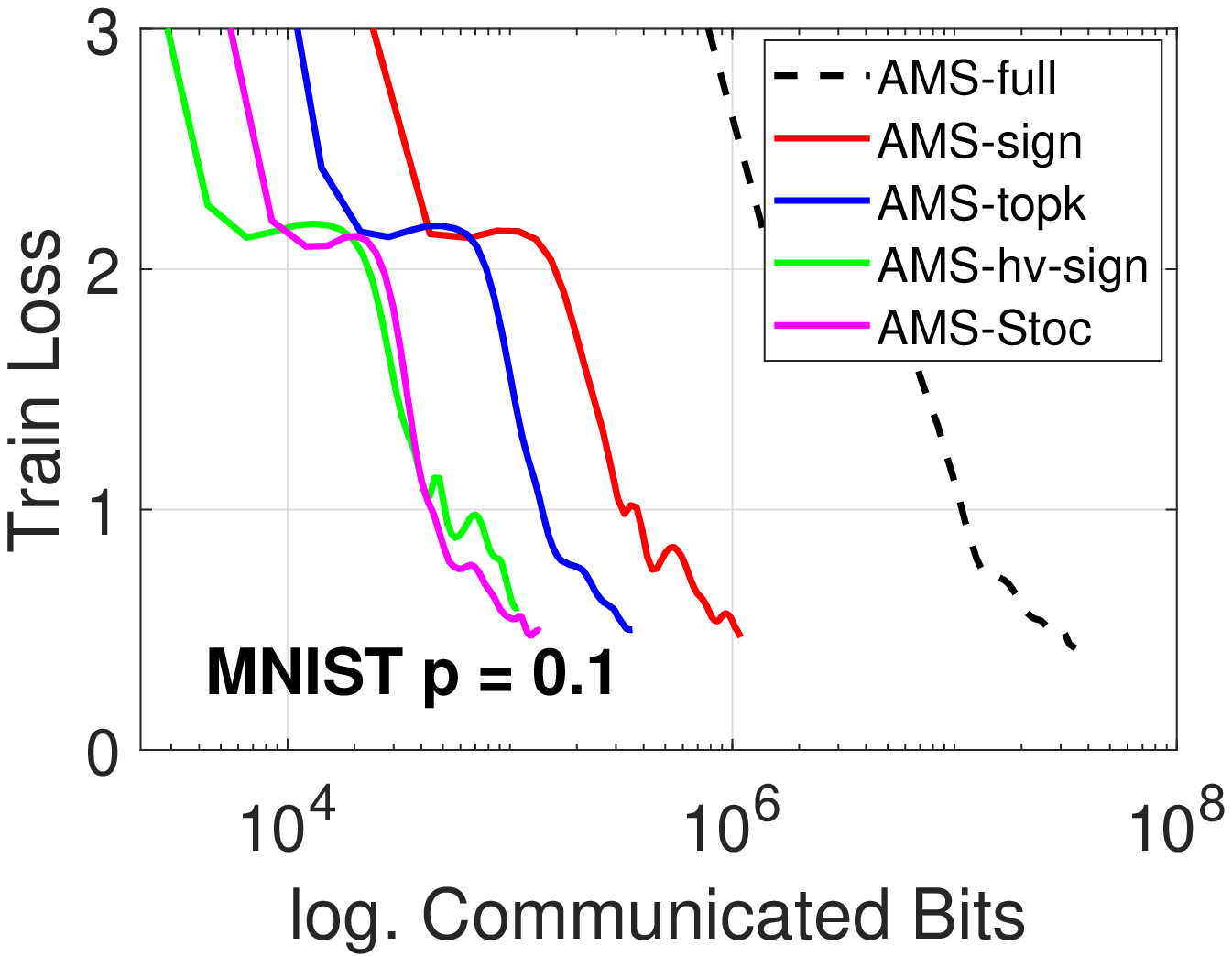}\hspace{-0.12in}
        \includegraphics[width=1.75in]{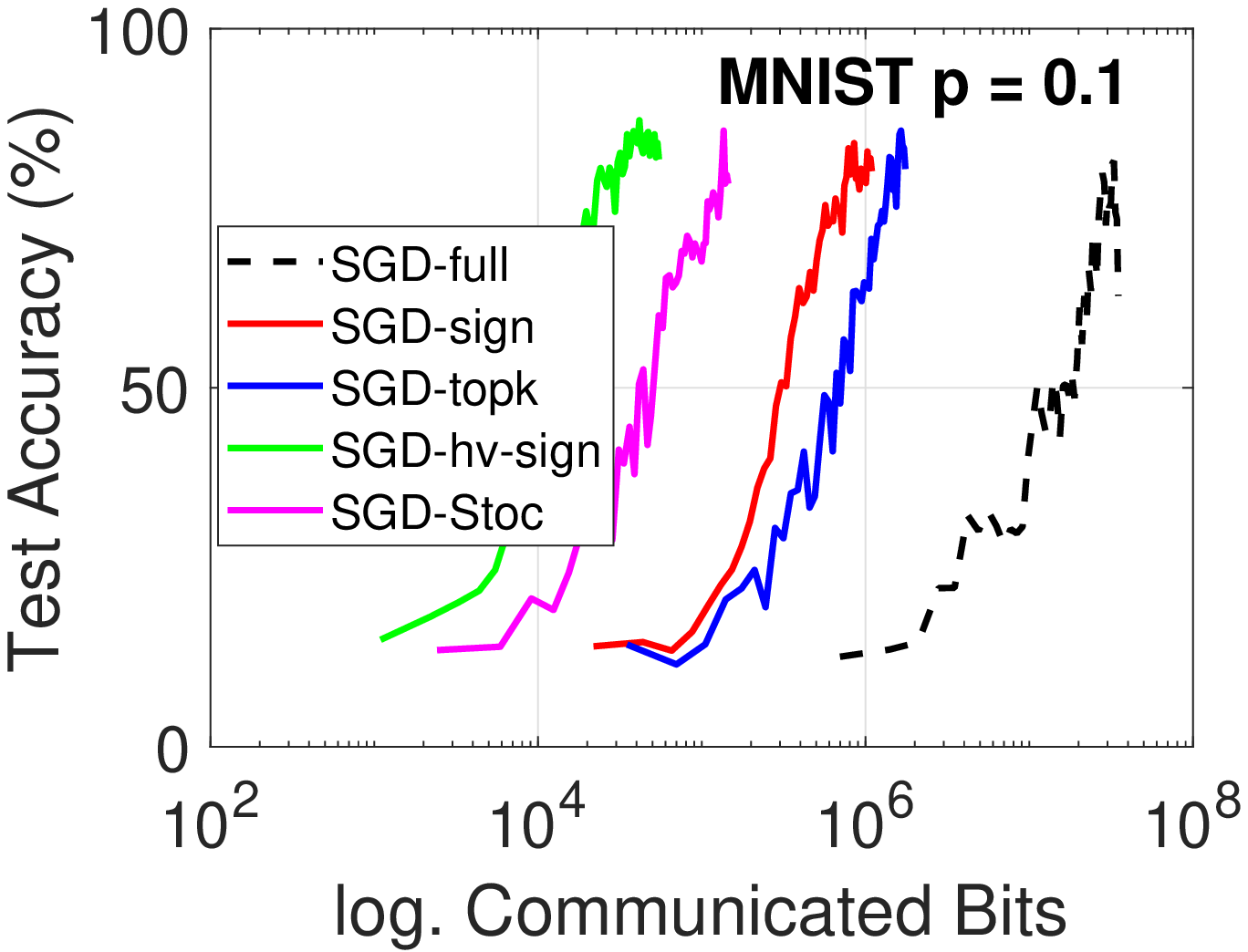}\hspace{-0.12in}
        \includegraphics[width=1.75in]{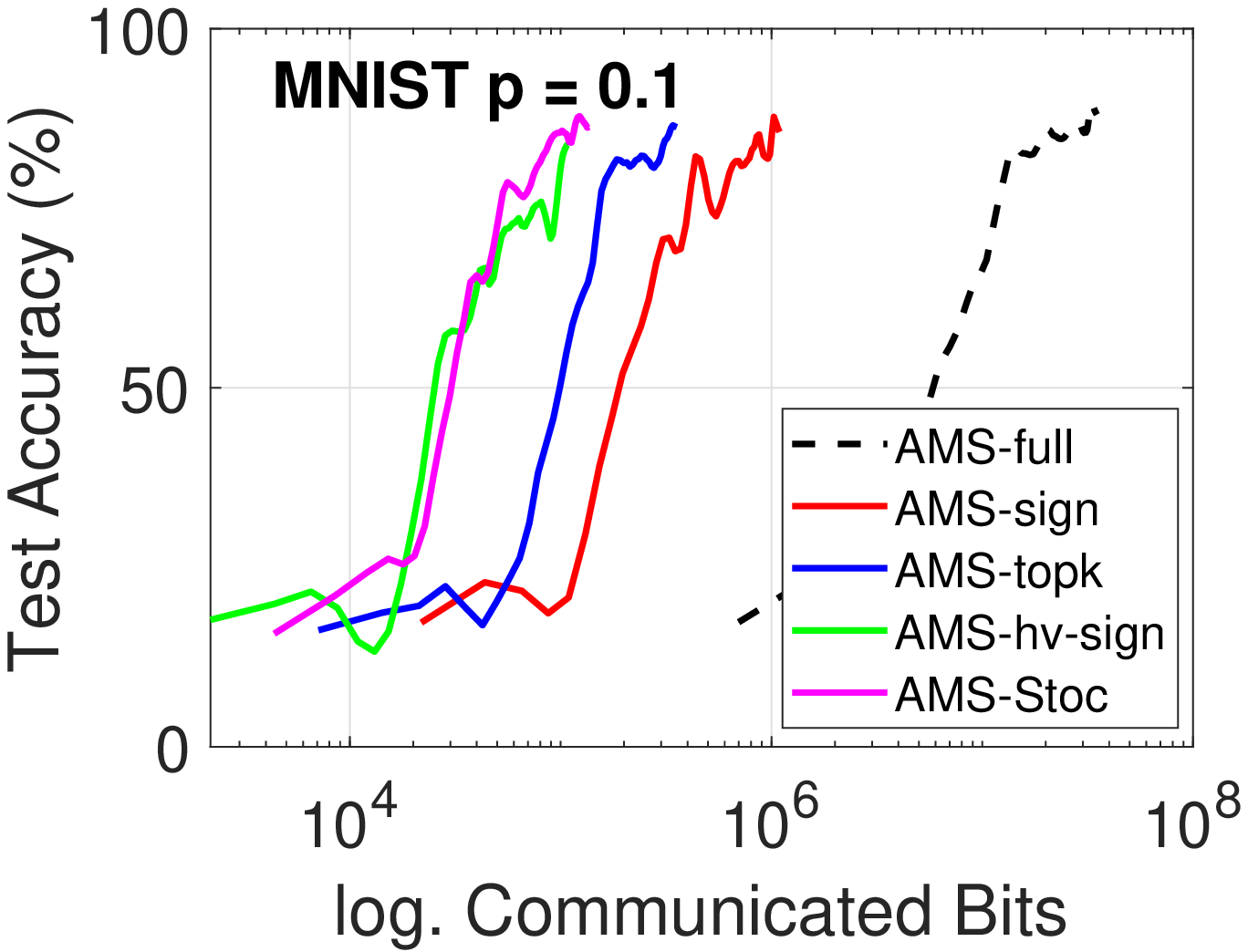}
        }

        \mbox{\hspace{-0.15in}
        \includegraphics[width=1.75in]{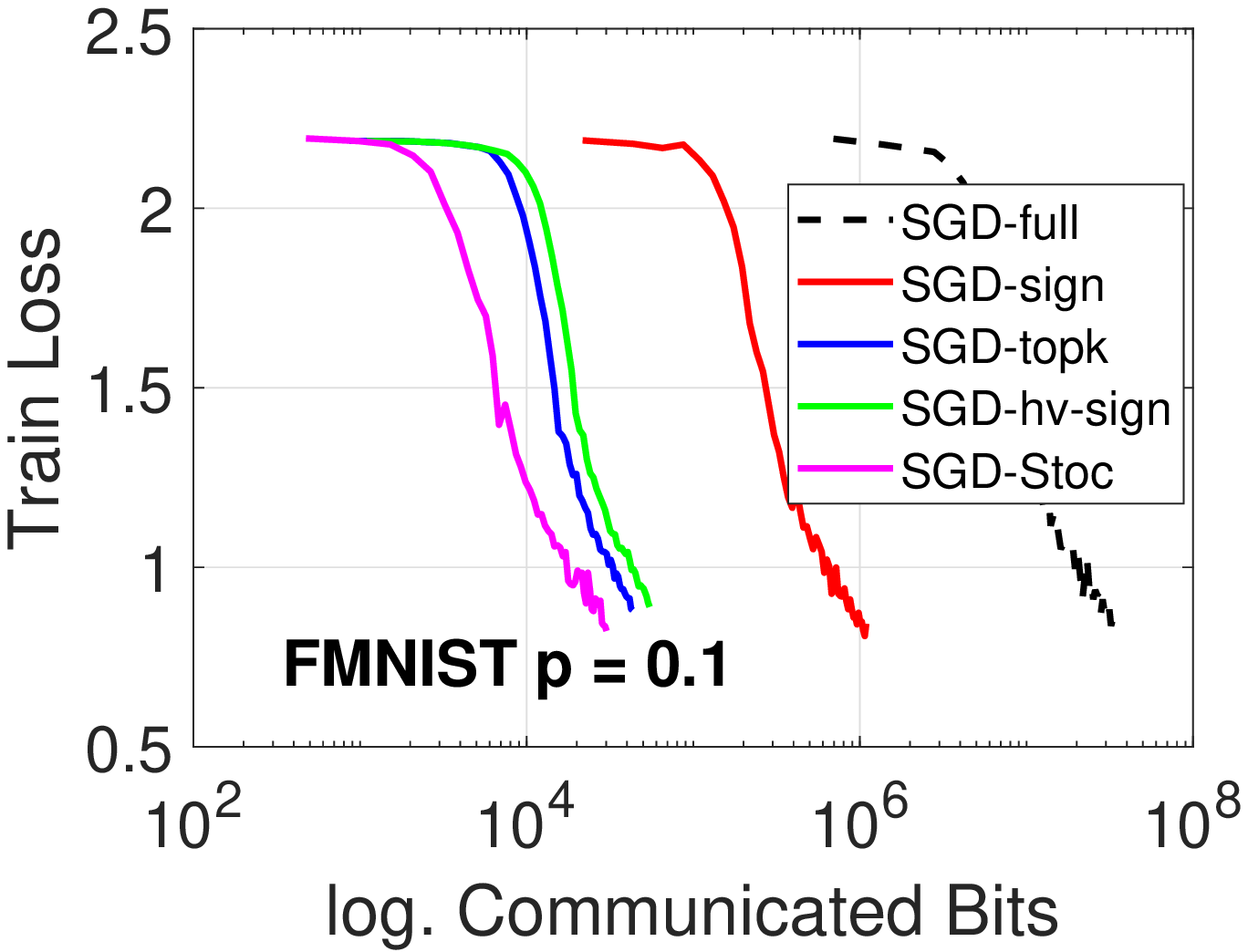}\hspace{-0.12in}
        \includegraphics[width=1.75in]{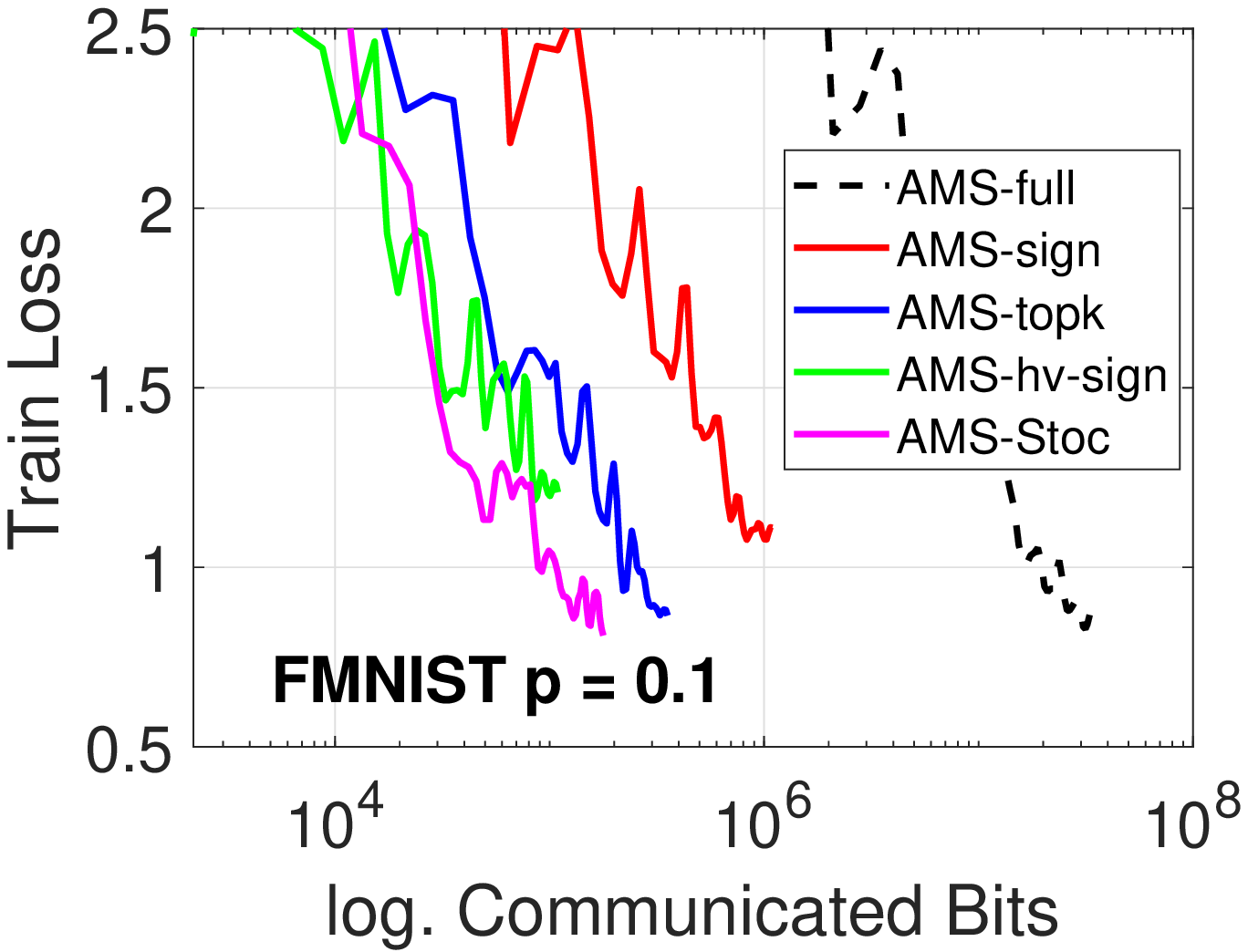}\hspace{-0.12in}
        \includegraphics[width=1.75in]{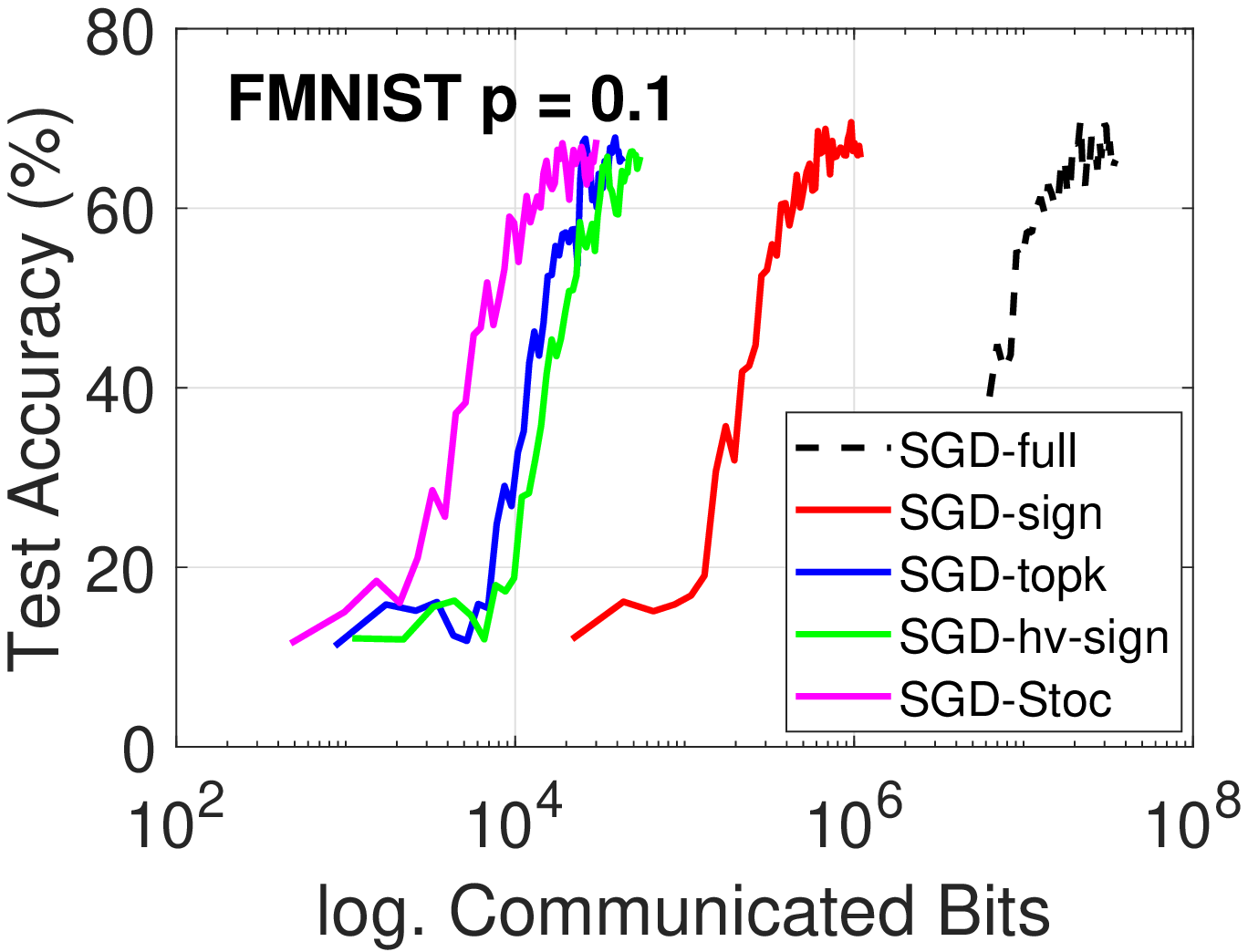}\hspace{-0.12in}
        \includegraphics[width=1.75in]{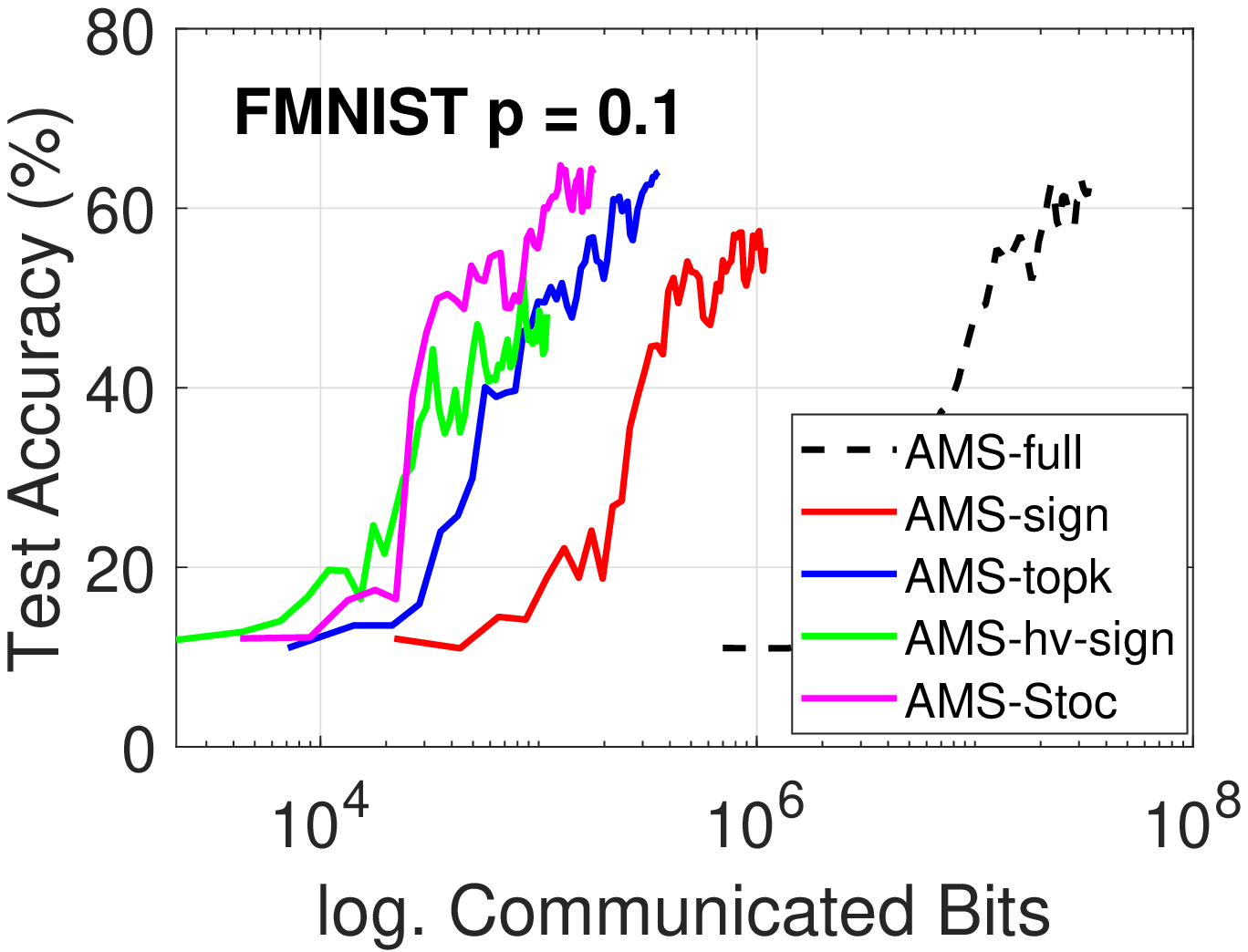}
        }
    \end{center}

    \vspace{-0.2in}

	\caption{Training loss and test accuracy v.s. communicated bits on MNIST and FMNIST datasets, participation rate $p=0.1$. ``sign'', ``topk'' and ``hv-sign'' are applied with Fed-EF, while ``Stoc'' is the stochastic quantization without EF.}
	\label{fig:MNIST-FMNIST-0.1}
\end{figure}

To make a more straightforward comparison, in Figure~\ref{fig:MNIST-FMNIST-0.5} ($p=0.5$) and Figure~\ref{fig:MNIST-FMNIST-0.1} ($p=0.1$), we present the training loss and test accuracy curves (respective compression ratios) chosen by the following rule: for each method, we present the curve with highest compression level that achieves the best full-precision test accuracy (within $0.1\%$); if the method does not match the full-precision performance, we present the curve with the highest test accuracy. The exact test accuracy and standard deviations can be found in Appendix~\ref{app sec:experiment}. We observe the following:
\begin{itemize}
    \item The proposed Fed-EF (including both variants) is able to achieve the same performance as the full-precision methods with substantial communication reduction, e.g., \textbf{heavy-Sign} and \textbf{TopK} reduce the communication by more than 100x without losing accuracy. \textbf{Sign} also provides 30x compression with matching accuracy as full-precision training.

    \item In Figure~\ref{fig:MNIST-FMNIST-0.5}, on MNIST, the test accuracy of \textbf{Stoc} (stochastic quantization without EF) is slightly lower than Fed-EF-SGD with \textbf{heavy-Sign}, yet requiring more communication.

    \item With more aggressive $p=0.1$ (Figure~\ref{fig:MNIST-FMNIST-0.1}),  Fed-EF with a proper compressor still performs on a par with full-precision algorithms. While \textbf{Sign} performs well on MNIST, we notice that fixed sign-based compressors (\textbf{Sign} and \textbf{heavy-Sign}) are outperformed by \textbf{TopK} for Fed-EF-AMS on FMNIST. We conjecture that this is because with small participation rate, sign-based compressors tend to assign a same implicit learning rate across coordinates (controlled by the second moment $\hat v$), making adaptive method less effective. In contrast,magnitude-preserving compressors (e.g., \textbf{TopK} and \textbf{Stoc}) may better exploit the adaptivity of AMSGrad.
\end{itemize}

\subsection{Results on CIFAR-10 and ResNet}

We present additional experiments to illustrate that Fed-EF is able to match the full-precision training on larger models, on the task of CIFAR-10 image classification. For this experiment, we train a ResNet-18~\citep{he2016deep} network for 200 rounds. The clients local data are distributed in the same way as described above which is highly non-iid.

\begin{figure}[h]
    \begin{center}
        \mbox{\hspace{-0.1in}
        \includegraphics[width=2.25in]{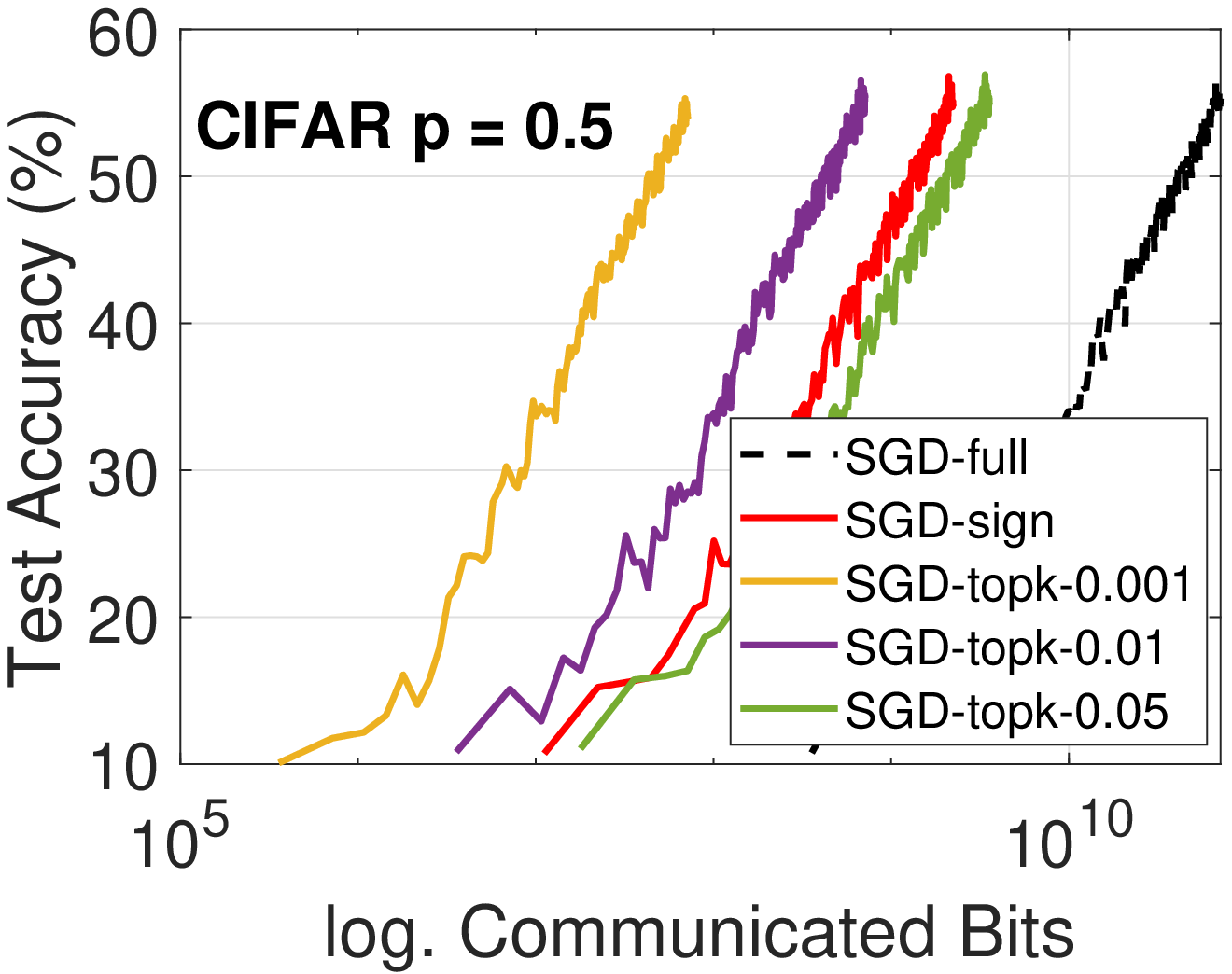}\hspace{-0.1in}
        \includegraphics[width=2.25in]{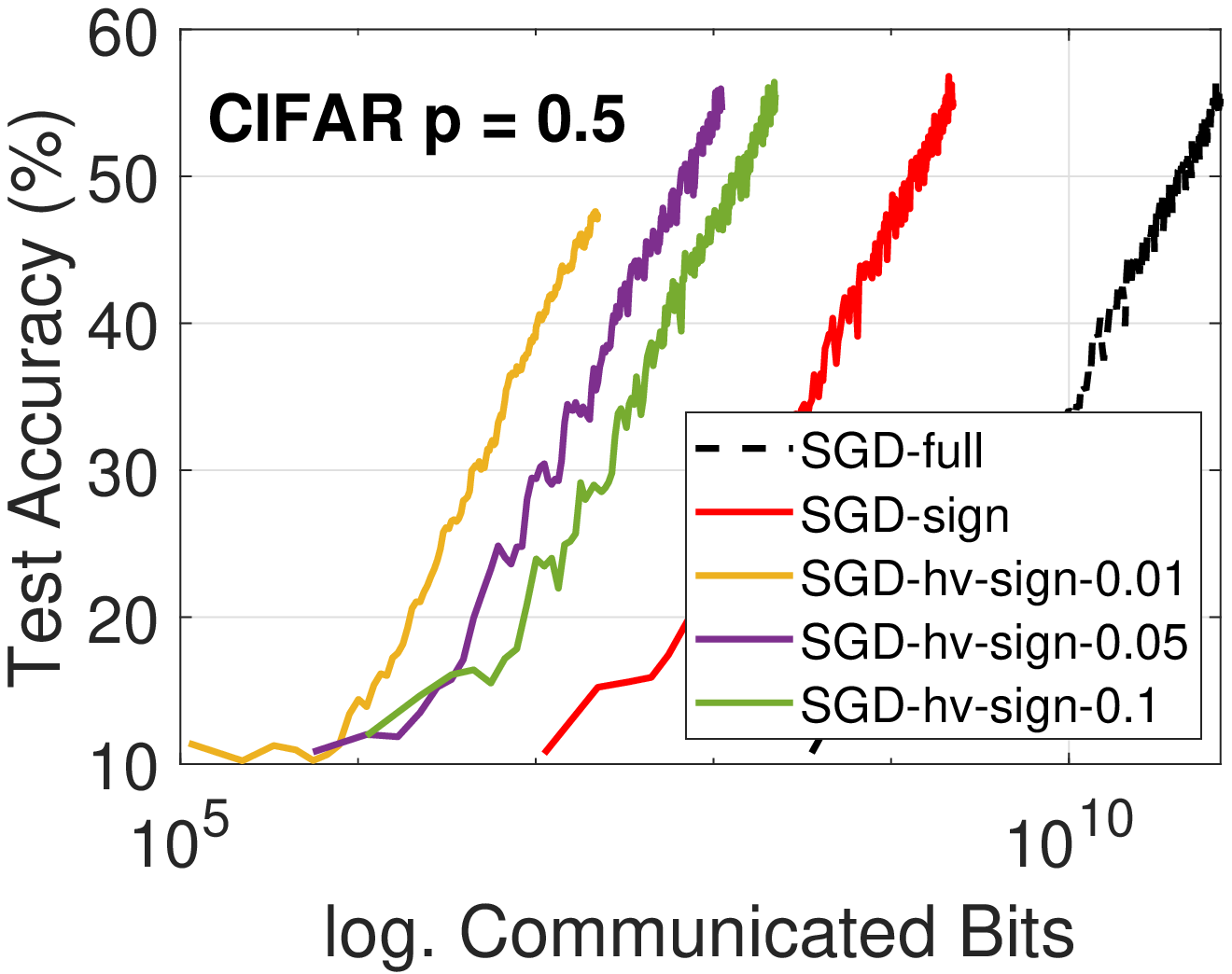}\hspace{-0.1in}
        \includegraphics[width=2.25in]{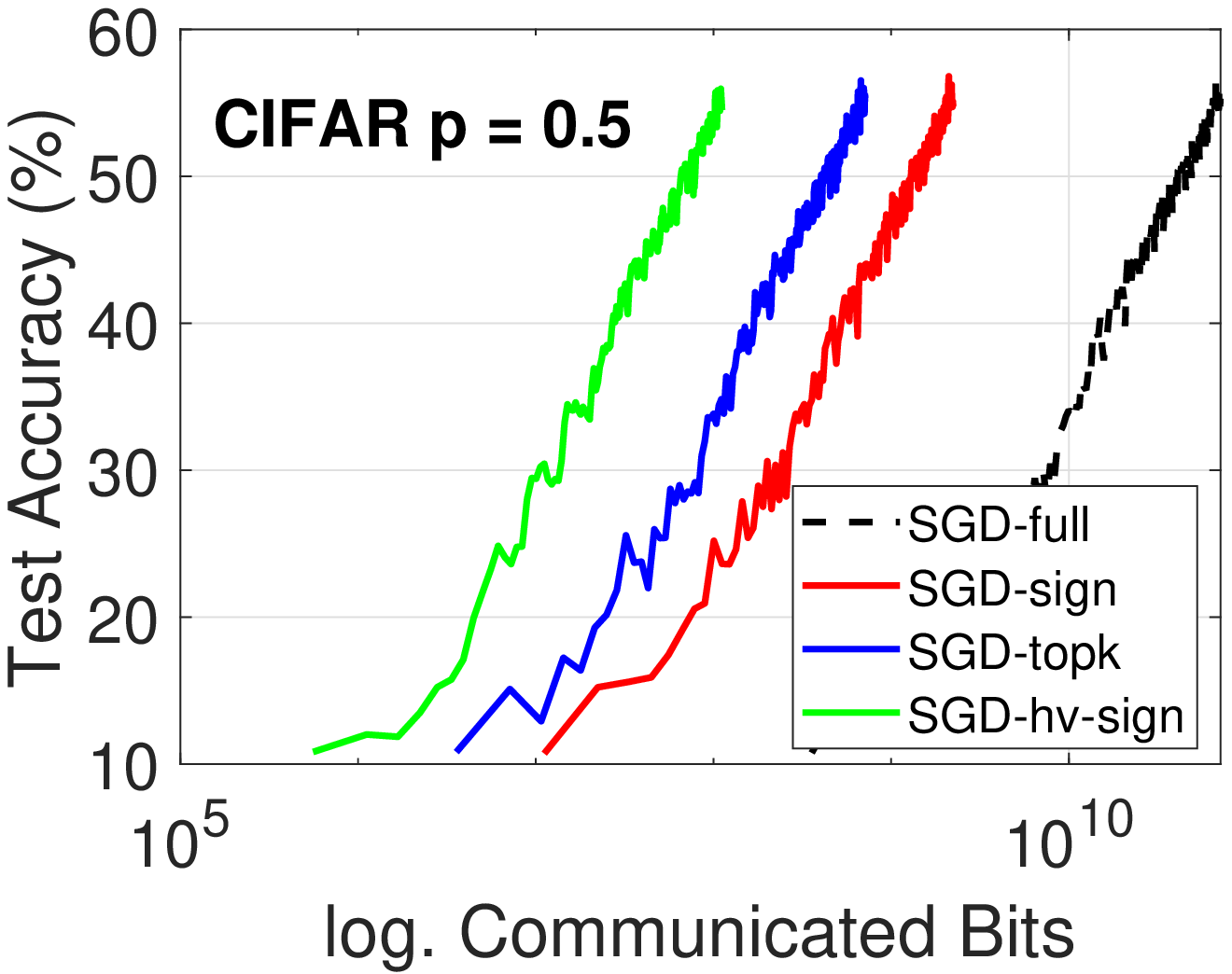}
        }
        \mbox{\hspace{-0.1in}
        \includegraphics[width=2.25in]{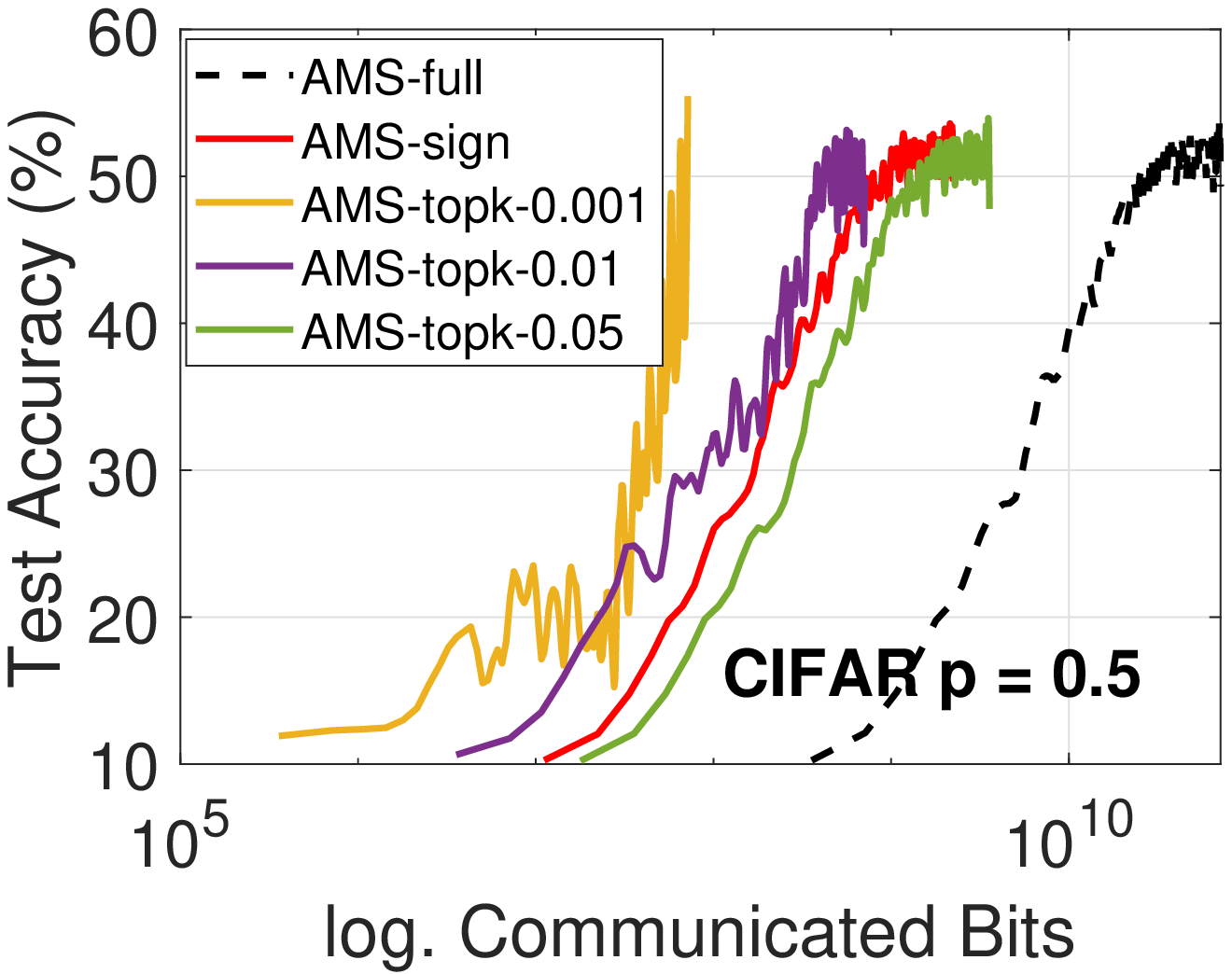}\hspace{-0.1in}
        \includegraphics[width=2.25in]{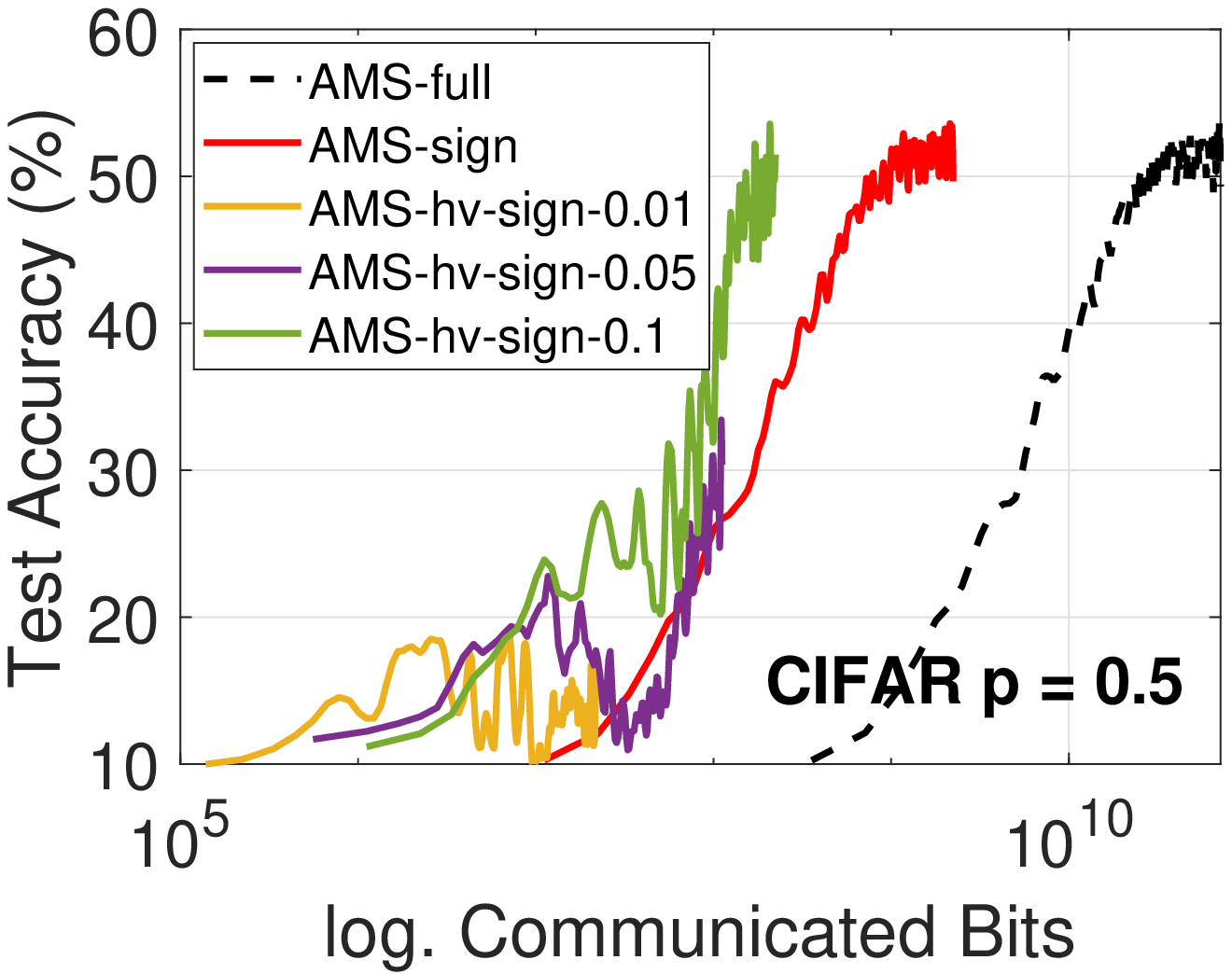}\hspace{-0.1in}
        \includegraphics[width=2.25in]{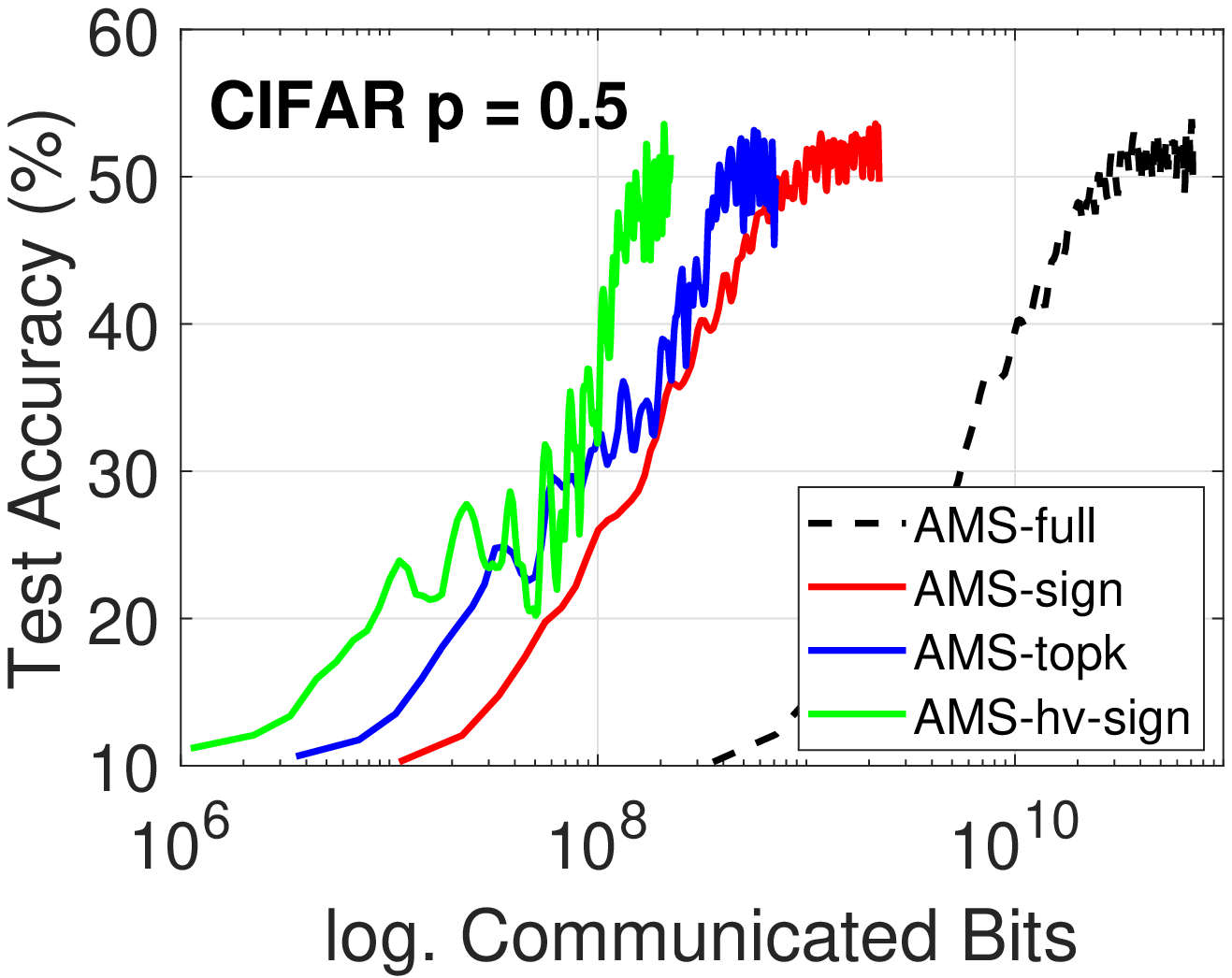}
        }
    \end{center}
    \vspace{-0.1in}
	\caption{CIFAR-10 dataset trained by ResNet-18. Test accuracy of Fed-EF with \textbf{TopK}, \textbf{Sign} and \textbf{heavy-Sign} compressors. Participation rate $p=0.5$, non-iid data. 1st row: Fed-EF-SGD. 2nd row: Fed-EF-AMS. The last column presents the corresponding curves that achieve the full-precision accuracy using lowest communication.}
	\label{fig:CIFAR-acc-compressor-0.5}
\end{figure}

In Figure~\ref{fig:CIFAR-acc-compressor-0.5}, we plot the test accuracy of Fed-EF with different compressors. Again, we see that Fed-EF (both variants) is able to attain the same accuracy level as the corresponding full-precision federated learning algorithms. For Fed-EF-SGD, the compression rate is around 32x for \textbf{Sign}, 100x for \textbf{TopK} and $\sim$300x for \textbf{heavy-Sign}. For Fed-EF-AMS, the compression ratio can also be around hundreds. Note that for Fed-EF-AMS, the training curve of \textbf{TopK-0.001} is not stable. Though it reaches a high accuracy, we still plot \textbf{TopK-0.01} in the third column for comparison.

In Figure~\ref{fig:CIFAR-acc-compressor-0.1} we report the results for partial participation with $p=0.1$. Similarly, for SGD, all three compressors are able to match the full-precision accuracy, with significantly reduced number of communicated bits. For Fed-EF-AMS, similar to the observations on FMNIST, we see that \textbf{TopK} outperforms \textbf{Sign} and \textbf{heavy-Sign}, and achieve the performance of full-precision method with 100x compression ratio. \textbf{Sign} also performs reasonably well.

In conclusion, our results on CIFAR-10 and ResNet again confirm that compared with standard full-precision FL algorithms, the proposed Fed-EF scheme can provide significant communication reduction without empirical performance drop in test accuracy, under both data heterogeneity and partial client participation.

\begin{figure}[t]
    \begin{center}
        \mbox{\hspace{-0.1in}
        \includegraphics[width=2.25in]{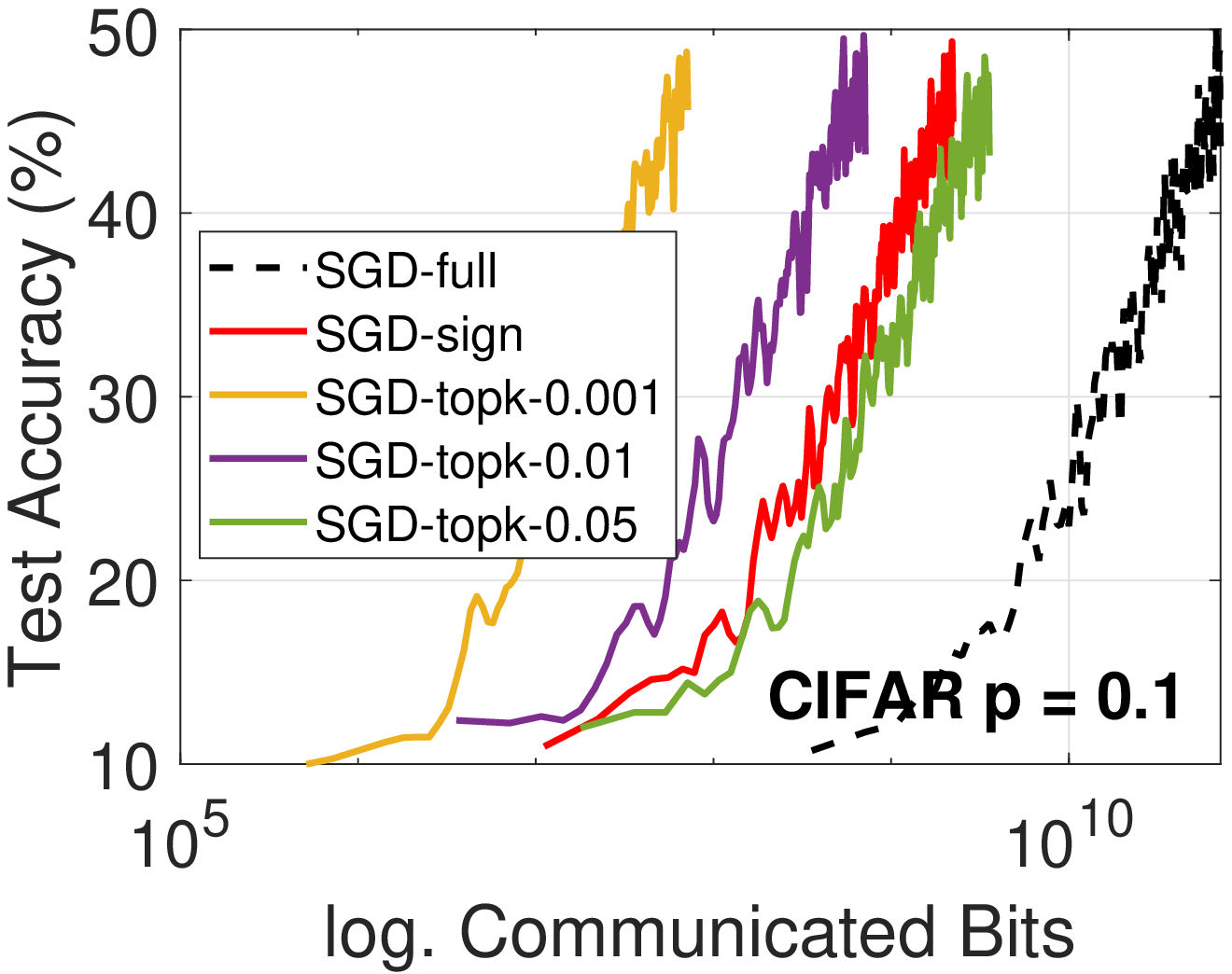}\hspace{-0.1in}
        \includegraphics[width=2.25in]{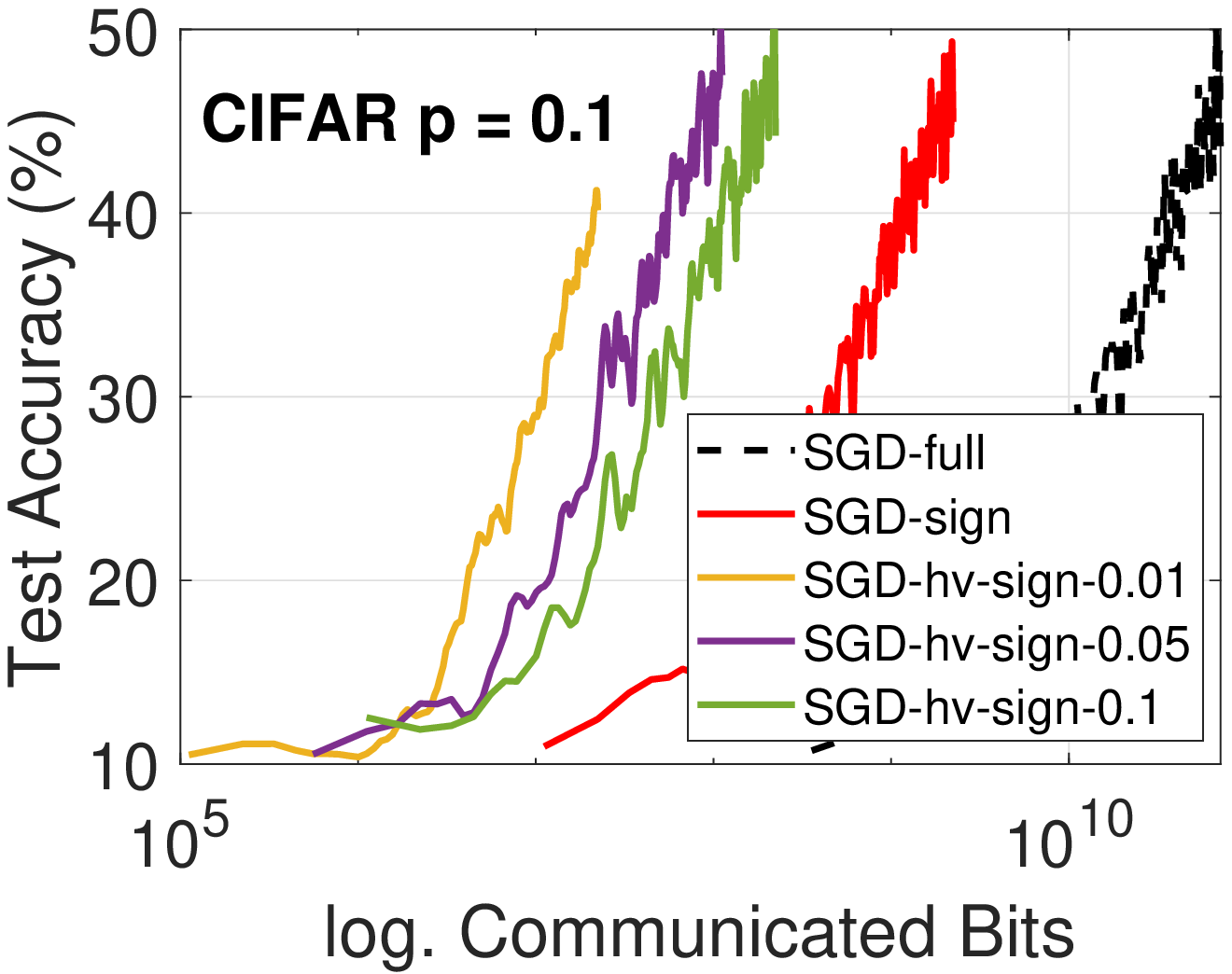}\hspace{-0.1in}
        \includegraphics[width=2.25in]{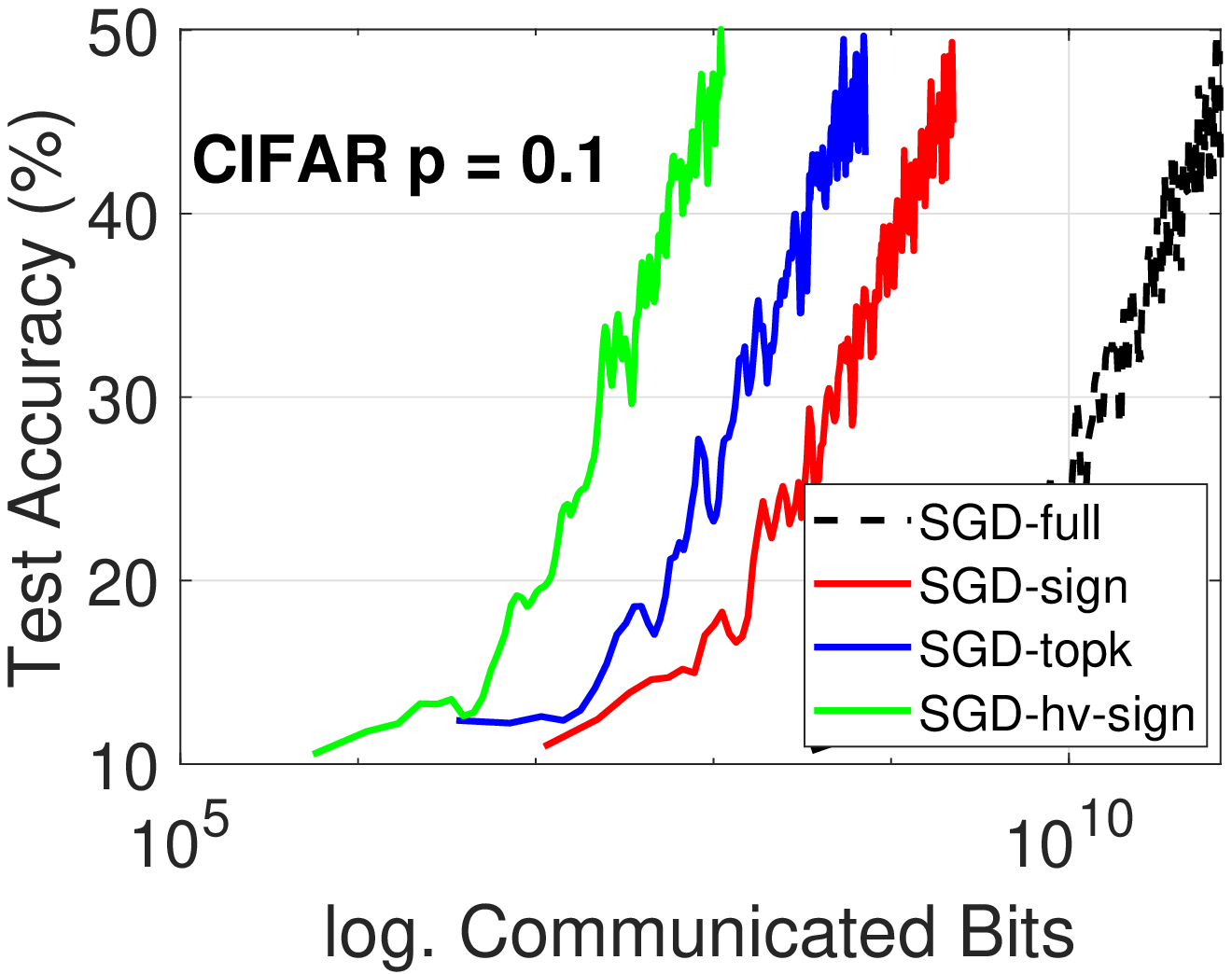}
        }
        \mbox{\hspace{-0.1in}
        \includegraphics[width=2.25in]{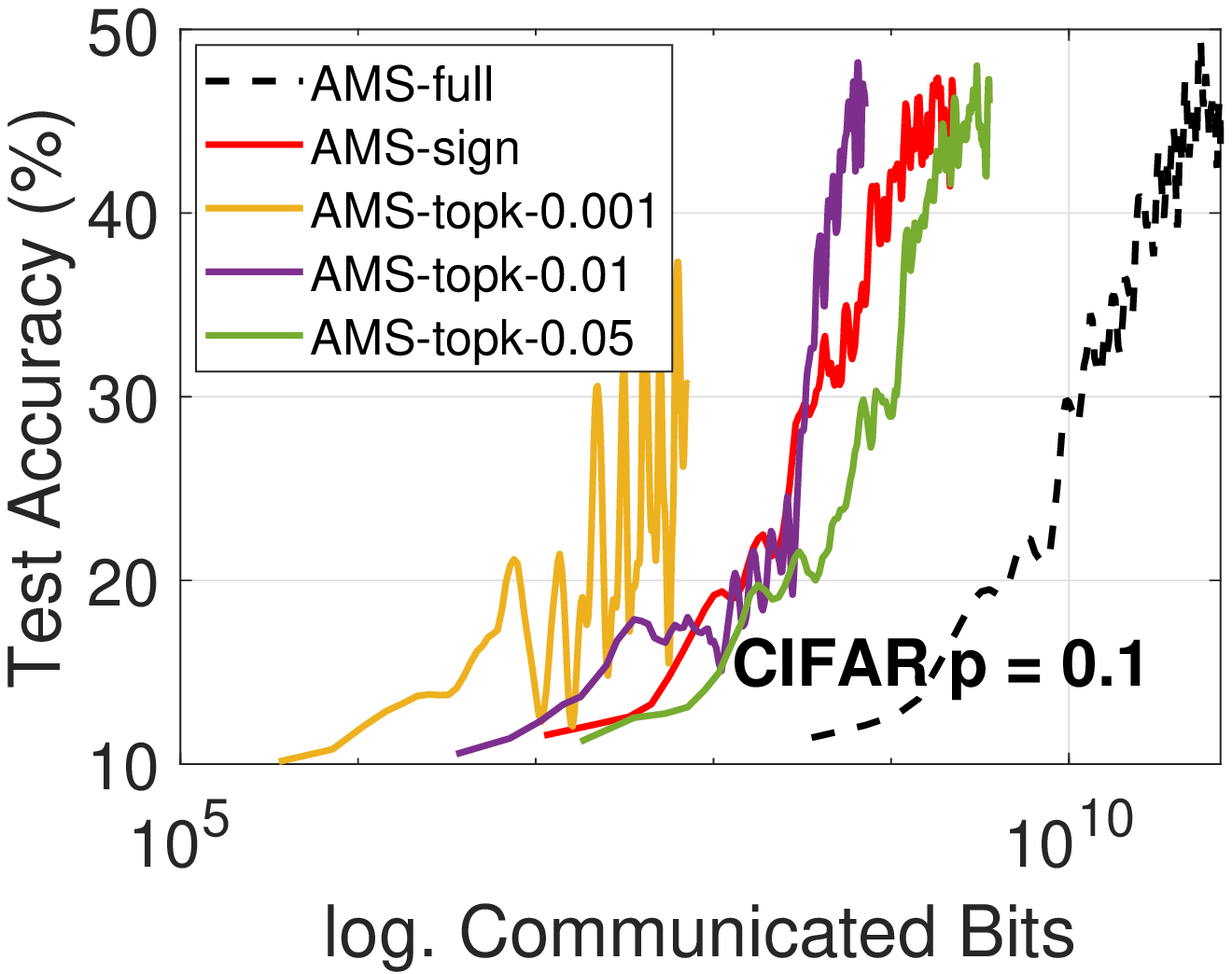}\hspace{-0.1in}
        \includegraphics[width=2.25in]{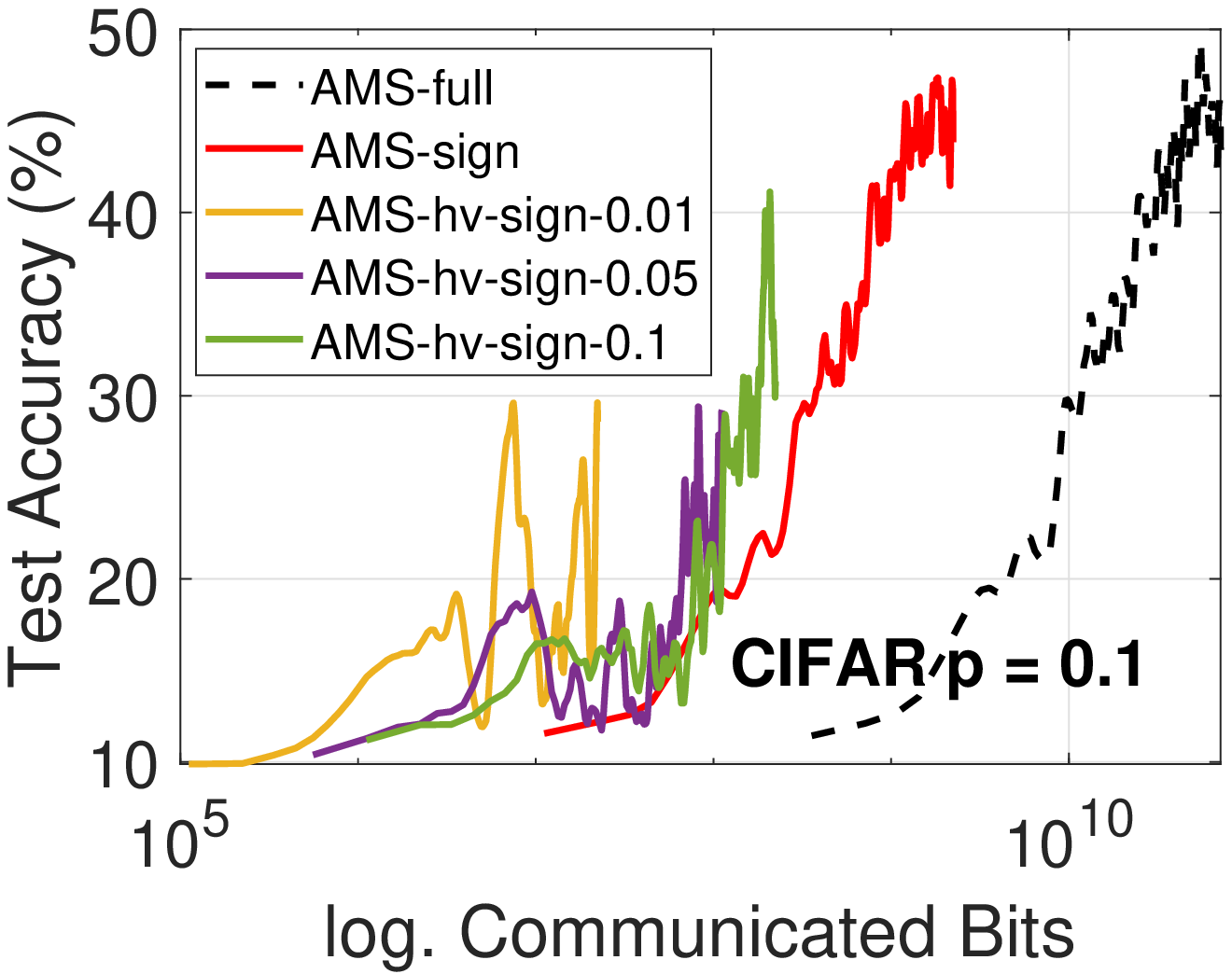}\hspace{-0.1in}
        \includegraphics[width=2.25in]{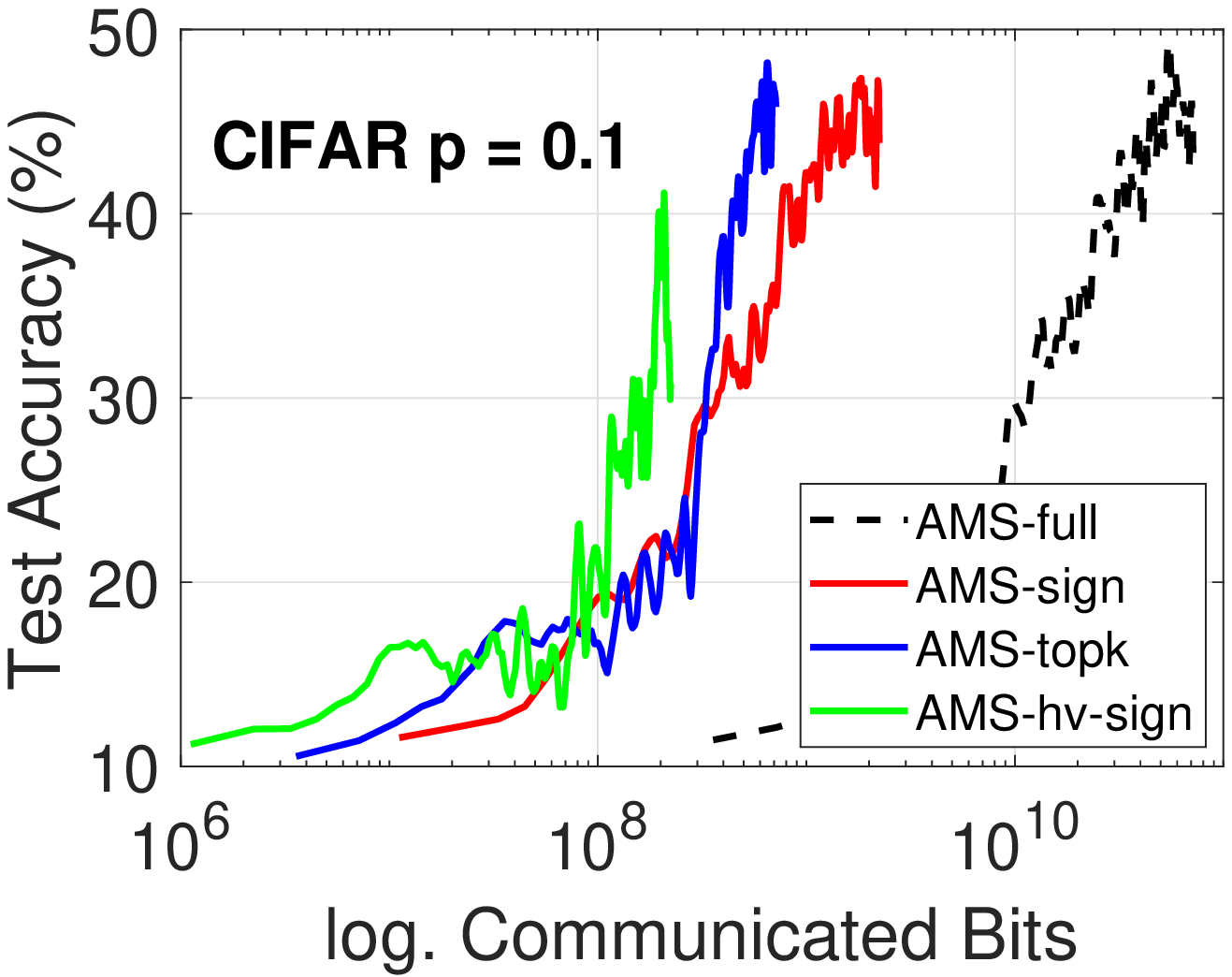}
        }
    \end{center}
    \vspace{-0.1in}
	\caption{CIFAR-10 dataset trained by ResNet-18. Test accuracy of Fed-EF with \textbf{TopK}, \textbf{Sign} and \textbf{heavy-Sign} compressors. Participation rate $p=0.1$, non-iid data. 1st row: Fed-EF-SGD. 2nd row: Fed-EF-AMS. The last column presents the corresponding curves that achieve the full-precision accuracy using lowest communication.}
	\label{fig:CIFAR-acc-compressor-0.1}
\end{figure}

\subsection{Analysis of Norm Convergence and Delayed Error Compensation}  \label{sec:restart}

We empirically evaluate the norm convergence to verify the theoretical speedup properties of Fed-EF and the effect of delayed error compensation in partial participation (PP). Recall that from Theorem~\ref{coro:rates}, in full participation case, reaching a $\delta$-stationary point requires running $\Theta(1/n\delta^2)$ rounds of Fed-EF (i.e., linear speedup). From Theorem~\ref{theo:partial-simple}, when $n$ is fixed, our result implies that the speedup should be super-linear against $m$, the number of participating clients, due to the additional stale error effect. In other words, altering $m$ under PP is expected to have more impact on the convergence than altering $n$ with full participation.

\begin{figure}[h]
    \begin{center}
        \mbox
        {\hspace{-0.15in}
        \includegraphics[width=1.75in]{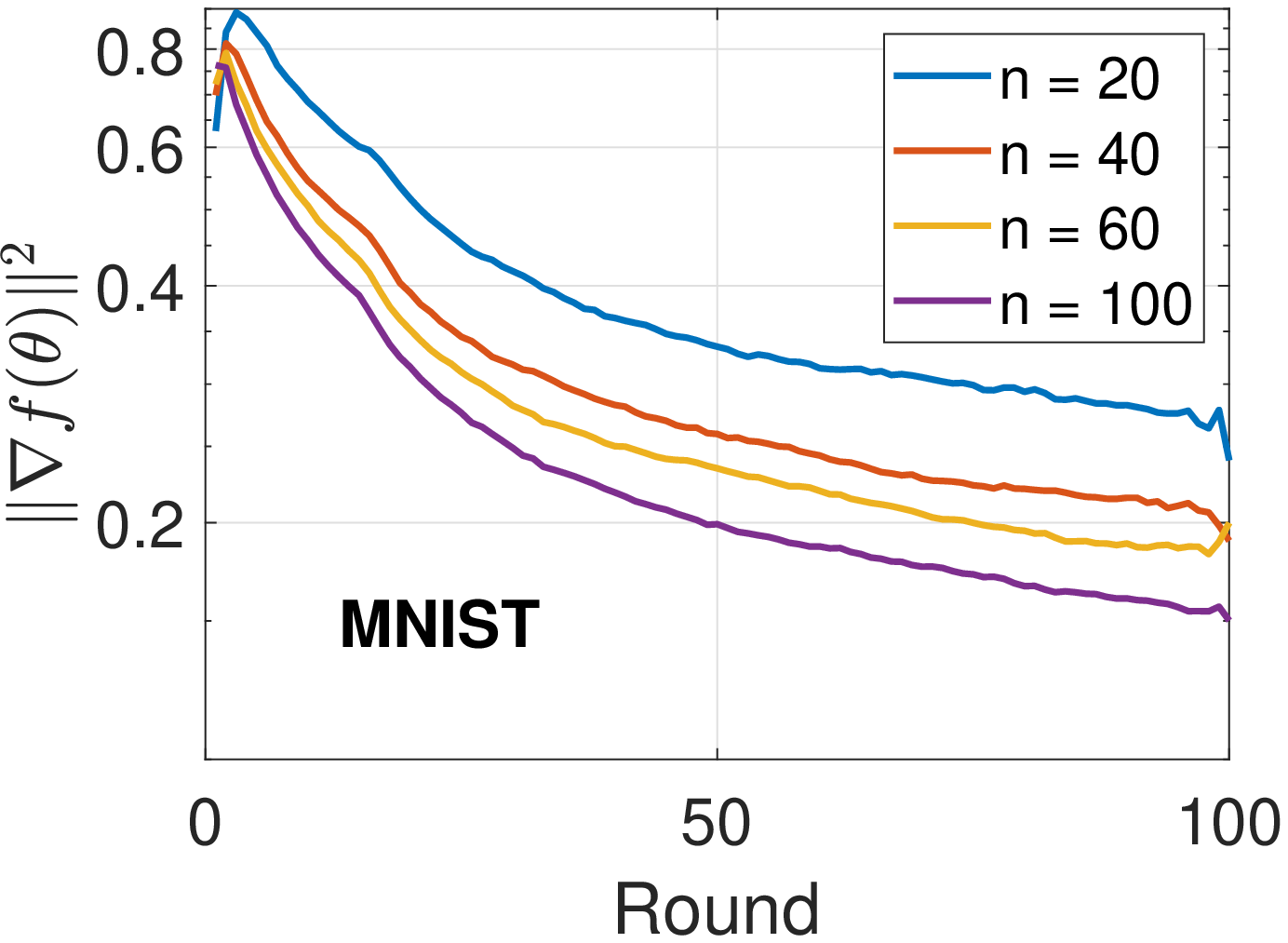}\hspace{-0.12in}
        \includegraphics[width=1.75in]{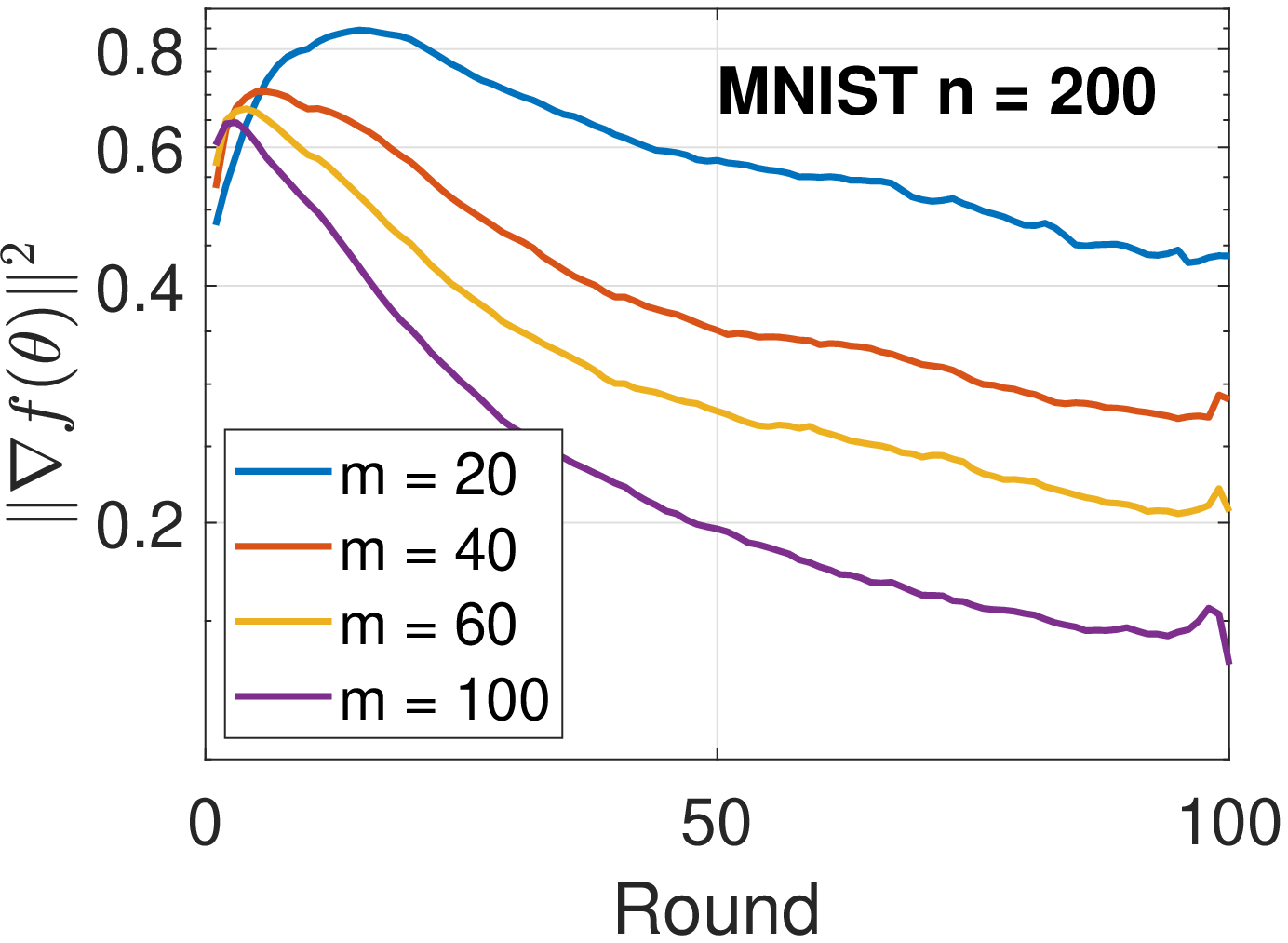}\hspace{-0.12in}
        \includegraphics[width=1.75in]{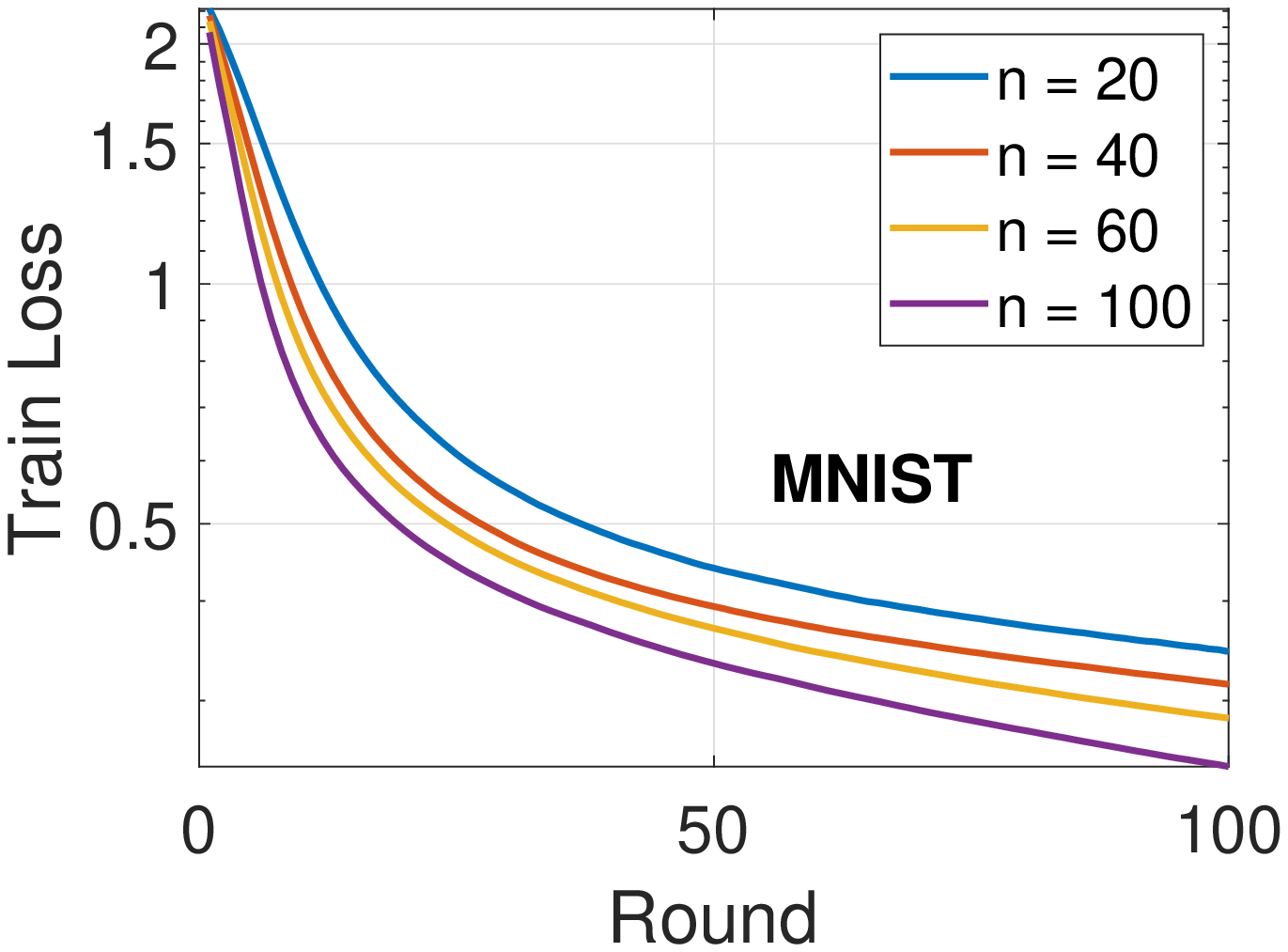}\hspace{-0.12in}
        \includegraphics[width=1.75in]{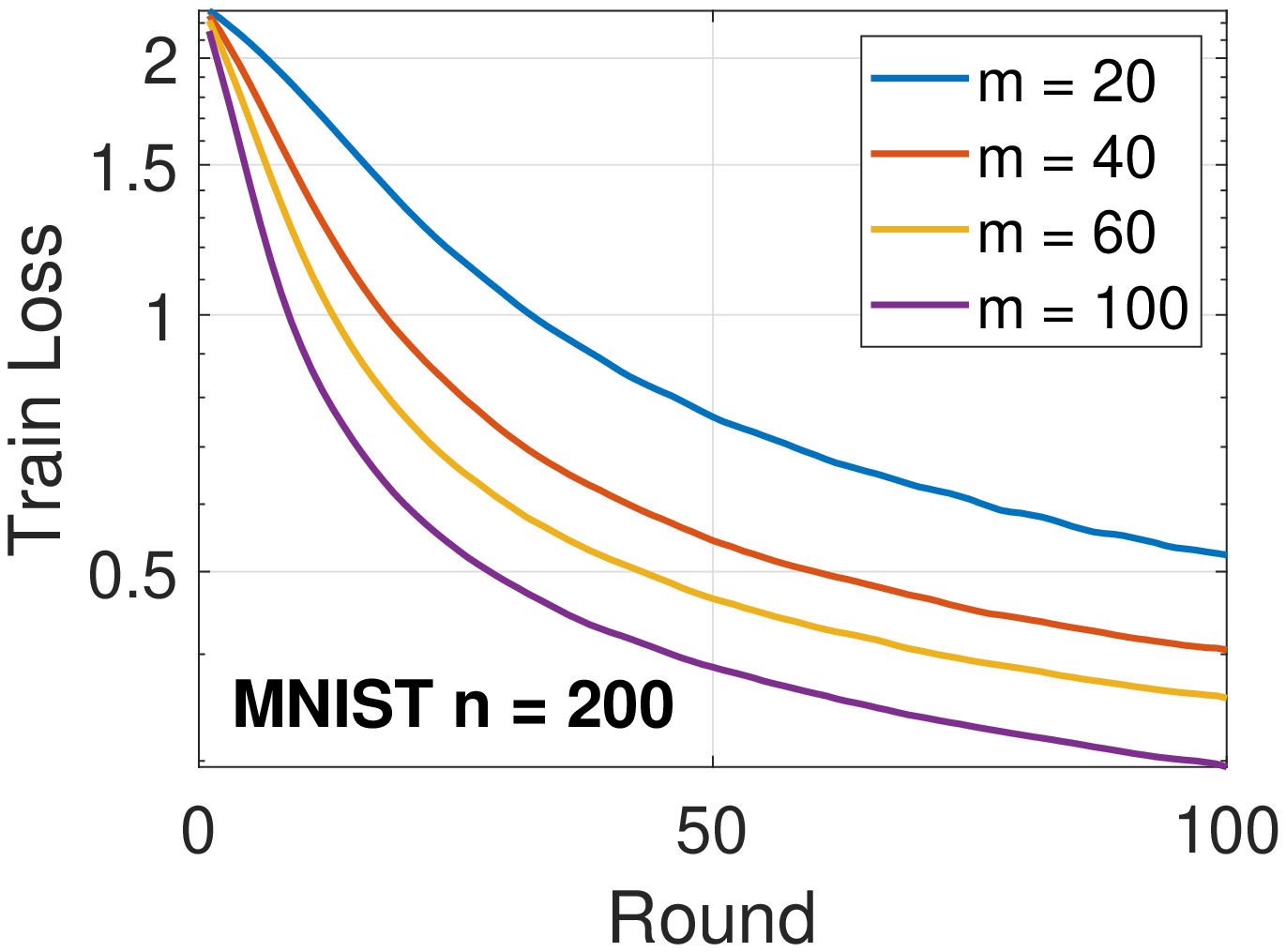}
        }
    \end{center}

    \vspace{-0.2in}

	\caption{MLP on MNIST with \textbf{TopK-0.01} compressed Fed-EF: Squared gradient norm (left two) and train loss (right two) against the number of training rounds, averaged over 20 independent runs.}
	\label{fig:speedup}
\end{figure}

We train an MLP (which is also used for Figure~\ref{fig:no_EF}) with one hidden layer of 200 neurons. In Figure~\ref{fig:speedup}, we report the squared gradient norm and the training loss on MNIST under the same non-iid FL setting as above (the results on FMNIST are similar). In the full participation case, we implement Fed-EF-SGD with $n=20,40,60,100$ clients; for the partial participation case, we fix $n=200$ and alter $m=20,40,60,100$. According to our theory, we set $\eta=0.1\sqrt{n}$ (or $0.1\sqrt{m}$) and  $\eta_l=0.1$. We see that: 1) In general, the convergence of Fed-EF is faster with increasing $n$ or $m$, which confirms the speedup property; 2) The gaps among curves in the PP setting (the 2nd and 4th plot) is larger than those in the full-participation case (the 1st and 3rd plot), which suggests that the acceleration brought by increasing $m$ under PP is more significant than that of increasing $n$ in full-participation by a same proportion, which is consistent with our theoretical implications.

\vspace{0.1in}
\noindent\textbf{Error restarting.} To further embody the intuitive impact of delayed error compensation under PP, we test a simple strategy called \textit{``error restarting''} as follows: for each client $i$ in round $t$, if the error accumulator was last updated more than $S$ (a threshold) rounds ago (i.e., before round $t-S$), we simply restart its error accumulator by setting $e_{t,i}=0$, which effectively eliminates the error information that is ``too old''.

\begin{figure}[h]
    \begin{center}
        \mbox
        {\hspace{-0.15in}
        \includegraphics[width=1.75in]{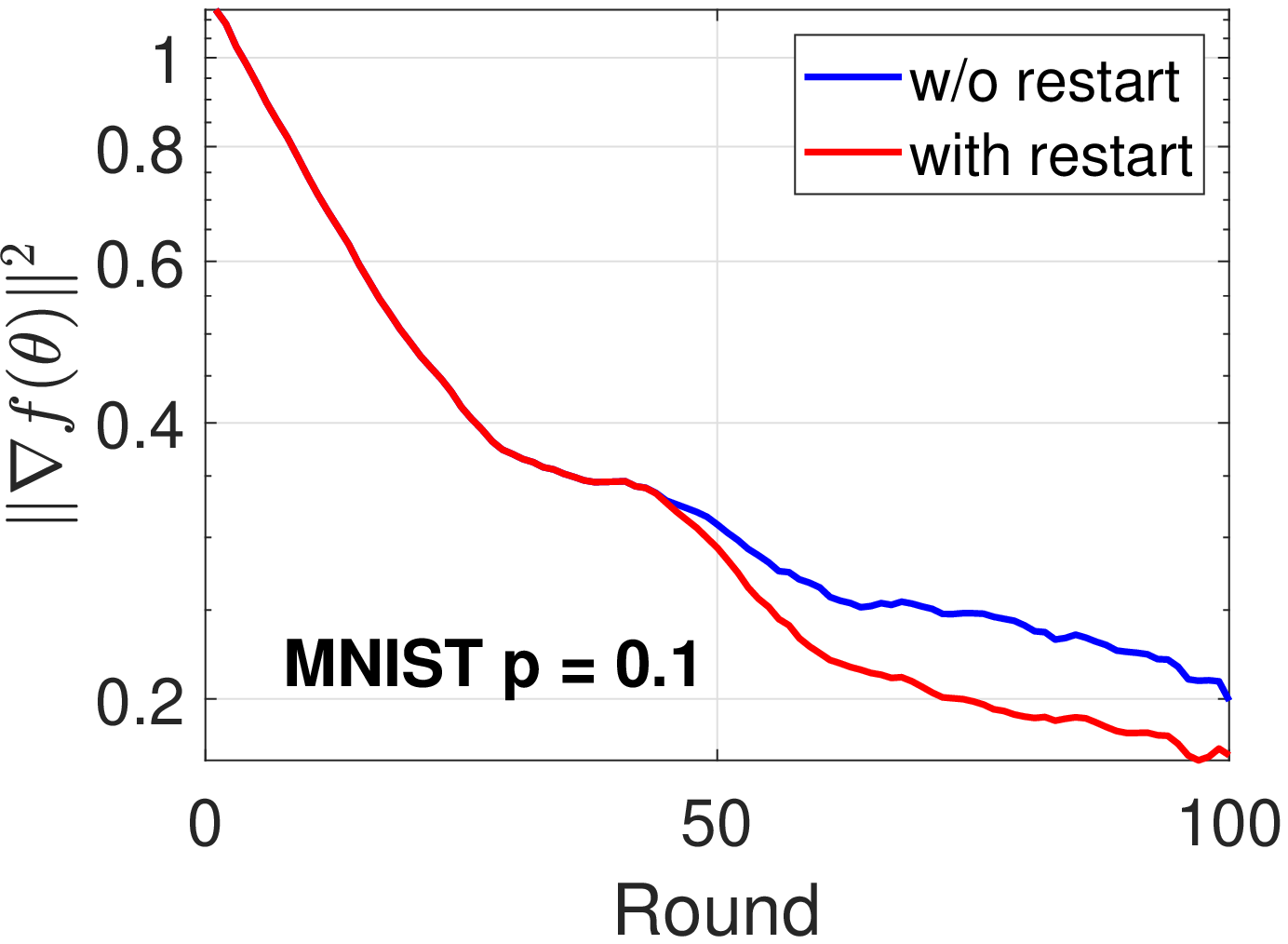}\hspace{-0.12in}
        \includegraphics[width=1.75in]{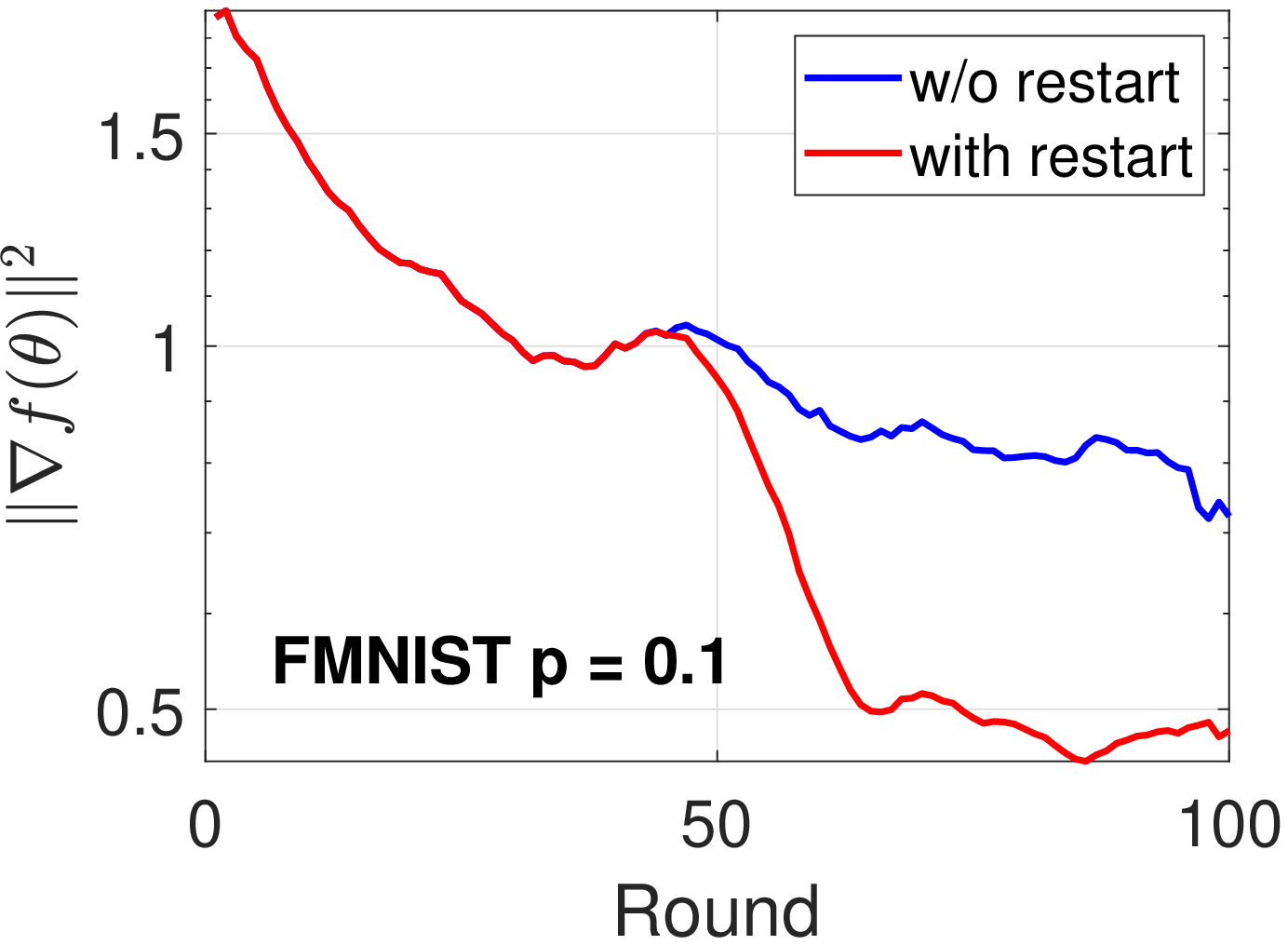}\hspace{-0.12in}
        \includegraphics[width=1.75in]{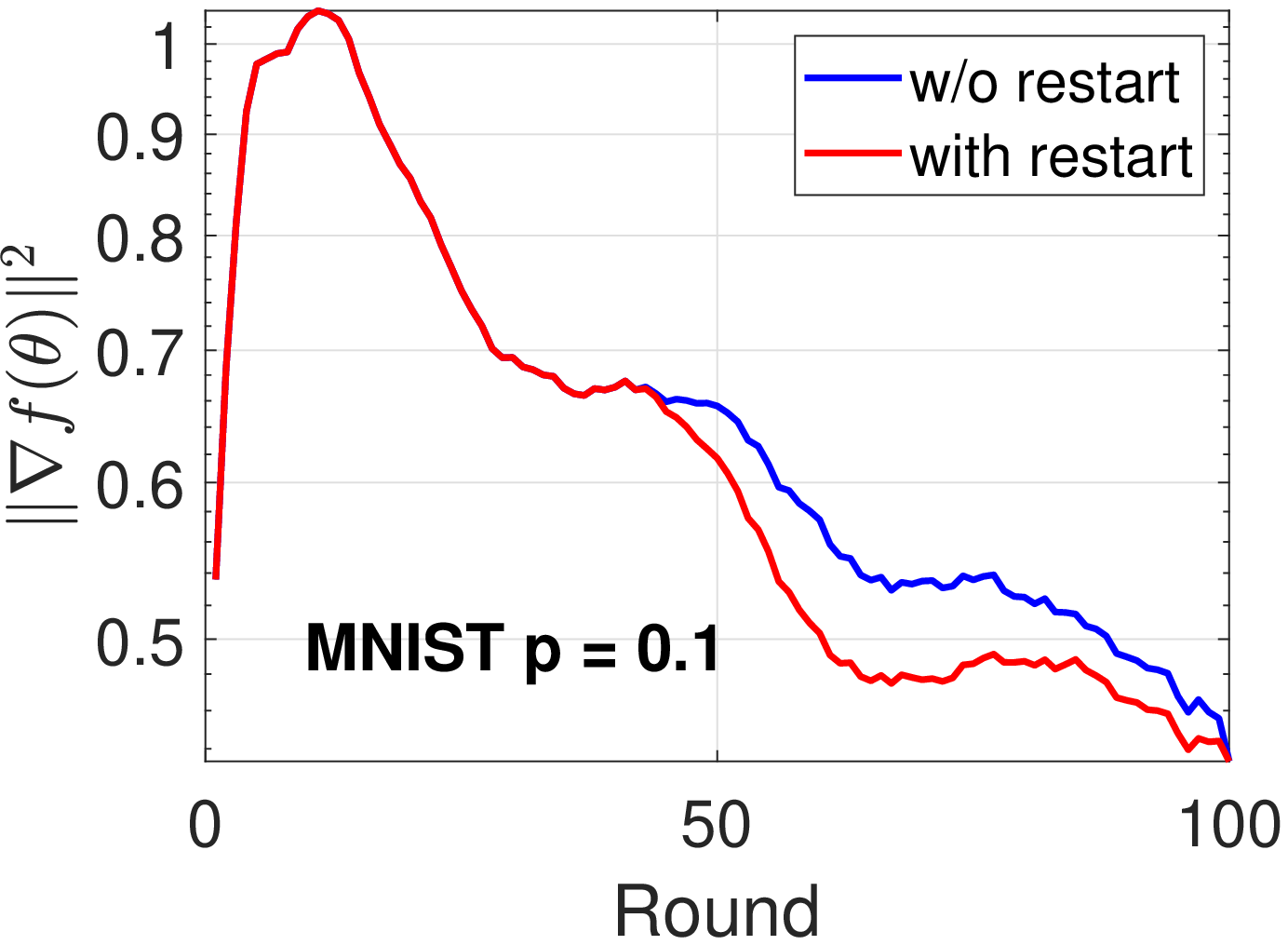}\hspace{-0.12in}
        \includegraphics[width=1.75in]{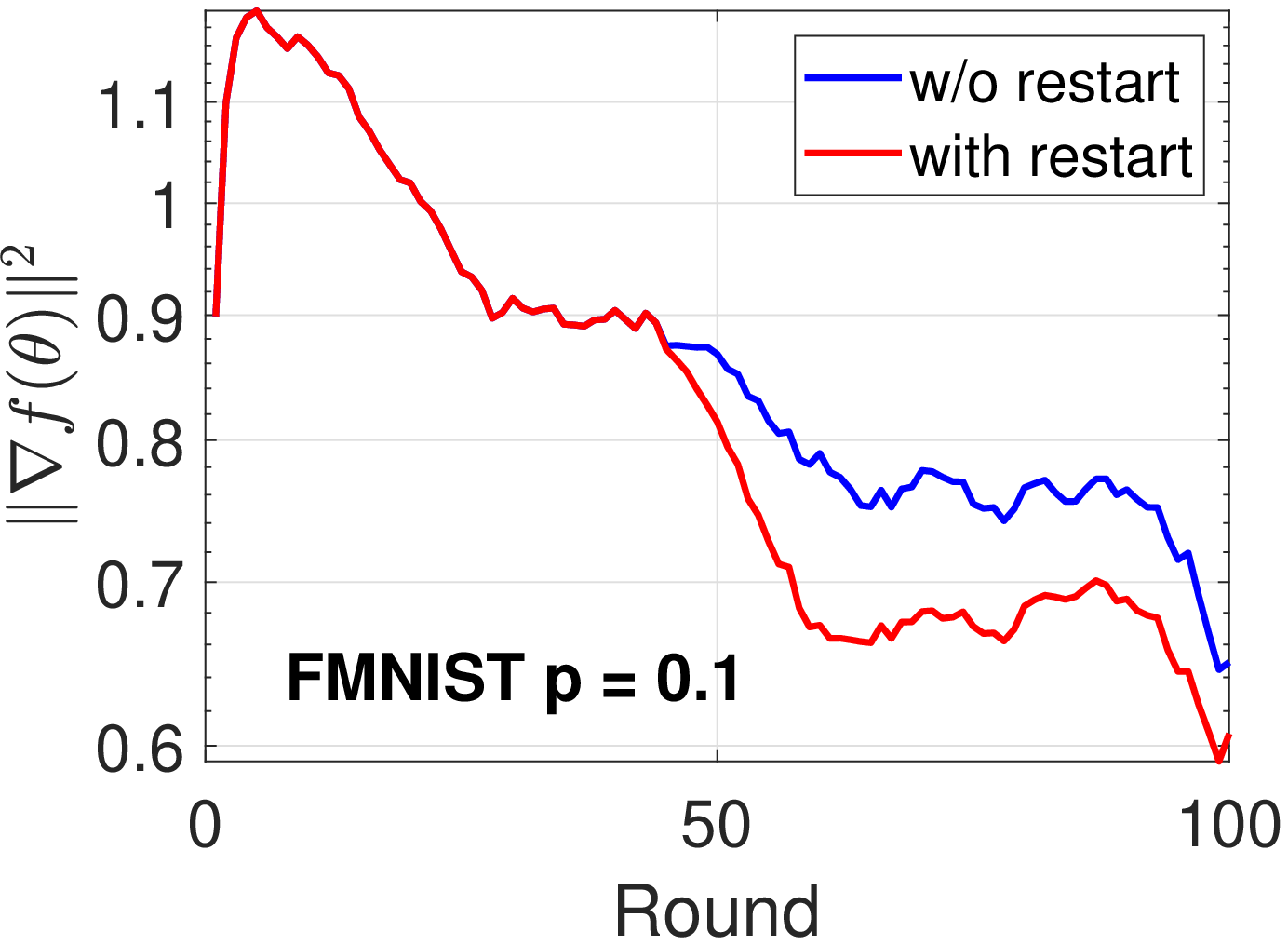}
        }
    \end{center}

    \vspace{-0.2in}

	\caption{Squared gradient norm of Fed-EF (\textbf{TopK-0.01}) under PP with error restarting, $S=10$. Left two panels: logistic regression. Right two panels: MLP. $n=200$, $\eta=1$, $\eta_l=0.1$.}
	\label{fig:stale-error}
\end{figure}

In Figure~\ref{fig:stale-error}, we first run Fed-EF for 50 rounds, and then apply error restarting with threshold $S=10$. As we see, after prohibiting heavily delayed error information, the gradient norm is smaller than that of continuing running standard Fed-EF, i.e., the model finds a stationary point faster. These results illustrate the influence of stale error compensation in Fed-EF, and that properly handling this staleness might be a promising direction for theoretical improvement in the future.

\vspace{0.1in}
\begin{remark}
Note that the error restarting trick is a simple heuristic to illustrate the impact of delayed error compensation in Fed-EF. If we consider theoretical convergence rates, this strategy may not bring improvement since zeroing out the error accumulators, although reducing the stale errors, also leads to biased compression when it is triggered. Therefore, as shown by Theorem~\ref{theo:no-EF rate}, it would result in an additional non-vanishing bias term in the convergence rate, worse than Theorem~\ref{theo:partial-simple} asymptotically. In fact, this strategy plays a trade-off between the staleness in error feedback and biased compression, which could be beneficial empirically if properly balanced.
\end{remark}

\section{Two-Way Compression in Fed-EF}  \label{app sec:two-way}

\begin{algorithm}[t!]
\caption{Fed-EF Scheme with Two-Way Compression} \label{alg:Fed-EF-two-way}
\begin{algorithmic}[1]
\State{\textbf{Input}: learning rates $\eta$, $\eta_l$, \colorbox{blue!20!white}{hyper-parameters $\beta_1$, $\beta_2$, $\epsilon$} }
\State{\textbf{Initialize}: central server parameter $\theta_{1} \in \mathbb R^d \subseteq \mathbb R^d$; $e_{1,i}=\bm{0}$ the accumulator for each worker; \colorbox{blue!20!white}{$m_0=\bm{0}$, $v_0=\bm{0}$, $\hat v_0=\bm{0}$}; \colorbox{yellow!20!white}{$\phi_1=\tilde H_t=\bm 0$; $\theta_{0,i}^{(1)}=\theta_1$ for all $i\in [n]$} }
\vspace{0.03in}
\State{\textbf{for $t=1, \ldots, T$ do}}
\State{\quad\textbf{parallel for worker $i \in [n]$ do}:}
\State{\quad\quad \colorbox{yellow!20!white}{Receive $\tilde H_t$ from the server and set $\theta_{t,i}^{(1)}=\theta_{t-1,i}^{(1)}+\tilde H_t$} \hfill\Comment{\colorbox{yellow!20!white}{Download compression}}  }
\State{\quad\quad  \textbf{for $k=1, \ldots, K$ do}}
\State{\quad\quad\quad  Compute stochastic gradient $g_{t,i}^{(k)}$ at $\theta_{t,i}^{(k)}$}
\State{\quad\quad\quad  Local update $\theta_{t,i}^{(k+1)}=\theta_{t,i}^{(k)}-\eta_l g_{t,i}^{(k)}$}
\State{\quad\quad  \textbf{end for}}

\State{\quad\quad Compute the local model update $\del_{t,i}=\theta_{t,i}^{(K+1)}-\theta_{t}$ }
\State{\quad\quad  Send compressed adjusted local update $\widetilde\del_{t,i}=\mathcal C(\del_{t,i}+e_{t,i})$ to central server}
\State{\quad\quad  Update the error $e_{t+1,i}=e_{t,i}+\del_{t,i}-\widetilde\del_{t,i}$}

\State{\quad \textbf{end parallel}}

\State{\quad\textbf{Central server do:}}
\State{\quad Global aggregation $\overline{\widetilde\del}_{t}=\frac{1}{n}\sum_{i=1}^n \widetilde\del_{t,i}$  \label{line:g}}
\State{\quad \colorbox{green!20!white}{ Compress $\tilde H_t=\mathcal C(\overline{\widetilde\del}_{t}+\phi_t)$} \hfill\Comment{\colorbox{green!20!white}{Fed-EF-SGD}} }

\State{\quad \colorbox{blue!20!white}{$m_t=\beta_1 m_{t-1}+(1-\beta_1)\overline{\widetilde\del}_{t}$} \hfill\Comment{\colorbox{blue!20!white}{Fed-EF-AMS}}  }
\State{\quad \colorbox{blue!20!white}{$v_t=\beta_2 v_{t-1}+(1-\beta_2)\overline{\widetilde\del}_{t}^2$,\quad $\hat v_t=\max(v_t,\hat v_{t-1})$} \label{line:v}}
\State{\quad \colorbox{blue!20!white}{ Compress $\tilde H_t=\mathcal C(\frac{m_t}{\sqrt{\hat v_t+\epsilon}}+\phi_t)$ } }

\State{\quad \colorbox{yellow!20!white}{Update server error accumulator $\phi_{t+1}=\phi_t+(\theta_{t+1}-\theta_t)-\tilde H_t$ } }
\State{\quad Update global model $\theta_{t+1}=\theta_t-\eta\tilde H_t$ and broadcast $\tilde H_t$ to clients}
\State{\textbf{end for}}
\end{algorithmic}
\end{algorithm}

As discussed in Section~\ref{sec:main}, our Fed-EF scheme can also extend to two-way compression, for both uploading (clients-to-server) and downloading (server-to-clients) channels. This can lead to even more communication reduction in practice. The steps can be found in Algorithm~\ref{alg:Fed-EF-two-way}. The general approach is: 1) the clients transmit $\tilde\del_{t,i}$ to the server which are compressed; 2) the server again compresses the aggregated update $\overline{\widetilde\del}_t$ and broadcast the compressed $\tilde H_t$ to the clients, also using an error feedback at the central node. Note that this approach requires the clients to additionally store the model at the beginning of each round.

Next, we briefly demonstrate that Algorithm~\ref{alg:Fed-EF-two-way} has the same convergence rate as Algorithm~\ref{alg:Fed-EF}. For simplicity, we will focus on two-way compressed Fed-EF-SGD here, while same arguments hold for Fed-EF-AMS. Assume the same conditions as in Theorem~\ref{theo:rate SGD}. To study the convergence of Algorithm~\ref{alg:Fed-EF-two-way}, we consider a series of virtual iterates as
\begin{align*}
    \tilde\theta_{t+1}=\theta_{t+1}-\eta\phi_{t+1}&=\theta_t - \eta(\tilde H_t+\phi_{t+1})\\
    &=\theta_t-\frac{1}{n}\sum_{i=1}^n \widetilde\del_{t,i}-\phi_t\\
    &=\tilde\theta_t-\overline{\widetilde\del}_t,
\end{align*}
where we use the fact of EF that $\phi_{t+1}+\tilde H_t=\phi_t+\overline{\widetilde\del}_t$. Then we can construct a similar sequence $x_t$ as in (\ref{eq:virtual-SGD}) associated with $\tilde\theta_t$ by
\begin{align*}
    x_{t+1}&=\tilde\theta_{t+1}-\eta \bar e_{t+1}=x_t-\eta \bar\del_t.
\end{align*}

\newpage

We can then apply same analysis to derive the convergence bound as in Section~\ref{app sec:SGD}. The only difference is in (\ref{eq:sgd1}), where the second term becomes
\begin{align}
    \frac{\eta^2 L^2}{2}\mathbb E\big[\|\bar e_{t}+\phi_t\|^2\big]&\leq \eta^2 L^2\mathbb E\big[\|\bar e_t\|^2\big]+\eta^2 L^2\mathbb E\big[\|\phi_t\|^2\big].  \label{eq:two-way}
\end{align}
The first term can be bounded in the same way as in (\ref{eq:sgd1}). Regarding the second term, we can use a similar trick as Lemma~\ref{lemma:bound e_t} that under Assumption~\ref{ass:compress_diff},
\begin{align*}
    \|\phi_{t+1}\|^2&=\|\phi_t+\overline{\widetilde\del}_t-\mathcal C(\phi_t+\overline{\widetilde\del}_t)\|^2 \\
    &\leq q_{\mathcal C}^2 \|\phi_t+\overline{\widetilde\del}_t\|^2 \\
    &\leq \frac{1+q_{\mathcal C}^2}{2}\|\phi_t\|^2+\frac{2q_{\mathcal C}^2}{1-q_{\mathcal C}^2}\|\overline{\widetilde\del}_t\|^2.
\end{align*}
Then, by recursion and the geometric sum, $\|\phi_{t+1}\|^2$ can be bounded by the second term in above up to a constant. We can write
\begin{align*}
    \mathbb E\big[\|\overline{\widetilde\del}_t\|^2\big]&\leq \mathbb E\big[\|\bar\del_t+\bar e_t-\bar e_{t+1}\|^2\big]\\
    &\leq 3 (\mathbb E\big[\|\bar\del_t \|^2\big] + \mathbb E\big[\|\bar e_t \|^2\big] + \mathbb E\big[\|\bar e_{t+1} \|^2\big]).
\end{align*}
As a result, it holds that $\mathbb E\big[\|\phi_t\|^2\big]= \Theta(\mathbb E\big[\|\bar e_t\|\big]^2)$ since $\mathbb E\big[\|\bar e_t\|^2\big]= \Theta(\mathbb E\big[\|\bar\del_t\|\big]^2)$ by Lemma~\ref{lemma:bound delta} and Lemma~\ref{lemma:bound e_t}. Therefore, (\ref{eq:two-way}) has same order as (\ref{eq:sgd1}). Since other parts of the proof are the same, we conclude that two-way compression does not change the asymptotic convergence rate of Fed-EF.

\section{Conclusion}\label{sec:conclusion}

In this paper, we study Fed-EF, a federated learning (FL) framework with compressed communication and error feedback (EF). We consider two variants, Fed-EF-SGD and Fed-EF-AMS, where the global optimizer are the SGD and the adaptive gradient method, respectively. Theoretically, we demonstrate the non-convergence issue of directly using biased compression, and we provide convergence analysis in non-convex optimization showing that Fed-EF achieves the same convergence rate as the full-precision FL counterparts, which improves upon previous results. The Fed-EF-AMS variant is the first compressed adaptive FL method in the literature. Moreover, we develop a new analysis of error feedback in distributed training systems under the partial client participation setting. Our analysis provides an additional slow-down factor related to the participation rate due to the delayed error compensation of the EF mechanism. Experiments validate that, compared with full-precision training, Fed-EF achieves a significant communication reduction without performance drop. Additionally, we also present numerical results to justify the theory and provide intuition regarding the impact of the delayed error compensation on the norm convergence of Fed-EF.

\vspace{0.1in}

\noindent In summary, our work provides a thorough theoretical investigation of the standard error feedback technique in federated learning, and formally analyzes its convergence under practical FL settings with partial participation. Our paper expands several interesting future directions, e.g., to improve Fed-EF by other techniques, especially under partial participation, and to study more closely the properties of different compressors. In addition, more mechanisms in FL (e.g., variance reduction, fairness) can also be incorporated into our Fed-EF scheme for efficient training in practice.

\newpage

\bibliographystyle{plainnat}
\bibliography{refs_scholar}

\begin{thebibliography}{82}
\providecommand{\natexlab}[1]{#1}
\providecommand{\url}[1]{\texttt{#1}}
\expandafter\ifx\csname urlstyle\endcsname\relax
  \providecommand{\doi}[1]{doi: #1}\else
  \providecommand{\doi}{doi: \begingroup \urlstyle{rm}\Url}\fi

\bibitem[Agarwal and Duchi(2011)]{agarwal2011distributed}
Alekh Agarwal and John~C. Duchi.
\newblock Distributed delayed stochastic optimization.
\newblock In \emph{Advances in Neural Information Processing Systems (NIPS)},
  pages 873--881, Granada, Spain, 2011.

\bibitem[Aji and Heafield(2017)]{aji2017sparse}
Alham~Fikri Aji and Kenneth Heafield.
\newblock Sparse communication for distributed gradient descent.
\newblock In \emph{Proceedings of the 2017 Conference on Empirical Methods in
  Natural Language Processing (EMNLP)}, pages 440--445, Copenhagen, Denmark,
  2017.

\bibitem[Alistarh et~al.(2017)Alistarh, Grubic, Li, Tomioka, and
  Vojnovic]{alistarh2017qsgd}
Dan Alistarh, Demjan Grubic, Jerry Li, Ryota Tomioka, and Milan Vojnovic.
\newblock {QSGD:} communication-efficient {SGD} via gradient quantization and
  encoding.
\newblock In \emph{Advances in Neural Information Processing Systems (NIPS)},
  pages 1709--1720, Long Beach, CA, 2017.

\bibitem[Alistarh et~al.(2018)Alistarh, Hoefler, Johansson, Konstantinov,
  Khirirat, and Renggli]{alistarh2018convergence}
Dan Alistarh, Torsten Hoefler, Mikael Johansson, Nikola Konstantinov, Sarit
  Khirirat, and C{\'{e}}dric Renggli.
\newblock The convergence of sparsified gradient methods.
\newblock In \emph{Advances in Neural Information Processing Systems 31: Annual
  Conference on Neural Information Processing Systems 2018, NeurIPS 2018,
  December 3-8, 2018, Montr{\'{e}}al, Canada}, pages 5977--5987, 2018.

\bibitem[Amiri and G{\"{u}}nd{\"{u}}z(2020)]{amiri2020federated}
Mohammad~Mohammadi Amiri and Deniz G{\"{u}}nd{\"{u}}z.
\newblock Federated learning over wireless fading channels.
\newblock \emph{{IEEE} Trans. Wirel. Commun.}, 19\penalty0 (5):\penalty0
  3546--3557, 2020.

\bibitem[Basu et~al.(2019)Basu, Data, Karakus, and Diggavi]{basu2019qsparse}
Debraj Basu, Deepesh Data, Can Karakus, and Suhas~N. Diggavi.
\newblock {Qsparse-local-SGD}: Distributed {SGD} with quantization,
  sparsification and local computations.
\newblock In \emph{Advances in Neural Information Processing Systems
  (NeurIPS)}, pages 14668--14679, Vancouver, Canada, 2019.

\bibitem[Bernstein et~al.(2018)Bernstein, Wang, Azizzadenesheli, and
  Anandkumar]{bernstein2018signsgd}
Jeremy Bernstein, Yu{-}Xiang Wang, Kamyar Azizzadenesheli, and Animashree
  Anandkumar.
\newblock {SIGNSGD:} compressed optimisation for non-convex problems.
\newblock In \emph{Proceedings of the 35th International Conference on Machine
  Learning (ICML)}, pages 559--568, Stockholmsm{\"{a}}ssan, Stockholm, Sweden,
  2018.

\bibitem[Bernstein et~al.(2019)Bernstein, Zhao, Azizzadenesheli, and
  Anandkumar]{bernstein2019signsgd}
Jeremy Bernstein, Jiawei Zhao, Kamyar Azizzadenesheli, and Anima Anandkumar.
\newblock {signSGD} with majority vote is communication efficient and fault
  tolerant.
\newblock In \emph{Proceedings of the 7th International Conference on Learning
  Representations (ICLR)}, New Orleans, LA, 2019.

\bibitem[Beznosikov et~al.(2020)Beznosikov, Horv{\'a}th, Richt{\'a}rik, and
  Safaryan]{beznosikov2020biased}
Aleksandr Beznosikov, Samuel Horv{\'a}th, Peter Richt{\'a}rik, and Mher
  Safaryan.
\newblock On biased compression for distributed learning.
\newblock \emph{arXiv preprint arXiv:2002.12410}, 2020.

\bibitem[Charles et~al.(2021)Charles, Garrett, Huo, Shmulyian, and
  Smith]{charles2021large}
Zachary Charles, Zachary Garrett, Zhouyuan Huo, Sergei Shmulyian, and Virginia
  Smith.
\newblock On large-cohort training for federated learning.
\newblock In \emph{Advances in Neural Information Processing Systems
  (NeurIPS)}, pages 20461--20475, virtual, 2021.

\bibitem[Chen et~al.(2022{\natexlab{a}})Chen, Xiao, and
  Pang]{chen2022satellite}
Hao Chen, Ming Xiao, and Zhibo Pang.
\newblock Satellite-based computing networks with federated learning.
\newblock \emph{{IEEE} Wirel. Commun.}, 29\penalty0 (1):\penalty0 78--84,
  2022{\natexlab{a}}.

\bibitem[Chen et~al.(2019)Chen, Liu, Sun, and Hong]{chen2019convergence}
Xiangyi Chen, Sijia Liu, Ruoyu Sun, and Mingyi Hong.
\newblock On the convergence of {A} class of adam-type algorithms for
  non-convex optimization.
\newblock In \emph{Proceedings of the 7th International Conference on Learning
  Representations (ICLR)}, New Orleans, LA, 2019.

\bibitem[Chen et~al.(2020)Chen, Li, and Li]{chen2020toward}
Xiangyi Chen, Xiaoyun Li, and Ping Li.
\newblock Toward communication efficient adaptive gradient method.
\newblock In \emph{Proceedings of the {ACM-IMS} Foundations of Data Science
  Conference (FODS)}, pages 119--128, Virtual Event, 2020.

\bibitem[Chen et~al.(2022{\natexlab{b}})Chen, Karimi, Zhao, and
  Li]{chen2022convergence}
Xiangyi Chen, Belhal Karimi, Weijie Zhao, and Ping Li.
\newblock On the convergence of decentralized adaptive gradient methods.
\newblock In \emph{Proceedings of The 14th Asian Conference on Machine Learning
  (ACML)}, Hyderabad, India, 2022{\natexlab{b}}.

\bibitem[Cho et~al.(2022)Cho, Wang, and Joshi]{cho2022towards}
Yae~Jee Cho, Jianyu Wang, and Gauri Joshi.
\newblock Towards understanding biased client selection in federated learning.
\newblock In \emph{Proceedings of the International Conference on Artificial
  Intelligence and Statistics (AISTATS)}, pages 10351--10375, Virtual Event,
  2022.

\bibitem[Dettmers(2016)]{dettmers20168}
Tim Dettmers.
\newblock 8-bit approximations for parallelism in deep learning.
\newblock In \emph{Proceedings of the 4th International Conference on Learning
  Representations (ICLR)}, San Juan, Puerto Rico, 2016.

\bibitem[Duchi et~al.(2011)Duchi, Hazan, and Singer]{duchi2011adaptive}
John~C. Duchi, Elad Hazan, and Yoram Singer.
\newblock Adaptive subgradient methods for online learning and stochastic
  optimization.
\newblock \emph{J. Mach. Learn. Res.}, 12:\penalty0 2121--2159, 2011.

\bibitem[Fatkhullin et~al.(2021)Fatkhullin, Sokolov, Gorbunov, Li, and
  Richt{\'a}rik]{fatkhullin2021ef21}
Ilyas Fatkhullin, Igor Sokolov, Eduard Gorbunov, Zhize Li, and Peter
  Richt{\'a}rik.
\newblock Ef21 with bells \& whistles: Practical algorithmic extensions of
  modern error feedback.
\newblock \emph{arXiv preprint arXiv:2110.03294}, 2021.

\bibitem[Ghosh et~al.(2021)Ghosh, Maity, Kadhe, Mazumdar, and
  Ramchandran]{ghosh2021communication}
Avishek Ghosh, Raj~Kumar Maity, Swanand Kadhe, Arya Mazumdar, and Kannan
  Ramchandran.
\newblock Communication-efficient and byzantine-robust distributed learning
  with error feedback.
\newblock \emph{{IEEE} J. Sel. Areas Inf. Theory}, 2\penalty0 (3):\penalty0
  942--953, 2021.

\bibitem[Gorbunov et~al.(2021)Gorbunov, Burlachenko, Li, and
  Richt{\'{a}}rik]{gorbunov2021marina}
Eduard Gorbunov, Konstantin Burlachenko, Zhize Li, and Peter Richt{\'{a}}rik.
\newblock {MARINA:} faster non-convex distributed learning with compression.
\newblock In \emph{Proceedings of the 38th International Conference on Machine
  Learning (ICML)}, pages 3788--3798, Virtual Event, 2021.

\bibitem[Haddadpour et~al.(2020)Haddadpour, Karimi, Li, and
  Li]{haddadpour2020fedsketch}
Farzin Haddadpour, Belhal Karimi, Ping Li, and Xiaoyun Li.
\newblock Fedsketch: Communication-efficient and private federated learning via
  sketching.
\newblock \emph{arXiv preprint arXiv:2008.04975}, 2020.

\bibitem[Haddadpour et~al.(2021)Haddadpour, Kamani, Mokhtari, and
  Mahdavi]{haddadpour2021federated}
Farzin Haddadpour, Mohammad~Mahdi Kamani, Aryan Mokhtari, and Mehrdad Mahdavi.
\newblock Federated learning with compression: Unified analysis and sharp
  guarantees.
\newblock In \emph{Proceedings of the 24th International Conference on
  Artificial Intelligence and Statistics (AISTATS)}, pages 2350--2358, Virtual
  Event, 2021.

\bibitem[Hard et~al.(2018)Hard, Rao, Mathews, Ramaswamy, Beaufays, Augenstein,
  Eichner, Kiddon, and Ramage]{hard2018federated}
Andrew Hard, Kanishka Rao, Rajiv Mathews, Swaroop Ramaswamy, Fran{\c{c}}oise
  Beaufays, Sean Augenstein, Hubert Eichner, Chlo{\'e} Kiddon, and Daniel
  Ramage.
\newblock Federated learning for mobile keyboard prediction.
\newblock \emph{arXiv preprint arXiv:1811.03604}, 2018.

\bibitem[He et~al.(2016)He, Zhang, Ren, and Sun]{he2016deep}
Kaiming He, Xiangyu Zhang, Shaoqing Ren, and Jian Sun.
\newblock Deep residual learning for image recognition.
\newblock In \emph{Proceedings of the 2016 {IEEE} Conference on Computer Vision
  and Pattern Recognition (CVPR)}, pages 770--778, Las Vegas, NV, 2016.

\bibitem[Huang et~al.(2021)Huang, Chu, Zhou, Wang, Liu, Pei, and
  Zhang]{huang2021personalized}
Yutao Huang, Lingyang Chu, Zirui Zhou, Lanjun Wang, Jiangchuan Liu, Jian Pei,
  and Yong Zhang.
\newblock Personalized cross-silo federated learning on non-{IID} data.
\newblock In \emph{Proceedings of the Thirty-Fifth {AAAI} Conference on
  Artificial Intelligence (AAAI)}, pages 7865--7873, Virtual Event, 2021.

\bibitem[Ivkin et~al.(2019)Ivkin, Rothchild, Ullah, Braverman, Stoica, and
  Arora]{ivkin2019communication}
Nikita Ivkin, Daniel Rothchild, Enayat Ullah, Vladimir Braverman, Ion Stoica,
  and Raman Arora.
\newblock Communication-efficient distributed {SGD} with sketching.
\newblock In \emph{Advances in Neural Information Processing Systems
  (NeurIPS)}, pages 13144--13154, Vancouver, Canada, 2019.

\bibitem[Jiang and Agrawal(2018)]{jiang2018linear}
Peng Jiang and Gagan Agrawal.
\newblock A linear speedup analysis of distributed deep learning with sparse
  and quantized communication.
\newblock In \emph{Advances in Neural Information Processing Systems
  (NeurIPS)}, pages 2530--2541, Montr{\'{e}}al, Canada, 2018.

\bibitem[Jin et~al.(2020)Jin, Huang, He, Dai, and Wu]{jin2020stochastic}
Richeng Jin, Yufan Huang, Xiaofan He, Huaiyu Dai, and Tianfu Wu.
\newblock Stochastic-sign {SGD} for federated learning with theoretical
  guarantees.
\newblock \emph{arXiv preprint arXiv:2002.10940}, 2020.

\bibitem[Kairouz et~al.(2021)Kairouz, McMahan, Avent, Bellet, Bennis, Bhagoji,
  Bonawitz, Charles, Cormode, Cummings, D'Oliveira, Eichner, Rouayheb, Evans,
  Gardner, Garrett, Gasc{\'{o}}n, Ghazi, Gibbons, Gruteser, Harchaoui, He, He,
  Huo, Hutchinson, Hsu, Jaggi, Javidi, Joshi, Khodak, Kone{\v{c}}n{\'y},
  Korolova, Koushanfar, Koyejo, Lepoint, Liu, Mittal, Mohri, Nock,
  {\"{O}}zg{\"{u}}r, Pagh, Qi, Ramage, Raskar, Raykova, Song, Song, Stich, Sun,
  Suresh, Tram{\`{e}}r, Vepakomma, Wang, Xiong, Xu, Yang, Yu, Yu, and
  Zhao]{kairouz2021advances}
Peter Kairouz, H.~Brendan McMahan, Brendan Avent, Aur{\'{e}}lien Bellet, Mehdi
  Bennis, Arjun~Nitin Bhagoji, Kallista~A. Bonawitz, Zachary Charles, Graham
  Cormode, Rachel Cummings, Rafael G.~L. D'Oliveira, Hubert Eichner, Salim~El
  Rouayheb, David Evans, Josh Gardner, Zachary Garrett, Adri{\`{a}}
  Gasc{\'{o}}n, Badih Ghazi, Phillip~B. Gibbons, Marco Gruteser, Za{\"{\i}}d
  Harchaoui, Chaoyang He, Lie He, Zhouyuan Huo, Ben Hutchinson, Justin Hsu,
  Martin Jaggi, Tara Javidi, Gauri Joshi, Mikhail Khodak, Jakub
  Kone{\v{c}}n{\'y}, Aleksandra Korolova, Farinaz Koushanfar, Sanmi Koyejo,
  Tancr{\`{e}}de Lepoint, Yang Liu, Prateek Mittal, Mehryar Mohri, Richard
  Nock, Ayfer {\"{O}}zg{\"{u}}r, Rasmus Pagh, Hang Qi, Daniel Ramage, Ramesh
  Raskar, Mariana Raykova, Dawn Song, Weikang Song, Sebastian~U. Stich, Ziteng
  Sun, Ananda~Theertha Suresh, Florian Tram{\`{e}}r, Praneeth Vepakomma, Jianyu
  Wang, Li~Xiong, Zheng Xu, Qiang Yang, Felix~X. Yu, Han Yu, and Sen Zhao.
\newblock Advances and open problems in federated learning.
\newblock \emph{Found. Trends Mach. Learn.}, 14\penalty0 (1-2):\penalty0
  1--210, 2021.

\bibitem[Karimi et~al.(2021)Karimi, Li, and Li]{karimi2021fed}
Belhal Karimi, Xiaoyun Li, and Ping Li.
\newblock {Fed-LAMB}: Layerwise and dimensionwise locally adaptive optimization
  algorithm.
\newblock \emph{arXiv preprint arXiv:2110.00532}, 2021.

\bibitem[Karimireddy et~al.(2019)Karimireddy, Rebjock, Stich, and
  Jaggi]{karimireddy2019error}
Sai~Praneeth Karimireddy, Quentin Rebjock, Sebastian~U. Stich, and Martin
  Jaggi.
\newblock Error feedback fixes {SignSGD} and other gradient compression
  schemes.
\newblock In \emph{Proceedings of the 36th International Conference on Machine
  Learning (ICML)}, pages 3252--3261, Long Beach, CA, 2019.

\bibitem[Karimireddy et~al.(2020)Karimireddy, Kale, Mohri, Reddi, Stich, and
  Suresh]{karimireddy2020scaffold}
Sai~Praneeth Karimireddy, Satyen Kale, Mehryar Mohri, Sashank~J. Reddi,
  Sebastian~U. Stich, and Ananda~Theertha Suresh.
\newblock {SCAFFOLD:} stochastic controlled averaging for federated learning.
\newblock In \emph{Proceedings of the 37th International Conference on Machine
  Learning (ICML)}, pages 5132--5143, Virtual Event, 2020.

\bibitem[Karimireddy et~al.(2021)Karimireddy, Jaggi, Kale, Mohri, Reddi, Stich,
  and Suresh]{karimireddy2021breaking}
Sai~Praneeth Karimireddy, Martin Jaggi, Satyen Kale, Mehryar Mohri, Sashank~J.
  Reddi, Sebastian~U. Stich, and Ananda~Theertha Suresh.
\newblock Breaking the centralized barrier for cross-device federated learning.
\newblock In \emph{Advances in Neural Information Processing Systems
  (NeurIPS)}, pages 28663--28676, virtual, 2021.

\bibitem[Khan et~al.(2021)Khan, Saad, Han, Hossain, and
  Hong]{khan2021federated}
Latif~U. Khan, Walid Saad, Zhu Han, Ekram Hossain, and Choong~Seon Hong.
\newblock Federated learning for internet of things: Recent advances, taxonomy,
  and open challenges.
\newblock \emph{{IEEE} Commun. Surv. Tutorials}, 23\penalty0 (3):\penalty0
  1759--1799, 2021.

\bibitem[Kingma and Ba(2015)]{kingma2015adam}
Diederik~P. Kingma and Jimmy Ba.
\newblock Adam: {A} method for stochastic optimization.
\newblock In \emph{Proceedings of the 3rd International Conference on Learning
  Representations (ICLR)}, San Diego, CA, 2015.

\bibitem[Krizhevsky(2009)]{cifar}
Alex Krizhevsky.
\newblock Learning multiple layers of features from tiny images.
\newblock 2009.

\bibitem[LeCun et~al.(1998)LeCun, Bottou, Bengio, and
  Haffner]{lecun1998gradient}
Yann LeCun, L{\'e}on Bottou, Yoshua Bengio, and Patrick Haffner.
\newblock Gradient-based learning applied to document recognition.
\newblock \emph{Proceedings of the IEEE}, 86\penalty0 (11):\penalty0
  2278--2324, 1998.

\bibitem[Li et~al.(2022{\natexlab{a}})Li, Diao, Chen, and
  He]{li2022federated_icde}
Qinbin Li, Yiqun Diao, Quan Chen, and Bingsheng He.
\newblock Federated learning on non-iid data silos: An experimental study.
\newblock In \emph{Proceedings of the 38th {IEEE} International Conference on
  Data Engineering (ICDE)}, pages 965--978, Kuala Lumpur, Malaysia,
  2022{\natexlab{a}}.

\bibitem[Li et~al.(2020{\natexlab{a}})Li, Sahu, Talwalkar, and
  Smith]{li2020federated}
Tian Li, Anit~Kumar Sahu, Ameet Talwalkar, and Virginia Smith.
\newblock Federated learning: Challenges, methods, and future directions.
\newblock \emph{{IEEE} Signal Process. Mag.}, 37\penalty0 (3):\penalty0 50--60,
  2020{\natexlab{a}}.

\bibitem[Li et~al.(2020{\natexlab{b}})Li, Huang, Yang, Wang, and
  Zhang]{li2020convergence}
Xiang Li, Kaixuan Huang, Wenhao Yang, Shusen Wang, and Zhihua Zhang.
\newblock On the convergence of {FedAvg} on non-{IID} data.
\newblock In \emph{Proceedings of the 8th International Conference on Learning
  Representations (ICLR)}, Addis Ababa, Ethiopia, 2020{\natexlab{b}}.

\bibitem[Li et~al.(2022{\natexlab{b}})Li, Karimi, and Li]{li2022distributed}
Xiaoyun Li, Belhal Karimi, and Ping Li.
\newblock On distributed adaptive optimization with gradient compression.
\newblock In \emph{Proceedings of the Tenth International Conference on
  Learning Representations (ICLR)}, Virtual Event, 2022{\natexlab{b}}.

\bibitem[Lian et~al.(2015)Lian, Huang, Li, and Liu]{lian2015asynchronous}
Xiangru Lian, Yijun Huang, Yuncheng Li, and Ji~Liu.
\newblock Asynchronous parallel stochastic gradient for nonconvex optimization.
\newblock In \emph{Advances in Neural Information Processing Systems (NIPS)},
  pages 2737--2745, Montreal, Canada, 2015.

\bibitem[Lin et~al.(2018)Lin, Han, Mao, Wang, and Dally]{lin2018deep}
Yujun Lin, Song Han, Huizi Mao, Yu~Wang, and Bill Dally.
\newblock Deep gradient compression: Reducing the communication bandwidth for
  distributed training.
\newblock In \emph{Proceedings of the 6th International Conference on Learning
  Representations (ICLR)}, Vancouver, Canada, 2018.

\bibitem[Liu et~al.(2020{\natexlab{a}})Liu, Li, Tang, and Yan]{liu2020double}
Xiaorui Liu, Yao Li, Jiliang Tang, and Ming Yan.
\newblock A double residual compression algorithm for efficient distributed
  learning.
\newblock In \emph{Proceedings of the 23rd International Conference on
  Artificial Intelligence and Statistics (AISTATS)}, pages 133--143, Online
  [Palermo, Sicily, Italy], 2020{\natexlab{a}}.

\bibitem[Liu et~al.(2020{\natexlab{b}})Liu, Huang, Luo, Huang, Liu, Chen, Feng,
  Chen, Yu, and Yang]{liu2020fedvision}
Yang Liu, Anbu Huang, Yun Luo, He~Huang, Youzhi Liu, Yuanyuan Chen, Lican Feng,
  Tianjian Chen, Han Yu, and Qiang Yang.
\newblock {FedVision}: An online visual object detection platform powered by
  federated learning.
\newblock In \emph{Proceedings of the Thirty-Fourth {AAAI} Conference on
  Artificial Intelligence (AAAI)}, pages 13172--13179, New York, NY,
  2020{\natexlab{b}}.

\bibitem[Malekijoo et~al.(2021)Malekijoo, Fadaeieslam, Malekijou, Homayounfar,
  Alizadeh-Shabdiz, and Rawassizadeh]{malekijoo2021fedzip}
Amirhossein Malekijoo, Mohammad~Javad Fadaeieslam, Hanieh Malekijou, Morteza
  Homayounfar, Farshid Alizadeh-Shabdiz, and Reza Rawassizadeh.
\newblock {FedZIP}: A compression framework for communication-efficient
  federated learning.
\newblock \emph{arXiv preprint arXiv:2102.01593}, 2021.

\bibitem[Marfoq et~al.(2020)Marfoq, Xu, Neglia, and
  Vidal]{marfoq2020throughput}
Othmane Marfoq, Chuan Xu, Giovanni Neglia, and Richard Vidal.
\newblock Throughput-optimal topology design for cross-silo federated learning.
\newblock In \emph{Advances in Neural Information Processing Systems
  (NeurIPS)}, virtual, 2020.

\bibitem[McMahan et~al.(2017)McMahan, Moore, Ramage, Hampson, and
  y~Arcas]{mcmahan2017communication}
Brendan McMahan, Eider Moore, Daniel Ramage, Seth Hampson, and
  Blaise~Ag{\"{u}}era y~Arcas.
\newblock Communication-efficient learning of deep networks from decentralized
  data.
\newblock In \emph{Proceedings of the 20th International Conference on
  Artificial Intelligence and Statistics (AISTATS)}, pages 1273--1282, Fort
  Lauderdale, FL, 2017.

\bibitem[Mitra et~al.(2021)Mitra, Jaafar, Pappas, and Hassani]{mitra2021linear}
Aritra Mitra, Rayana~H. Jaafar, George~J. Pappas, and Hamed Hassani.
\newblock Linear convergence in federated learning: Tackling client
  heterogeneity and sparse gradients.
\newblock In \emph{Advances in Neural Information Processing Systems
  (NeurIPS)}, pages 14606--14619, virtual, 2021.

\bibitem[Mohri et~al.(2019)Mohri, Sivek, and Suresh]{mohri2018agnostic}
Mehryar Mohri, Gary Sivek, and Ananda~Theertha Suresh.
\newblock Agnostic federated learning.
\newblock In \emph{Proceedings of the 36th International Conference on Machine
  Learning (ICML)}, pages 4615--4625, Long Beach, CA, 2019.

\bibitem[Niknam et~al.(2020)Niknam, Dhillon, and Reed]{niknam2020federated}
Solmaz Niknam, Harpreet~S. Dhillon, and Jeffrey~H. Reed.
\newblock Federated learning for wireless communications: Motivation,
  opportunities, and challenges.
\newblock \emph{{IEEE} Commun. Mag.}, 58\penalty0 (6):\penalty0 46--51, 2020.

\bibitem[Reddi et~al.(2018)Reddi, Kale, and Kumar]{reddi2019convergence}
Sashank~J. Reddi, Satyen Kale, and Sanjiv Kumar.
\newblock On the convergence of {Adam} and beyond.
\newblock In \emph{Proceedings of the 6th International Conference on Learning
  Representations (ICLR)}, Vancouver, Canada, 2018.

\bibitem[Reddi et~al.(2021)Reddi, Charles, Zaheer, Garrett, Rush,
  Kone{\v{c}}n{\'y}, Kumar, and McMahan]{reddi2021adaptive}
Sashank~J. Reddi, Zachary Charles, Manzil Zaheer, Zachary Garrett, Keith Rush,
  Jakub Kone{\v{c}}n{\'y}, Sanjiv Kumar, and Hugh~Brendan McMahan.
\newblock Adaptive federated optimization.
\newblock In \emph{Proceedings of the 9th International Conference on Learning
  Representations (ICLR)}, Virtual Event, Austria, 2021.

\bibitem[Reisizadeh et~al.(2020)Reisizadeh, Mokhtari, Hassani, Jadbabaie, and
  Pedarsani]{reisizadeh2020fedpaq}
Amirhossein Reisizadeh, Aryan Mokhtari, Hamed Hassani, Ali Jadbabaie, and
  Ramtin Pedarsani.
\newblock {FedPAQ}: {A} communication-efficient federated learning method with
  periodic averaging and quantization.
\newblock In \emph{Proceedings of the 23rd International Conference on
  Artificial Intelligence and Statistics (AISTATS)}, pages 2021--2031, Online
  [Palermo, Sicily, Italy], 2020.

\bibitem[Richt{\'{a}}rik et~al.(2021)Richt{\'{a}}rik, Sokolov, and
  Fatkhullin]{richtarik2021ef21}
Peter Richt{\'{a}}rik, Igor Sokolov, and Ilyas Fatkhullin.
\newblock {EF21:} {A} new, simpler, theoretically better, and practically
  faster error feedback.
\newblock In \emph{Advances in Neural Information Processing Systems
  (NeurIPS)}, pages 4384--4396, virtual, 2021.

\bibitem[Rieke et~al.(2020)Rieke, Hancox, Li, Milletari, Roth, Albarqouni,
  Bakas, Galtier, Landman, Maier-Hein, et~al.]{rieke2020future}
Nicola Rieke, Jonny Hancox, Wenqi Li, Fausto Milletari, Holger~R Roth, Shadi
  Albarqouni, Spyridon Bakas, Mathieu~N Galtier, Bennett~A Landman, Klaus
  Maier-Hein, et~al.
\newblock The future of digital health with federated learning.
\newblock \emph{NPJ digital medicine}, 3\penalty0 (1):\penalty0 1--7, 2020.

\bibitem[Seide et~al.(2014)Seide, Fu, Droppo, Li, and Yu]{seide20141}
Frank Seide, Hao Fu, Jasha Droppo, Gang Li, and Dong Yu.
\newblock 1-bit stochastic gradient descent and its application to
  data-parallel distributed training of speech {DNNs}.
\newblock In \emph{Proceedings of the 15th Annual Conference of the
  International Speech Communication Association (INTERSPEECH)}, pages
  1058--1062, Singapore, 2014.

\bibitem[Shen et~al.(2018)Shen, Mokhtari, Zhou, Zhao, and
  Qian]{shen2018towards}
Zebang Shen, Aryan Mokhtari, Tengfei Zhou, Peilin Zhao, and Hui Qian.
\newblock Towards more efficient stochastic decentralized learning: Faster
  convergence and sparse communication.
\newblock In \emph{Proceedings of the 35th International Conference on Machine
  Learning (ICML)}, pages 4631--4640, Stockholmsm{\"{a}}ssan, Stockholm,
  Sweden, 2018.

\bibitem[Shi et~al.(2019)Shi, Zhao, Wang, Tang, and Chu]{shi2019convergence}
Shaohuai Shi, Kaiyong Zhao, Qiang Wang, Zhenheng Tang, and Xiaowen Chu.
\newblock A convergence analysis of distributed {SGD} with
  communication-efficient gradient sparsification.
\newblock In \emph{Proceedings of the 28th International Joint Conference on
  Artificial Intelligence (IJCAI)}, pages 3411--3417, Macao, China, 2019.

\bibitem[Stich(2019)]{stich2019local}
Sebastian~U. Stich.
\newblock Local {SGD} converges fast and communicates little.
\newblock In \emph{Proceedings of the 7th International Conference on Learning
  Representations (ICLR)}, New Orleans, LA, 2019.

\bibitem[Stich and Karimireddy(2019)]{stich2019error}
Sebastian~U Stich and Sai~Praneeth Karimireddy.
\newblock The error-feedback framework: Better rates for {SGD} with delayed
  gradients and compressed communication.
\newblock \emph{arXiv preprint arXiv:1909.05350}, 2019.

\bibitem[Stich et~al.(2018)Stich, Cordonnier, and Jaggi]{stich2018sparsified}
Sebastian~U. Stich, Jean{-}Baptiste Cordonnier, and Martin Jaggi.
\newblock Sparsified {SGD} with memory.
\newblock In \emph{Advances in Neural Information Processing Systems
  (NeurIPS)}, pages 4452--4463, Montr{\'{e}}al, Canada, 2018.

\bibitem[Vogels et~al.(2019)Vogels, Karimireddy, and Jaggi]{vogels2019powersgd}
Thijs Vogels, Sai~Praneeth Karimireddy, and Martin Jaggi.
\newblock {PowerSGD}: Practical low-rank gradient compression for distributed
  optimization.
\newblock In \emph{Advances in Neural Information Processing Systems
  (NeurIPS)}, pages 14236--14245, Vancouver, Canada, 2019.

\bibitem[Wang et~al.(2021)Wang, Li, Karimi, and Li]{wang2021optimistic}
Jun{-}Kun Wang, Xiaoyun Li, Belhal Karimi, and Ping Li.
\newblock An optimistic acceleration of amsgrad for nonconvex optimization.
\newblock In \emph{Proceedings of the Asian Conference on Machine Learning
  (ACML)}, pages 422--437, Virtual Event, 2021.

\bibitem[Wangni et~al.(2018)Wangni, Wang, Liu, and Zhang]{wangni2018gradient}
Jianqiao Wangni, Jialei Wang, Ji~Liu, and Tong Zhang.
\newblock Gradient sparsification for communication-efficient distributed
  optimization.
\newblock In \emph{Advances in Neural Information Processing Systems
  (NeurIPS)}, pages 1306--1316, Montr{\'{e}}al, Canada, 2018.

\bibitem[Wu et~al.(2018)Wu, Huang, Huang, and Zhang]{wu2018error}
Jiaxiang Wu, Weidong Huang, Junzhou Huang, and Tong Zhang.
\newblock Error compensated quantized {SGD} and its applications to large-scale
  distributed optimization.
\newblock In \emph{Proceedings of the 35th International Conference on Machine
  Learning (ICML)}, pages 5321--5329, Stockholmsm{\"{a}}ssan, Stockholm,
  Sweden, 2018.

\bibitem[Xiao et~al.(2017)Xiao, Rasul, and Vollgraf]{xiao2017fashion}
Han Xiao, Kashif Rasul, and Roland Vollgraf.
\newblock {Fashion-MNIST}: a novel image dataset for benchmarking machine
  learning algorithms.
\newblock \emph{arXiv preprint arXiv:1708.07747}, 2017.

\bibitem[Xu et~al.(2021)Xu, Li, Zhao, Shen, Huang, Li, and Li]{xu2021agile}
Zhiqiang Xu, Dong Li, Weijie Zhao, Xing Shen, Tianbo Huang, Xiaoyun Li, and
  Ping Li.
\newblock Agile and accurate {CTR} prediction model training for massive-scale
  online advertising systems.
\newblock In \emph{Proceedings of the International Conference on Management of
  Data (SIGMOD)}, pages 2404--2409, Virtual Event, China, 2021.

\bibitem[Yang et~al.(2021{\natexlab{a}})Yang, Fang, and Liu]{yang2021achieving}
Haibo Yang, Minghong Fang, and Jia Liu.
\newblock Achieving linear speedup with partial worker participation in non-iid
  federated learning.
\newblock In \emph{Proceedings of the 9th International Conference on Learning
  Representations (ICLR)}, Virtual Event, Austria, 2021{\natexlab{a}}.

\bibitem[Yang et~al.(2020{\natexlab{a}})Yang, Liu, Quek, and
  Poor]{yang2020scheduling}
Howard~H. Yang, Zuozhu Liu, Tony Q.~S. Quek, and H.~Vincent Poor.
\newblock Scheduling policies for federated learning in wireless networks.
\newblock \emph{{IEEE} Trans. Commun.}, 68\penalty0 (1):\penalty0 317--333,
  2020{\natexlab{a}}.

\bibitem[Yang et~al.(2020{\natexlab{b}})Yang, Jiang, Shi, and
  Ding]{yang2020federated}
Kai Yang, Tao Jiang, Yuanming Shi, and Zhi Ding.
\newblock Federated learning via over-the-air computation.
\newblock \emph{{IEEE} Trans. Wirel. Commun.}, 19\penalty0 (3):\penalty0
  2022--2035, 2020{\natexlab{b}}.

\bibitem[Yang et~al.(2019)Yang, Liu, Chen, and Tong]{yang2019federated}
Qiang Yang, Yang Liu, Tianjian Chen, and Yongxin Tong.
\newblock Federated machine learning: Concept and applications.
\newblock \emph{{ACM} Trans. Intell. Syst. Technol.}, 10\penalty0 (2):\penalty0
  12:1--12:19, 2019.

\bibitem[Yang et~al.(2021{\natexlab{b}})Yang, Chen, Wong, Poor, and
  Cui]{yang2021federated_6G}
Zhaohui Yang, Mingzhe Chen, Kai-Kit Wong, H.~Vincent Poor, and Shuguang Cui.
\newblock Federated learning for {6G}: Applications, challenges, and
  opportunities.
\newblock \emph{Engineering}, 2021{\natexlab{b}}.

\bibitem[Yu et~al.(2019)Yu, Jin, and Yang]{yu2019linear}
Hao Yu, Rong Jin, and Sen Yang.
\newblock On the linear speedup analysis of communication efficient momentum
  {SGD} for distributed non-convex optimization.
\newblock In \emph{Proceedings of the 36th International Conference on Machine
  Learning (ICML)}, pages 7184--7193, Long Beach, CA, 2019.

\bibitem[Yu et~al.(2018)Yu, Lin, Narra, Li, Li, Kim, Schwing, Annavaram, and
  Avestimehr]{yu2018gradiveq}
Mingchao Yu, Zhifeng Lin, Krishna Narra, Songze Li, Youjie Li, Nam~Sung Kim,
  Alexander~G. Schwing, Murali Annavaram, and Salman Avestimehr.
\newblock {GradiVeQ}: Vector quantization for bandwidth-efficient gradient
  aggregation in distributed {CNN} training.
\newblock In \emph{Advances in Neural Information Processing Systems
  (NeurIPS)}, pages 5129--5139, Montr{\'{e}}al, Canada, 2018.

\bibitem[Zeiler(2012)]{zeiler2012adadelta}
Matthew~D Zeiler.
\newblock {ADADELTA}: an adaptive learning rate method.
\newblock \emph{arXiv preprint arXiv:1212.5701}, 2012.

\bibitem[Zhang et~al.(2017)Zhang, Li, Kara, Alistarh, Liu, and
  Zhang]{zhang2017zipml}
Hantian Zhang, Jerry Li, Kaan Kara, Dan Alistarh, Ji~Liu, and Ce~Zhang.
\newblock {ZipML}: Training linear models with end-to-end low precision, and a
  little bit of deep learning.
\newblock In \emph{Proceedings of the 34th International Conference on Machine
  Learning (ICML)}, pages 4035--4043, Sydney, Australia, 2017.

\bibitem[Zhao et~al.(2022)Zhao, Jiao, Hu, Li, Zhang, and
  Li]{zhao2022communication_arxiv}
Weijie Zhao, Xuewu Jiao, Mingqing Hu, Xiaoyun Li, Xiangyu Zhang, and Ping Li.
\newblock Communication-efficient terabyte-scale model training framework for
  online advertising.
\newblock \emph{arXiv preprint arXiv:2201.05500}, 2022.

\bibitem[Zhao et~al.(2018)Zhao, Li, Lai, Suda, Civin, and
  Chandra]{zhao2018federated}
Yue Zhao, Meng Li, Liangzhen Lai, Naveen Suda, Damon Civin, and Vikas Chandra.
\newblock Federated learning with non-{IID} data.
\newblock \emph{arXiv preprint arXiv:1806.00582}, 2018.

\bibitem[Zheng et~al.(2019)Zheng, Huang, and Kwok]{zheng2019communication}
Shuai Zheng, Ziyue Huang, and James~T. Kwok.
\newblock Communication-efficient distributed blockwise momentum {SGD} with
  error-feedback.
\newblock In \emph{Advances in Neural Information Processing Systems
  (NeurIPS)}, pages 11446--11456, Vancouver, Canada, 2019.

\bibitem[Zhou et~al.(2018)Zhou, Chen, Cao, Tang, Yang, and
  Gu]{zhou2018convergence}
Dongruo Zhou, Jinghui Chen, Yuan Cao, Yiqi Tang, Ziyan Yang, and Quanquan Gu.
\newblock On the convergence of adaptive gradient methods for nonconvex
  optimization.
\newblock \emph{arXiv preprint arXiv:1808.05671}, 2018.

\bibitem[Zhou et~al.(2020)Zhou, Karimi, Yu, Xu, and Li]{zhou2020towards}
Yingxue Zhou, Belhal Karimi, Jinxing Yu, Zhiqiang Xu, and Ping Li.
\newblock Towards better generalization of adaptive gradient methods.
\newblock In \emph{Advances in Neural Information Processing Systems
  (NeurIPS)}, virtual, 2020.

\end{thebibliography}


\clearpage
\newpage

{\hspace{2in}\textbf{\huge Appendix}}

\appendix





\section{Additional Algorithms}  \label{app sec:competing}

\begin{algorithm}[H]
\caption{Federated Learning with Compressed Communication (Competing method \textbf{Stoc})} \label{alg:competing}
\begin{algorithmic}[1]
\State{\textbf{Input}: learning rates $\eta$, $\eta_l$, \colorbox{blue!20!white}{hyper-parameters $\beta_1$, $\beta_2$, $\epsilon$}  }
\State{\textbf{Initialize}: central server parameter $\theta_{1} \in \mathbb R^d \subseteq \mathbb R^d$; $e_{1,i}=\bm{0}$ the accumulator for each worker; \colorbox{blue!20!white}{$m_0=\bm{0}$, $v_0=\bm{0}$, $\hat v_0=\bm{0}$}}
\vspace{0.03in}
\State{\textbf{for $t=1, \ldots, T$ do}}
\State{\quad\textbf{parallel for worker $i \in [n]$ do}:}

\State{\quad\quad  Receive model parameter $\theta_{t}$ from central server, set $\theta_{t,i}^{(1)}=\theta_t$}
\State{\quad\quad  \textbf{for $k=1, \ldots, K$ do}}
\State{\quad\quad\quad  Compute stochastic gradient $g_{t,i}^{(k)}$ at $\theta_{t,i}^{(k)}$}
\State{\quad\quad\quad  Local update $\theta_{t,i}^{(k+1)}=\theta_{t,i}^{(k)}-\eta_l g_{t,i}^{(k)}$}
\State{\quad\quad  \textbf{end for}}

\State{\quad\quad Compute the local model update $\del_{t,i}=\theta_{t,i}^{(K+1)}-\theta_{t}$ }
\State{\quad\quad  Send quantized model update $\widetilde\del_{t,i}=\mathcal Q(\del_{t,i})$ to central server using (\ref{def:stoc quantizer})}

\State{\quad\textbf{end parallel}}

\State{\quad\textbf{Central server do:}}
\State{\quad Global aggregation $\overline{\widetilde\del}_{t}=\frac{1}{n}\sum_{i=1}^n \widetilde\del_{t,i}$  \label{line:g}}
\State{\quad \colorbox{green!20!white}{Update the global model $\theta_{t+1}=\theta_{t}-\eta\overline{\widetilde\del}_{t}$} \hfill\Comment{\colorbox{green!20!white}{\textbf{Stoc} with SGD}} }

\State{\quad \colorbox{blue!20!white}{$m_t=\beta_1 m_{t-1}+(1-\beta_1)\overline{\widetilde\del}_{t}$} \hfill\Comment{\colorbox{blue!20!white}{\textbf{Stoc} with AMSGrad}}  }
\State{\quad \colorbox{blue!20!white}{$v_t=\beta_2 v_{t-1}+(1-\beta_2)\overline{\widetilde\del}_{t}^2$,\quad $\hat v_t=\max(v_t,\hat v_{t-1})$} \label{line:v}}
\State{\quad \colorbox{blue!20!white}{Update the global model $\theta_{t+1}=\theta_{t}-\eta\frac{m_t}{\sqrt{\hat v_t+\epsilon}}$} }

\State{\textbf{end for}}
\end{algorithmic}

\end{algorithm}

For completeness, in Algorithm~\ref{alg:competing}, we give the details of the competing approach, compressed FL without error feedback. That is, we directly compress the transmitted gradient vectors from clients to server. Similar to Fed-EF, we may also design two variants depending on the global optimizer.

\vspace{0.1in}
\noindent\textbf{When the compressor is biased}. In Algorithm~\ref{alg:competing}, if the compressor $\mathcal Q$ is biased (e.g., \textbf{Sign} and \textbf{TopK}), then the SGD variant of Algorithm~\ref{alg:competing} is essentially Algorithm~\ref{alg:no-EF-SGD} considered in Theorem~\ref{theo:no-EF rate}, the Fed-SGD method directly using biased compression.

\vspace{0.1in}
\noindent\textbf{When the compressor is unbiased}, Algorithm~\ref{alg:competing} becomes the \textbf{Stoc} baseline in our experiments, which directly compresses the transmitted vector from clients to server by unbiased stochastic quantization $\mathcal Q(\cdot)$ proposed by~\cite{alistarh2017qsgd}. For a vector $x\in \mathbb R^d$, the operator $\mathcal Q(\cdot)$ is defined as
\begin{align}
    \mathcal Q_b(x)=\|x\|\cdot sign(x)\cdot \xi(x,b),  \label{def:stoc quantizer}
\end{align}
where $b\geq 1$ is number of bits per non-zero entry of the compressed vector $\mathcal Q(x)$. Suppose $0\leq l< 2^{b-1}$ is the integer such that $|x_i|/\|x\|$ is contained in the interval $[l/2^{b-1},(l+1)/2^{b-1}]$. The random variable $\xi(x,b)$ is defined by
\begin{align*}
    \xi(x,b) = \begin{cases}
      l/s,  & \text{with probability}\ 1-g(\frac{|x_i|}{\|x\|},b), \\
      (l+1)/s, & \text{otherwise},
    \end{cases}
\end{align*}
with $g(a,b)=a\cdot 2^{b-1}-l$ for $a\in[0,1]$. Simply, 0 is always quantized to 0. The \textbf{Stoc} quantizer is unbiased, i.e., $\mathbb E[\mathcal Q(x)|x]=x$. In addition, it also introduces sparsity to the compressed vector in a probabilistic way, with $\mathbb E[\|\mathcal Q(x)\|_0]\leq 2^{b}+2^{b-1}\sqrt d$.

Additionally, we mention that \textbf{Stoc} also has two corresponding variants, one using SGD and one using AMSGrad as the global optimizer. For the SGD variant, \textbf{Stoc} is equivalent to the FedCOM method in~\cite{haddadpour2021federated}, which is also the FedPaQ algorithm~\citep{reisizadeh2020fedpaq} with tunable global learning rate.

\vspace{0.1in}
\noindent\textbf{When there is no compression}, for the full-precision algorithms, we simply set $\widetilde\del_{t,i}=\del_{t,i}$ in line 11 of Algorithm~\ref{alg:competing}. For SGD, it is the one studied in \cite{yang2021achieving} which is the standard local SGD~\citep{mcmahan2017communication} with global learning rate. For the AMSGrad variant, it becomes FedAdam~\citep{reddi2021adaptive}. Note that \cite{reddi2021adaptive} used Adam, while we use AMSGrad (with the max operation) for better stability. Empirically, the performance of these two options are very similar.

\section{Parameter Tuning and More Experiment Results} \label{app sec:experiment}

\subsection{Parameter Tuning}

In our experiments, we fine-tune the global and local learning rates for the baseline methods and Fed-EF with each hyper-parameter of the compressors (i.e., the compression rate). For the AMSGrad optimizer, we set $\beta_1=0.9$, $\beta_2=0.999$ and $\epsilon=10^{-8}$ as the recommended default~\citep{reddi2019convergence}. For each algorithm, we tune $\eta$ over $\{ 10^{-4},10^{-3},10^{-2},10^{-1},1,10 \}$ and $\eta_l$ over $\{10^{-4},10^{-3},10^{-2},10^{-1},1\}$. We found that the compressed methods usually have same optimal learning rates as the full-precision training. The best learning rate combinations achieving highest test accuracy are given in Table~\ref{tab:best lr}.

\begin{table}[h]
\vspace{0.2in}
\centering
\begin{tabular}{c|cc|cc}
\Xhline{2\arrayrulewidth} \hline
  \multirow{2}{*}{}     & \multicolumn{2}{c}{Fed-EF-SGD} & \multicolumn{2}{c}{Fed-EF-AMS} \\ 
       & $\eta$        & $\eta_l$       & $\eta$        & $\eta_l$       \\ \hline
MNIST  &       $10$        &    $10^{-3}$            &      $10^{-3}$         &       $10^{-2}$         \\
FMNIST &      $1$         &     $10^{-1}$           &   $10^{-2}$            &    $10^{-1}$    \\
CIFAR-10 & 1 & $10^{-1}$           &   $10^{-3}$            &    $10^{-2}$    \\ \hline
\Xhline{2\arrayrulewidth}
\end{tabular}
\vspace{0.1in}
\caption{Optimal global ($\eta$) and local ($\eta_l$) learning rate combinations to attain highest test accuracy.}
\label{tab:best lr}
\end{table}

\subsection{More Experiment Results}

We provide the complete set of experimental results on each method under various compression rates. In Table~\ref{tab:acc-SGD-p0.5} - Table~\ref{tab:acc-AMS-p0.1}, for completeness we report the average test accuracy at the end of training and the standard deviations, corresponding to the curves (compression parameters) in Figure~\ref{fig:MNIST-FMNIST-0.5} and Figure~\ref{fig:MNIST-FMNIST-0.1}. Figure~\ref{fig:MNIST-loss-compressor-0.5} and Figure~\ref{fig:FMNIST-loss-compressor-0.5} present the training loss under participation rate $p=0.5$, and Figure~\ref{fig:MNIST-loss-compressor-0.1} to Figure~\ref{fig:FMNIST-acc-compressor-0.1} report the loss and accuracy results for $p=0.1$. All the results suggest that Fed-EF is able to perform on a par with full-precision FL with much less communication cost.

\begin{table}[h]
\centering
\begin{tabular}{c|ccc|cc}
\Xhline{2\arrayrulewidth} \hline
\multirow{2}{*}{} & \multicolumn{3}{c}{Fed-EF-SGD}             & \multicolumn{2}{c}{No EF}      \\ \hline
                  & Sign         & TopK         & Hv-Sign      & Stoc         & Full-precision           \\ \hline
MNIST             & 90.87 ($\pm$0.84) & 91.04 ($\pm$1.05) & 91.18 ($\pm$1.10) & 90.16 ($\pm$0.96) & 90.85 ($\pm$0.89) \\
FMNIST            & 71.13 ($\pm$0.68) & 71.16 ($\pm$0.77) & 71.07 ($\pm$0.83) & 71.26 ($\pm$0.87) & 71.20 ($\pm$0.71) \\
\hline \Xhline{2\arrayrulewidth}
\end{tabular}
\caption{Test accuracy (\%) with client participation rate $p=0.5$, of Fed-EF-SGD with \textbf{Sign}, \textbf{TopK} and \textbf{heavy-Sign} compressor and \textbf{Stoc} (stochastic quantization) without EF. The compression parameters (i.e., $k$ and $b$) of the compressors are consistent with Figure~\ref{fig:MNIST-FMNIST-0.5}.}
\label{tab:acc-SGD-p0.5}
\end{table}

\begin{table}[h]
\centering
\begin{tabular}{c|ccc|cc}
\Xhline{2\arrayrulewidth} \hline
\multirow{2}{*}{} & \multicolumn{3}{c}{Fed-EF-AMS}             & \multicolumn{2}{c}{No EF}      \\ \hline
                  & Sign         & TopK         & Hv-Sign      & Stoc         & Full-precision             \\ \hline
MNIST             & 92.32 ($\pm$0.98) & 92.74 ($\pm$0.84) & 91.77 ($\pm$1.22) & 92.36 ($\pm$0.93) & 92.23 ($\pm$0.73) \\
FMNIST            & 71.35 ($\pm$0.61) & 71.90 ($\pm$0.78) & 70.73($\pm$1.03)  & 71.94 ($\pm$0.95) & 71.97 ($\pm$0.86) \\
\hline \Xhline{2\arrayrulewidth}
\end{tabular}
\caption{Test accuracy (\%) with client participation rate $p=0.5$, of Fed-EF-AMS with \textbf{Sign}, \textbf{TopK} and \textbf{heavy-Sign} compressor and \textbf{Stoc} (stochastic quantization) without EF. The compression parameters (i.e., $k$ and $b$) of the compressors are consistent with Figure~\ref{fig:MNIST-FMNIST-0.5}.}
\label{tab:acc-AMS-p0.5}
\end{table}

\begin{table}[h]
\centering
\begin{tabular}{c|ccc|cc}
\Xhline{2\arrayrulewidth} \hline
\multirow{2}{*}{} & \multicolumn{3}{c}{Fed-EF-SGD}             & \multicolumn{2}{c}{No EF}      \\ \hline
                  & Sign         & TopK         & Hv-Sign      & Stoc         & Full-precision             \\ \hline
MNIST             & 90.15 ($\pm$1.06) & 90.61 ($\pm$0.93) & 90.42 ($\pm$1.09) & 90.27 ($\pm$1.18) & 90.22 ($\pm$0.82) \\
FMNIST            & 67.69 ($\pm$0.73) & 67.47 ($\pm$0.80) & 67.72 ($\pm$0.55) & 67.71 ($\pm$0.78) & 67.50 ($\pm$0.85) \\
\hline \Xhline{2\arrayrulewidth}
\end{tabular}
\caption{Test accuracy (\%) with client participation rate $p=0.1$, of Fed-EF-SGD with \textbf{Sign}, \textbf{TopK} and \textbf{heavy-Sign} compressor and \textbf{Stoc} (stochastic quantization) without EF. The compression parameters (i.e., $k$ and $b$) of the compressors are consistent with Figure~\ref{fig:MNIST-FMNIST-0.1}.}
\label{tab:acc-SGD-p0.1}
\end{table}

\begin{table}[h!]
\centering
\begin{tabular}{c|ccc|cc}
\Xhline{2\arrayrulewidth} \hline
\multirow{2}{*}{} & \multicolumn{3}{c}{Fed-EF-AMS}             & \multicolumn{2}{c}{No EF}      \\ \hline
                  & Sign         & TopK         & Hv-Sign      & Stoc         & Full-precision             \\ \hline
MNIST             & 88.67 ($\pm$1.11) & 88.97 ($\pm$1.16) & 77.49 ($\pm$1.53) & 88.76 ($\pm$1.22) & 89.05 ($\pm$1.04) \\
FMNIST            & 57.60 ($\pm$2.34) & 64.09 ($\pm$0.91) & 50.77($\pm$2.87)  & 64.35 ($\pm$1.06) & 64.18 ($\pm$0.90) \\
\hline \Xhline{2\arrayrulewidth}
\end{tabular}
\caption{Test accuracy (\%) with client participation rate $p=0.1$, of Fed-EF-AMS with \textbf{Sign}, \textbf{TopK} and \textbf{heavy-Sign} compressor and \textbf{Stoc} (stochastic quantization) without EF. The compression parameters (i.e., $k$ and $b$) of the compressors are consistent with Figure~\ref{fig:MNIST-FMNIST-0.1}.}
\label{tab:acc-AMS-p0.1}
\end{table}

\newpage\clearpage

\begin{figure}[t!]
    \begin{center}
        \mbox{\hspace{-0.2in}
        \includegraphics[width=2.25in]{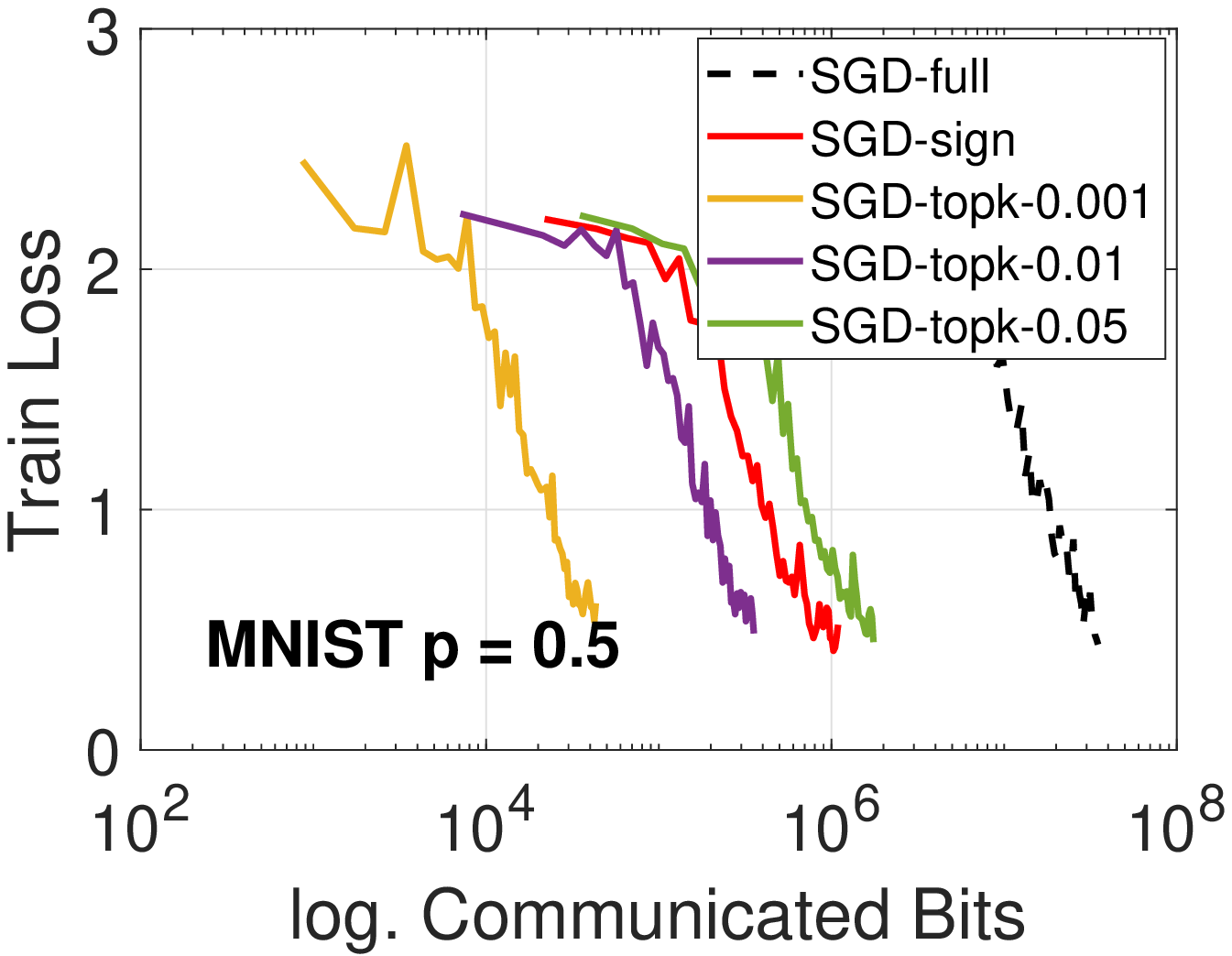}\hspace{-0.1in}
        \includegraphics[width=2.25in]{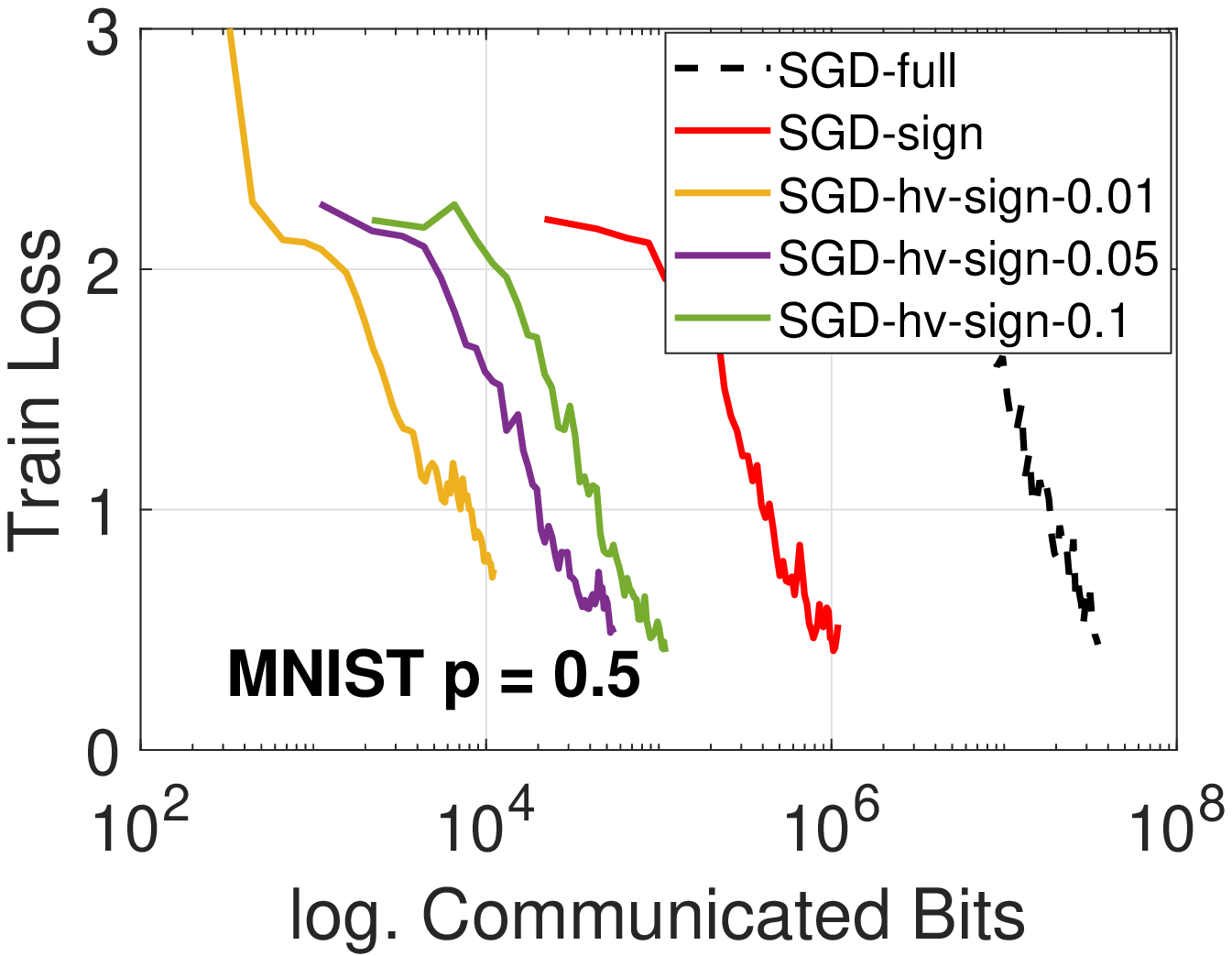}\hspace{-0.1in}
        \includegraphics[width=2.25in]{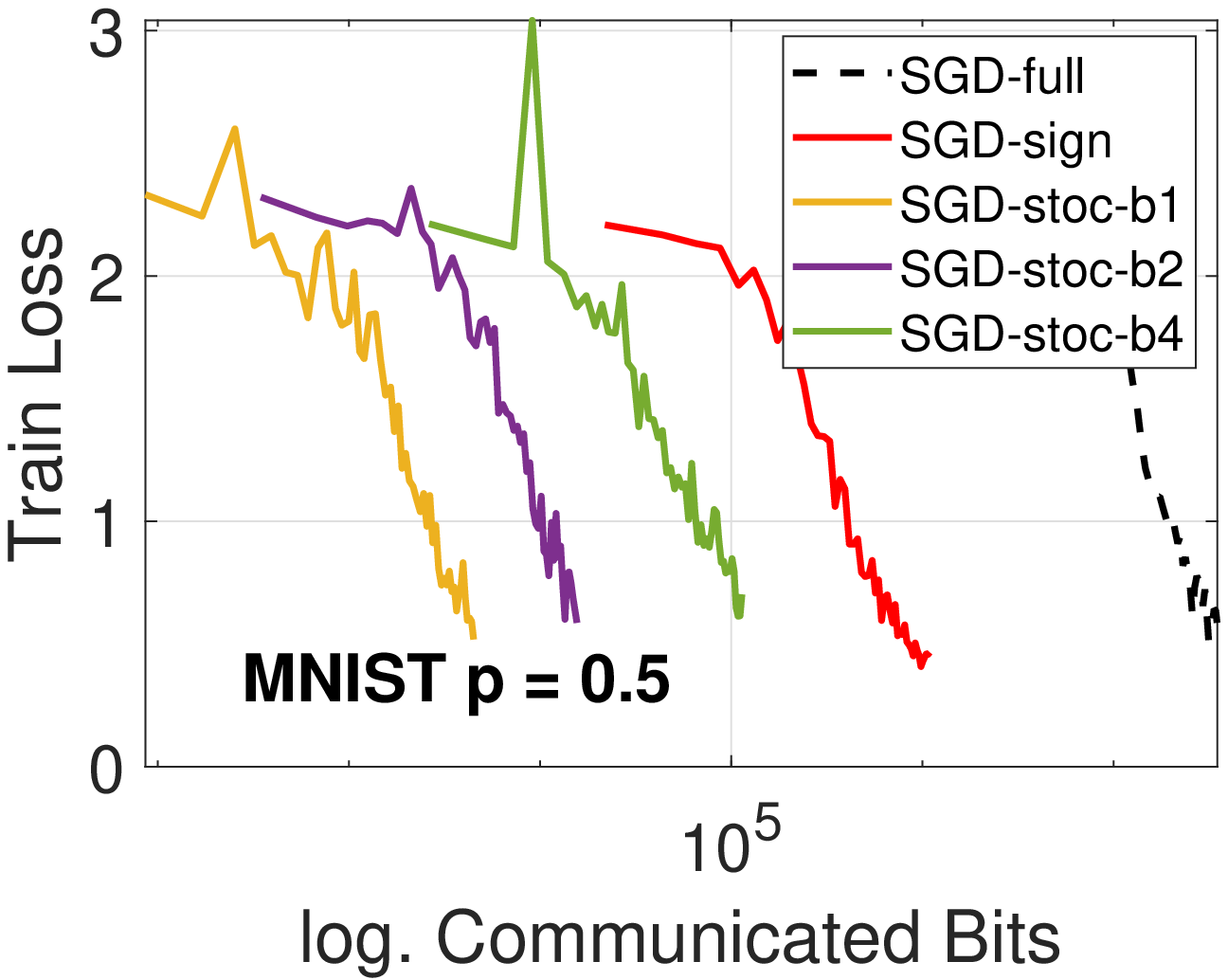}
        }
        \mbox{\hspace{-0.2in}
        \includegraphics[width=2.25in]{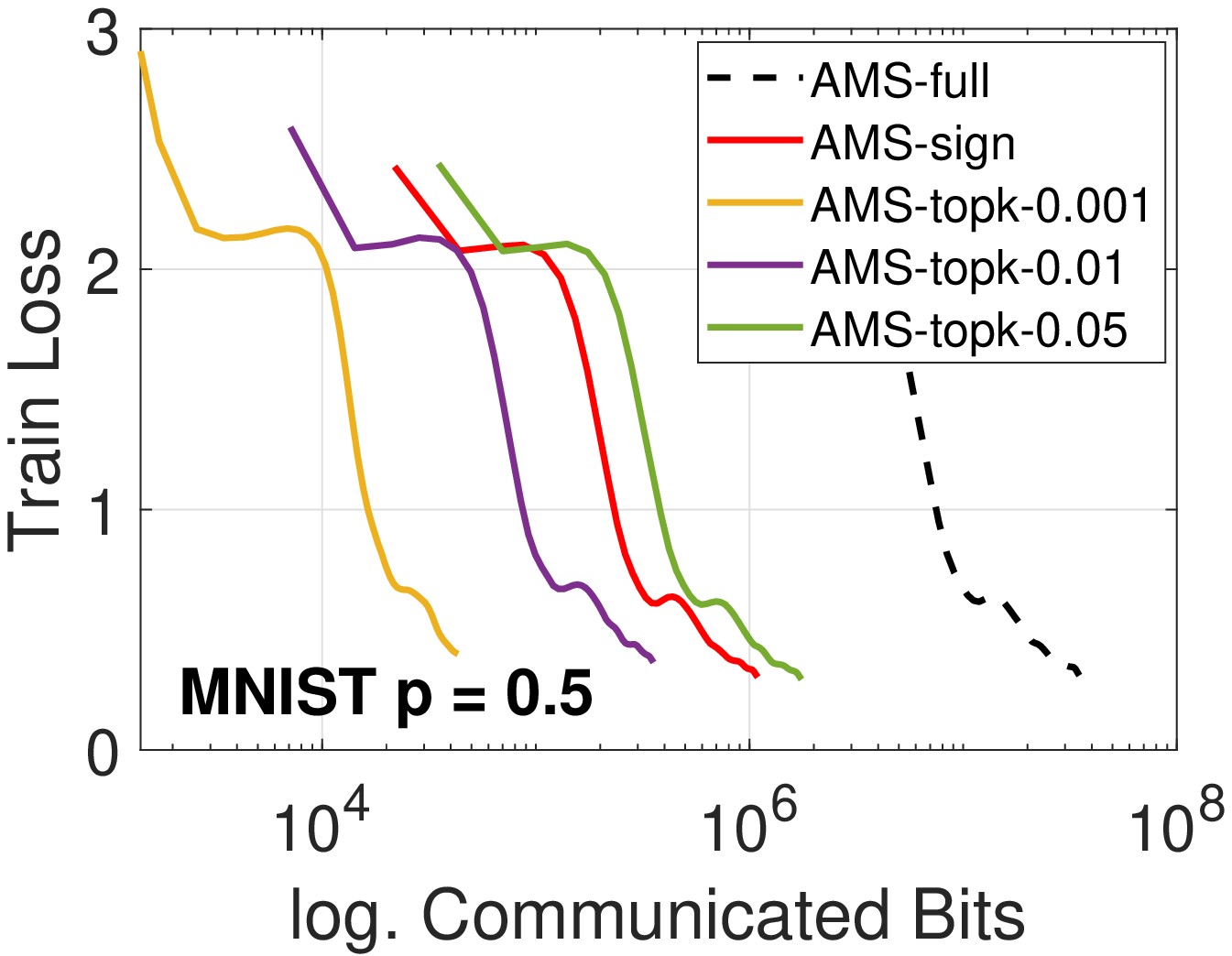}\hspace{-0.1in}
        \includegraphics[width=2.25in]{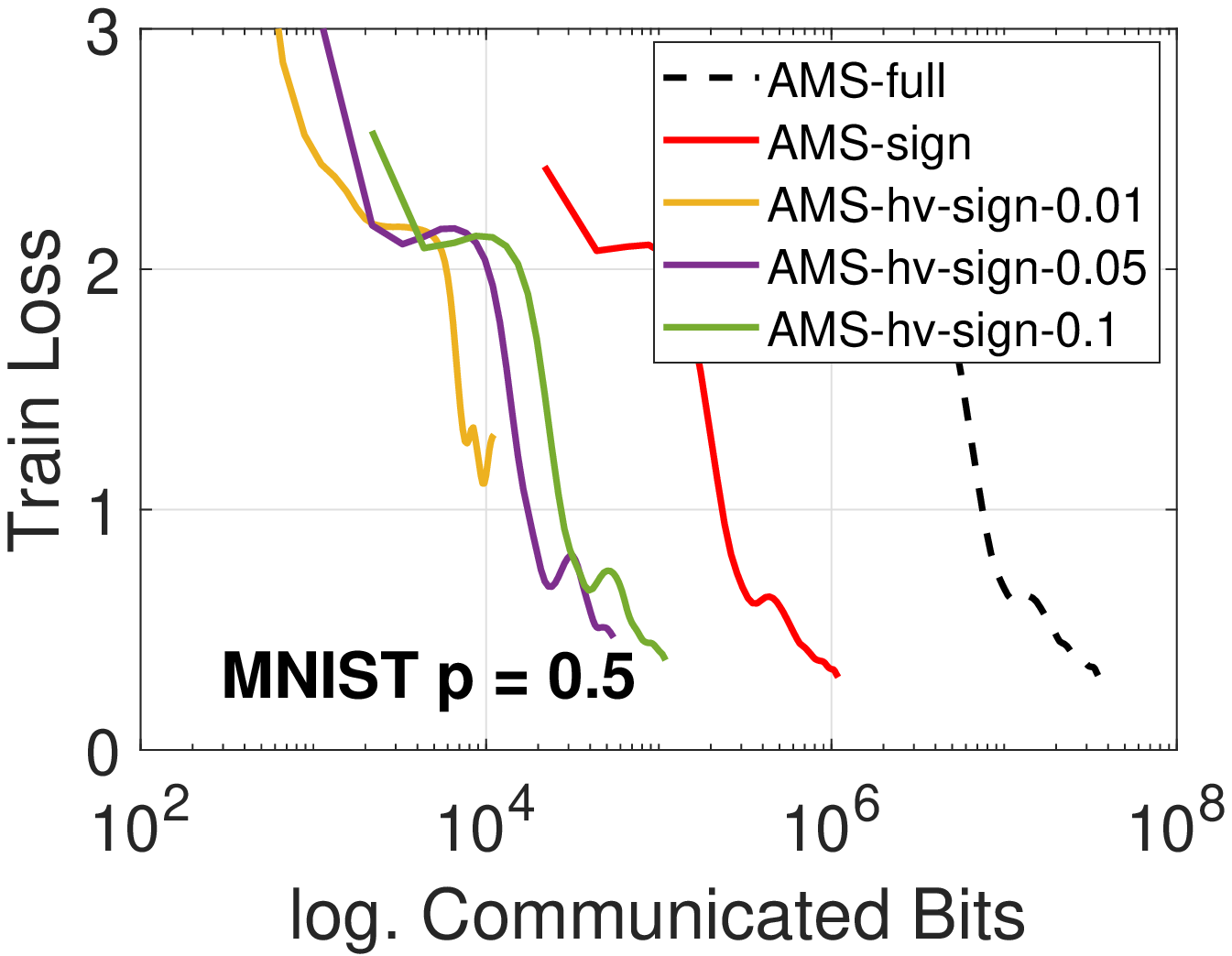}\hspace{-0.1in}
        \includegraphics[width=2.25in]{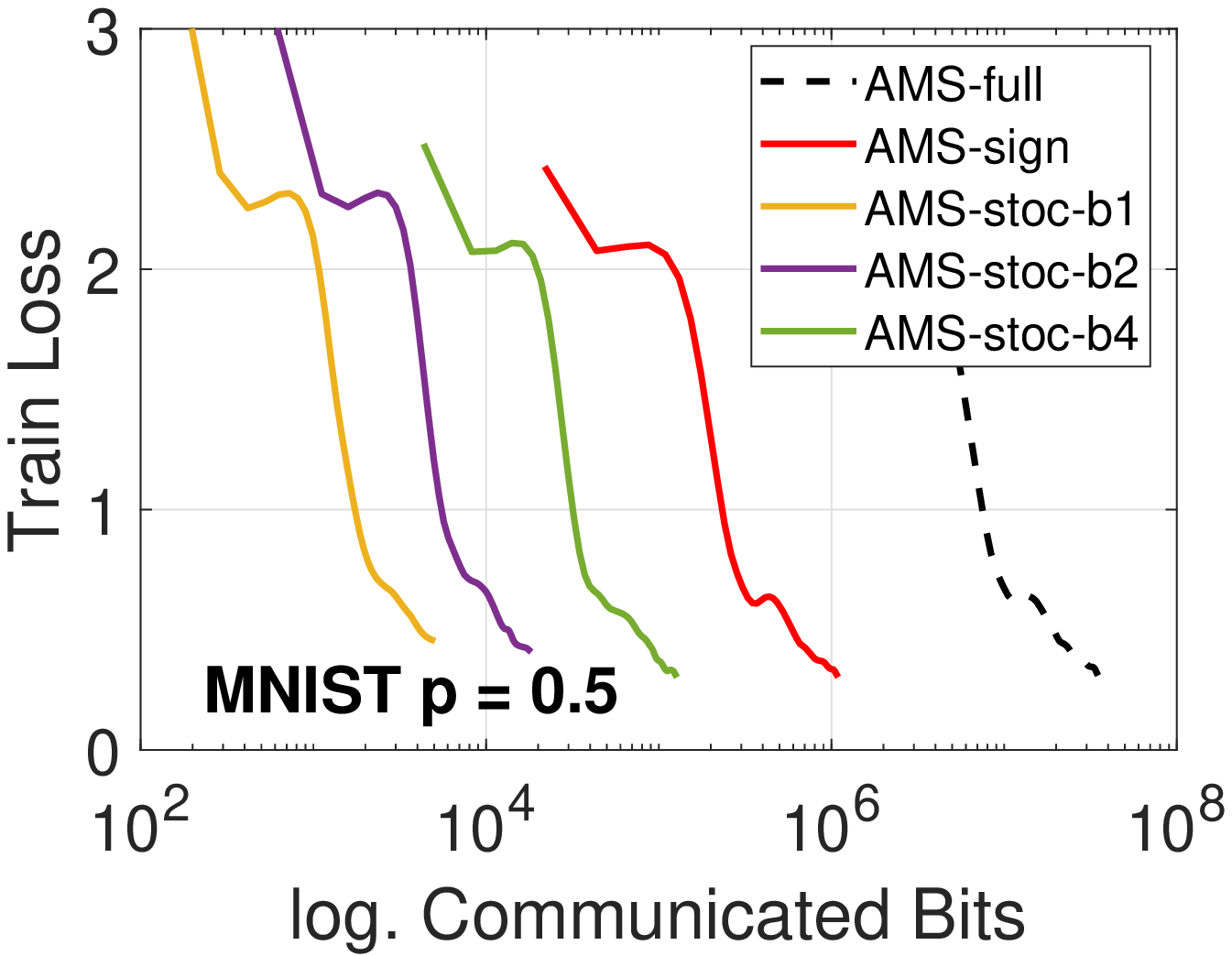}
        }
    \end{center}
    \vspace{-0.1in}

	\caption{Training loss of Fed-EF on MNIST dataset trained by CNN. ``sign'', ``topk'' and ``hv-sign'' are applied with Fed-EF, while ``Stoc'' is the stochastic quantization without EF. Participation rate $p=0.5$, non-iid data. 1st row: Fed-EF-SGD. 2nd row: Fed-EF-AMS.}
	\label{fig:MNIST-loss-compressor-0.5}
\end{figure}

\begin{figure}[h!]
    \begin{center}
        \mbox{\hspace{-0.2in}
        \includegraphics[width=2.25in]{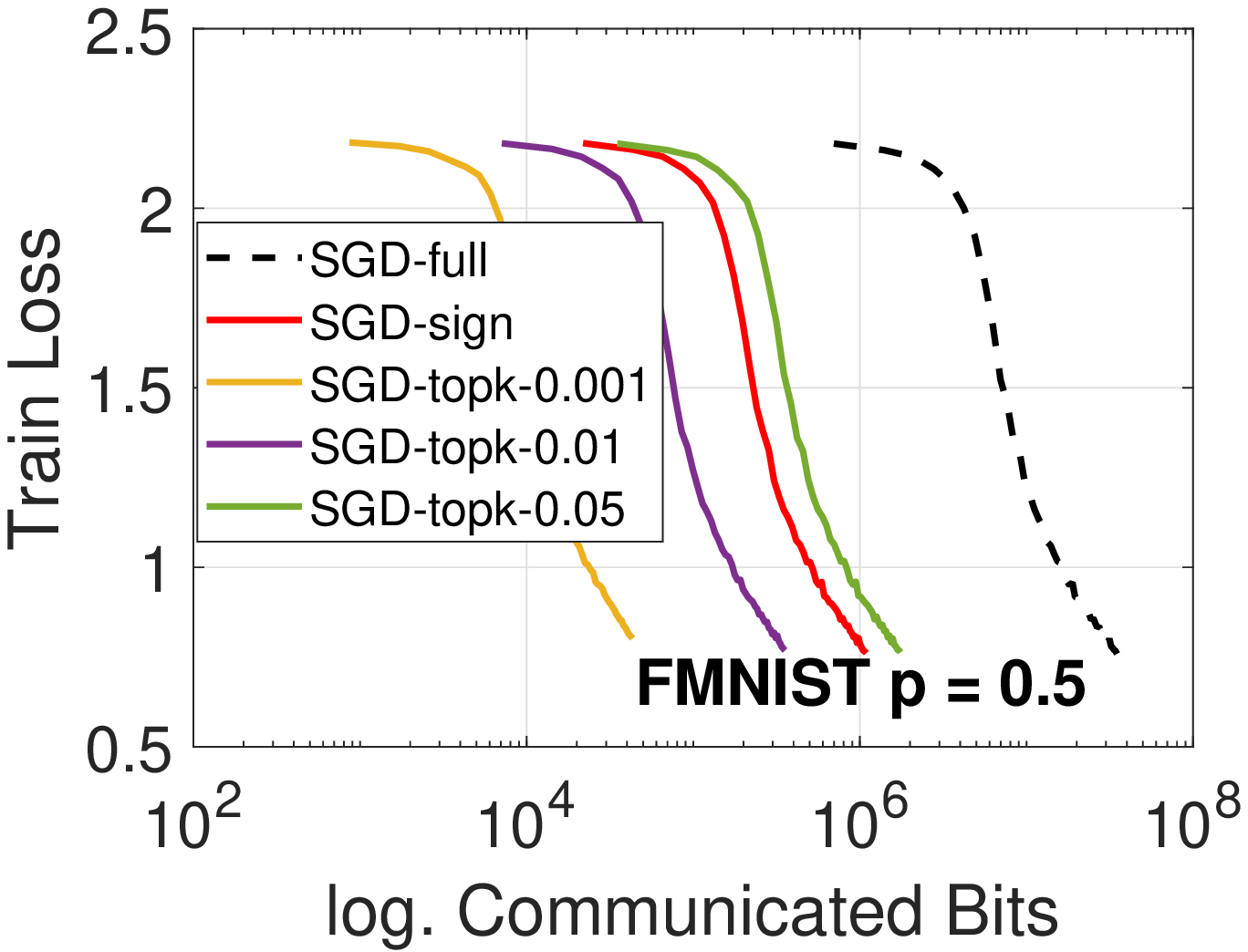}\hspace{-0.1in}
        \includegraphics[width=2.25in]{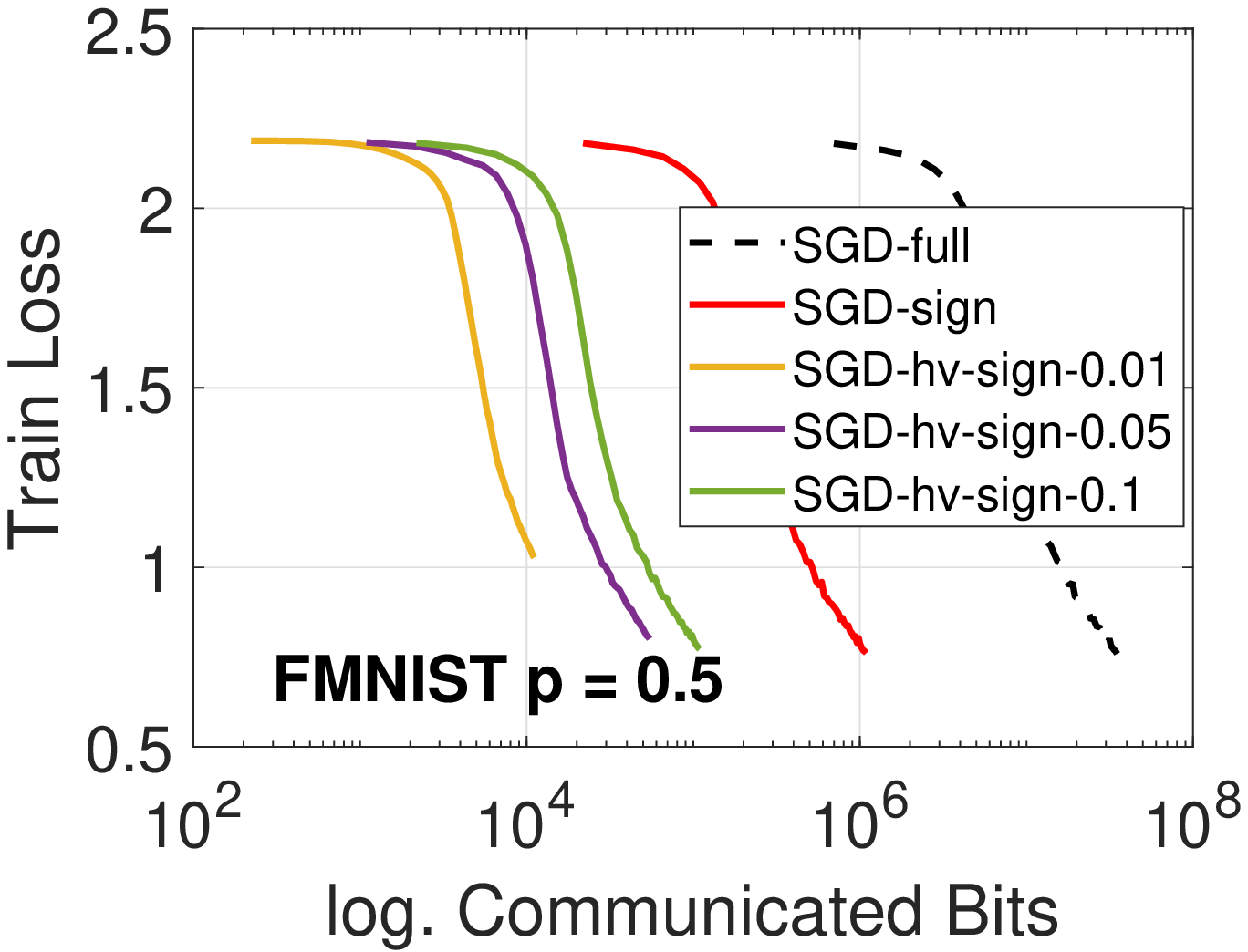}\hspace{-0.1in}
        \includegraphics[width=2.25in]{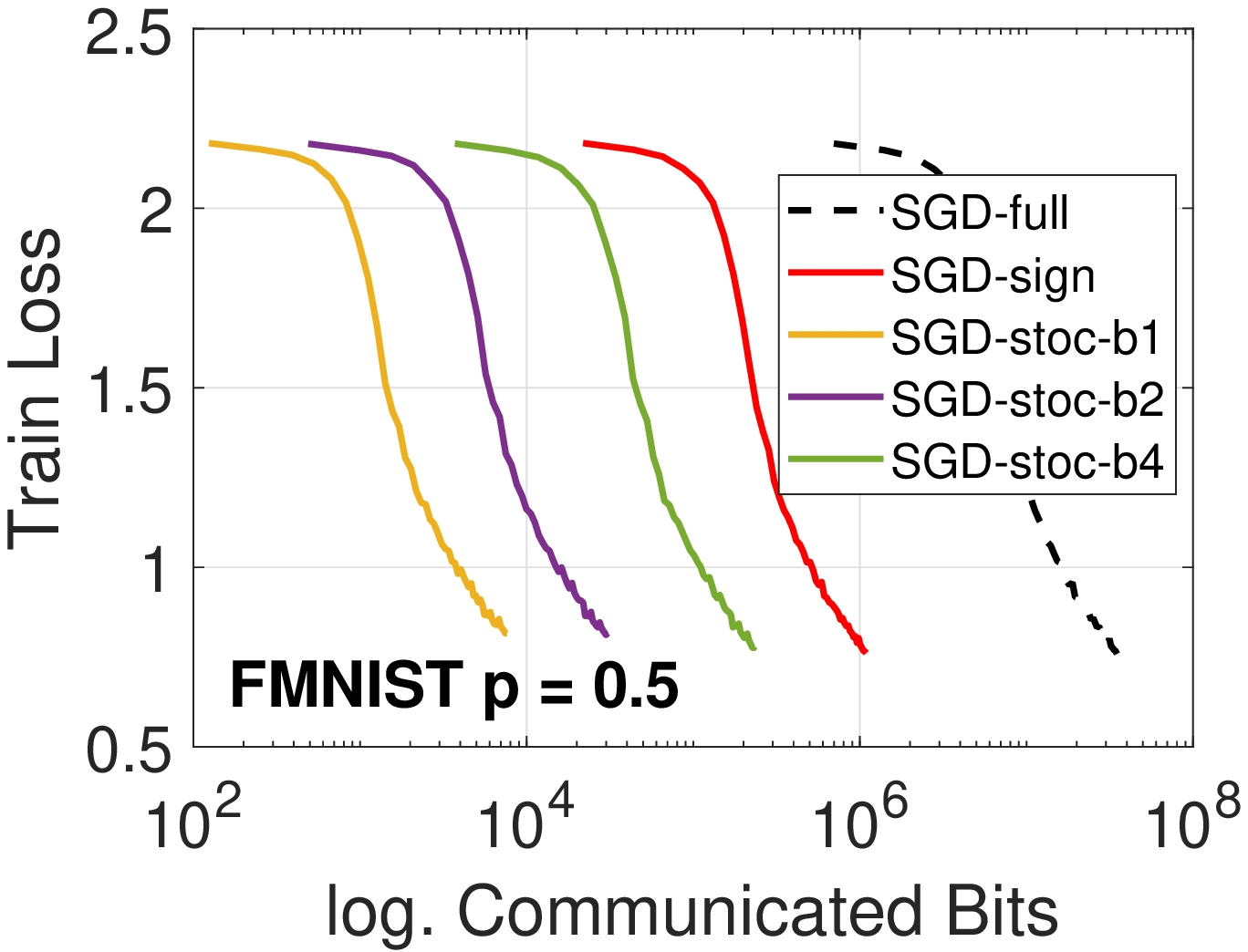}
        }
        \mbox{\hspace{-0.2in}
        \includegraphics[width=2.25in]{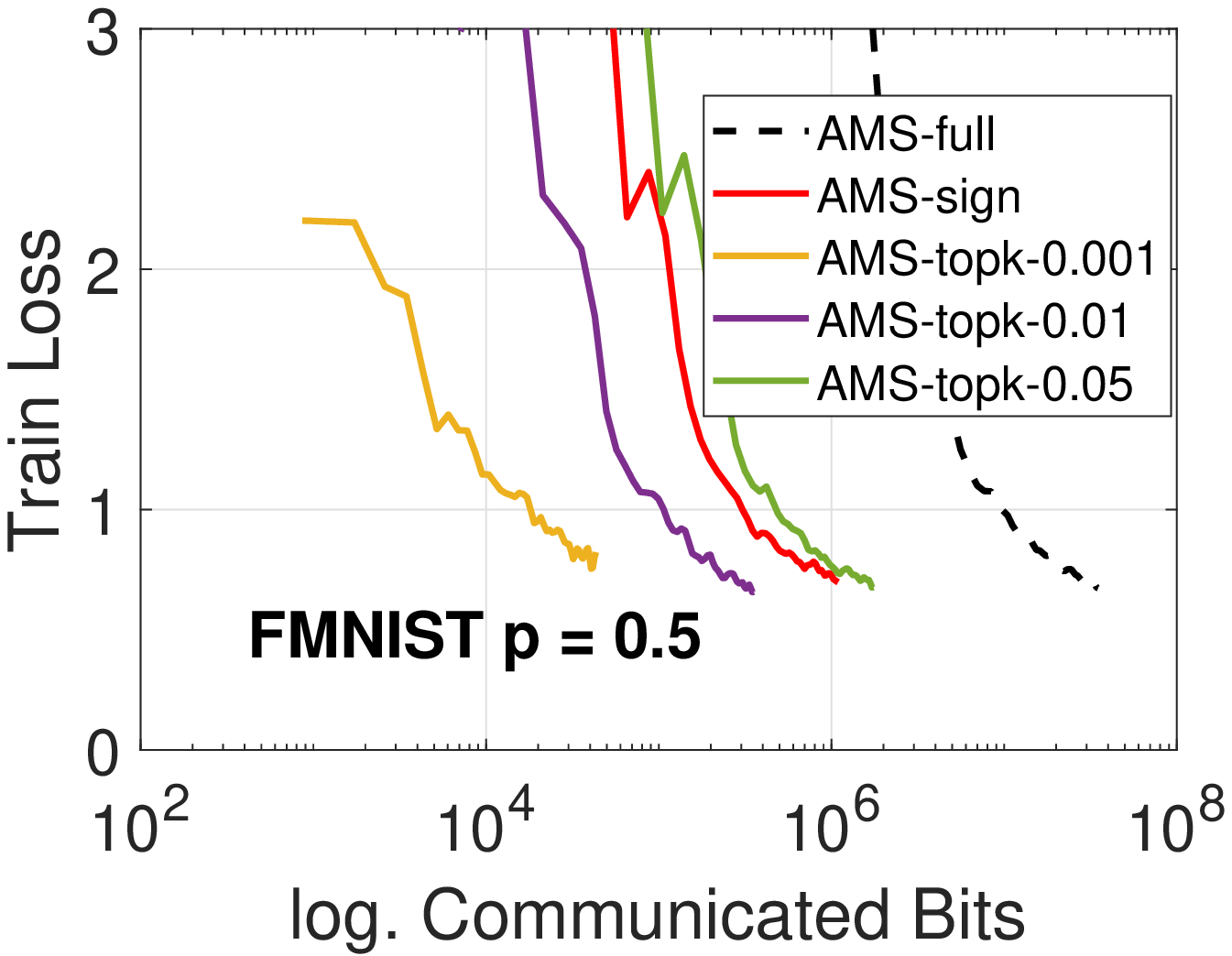}\hspace{-0.1in}
        \includegraphics[width=2.25in]{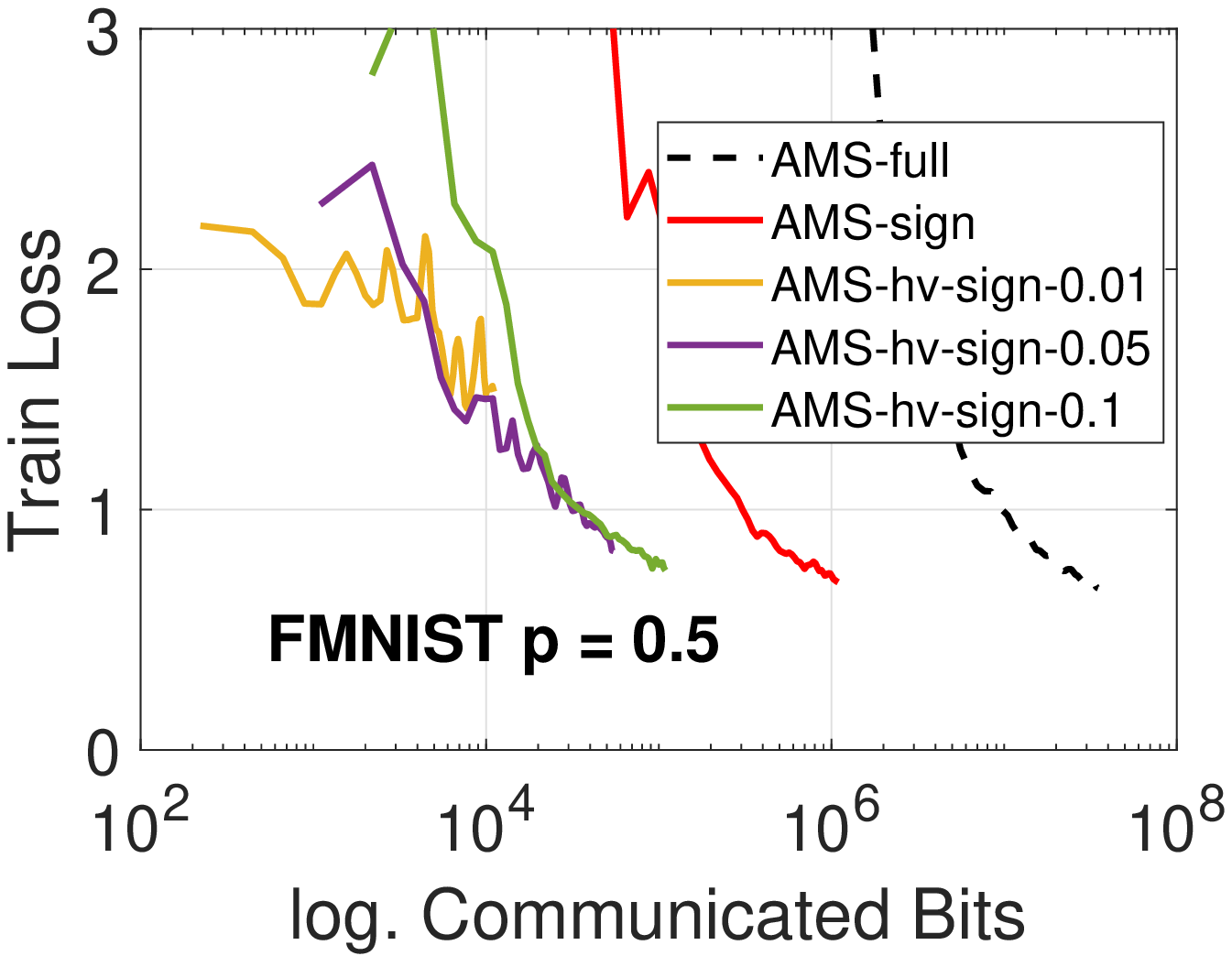}\hspace{-0.1in}
        \includegraphics[width=2.25in]{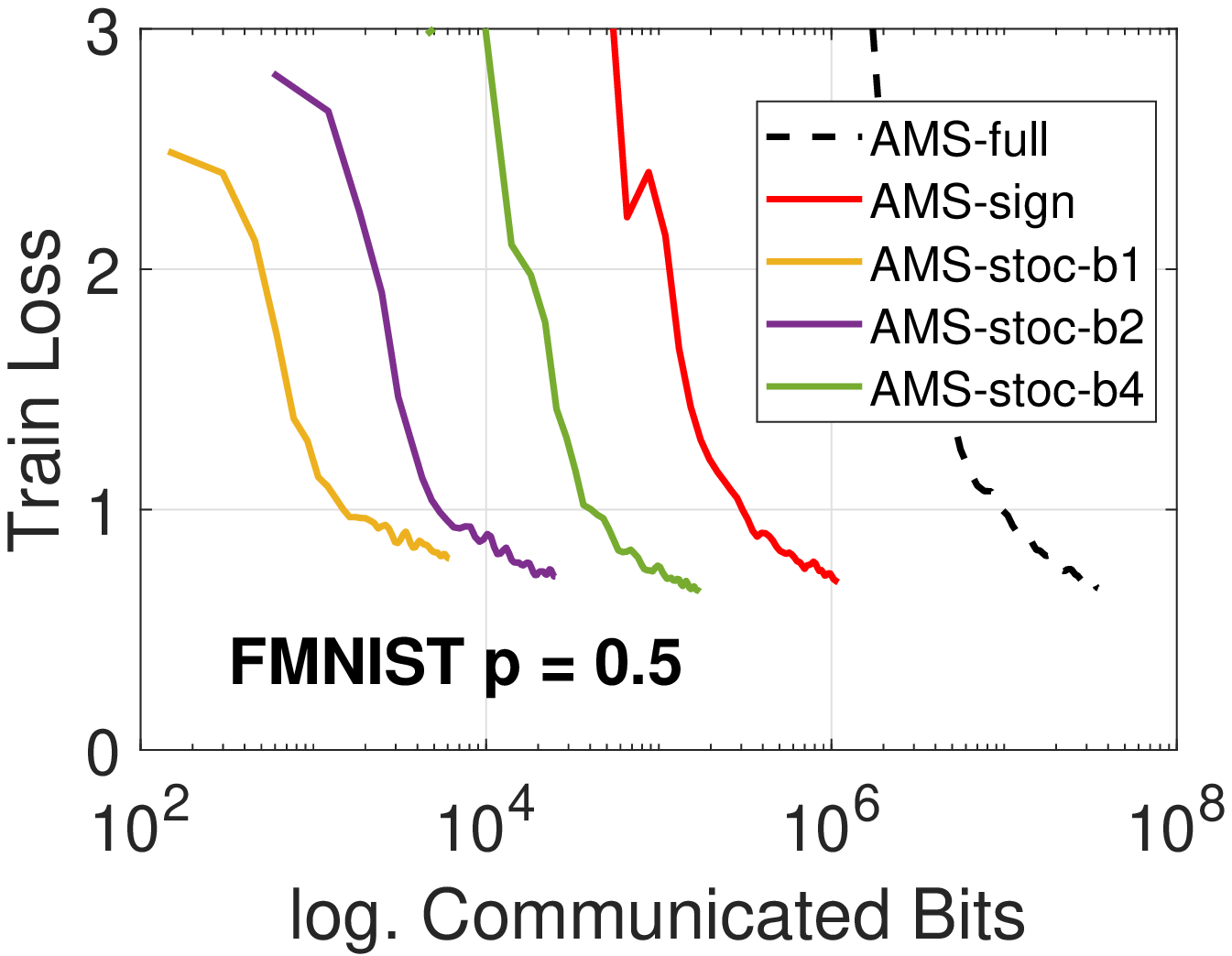}
        }
    \end{center}
    \vspace{-0.1in}
	\caption{Training loss of Fed-EF on FMNIST dataset trained by CNN. ``sign'', ``topk'' and ``hv-sign'' are applied with Fed-EF, while ``Stoc'' is the stochastic quantization without EF. Participation rate $p=0.5$, non-iid data. 1st row: Fed-EF-SGD. 2nd row: Fed-EF-AMS.}
	\label{fig:FMNIST-loss-compressor-0.5}
\end{figure}

\begin{figure}[h]
    \begin{center}
        \mbox{\hspace{-0.2in}
        \includegraphics[width=2.25in]{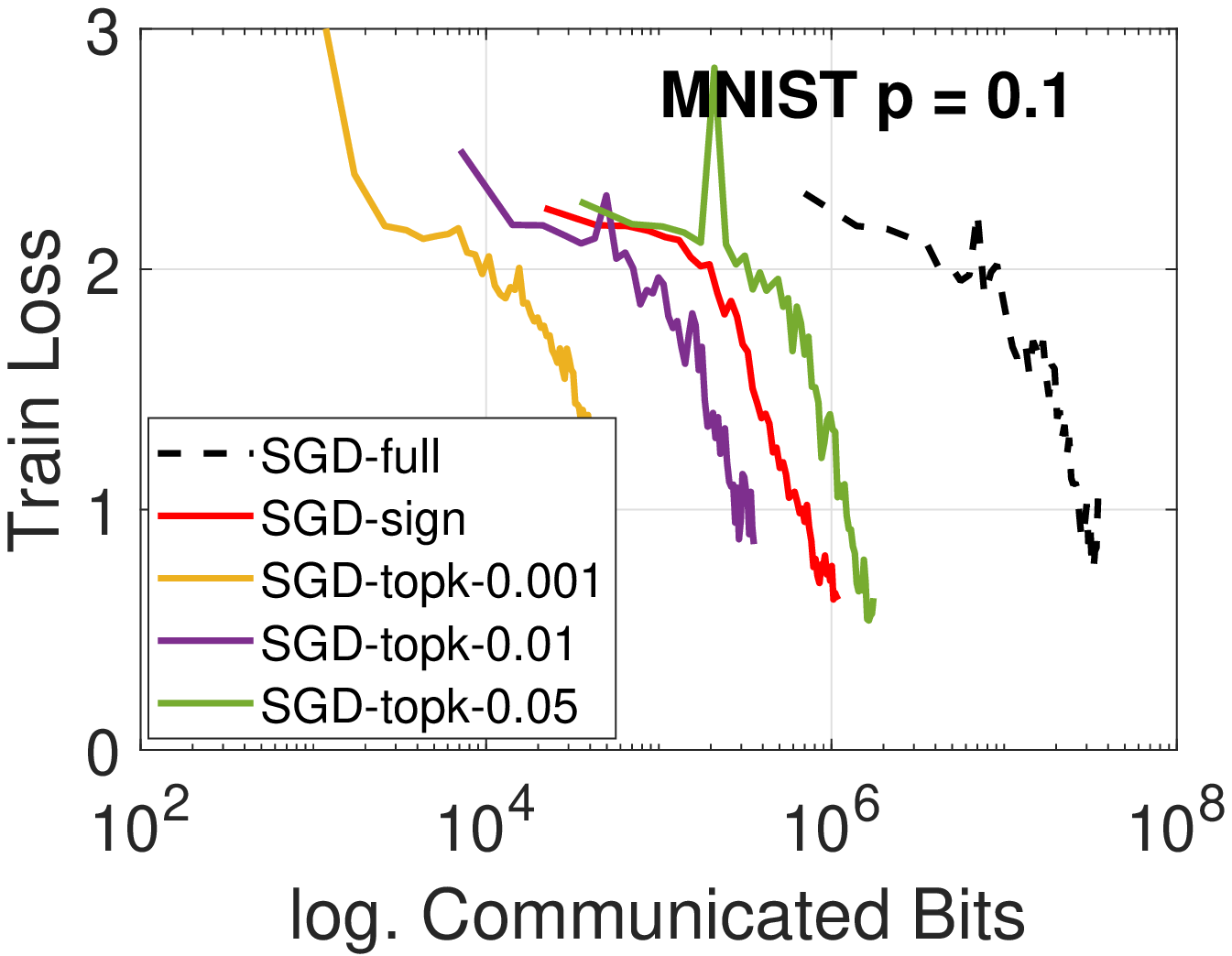}\hspace{-0.1in}
        \includegraphics[width=2.25in]{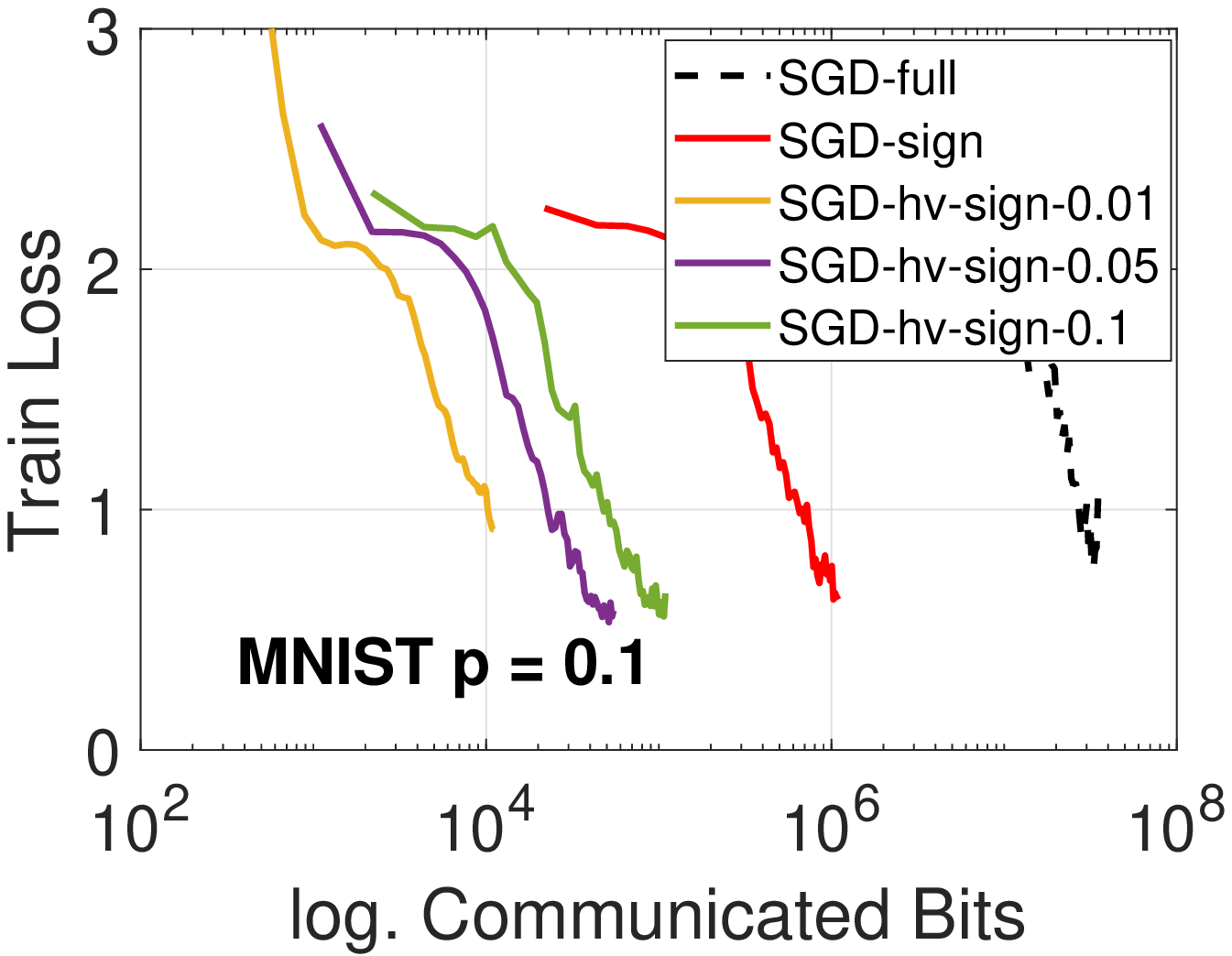}\hspace{-0.1in}
        \includegraphics[width=2.25in]{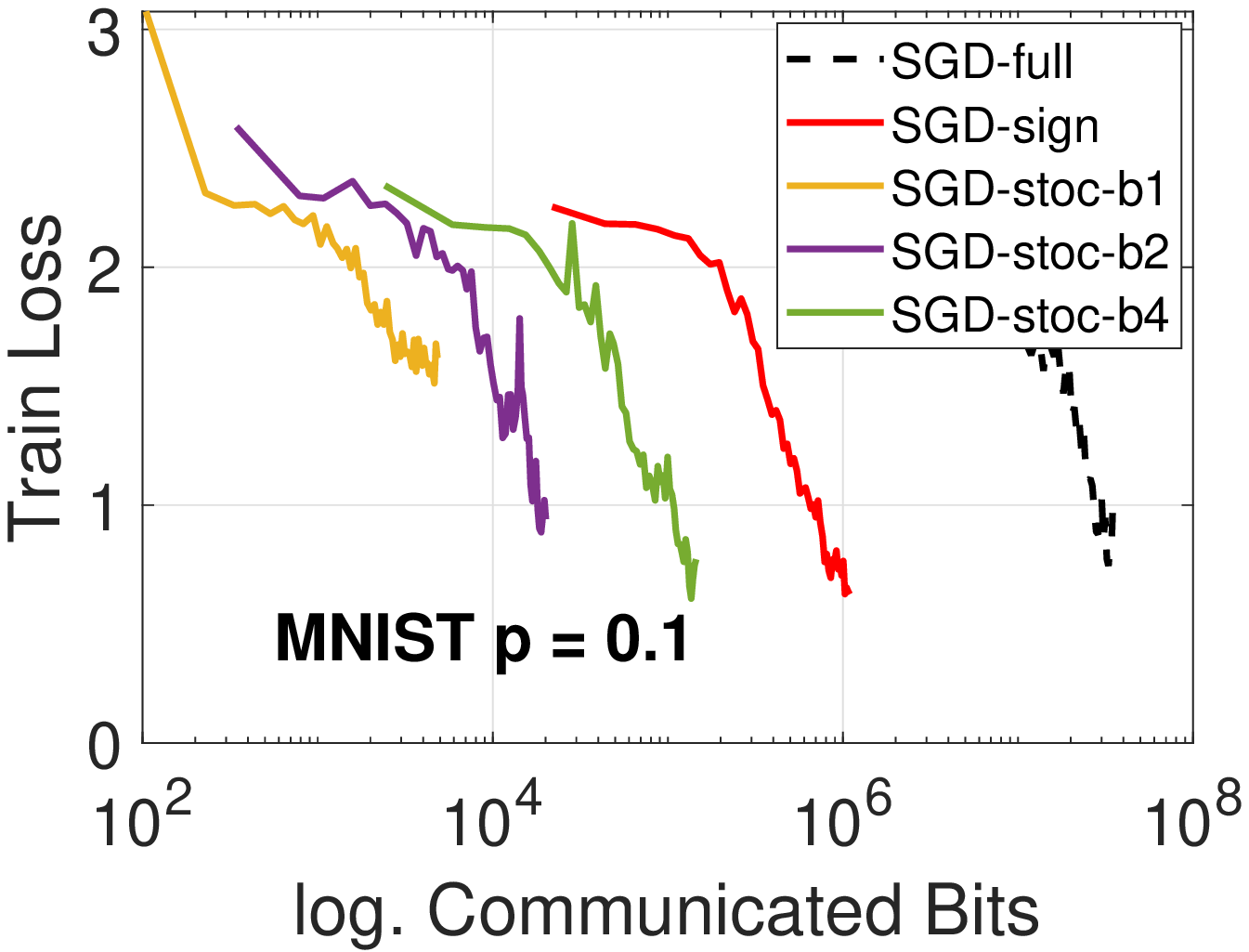}
        }
        \mbox{\hspace{-0.2in}
        \includegraphics[width=2.25in]{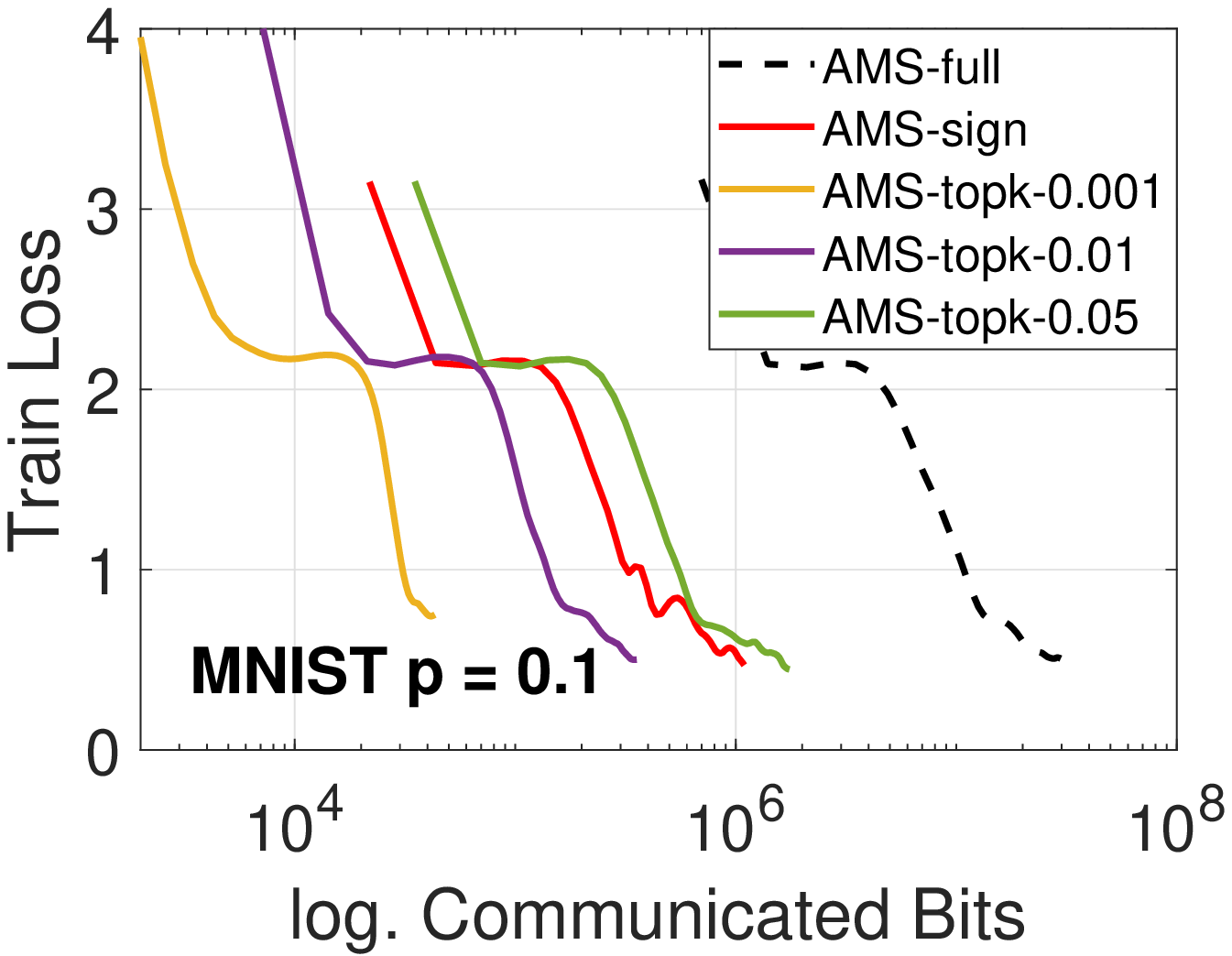}\hspace{-0.1in}
        \includegraphics[width=2.25in]{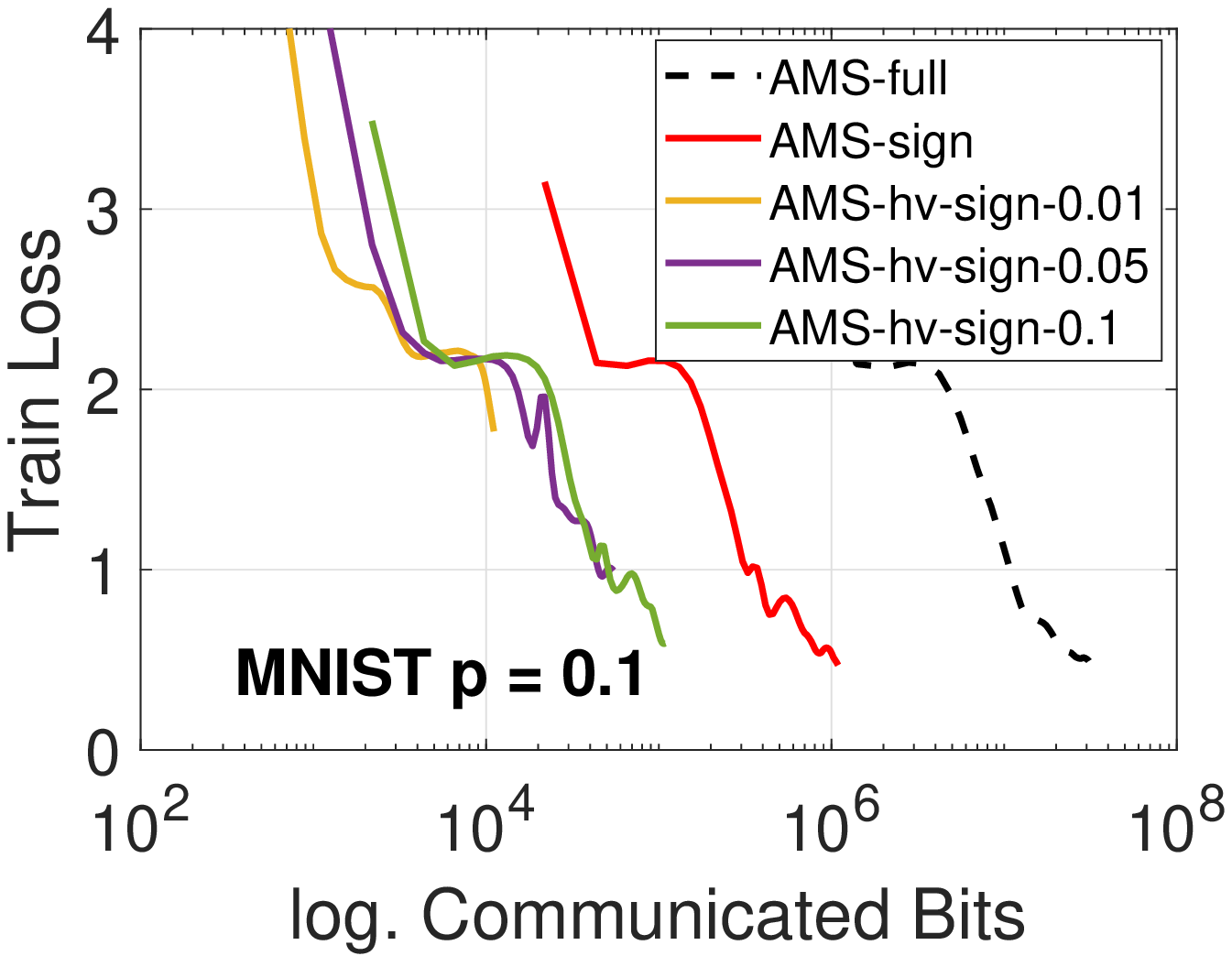}\hspace{-0.1in}
        \includegraphics[width=2.25in]{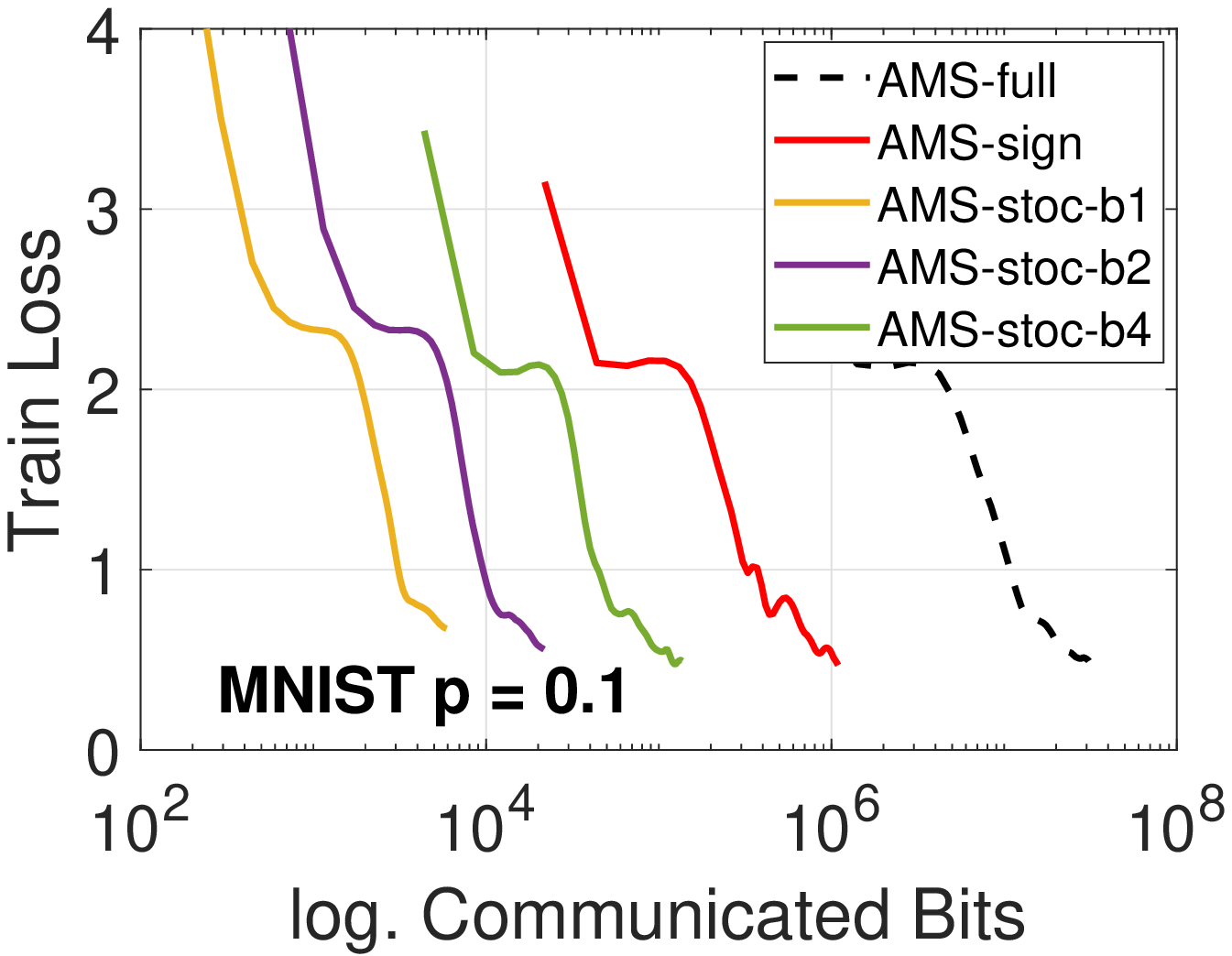}
        }
    \end{center}
    \vspace{-0.1in}
	\caption{Training loss of Fed-EF on MNIST dataset trained by CNN. ``sign'', ``topk'' and ``hv-sign'' are applied with Fed-EF, while ``Stoc'' is the stochastic quantization without EF. Participation rate $p=0.1$, non-iid data. 1st row: Fed-EF-SGD. 2nd row: Fed-EF-AMS.}
	\label{fig:MNIST-loss-compressor-0.1}
	
\end{figure}

\begin{figure}[h]
    \begin{center}
        \mbox{\hspace{-0.2in}
        \includegraphics[width=2.25in]{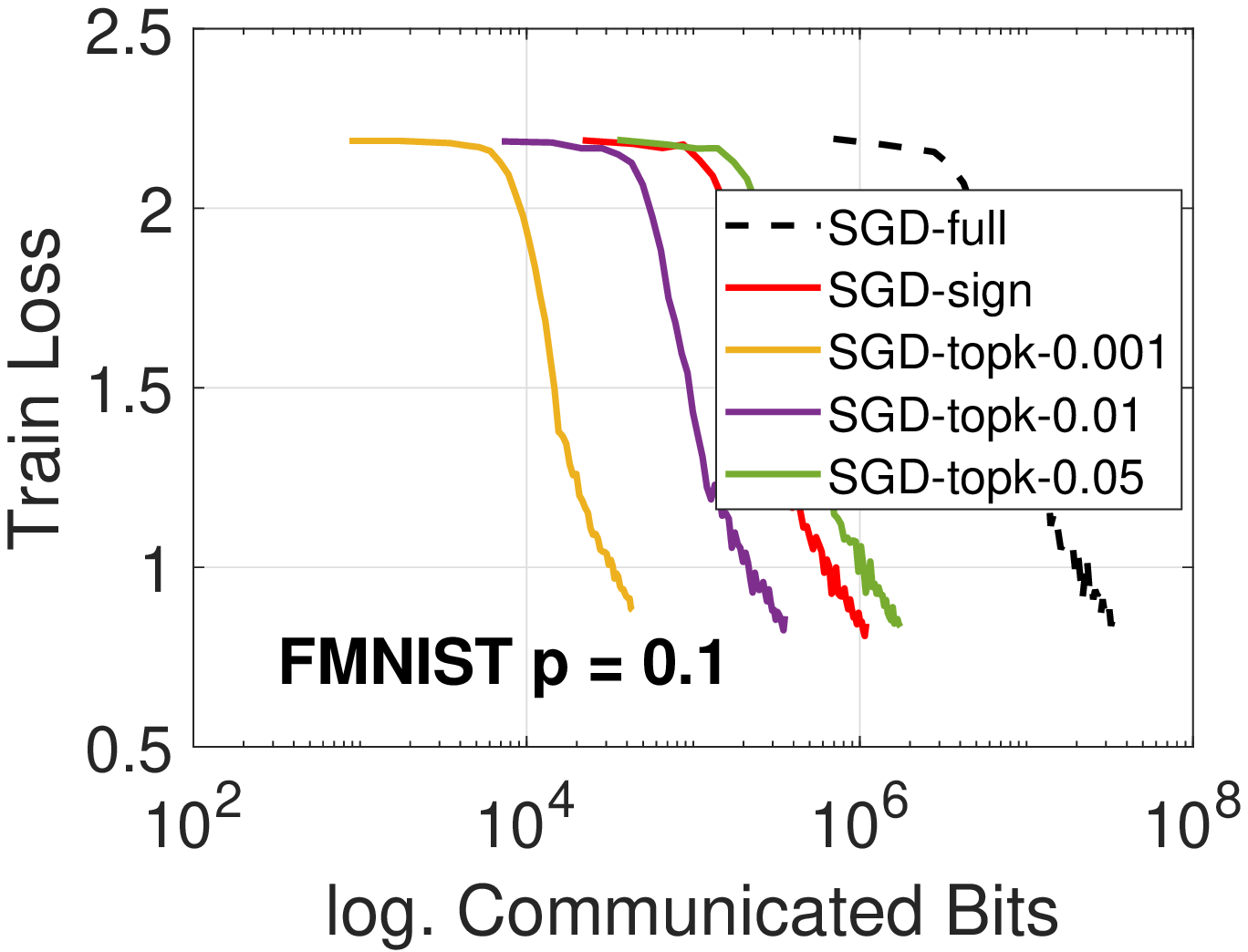}\hspace{-0.1in}
        \includegraphics[width=2.25in]{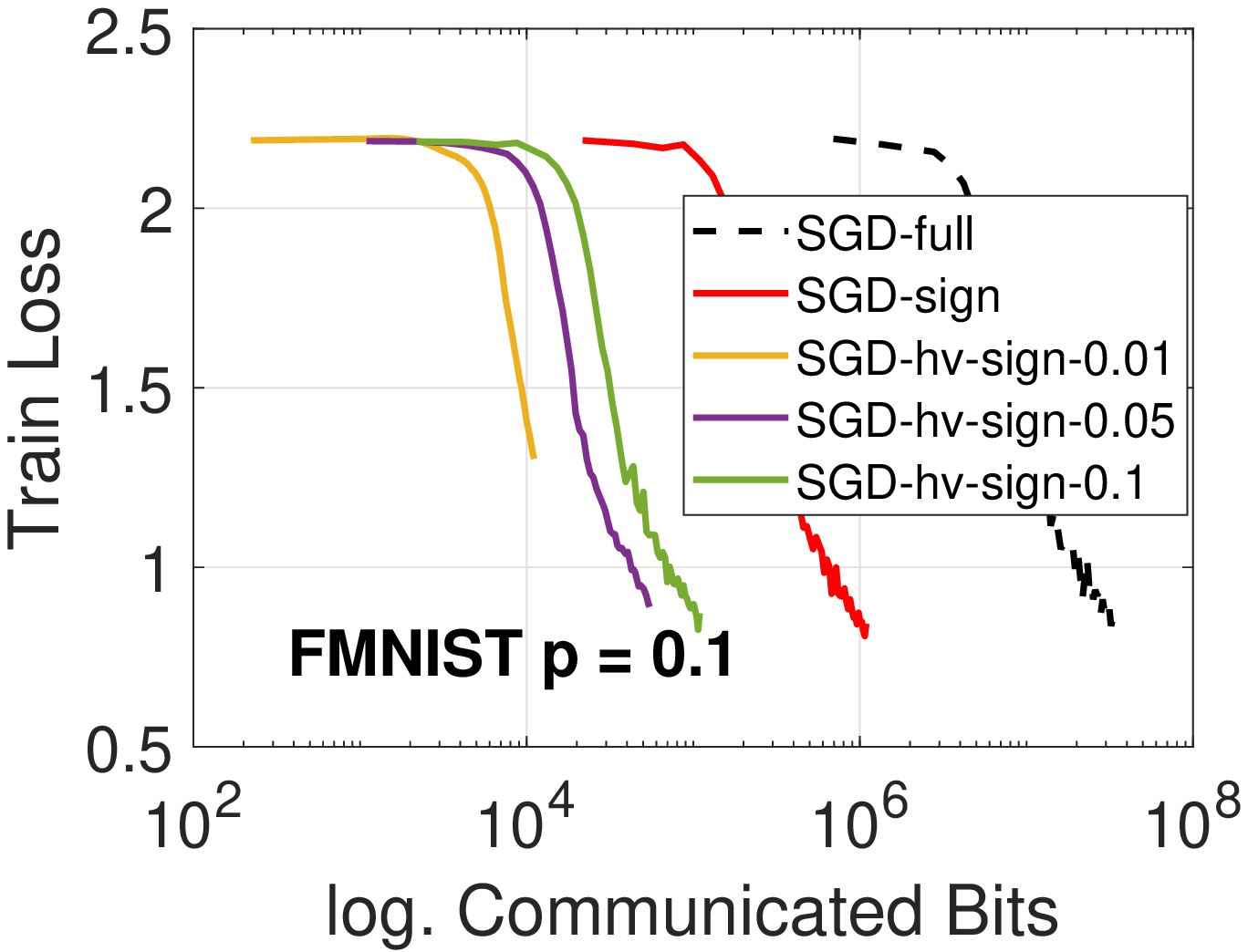}\hspace{-0.1in}
        \includegraphics[width=2.25in]{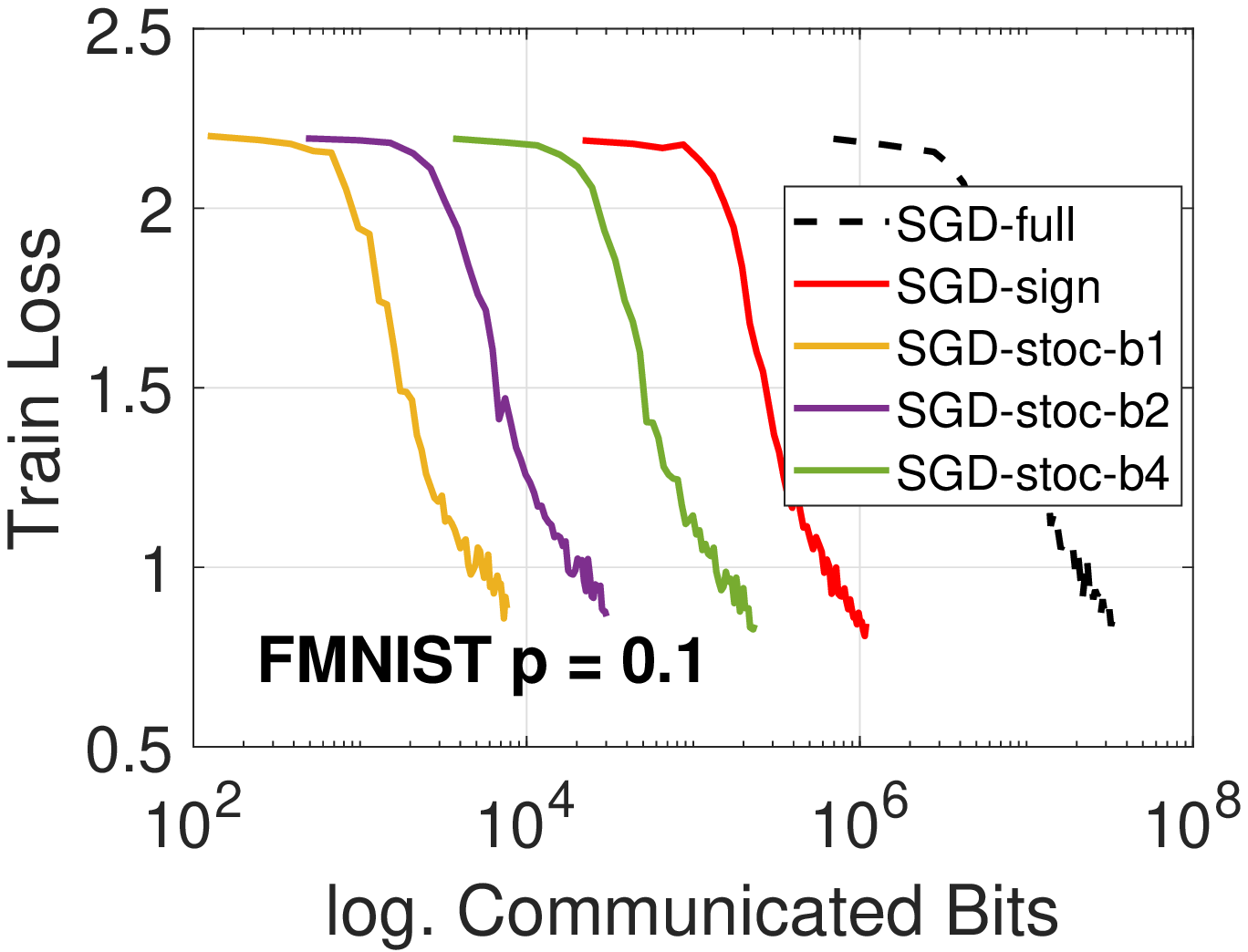}
        }
        \mbox{\hspace{-0.2in}
        \includegraphics[width=2.25in]{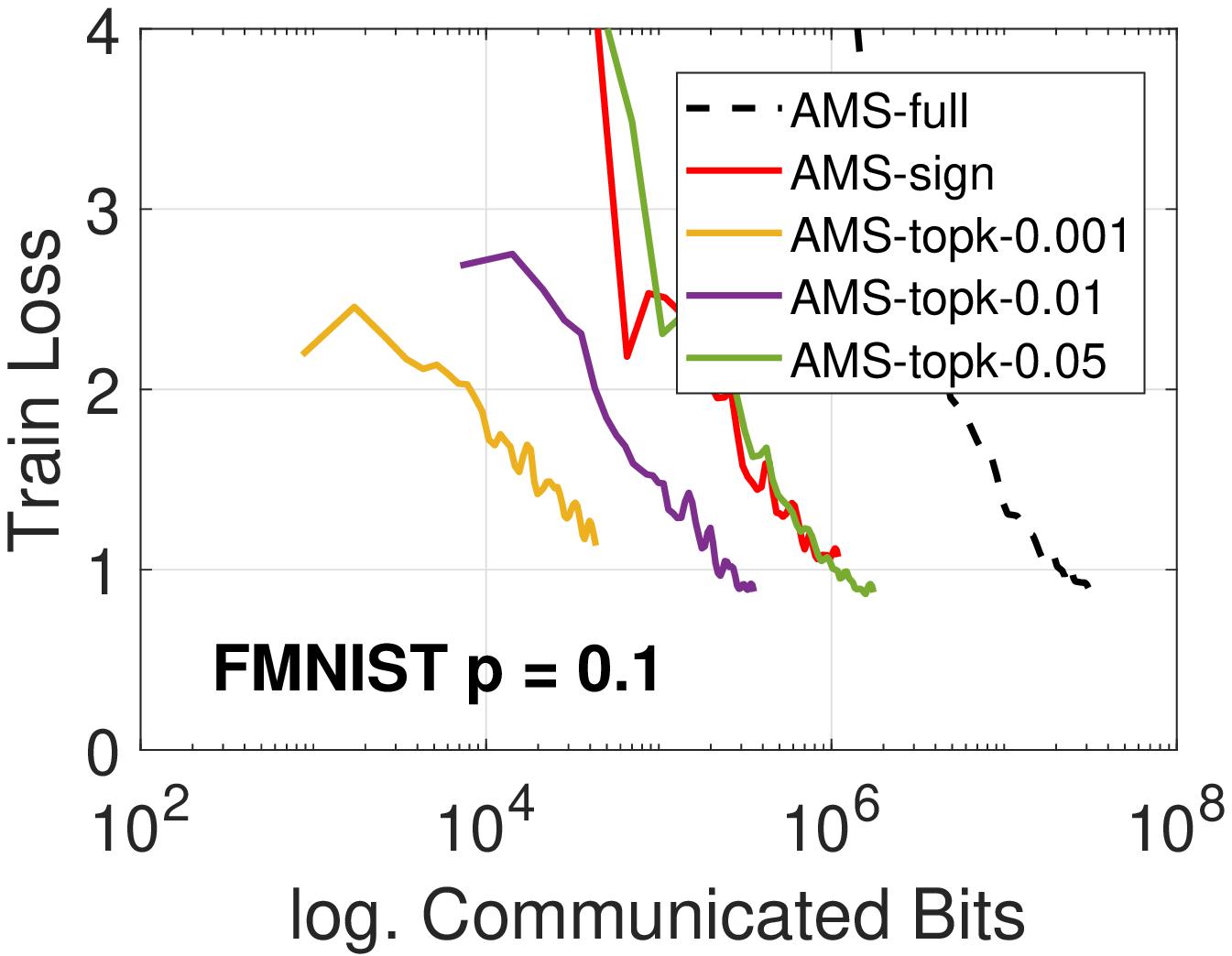}\hspace{-0.1in}
        \includegraphics[width=2.25in]{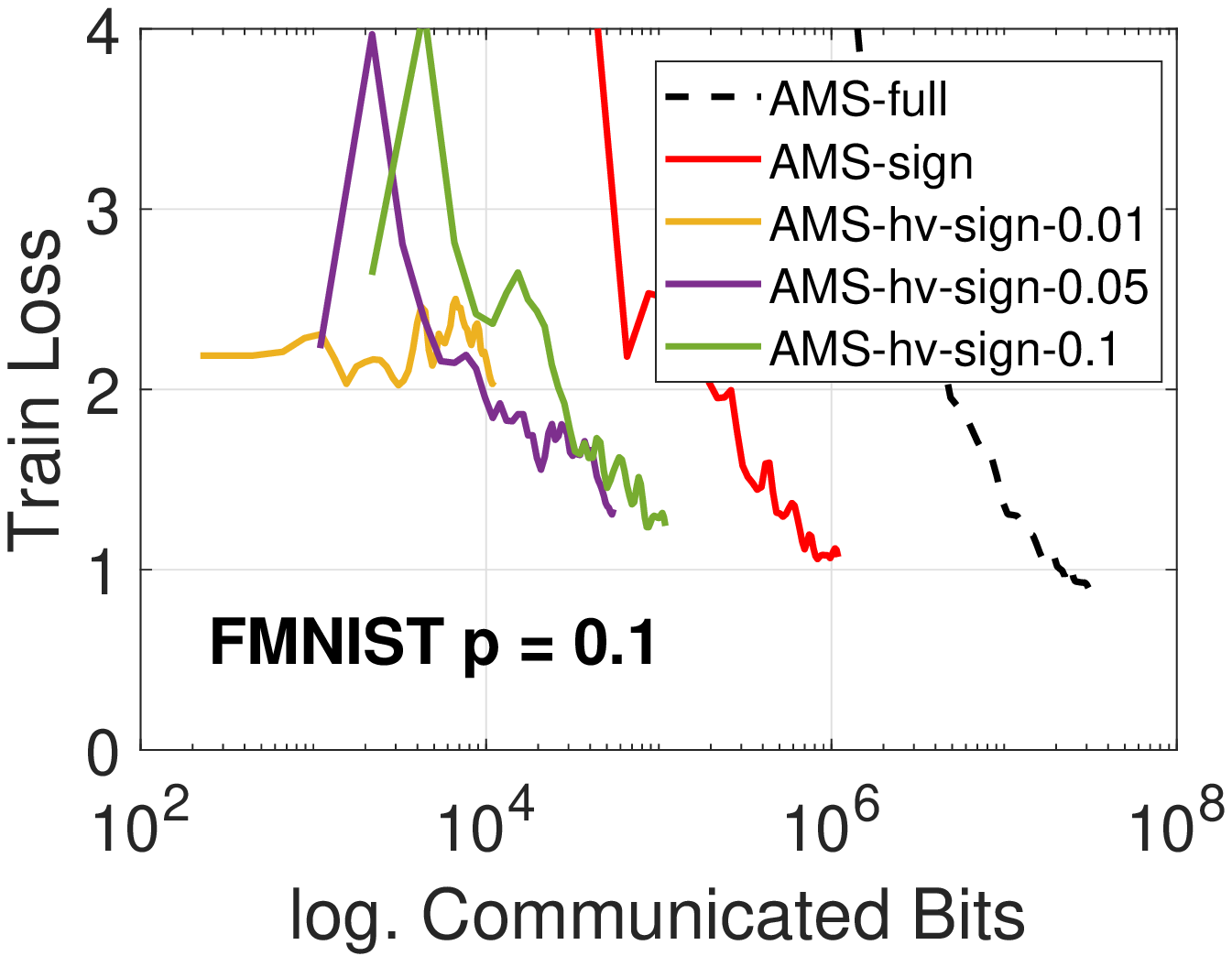}\hspace{-0.1in}
        \includegraphics[width=2.25in]{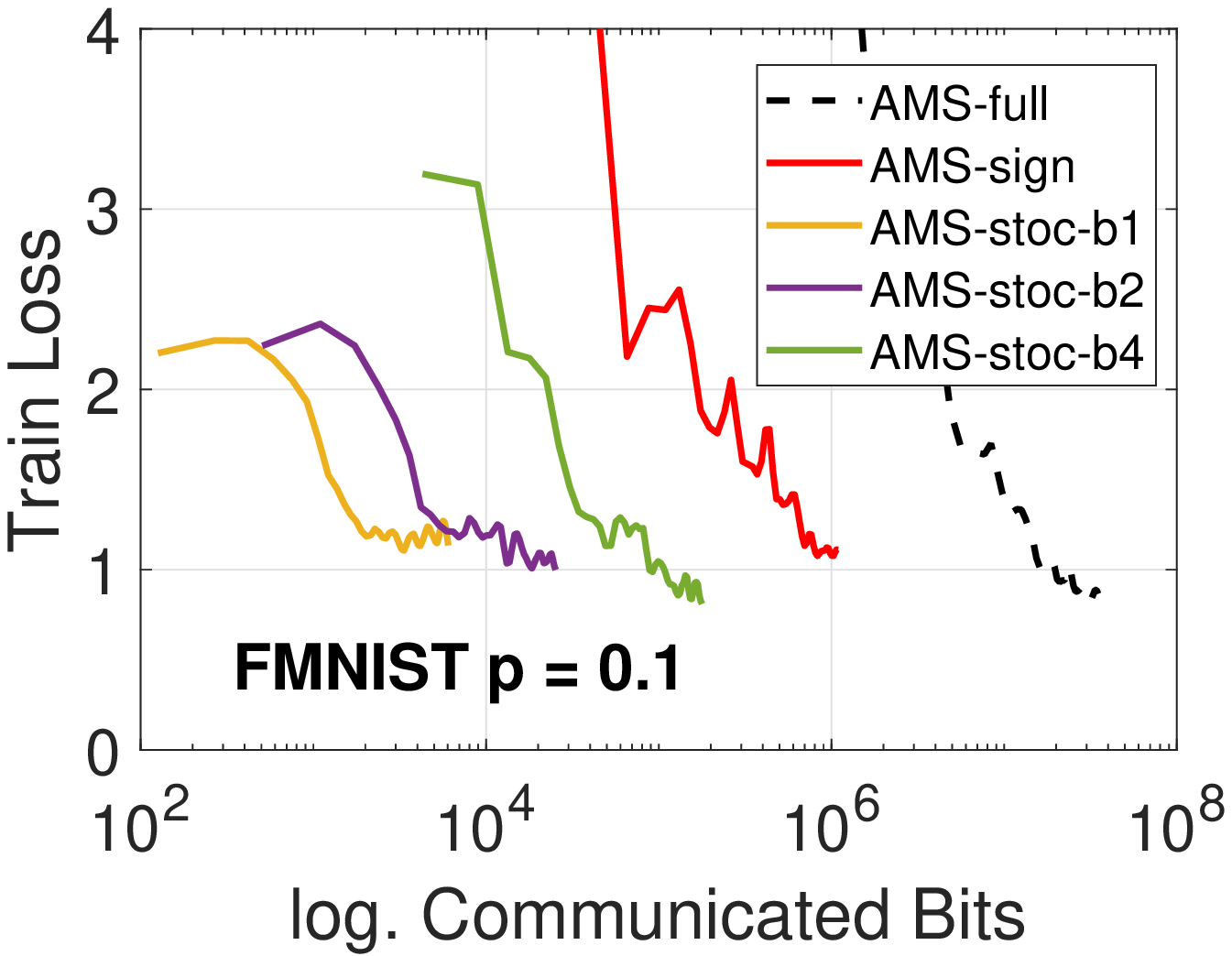}
        }
    \end{center}
    \vspace{-0.1in}
	\caption{Training loss of Fed-EF on FMNIST dataset trained by CNN. ``sign'', ``topk'' and ``hv-sign'' are applied with Fed-EF, while ``Stoc'' is the stochastic quantization without EF. Participation rate $p=0.1$, non-iid data. 1st row: Fed-EF-SGD. 2nd row: Fed-EF-AMS.}
	\label{fig:FMNIST-loss-compressor-0.1}
\end{figure}

\begin{figure}[t]
    \begin{center}
        \mbox{\hspace{-0.2in}
        \includegraphics[width=2.25in]{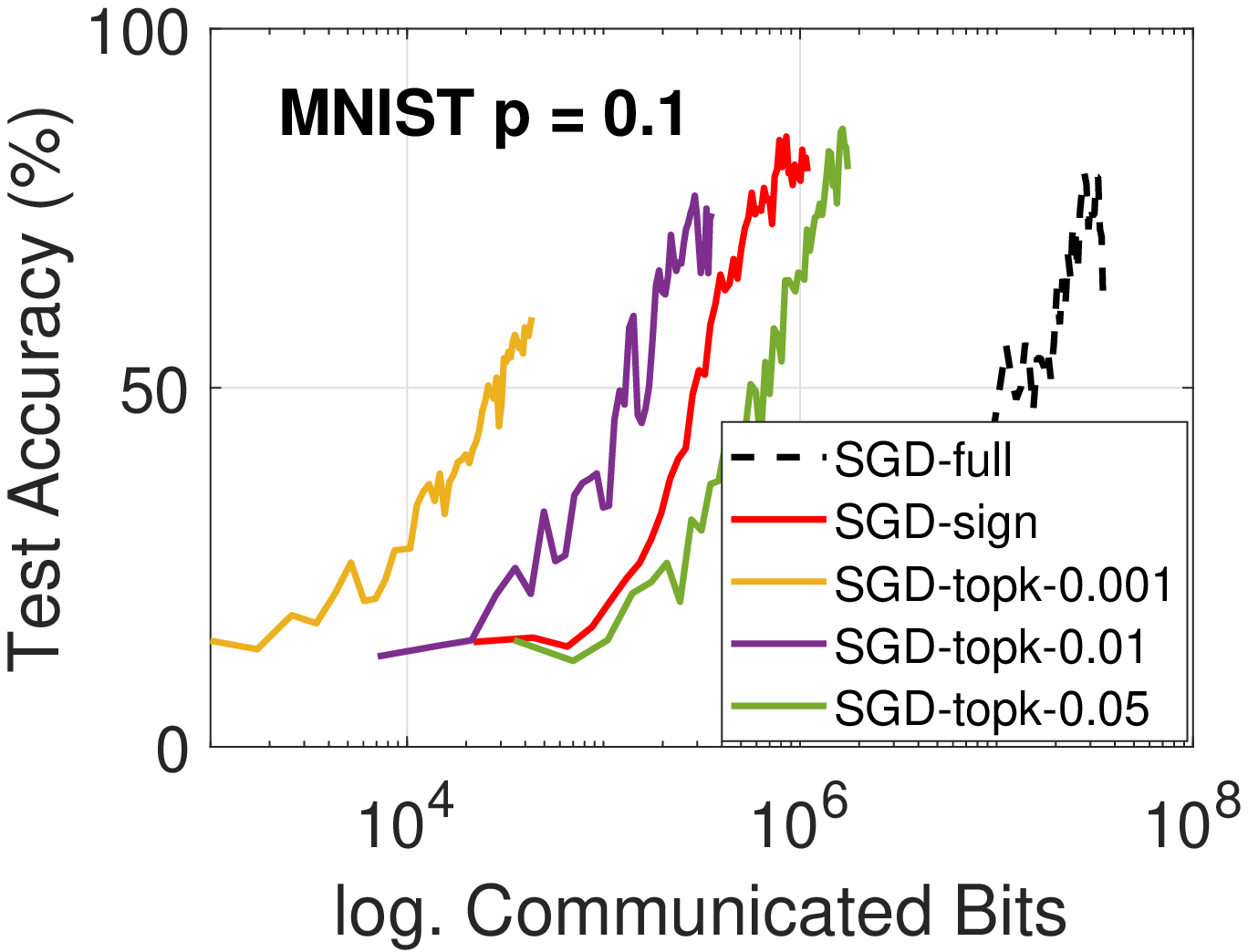}\hspace{-0.1in}
        \includegraphics[width=2.25in]{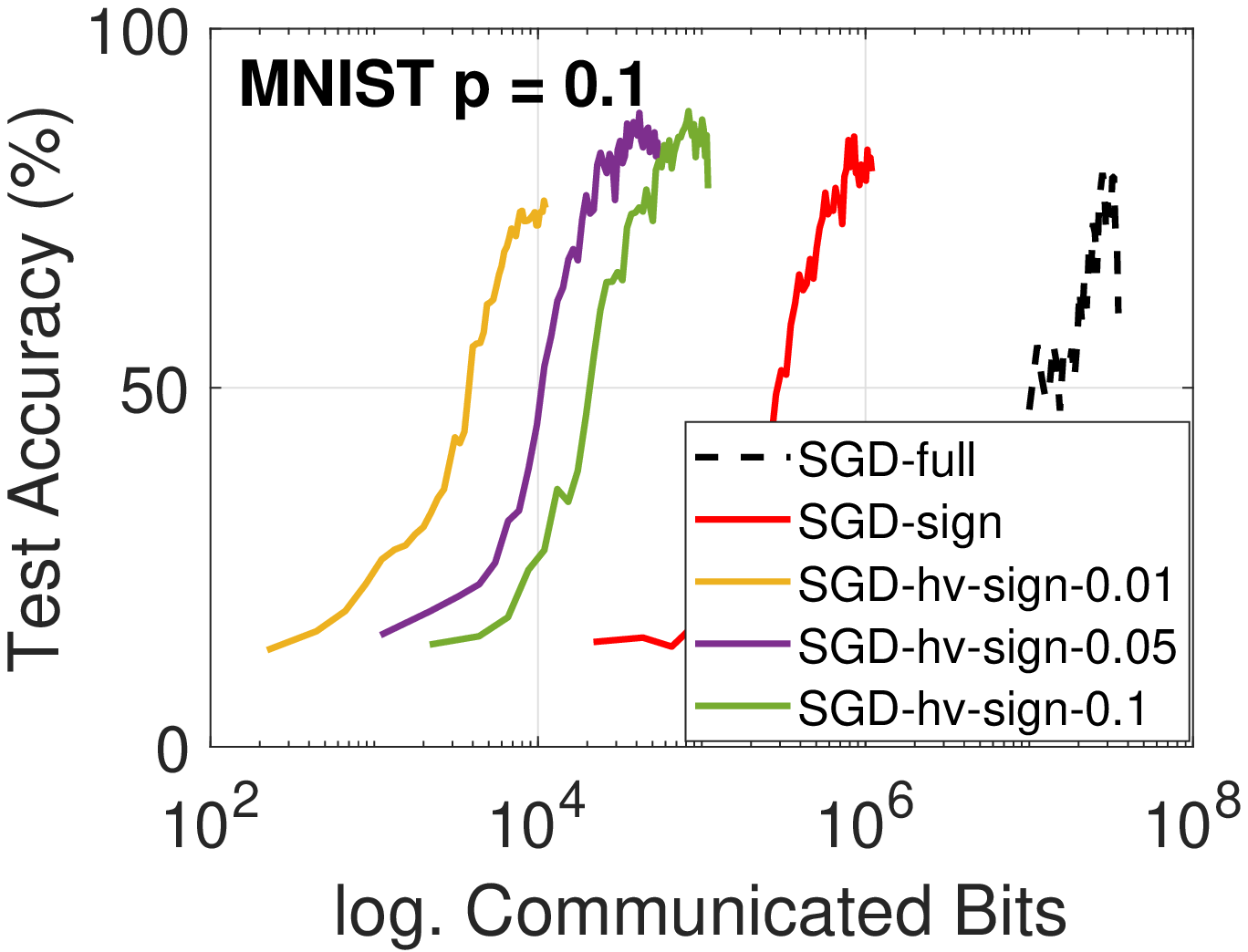}\hspace{-0.1in}
        \includegraphics[width=2.25in]{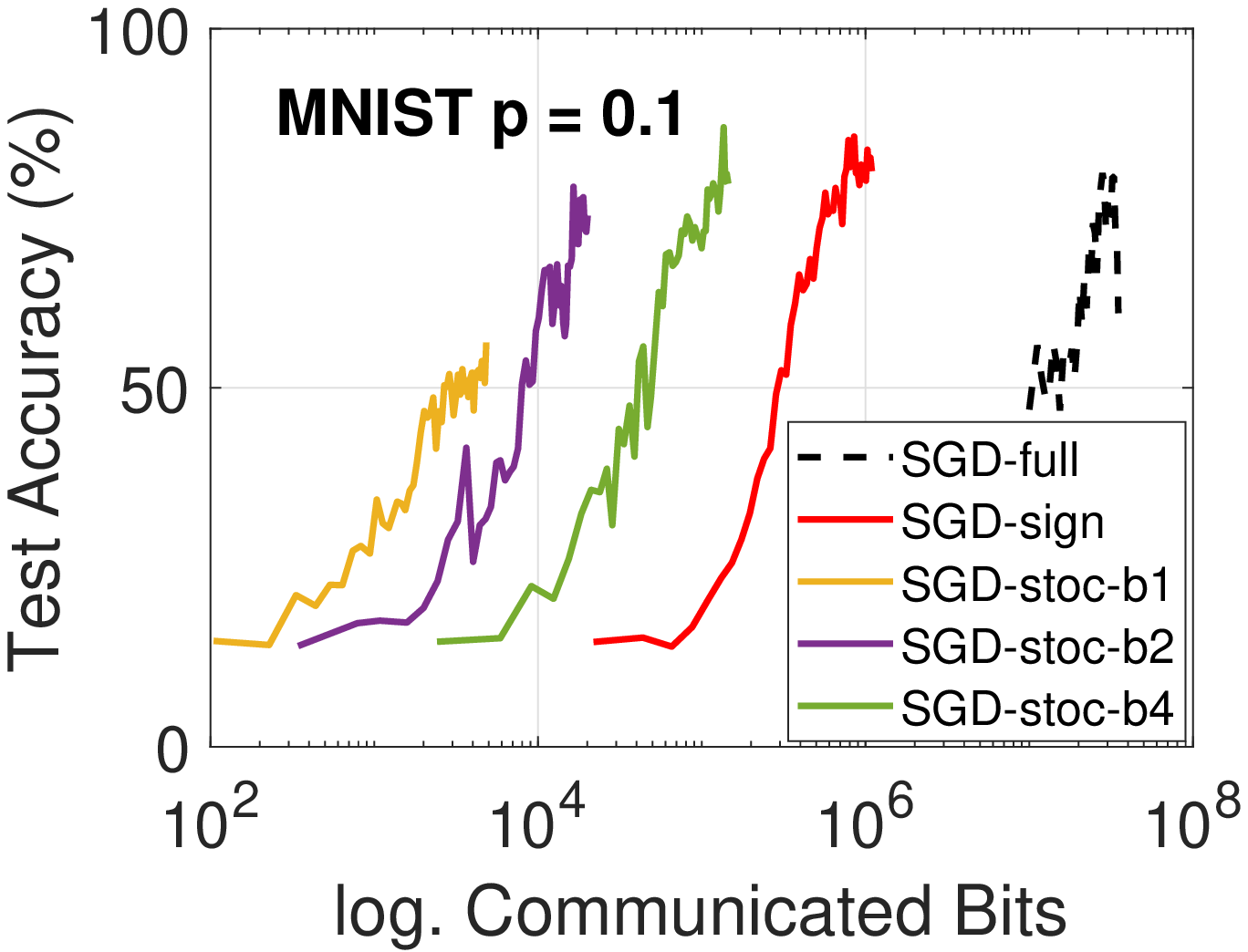}
        }
        \mbox{\hspace{-0.2in}
        \includegraphics[width=2.25in]{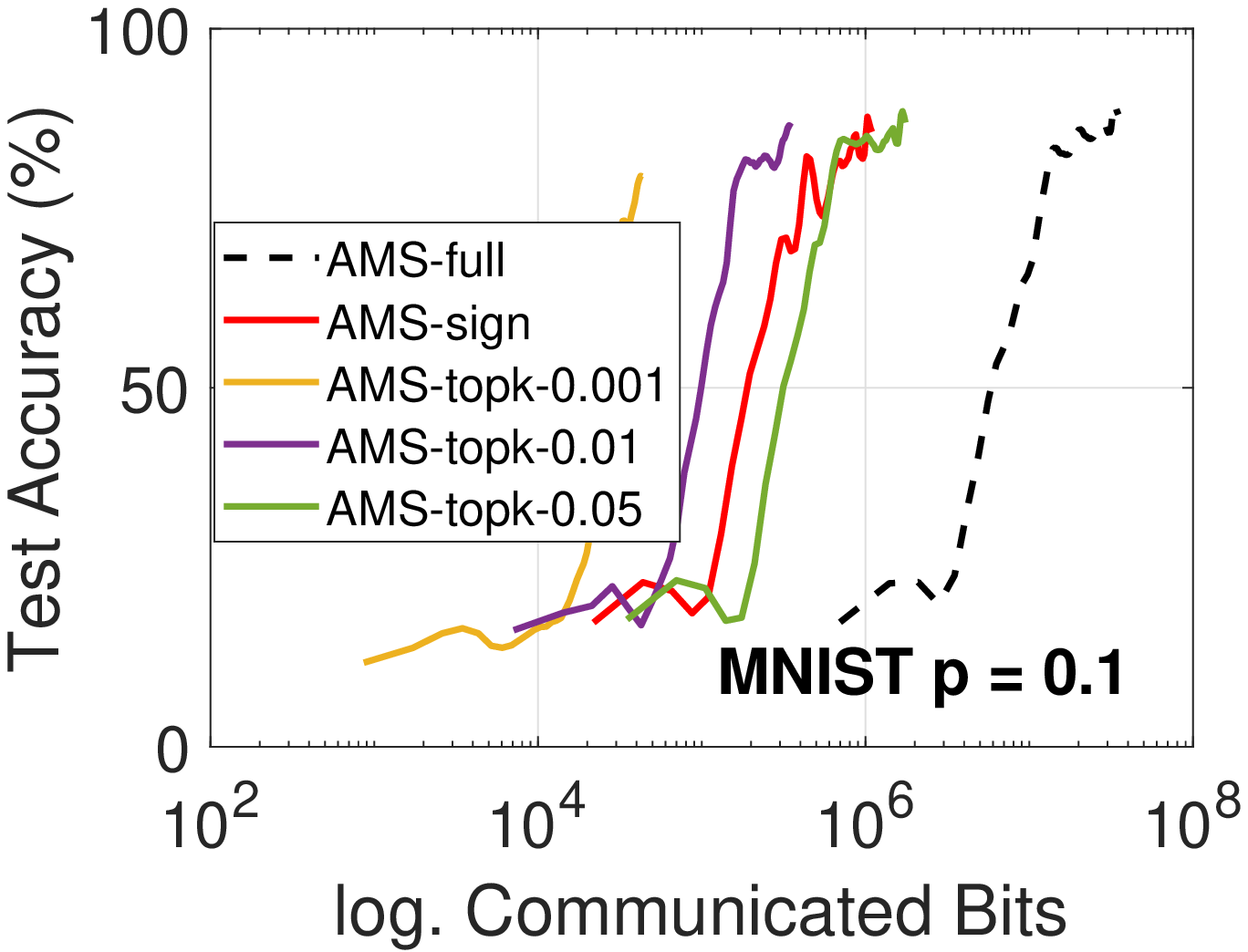}\hspace{-0.1in}
        \includegraphics[width=2.25in]{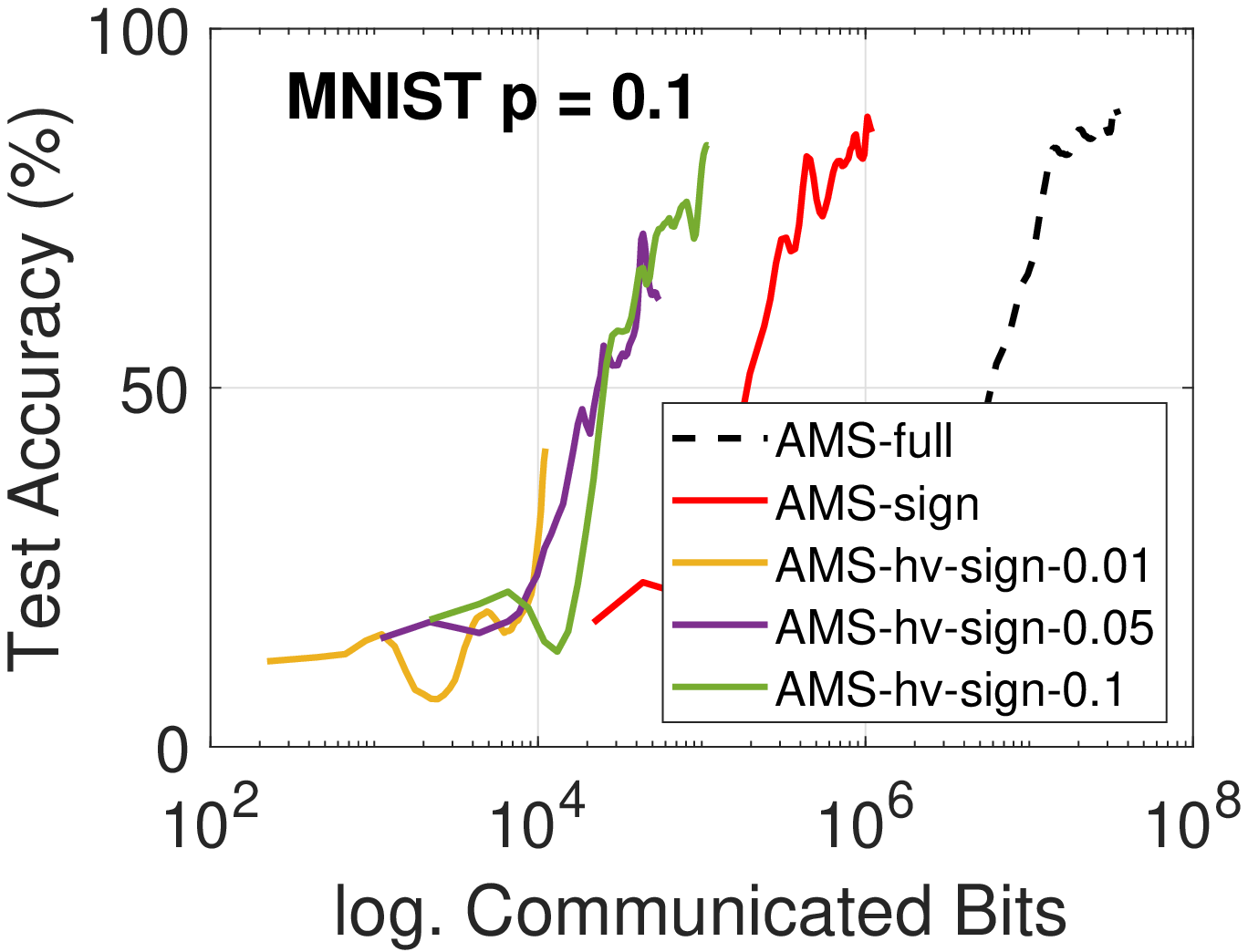}\hspace{-0.1in}
        \includegraphics[width=2.25in]{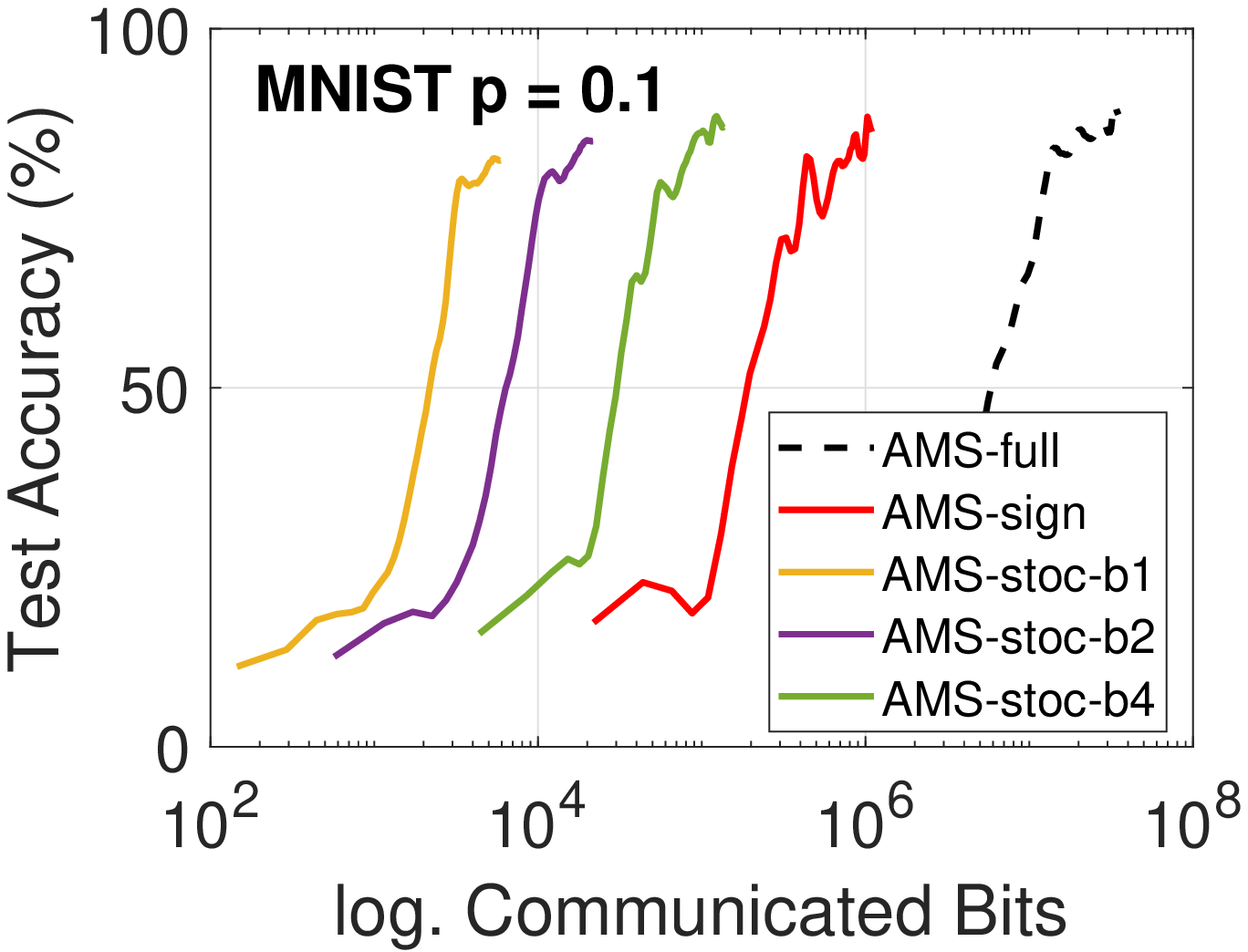}
        }
    \end{center}
    \vspace{-0.1in}
	\caption{Test accuracy of Fed-EF on MNIST dataset trained by CNN. ``sign'', ``topk'' and ``hv-sign'' are applied with Fed-EF, while ``Stoc'' is the stochastic quantization without EF. Participation rate $p=0.1$, non-iid data. 1st row: Fed-EF-SGD. 2nd row: Fed-EF-AMS.}
	\label{fig:MNIST-acc-compressor-0.1}\vspace{0.1in}
\end{figure}


\begin{figure}[h]
    \begin{center}
        \mbox{\hspace{-0.2in}
        \includegraphics[width=2.25in]{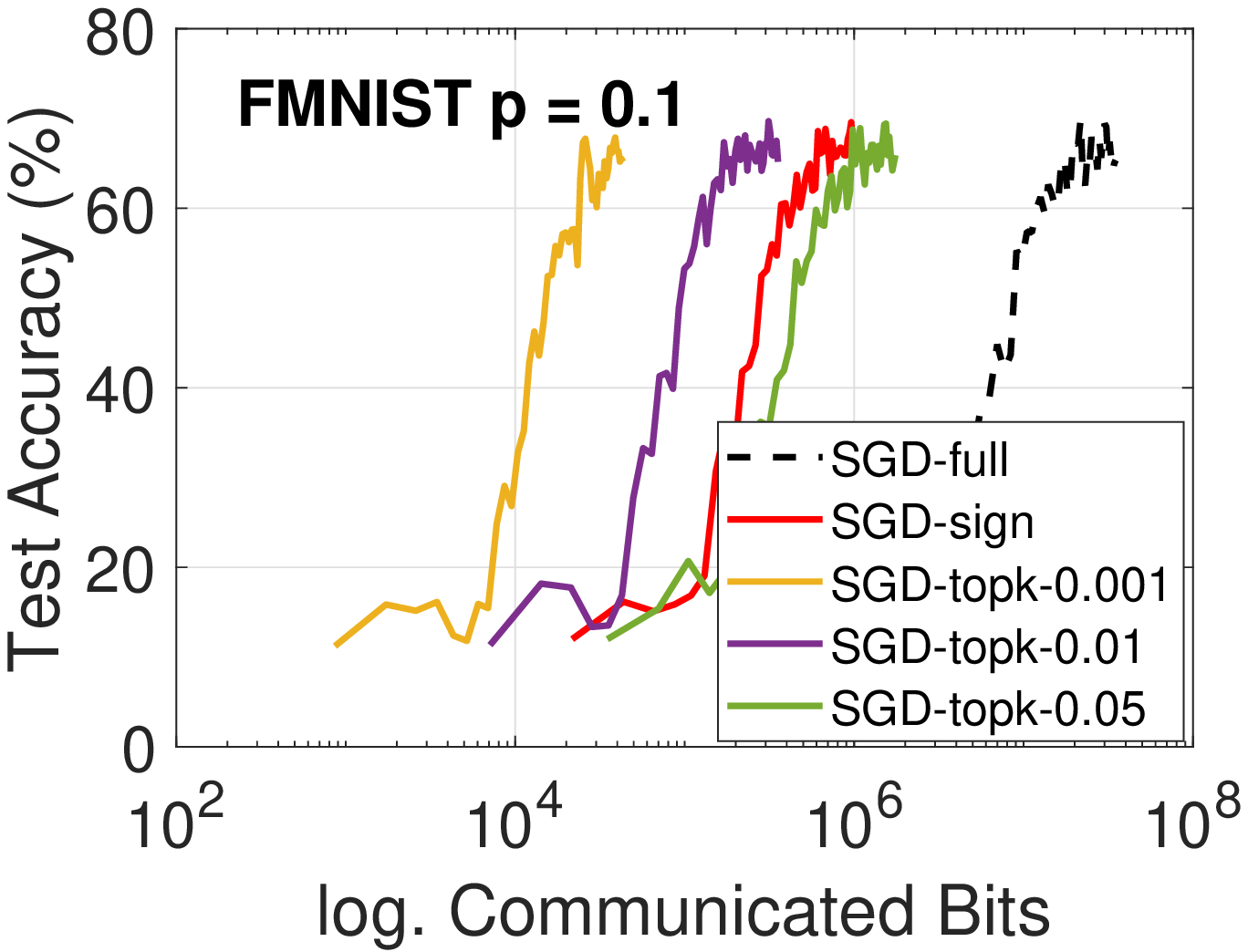}\hspace{-0.1in}
        \includegraphics[width=2.25in]{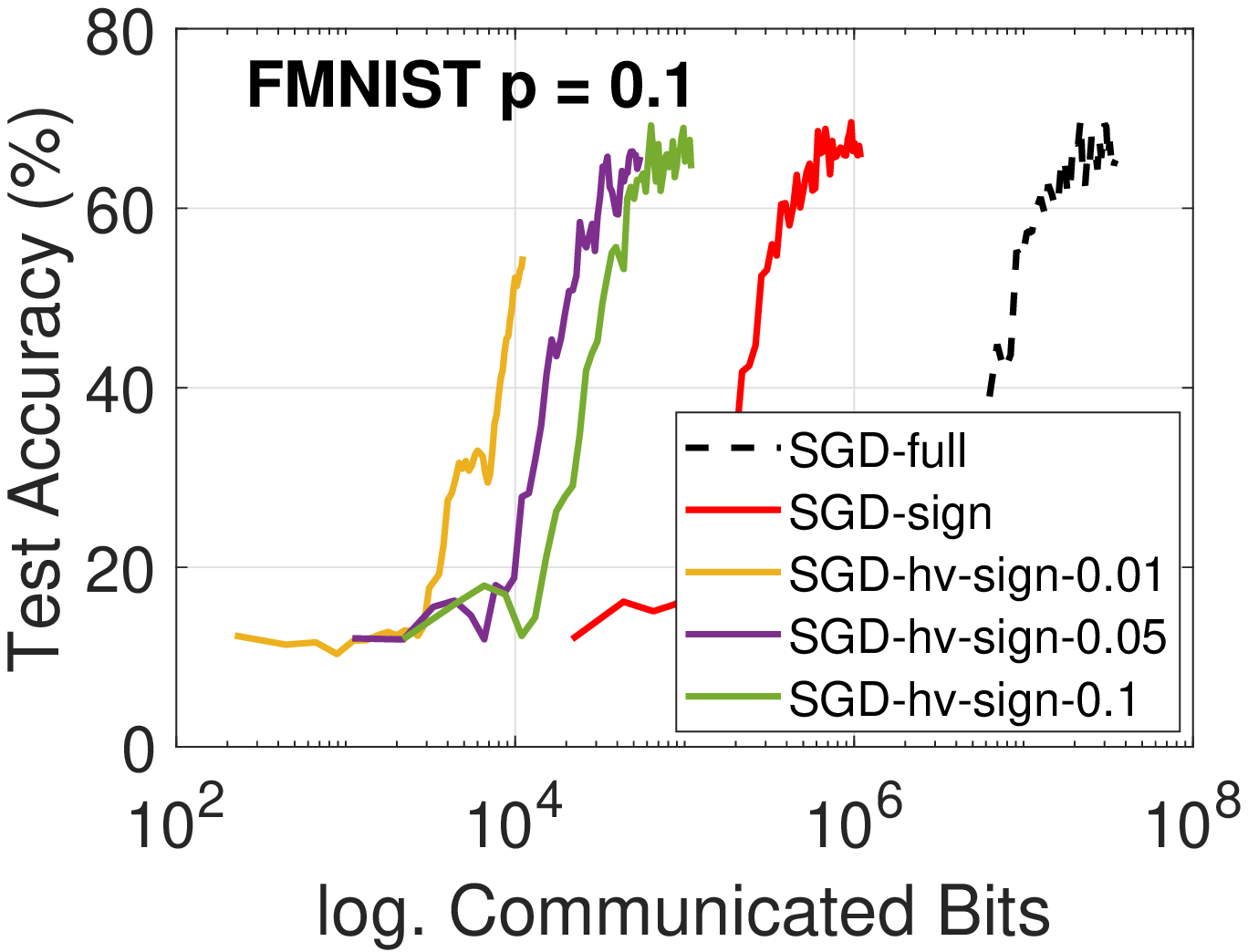}\hspace{-0.1in}
        \includegraphics[width=2.25in]{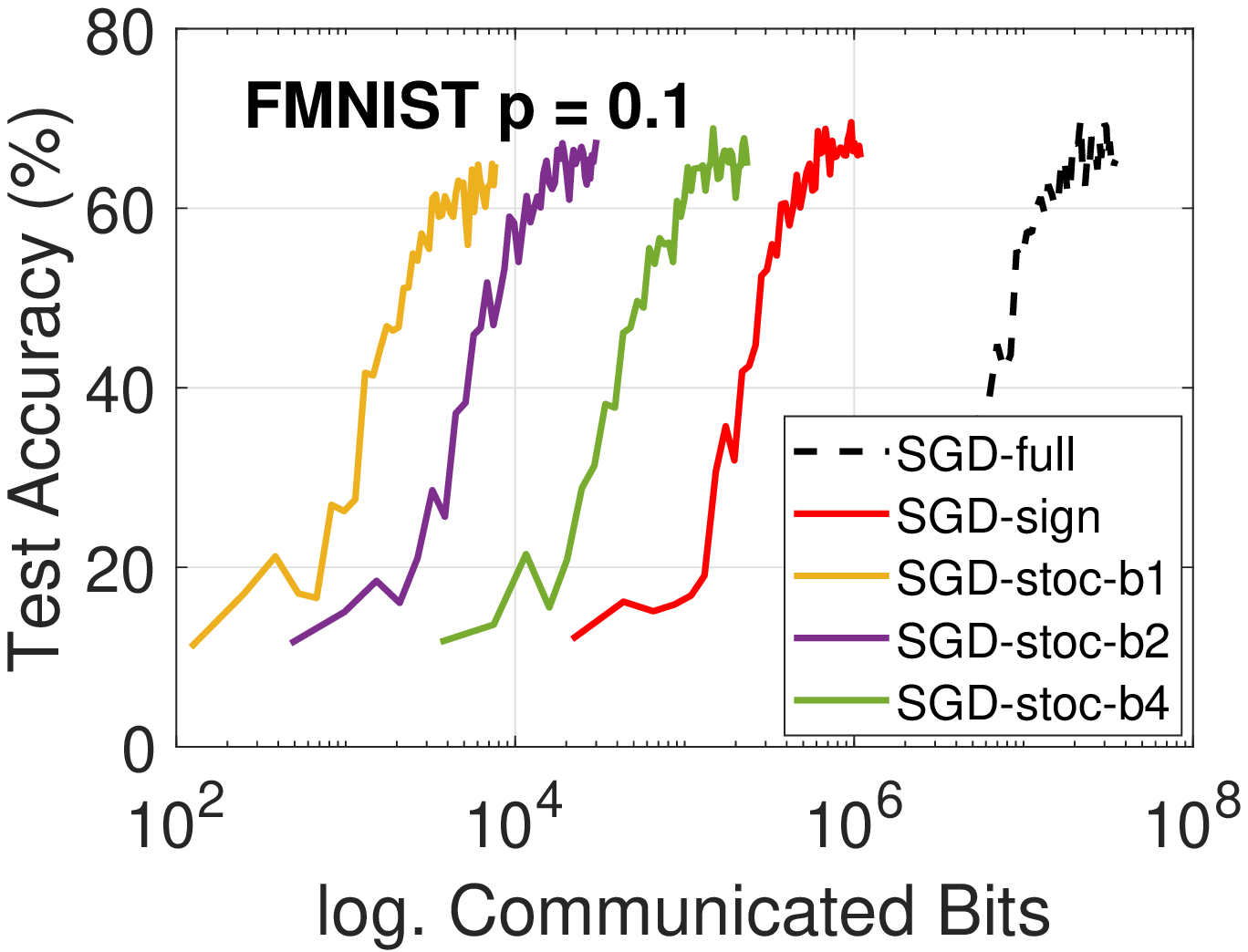}
        }
        \mbox{\hspace{-0.2in}
        \includegraphics[width=2.25in]{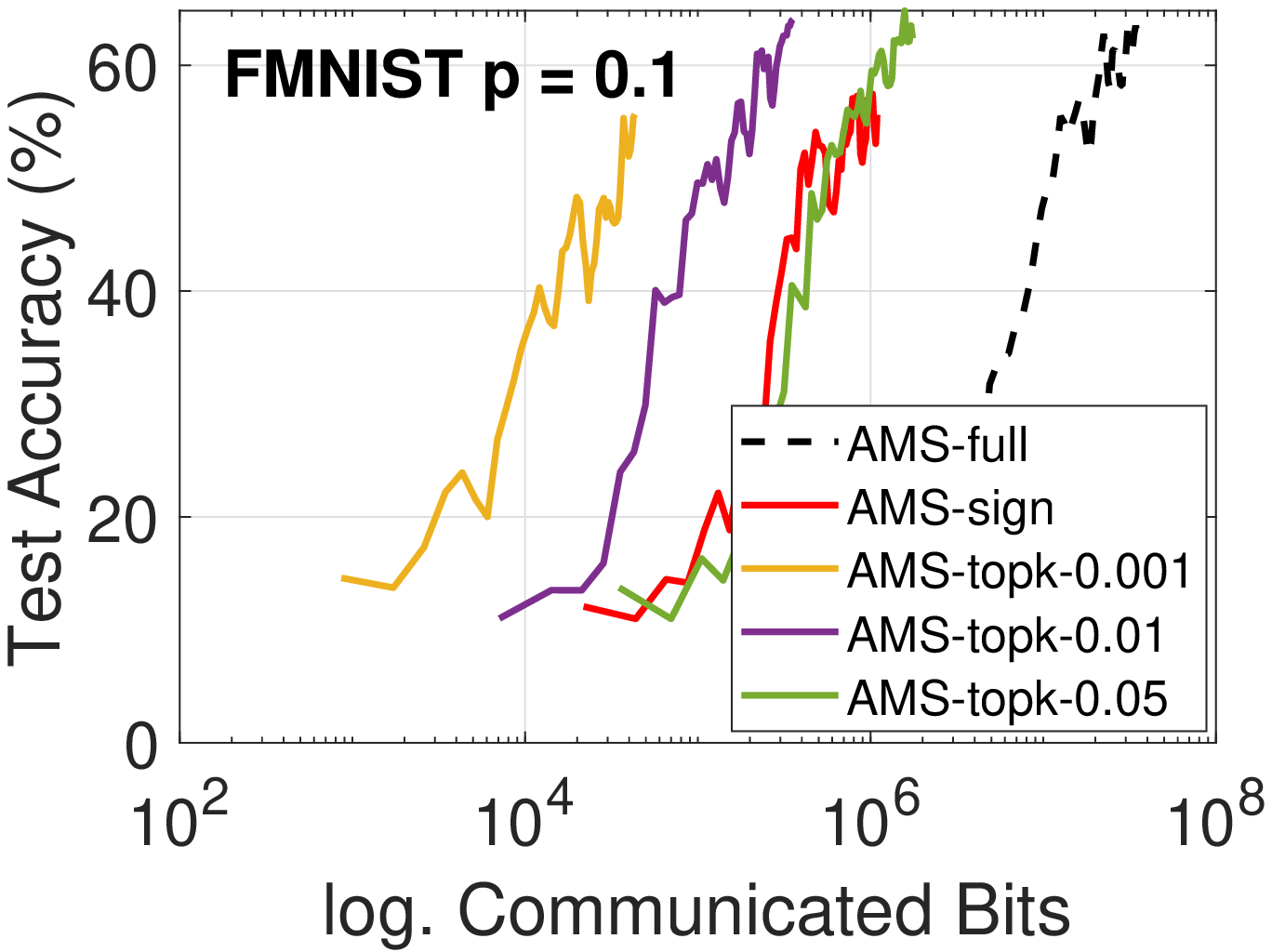}\hspace{-0.1in}
        \includegraphics[width=2.25in]{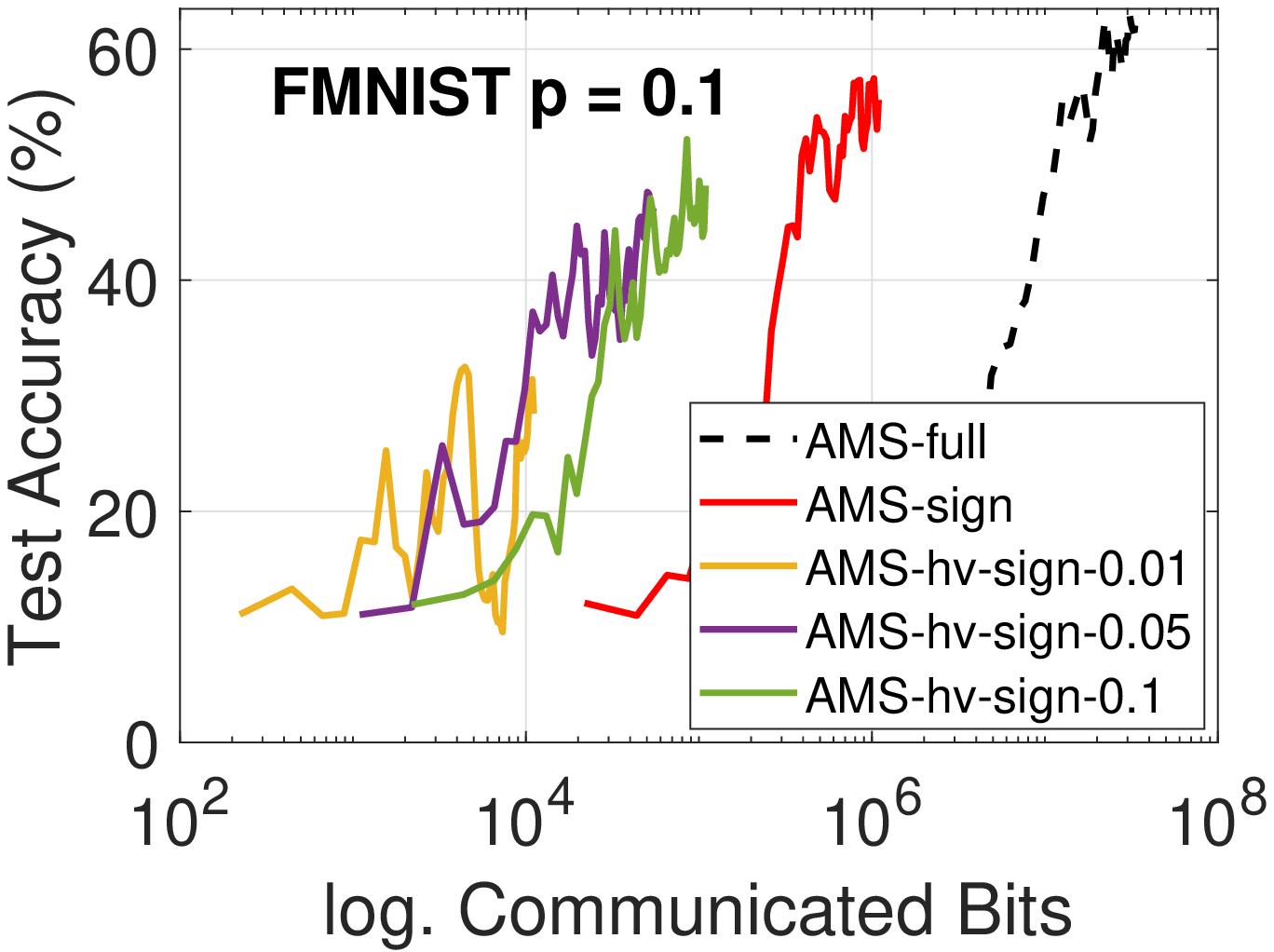}\hspace{-0.1in}
        \includegraphics[width=2.25in]{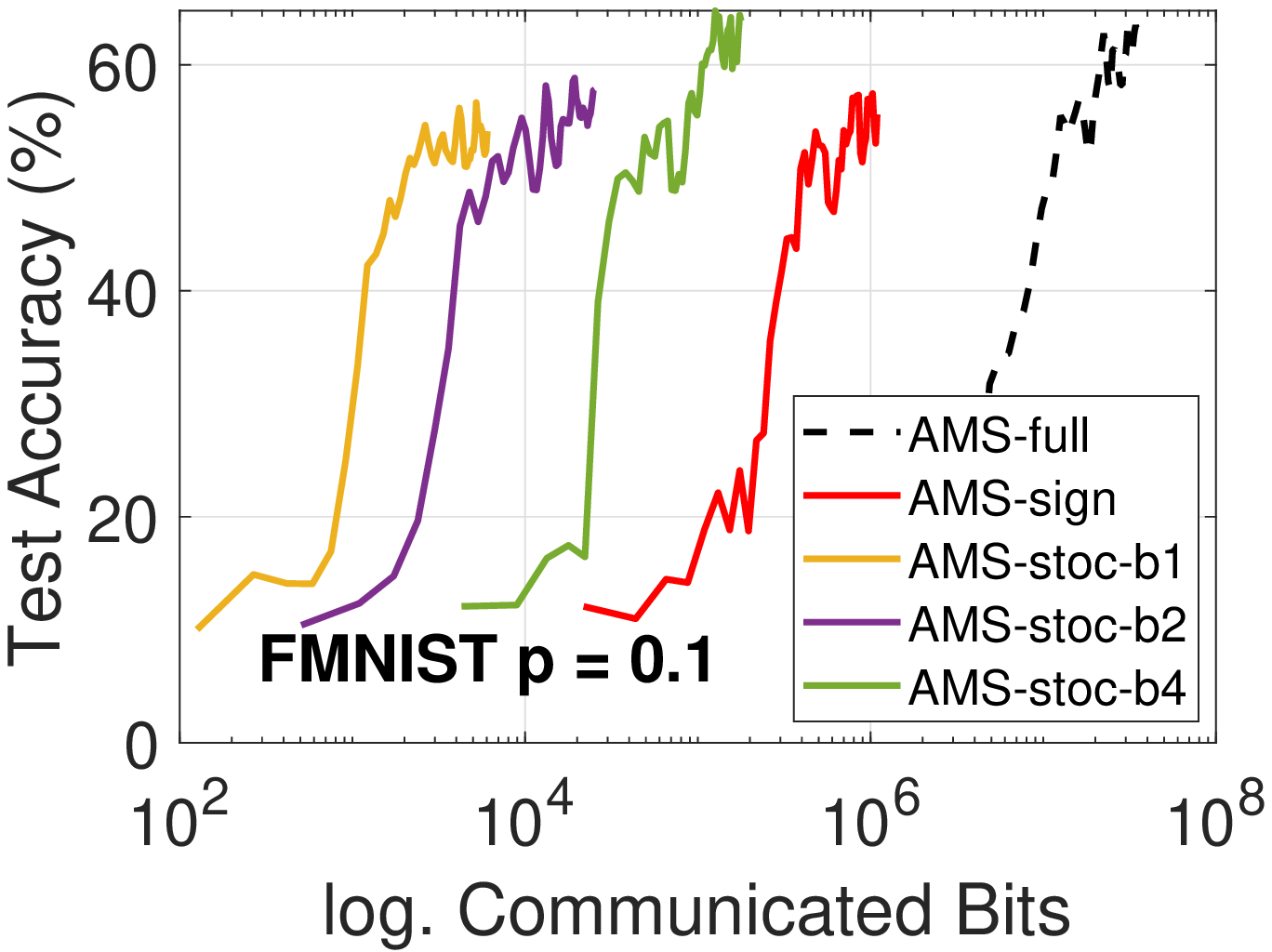}
        }
    \end{center}
    \vspace{-0.1in}
	\caption{Test accuracy of Fed-EF on FMNIST dataset trained by CNN. ``sign'', ``topk'' and ``hv-sign'' are applied with Fed-EF, while ``Stoc'' is the stochastic quantization without EF. Participation rate $p=0.1$, non-iid data. 1st row: Fed-EF-SGD. 2nd row: Fed-EF-AMS.}
	\label{fig:FMNIST-acc-compressor-0.1}
\end{figure}

\newpage
\clearpage

\section{Compression Discrepancy}  \label{sec:discuss_assumption}

In our theoretical analysis for Fed-EF, Assumption~\ref{ass:compress_diff} is needed, which states that $\mathbb E[\| \frac{1}{n}\sum_{i=1}^n \mathcal C\big(\del_{t,i}+e_{t,i}\big)-\frac{1}{n}\sum_{i=1}^n (\del_{t,i}+e_{t,i}) \|^2]\leq q_{\mathcal A}^2 \mathbb E[\| \frac{1}{n}\sum_{i=1}^n (\del_{t,i}+e_{t,i}) \|^2]$ for some $q_{\mathcal A}<1$ during training. In the following, we justify this assumption to demonstrate how it holds in practice. It is worth mentioning that a similar condition is was assumed in \cite{haddadpour2021federated} for the analysis of FL with unbiased compression. To study sparsified SGD, \cite{alistarh2018convergence} also used a similar and stronger (uniform bound instead of in expectation) analytical assumption. As a result, our analysis and theoretical results are also valid under their assumption.

\subsection{Simulated Data}

We first conduct a simulation to investigate how the two compressors, \textbf{TopK} and \textbf{Sign}, affect $q_{\mathcal A}$. In our presented results, for conciseness we use $n=5$ clients and model dimensionality $d=1100$. Similar conclusions hold for much larger $n$ and $d$. We simulate two types of gradients following normal distribution and Laplace distribution (more heavy-tailed), respectively. Examples of the simulated gradients are visualized in Figure~\ref{fig:normal grad} and Figure~\ref{fig:laplace grad}. To mimic non-iid data, we assume that each client has some strong signals (large gradients) in some coordinates, and we scale those gradients by a scaling factor $s=2,10,100$. Conceptually, larger $s$ represents higher data heterogeneity.

\begin{figure}[h]
    \begin{center}
        \mbox{\hspace{-0.18in}
        \includegraphics[width=2in]{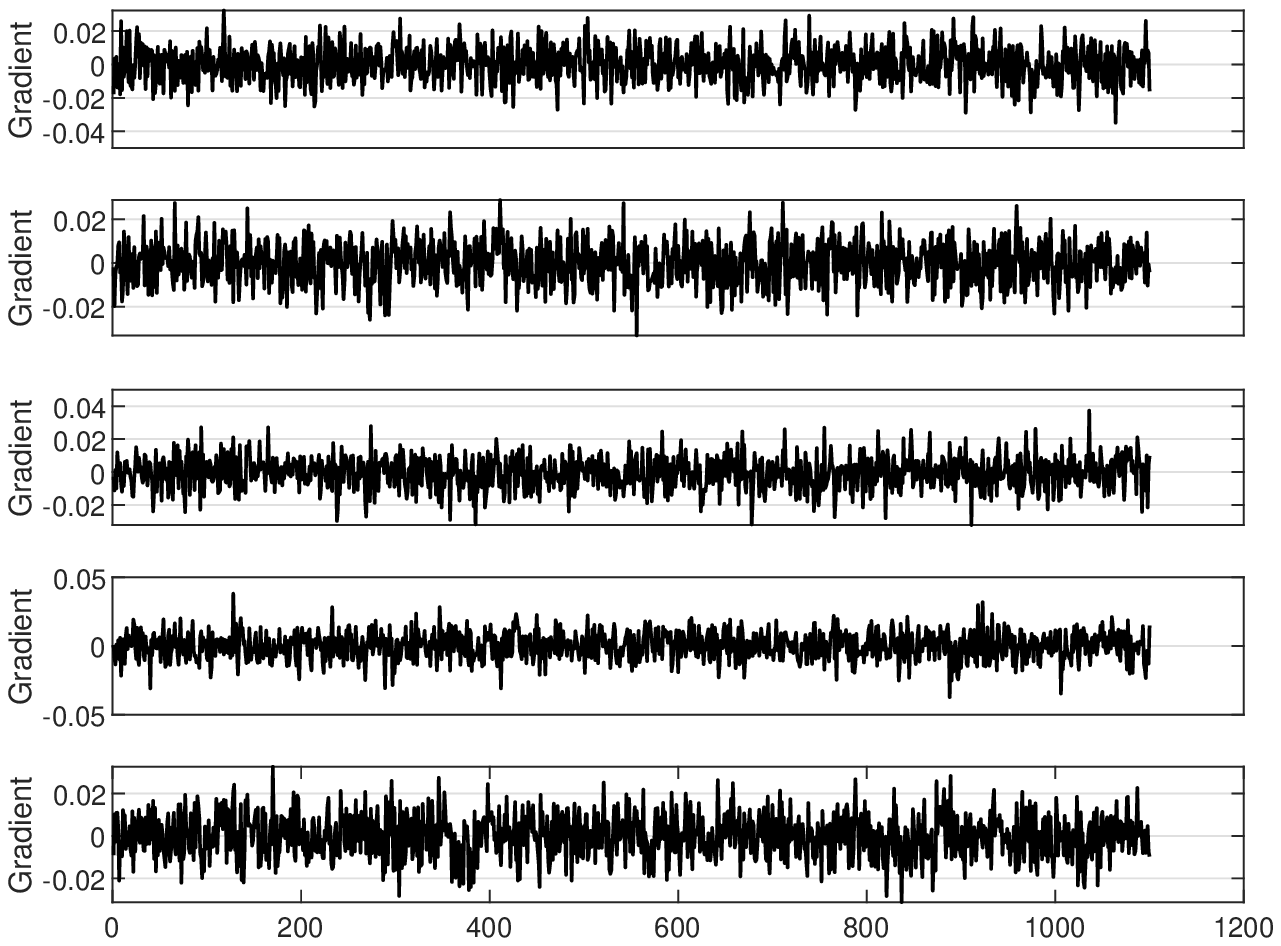}\hspace{-0.1in}
        \includegraphics[width=2in]{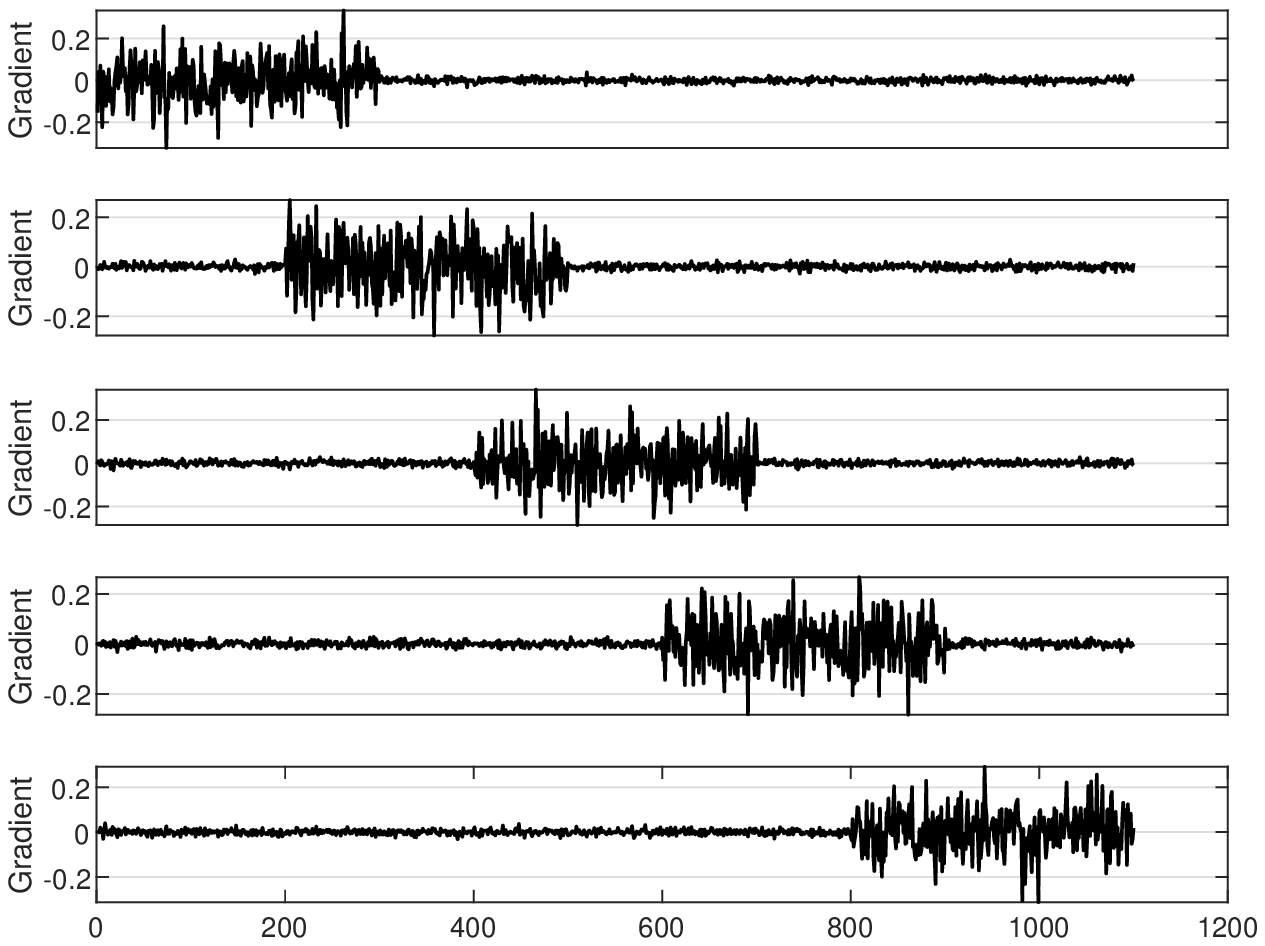}\hspace{-0.1in}
        \includegraphics[width=2in]{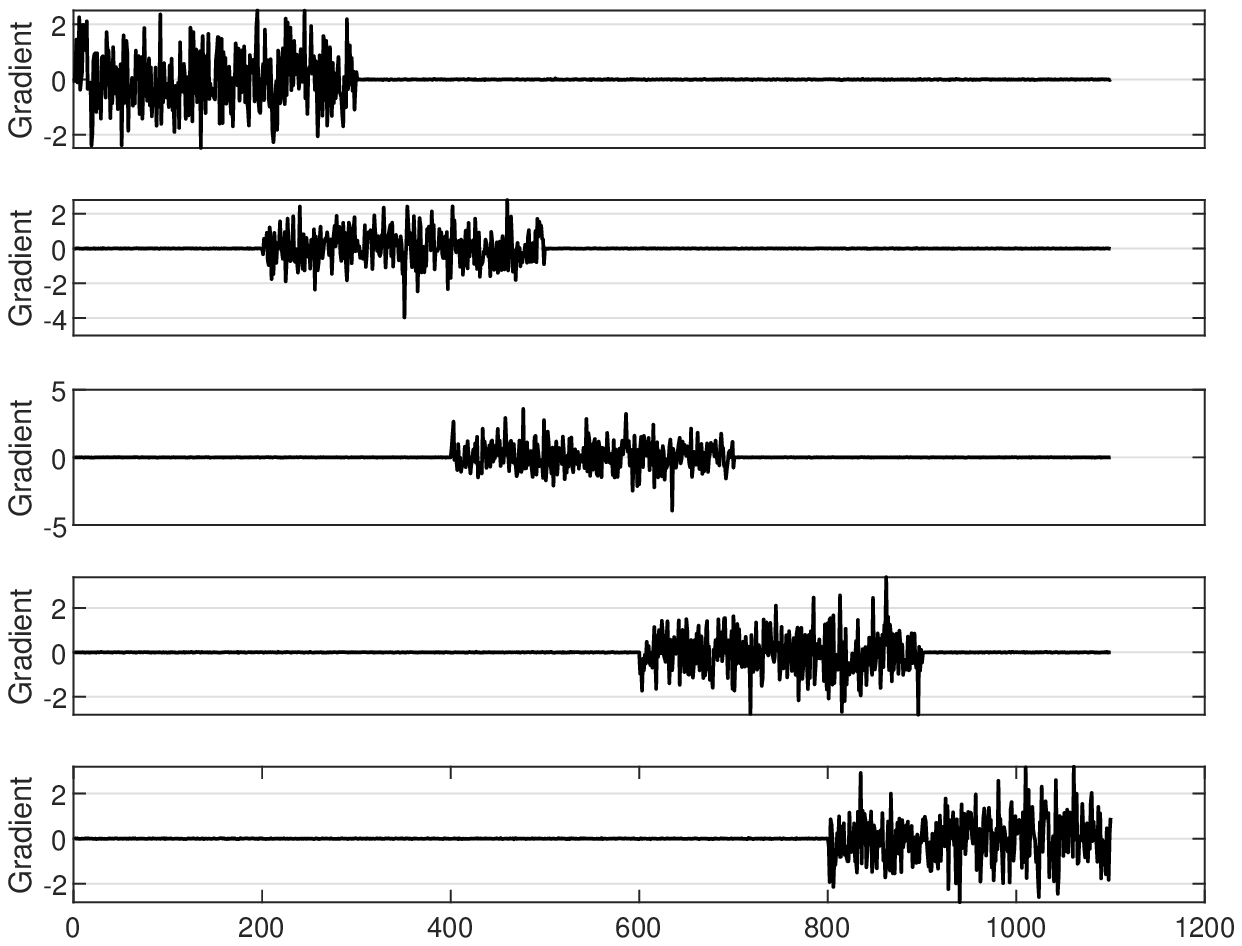}
        }
    \end{center}

	\caption{The simulated gradients of 5 heterogeneous clients, from $N(0,\gamma^2)$ with $\gamma=0.01$. The gradient on each distinct client is scaled by $s=2,10,100$ (left, mid, right), respectively. Larger $s$ implies higher data heterogeneity.}
	\label{fig:normal grad}
\end{figure}

\begin{figure}[h]
    \begin{center}
        \mbox{\hspace{-0.18in}
        \includegraphics[width=2in]{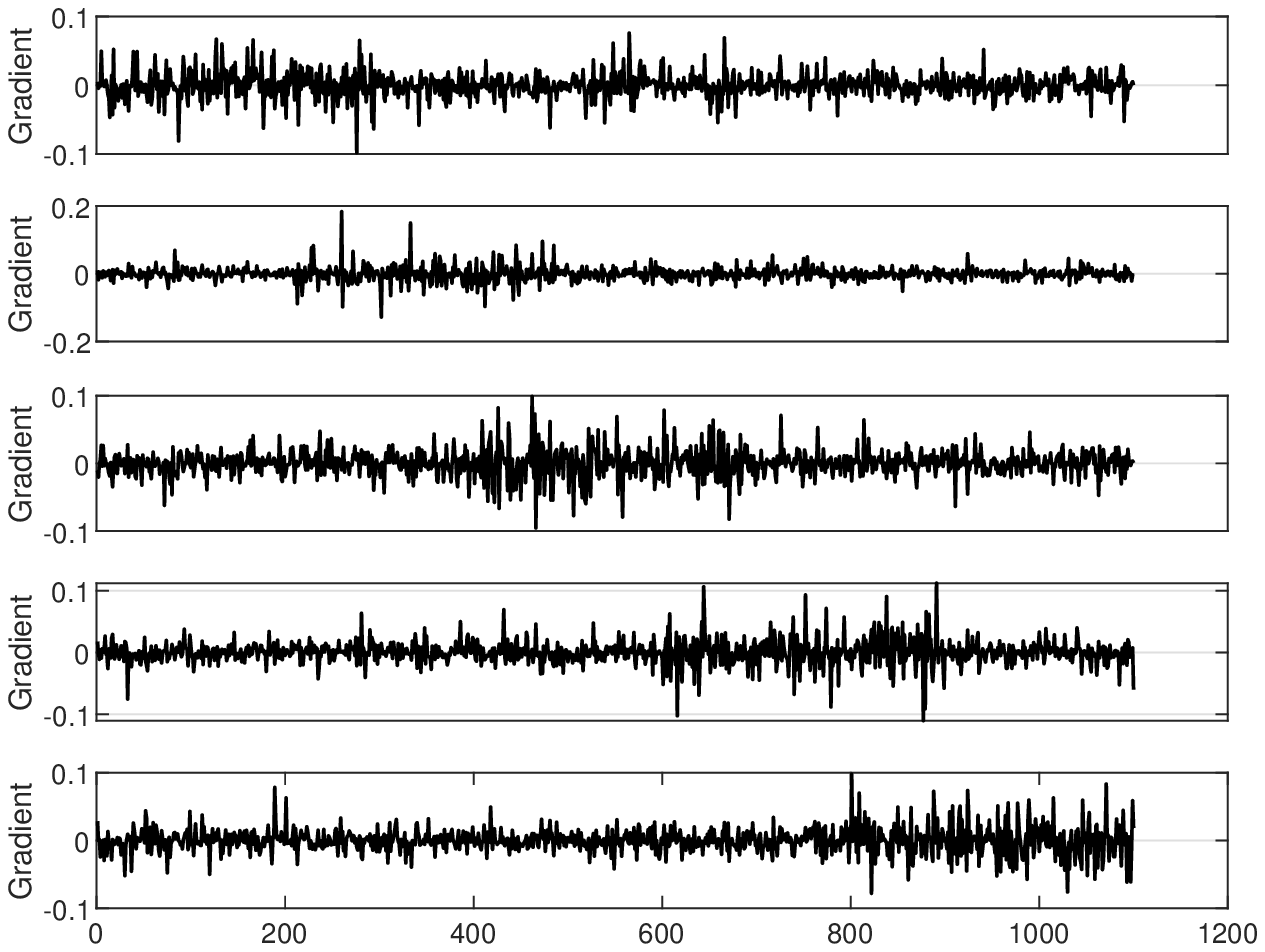}\hspace{-0.1in}
        \includegraphics[width=2in]{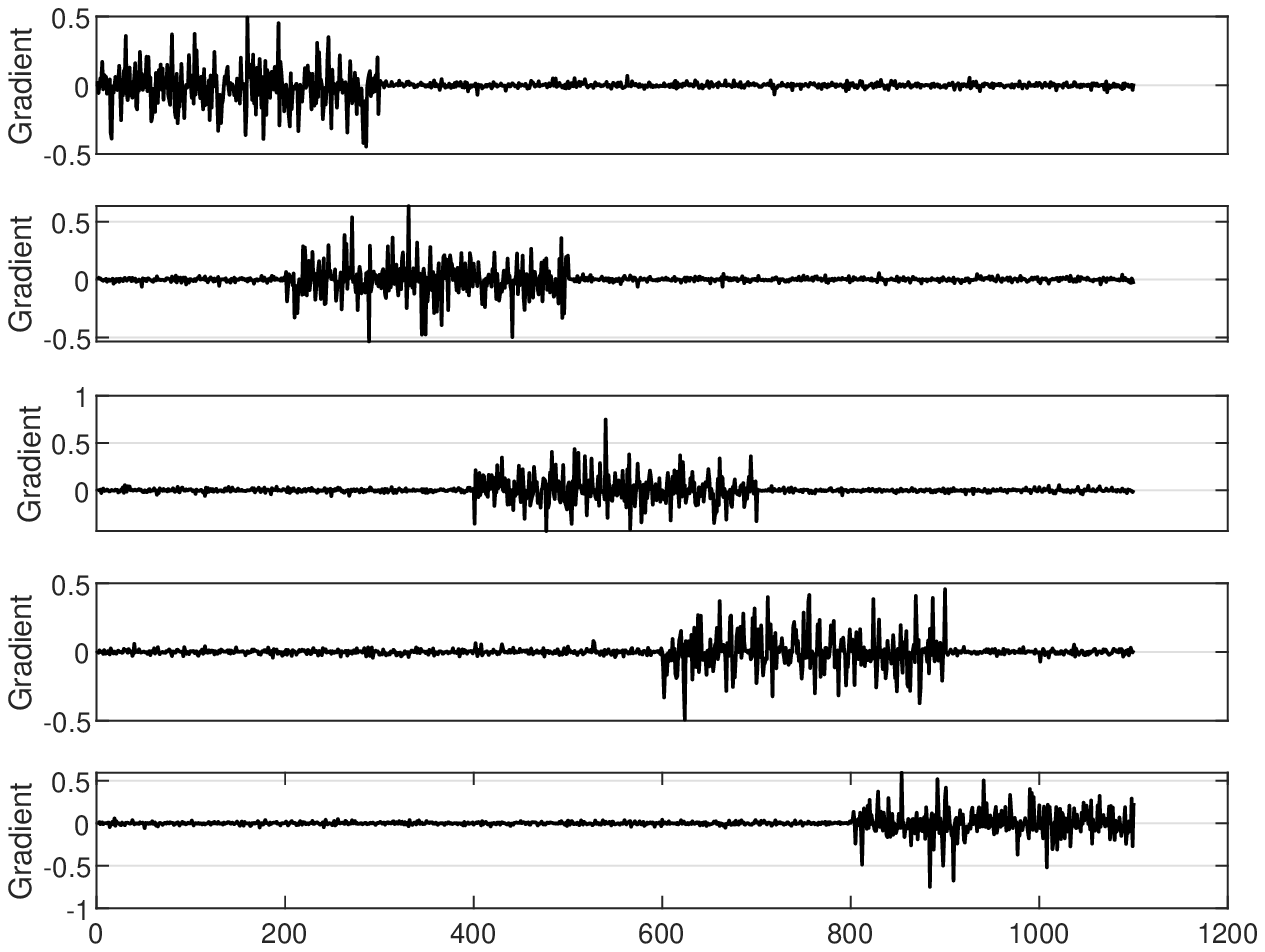}\hspace{-0.1in}
        \includegraphics[width=2in]{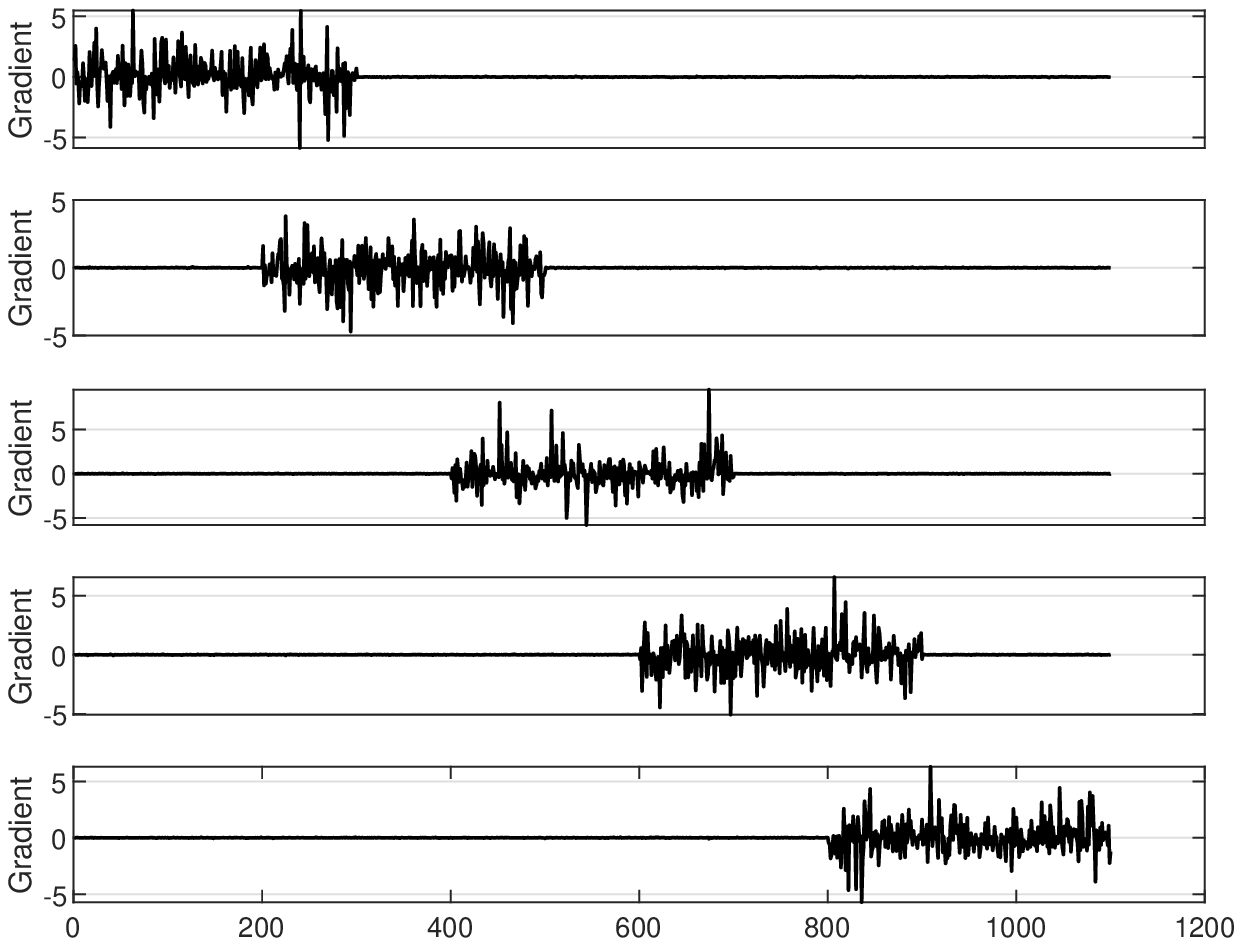}
        }
    \end{center}

	\caption{The simulated gradients of 5 heterogeneous clients, from $Lap(0,\lambda)$ with $\lambda=0.01$. The $x$-axis is the dimension. The gradient on each distinct client is scaled by $s=2,10,100$ (left, mid, right), respectively. Wee the gradients have heavier tail than normal distribution. Larger $s$ implies higher data heterogeneity.}
	\label{fig:laplace grad}
\end{figure}

\newpage

We apply the \textbf{TopK-0.1} and \textbf{Sign} compressor in Definition~\ref{def:quant} to the simulated gradients, and compute the averaged $q_{\mathcal A}^2$ in Figure~\ref{fig:simulate q} over $10^5$ independent runs. The dashed curves are respectively the ``ideal'' compression coefficients $q_\mathcal C$ such that $\mathbb E[\| \mathcal C\big(\frac{1}{n}\sum_{i=1}^n \del_{t,i}+e_{t,i}\big)-\frac{1}{n}\sum_{i=1}^n (\del_{t,i}+e_{t,i}) \|^2]\leq q_{\mathcal C}^2 \mathbb E[\| \frac{1}{n}\sum_{i=1}^n (\del_{t,i}+e_{t,i}) \|^2]$ from Definition~\ref{def:quant}. We see that in all cases, $q_{\mathcal A}$ is indeed less than 1. This still holds even when the data heterogeneity increases to as large as 100.

\begin{figure}[t]
  \begin{center}
   \mbox{
    \includegraphics[width=2.5in]{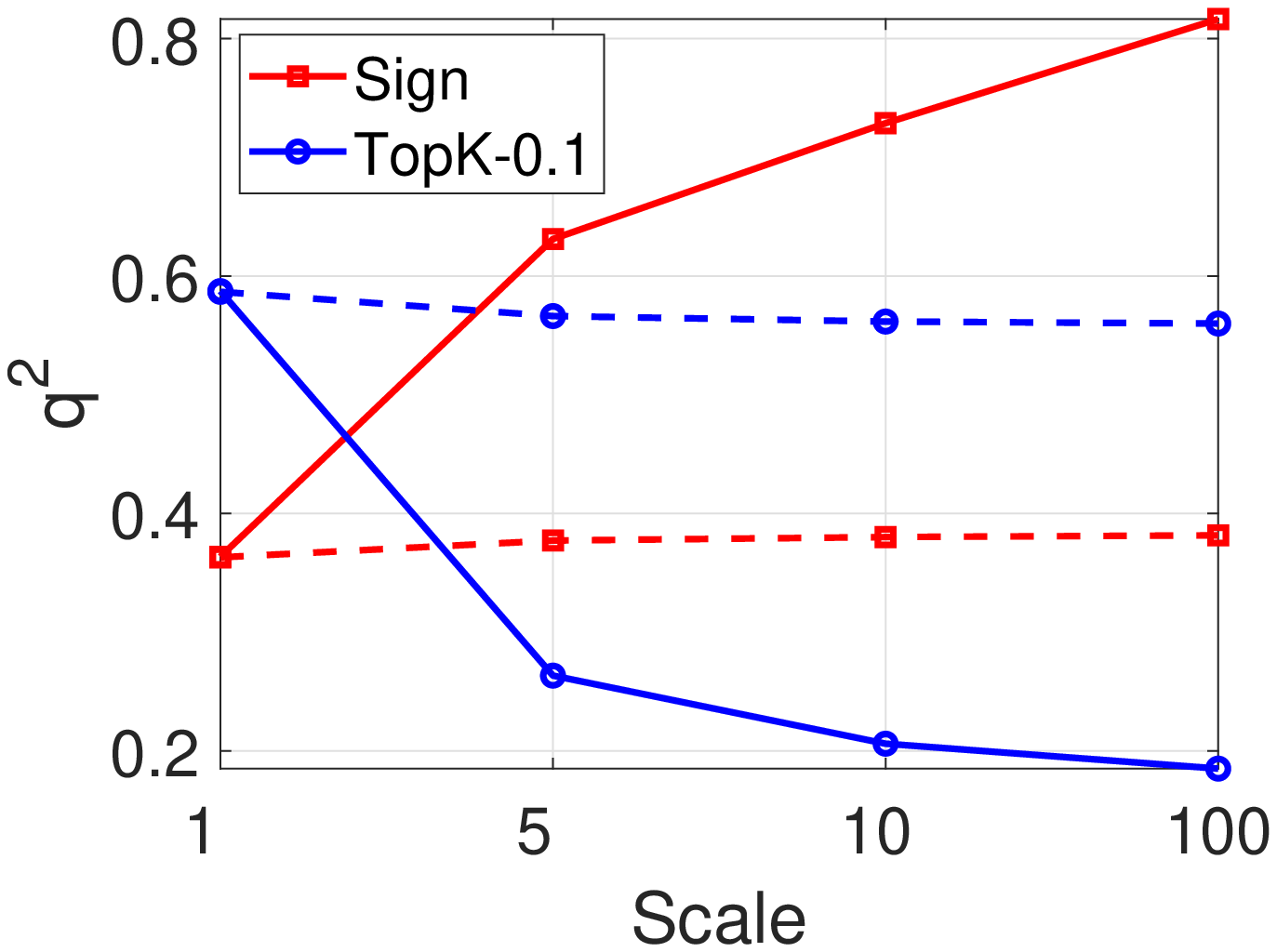}\hspace{0.2in}
    \includegraphics[width=2.5in]{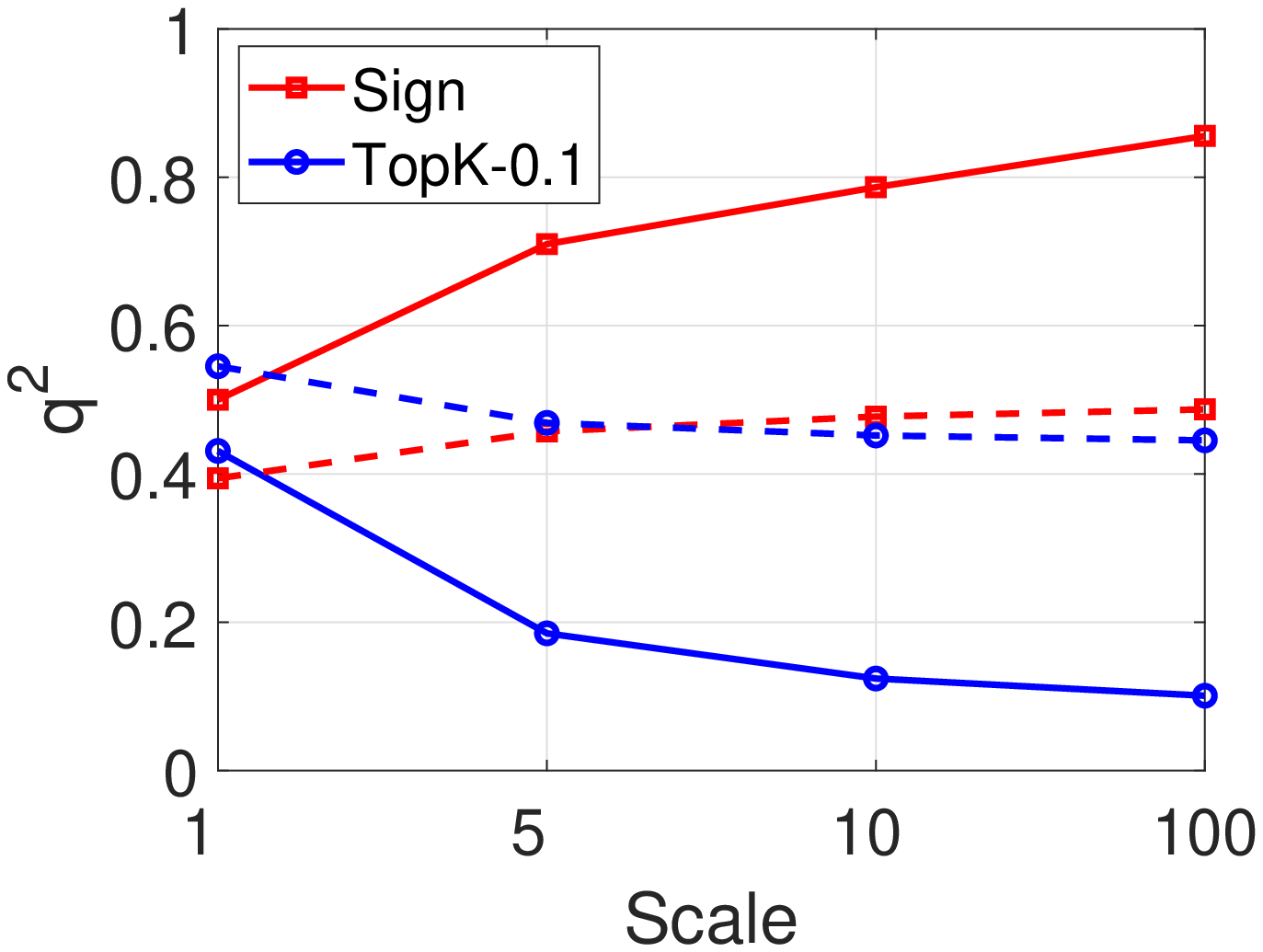}
    }
  \end{center}

\vspace{-0.2in}

  \caption{The compression coefficient $q_{\mathcal A}$ in Assumption~\ref{ass:compress_diff} on simulated gradients. \textbf{TopK} is applied with sparsity $k=0.1$. Left: Gaussian distribution. Right: Laplace distribution. $q_{\mathcal A}^2$ is computed by $q=\frac{\|\delta(x)-x\|^2}{\|x\|^2}$ where $\delta(x)=\frac{1}{n}\sum_{i=1}^n \mathcal C(\del_{t,i}+e_{t,i})$ and $x=\frac{1}{n}\sum_{i=1}^n (\del_{t,i}+e_{t,i})$. The dashed curves are respectively the compression coefficients $q_{\mathcal C}^2$ from Definition~\ref{def:quant}, which is calculated by replacing $\delta(x)=\mathcal C(\frac{1}{n}\sum_{i=1}^n \del_{t,i}+e_{t,i})$. We see that in all cases, $q_{\mathcal A}<1$.}
  \label{fig:simulate q}
\end{figure}

\subsection{Real-world Data}

We report the empirical $q_{\mathcal A}$ values when training CNN on MNIST and FMNIST datasets. The experimental setup is the same as in Section~\ref{sec:experiment}. We present the result in Figure~\ref{fig:cnn q} with $\eta=1$, $\eta_l=0.01$ under the same heterogeneous setting where client data are highly non-iid. The plots for other learning rate combinations and iid data are similar. In particular, we see for both compressors and both datasets, the empirical $q_{\mathcal A}$ is well-bounded below 1 throughout the training process.

\vspace{0.2in}

\begin{figure}[h]
  \begin{center}
   \mbox{\hspace{-0.2in}
    \includegraphics[width=2.3in]{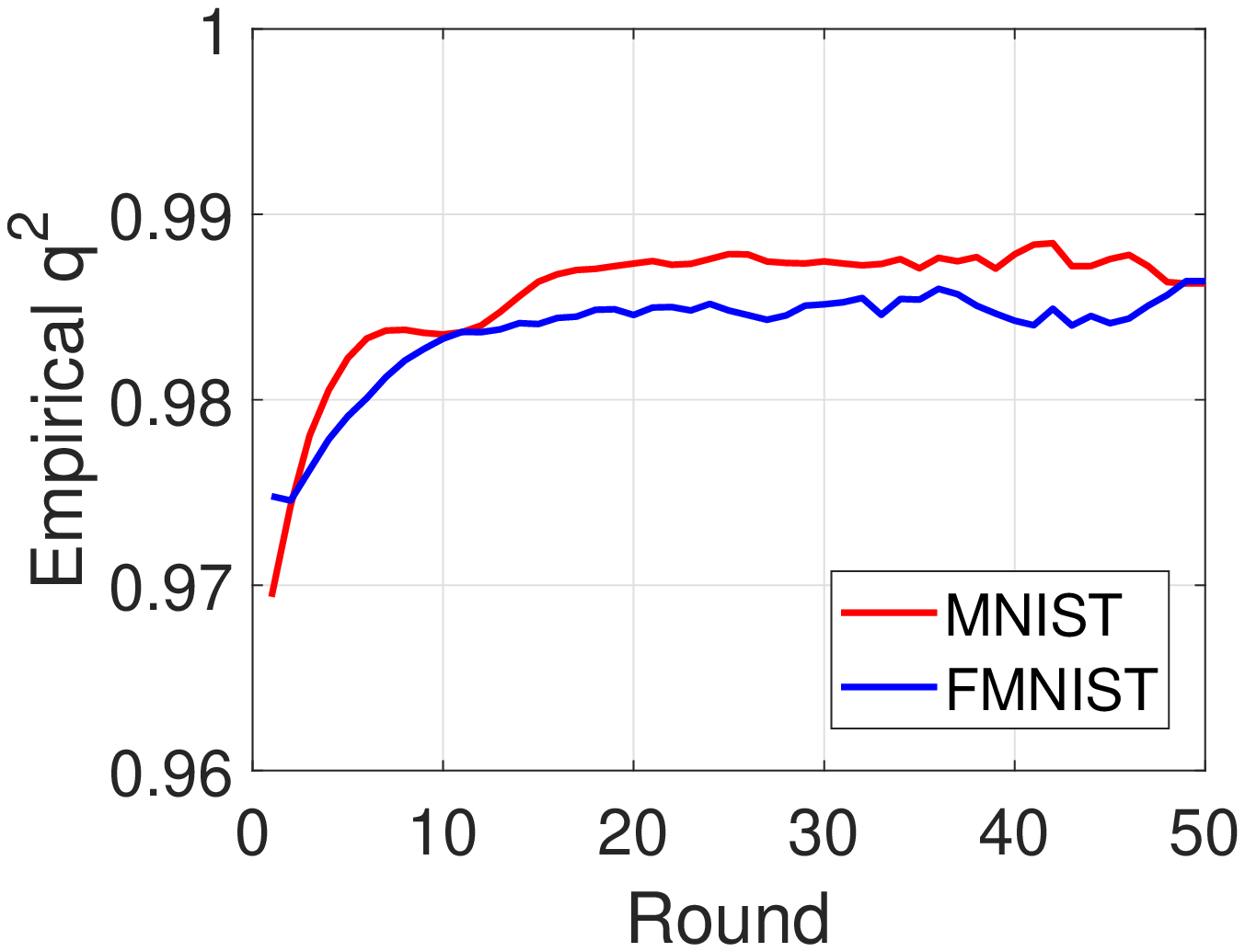}\hspace{-0.1in}
    \includegraphics[width=2.3in]{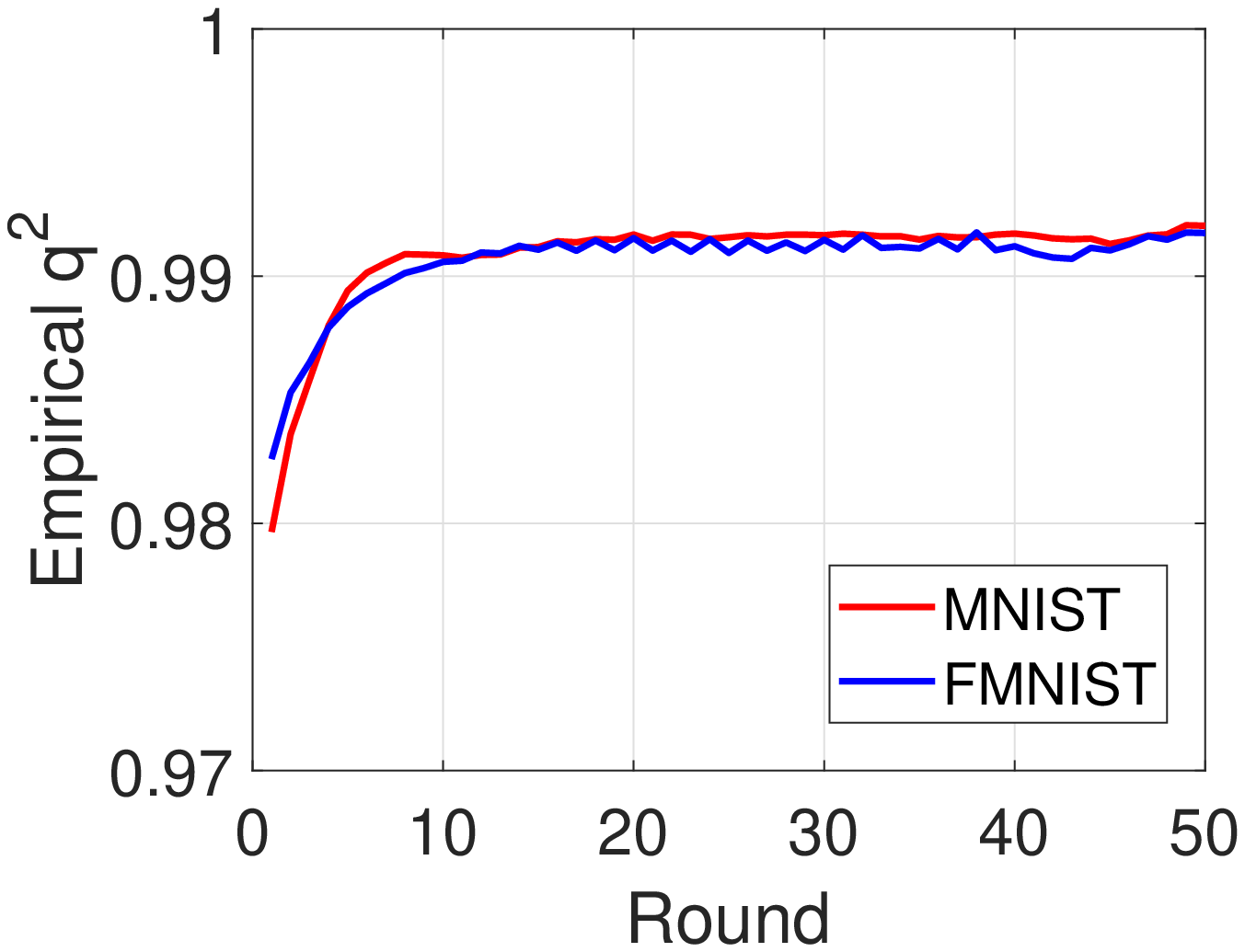}\hspace{-0.1in}
    \includegraphics[width=2.3in]{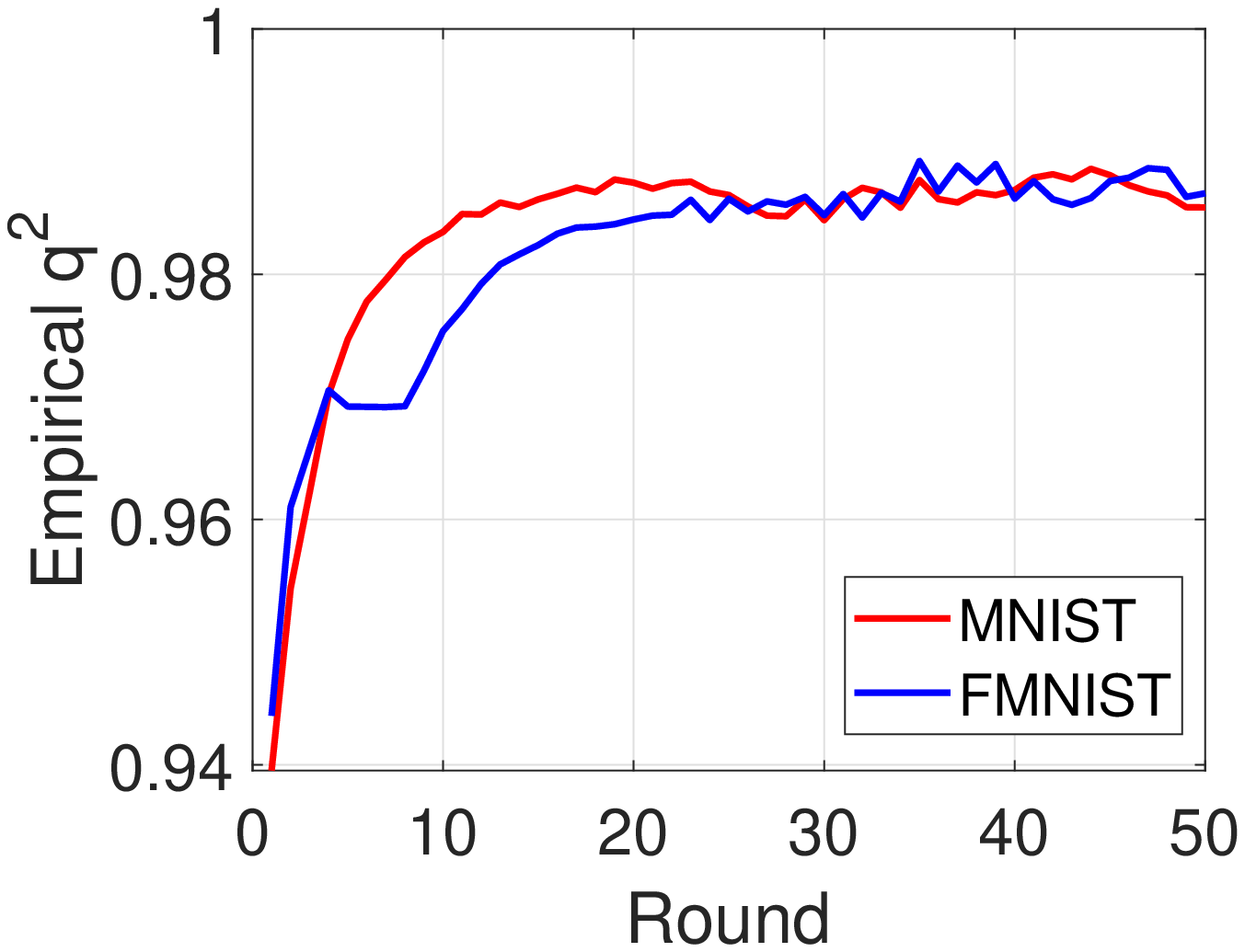}
    }
  \end{center}

\vspace{-0.2in}

  \caption{The compression coefficient $q_{\mathcal A}$ in Assumption~\ref{ass:compress_diff} in our experiments (Section~\ref{sec:experiment}) for CNN trained on MNIST and FMNIST dataset, averaged over multiple runs. $\eta=1$, $\eta_l=0.01$, non-iid client data distribution. Left: \textbf{Sign} compression. Mid: \textbf{TopK} compression with $k=0.01$. Right: \textbf{TopK} compression with $k=0.1$.}
  \label{fig:cnn q}
\end{figure}

\clearpage
\newpage

\section{Proof of Convergence Results}\label{app:proof}

We first present the proof for the more complicated Fed-EF-AMS in Section~\ref{app sec:AMS}, and the proof of Fed-EF-SGD would follow in Section~\ref{app sec:SGD}. Section~\ref{app:lemmas} contains intermediary lemmas, Section~\ref{app sec:partial-simple} provides the analysis of Fed-EF in partial participation and Section~\ref{app sec: no EF} proves the rate of FL directly using biased compression.

\subsection{Proof of Theorem~\ref{theo:rate AMS}: Fed-EF-AMS} \label{app sec:AMS}

\begin{proof}
We first clarify some notations. At round $t$, let the full-precision local model update of the $i$-th worker be $\del_{t,i}$, the error accumulator be $e_{t,i}$, and denote $\widetilde\del_{t,i}=\mathcal C(g_{t,i}+e_{t,i})$. Define $\bar\del_t=\frac{1}{n}\sum_{i=1}^n\del_{t,i}$, $\overline{\widetilde\del}_t=\frac{1}{n}\sum_{i=1}^n \widetilde\del_{t,i}$ and $\bar e_t=\frac{1}{n}\sum_{i=1}^n e_{t,i}$. The second moment computed by the compressed local model updates is denoted as $v_t=\beta_2 v_{t-1}+(1-\beta_2) \overline{\widetilde\del}_t^2$, and $\hat v_t=\max\{\hat v_{t-1}, v_t\}$. Also, the first order moving average sequence
\begin{align*}
m_t=\beta_1 m_{t-1}+(1-\beta_1)\overline{\widetilde \del}_t \quad & \textrm{and} \quad m_t'=\beta_1 m_{t-1}'+(1-\beta_1) \bar \del_t,
\end{align*}
where $m_t'$ represents the first moment moving average sequence using the uncompressed updates. By construction we have $m_t'=(1-\beta_1)\sum_{\tau=1}^t \beta_1^{t-\tau} \bar \del_\tau$.

\vspace{0.1in}
\noindent Our proof will use the following auxiliary sequences: for round $t=1,...,T$,
\begin{align*}
& \mathcal E_{t+1}\eqdef (1-\beta_1)\sum_{\tau=1}^{t+1} \beta_1^{t+1-\tau} \bar e_\tau,\\
&\theta_{t+1}':=\theta_{t+1}-\eta \frac{\mathcal E_{t+1}}{\sqrt{\hat v_t+\epsilon}}.
\end{align*}

Then, we can write the evolution of $\theta_t'$ as
\begin{align*}
    \theta_{t+1}'&=\theta_{t+1}-\eta \frac{\mathcal E_{t+1}}{\sqrt{\hat v_t+\epsilon}}\\
    &=\theta_t-\eta\frac{(1-\beta_1)\sum_{\tau=1}^{t} \beta_1^{t-\tau}\overline{\widetilde \del}_\tau+(1-\beta_1)\sum_{\tau=1}^{t+1} \beta_1^{t+1-\tau}\bar e_\tau}{\sqrt{\hat v_t+\epsilon}}\\
    &=\theta_t-\eta\frac{(1-\beta_1)\sum_{\tau=1}^{t} \beta_1^{t-\tau}(\overline{\widetilde \del}_\tau+ \bar e_{\tau+1})+(1-\beta)\beta_1^t \bar e_1}{\sqrt{\hat v_t+\epsilon}}\\
    &=\theta_t-\eta\frac{(1-\beta_1)\sum_{\tau=1}^{t} \beta_1^{t-\tau} \bar e_\tau}{\sqrt{\hat v_t+\epsilon}}-\eta\frac{m_t'}{\sqrt{\hat v_t+\epsilon}}\\
    &=\theta_t-\eta\frac{\mathcal E_t}{\sqrt{\hat v_{t-1}+\epsilon}}-\eta\frac{m_t'}{\sqrt{\hat v_t+\epsilon}}+\eta(\frac{1}{\sqrt{\hat v_{t-1}+\epsilon}}-\frac{1}{\sqrt{\hat v_t+\epsilon}})\mathcal E_t\\
    &\overset{(a)}{=}\theta_t'-\eta\frac{m_t'}{\sqrt{\hat v_t+\epsilon}}+\eta(\frac{1}{\sqrt{\hat v_{t-1}+\epsilon}}-\frac{1}{\sqrt{\hat v_t+\epsilon}})\mathcal E_t\\
    &\eqdef \theta_t'-\eta \frac{m_t'}{\sqrt{\hat v_t+\epsilon}}+\eta D_t\mathcal E_t,
\end{align*}
where (a) uses the fact of error feedback that for every $i\in[n]$, $\widetilde \del_{t,i}+e_{{t+1,i}}=\del_{t,i}+e_{t,i}$, and $e_{t,1}=0$ at initialization. Further define the virtual iterates:
\begin{align*}
    x_{t+1}\eqdef\theta_{t+1}'-\eta \frac{\beta_1}{1-\beta_1} \frac{m_t'}{\sqrt{\hat v_t+\epsilon}},
\end{align*}
which follows the recurrence:
\begin{align*}
    x_{t+1}&=\theta_{t+1}'-\eta\frac{\beta_1}{1-\beta_1} \frac{m_t'}{\sqrt{\hat v_t+\epsilon}}\\
    &=\theta_t'-\eta\frac{m_t'}{\sqrt{\hat v_t+\epsilon}}-\eta\frac{\beta_1}{1-\beta_1} \frac{m_t'}{\sqrt{\hat v_t+\epsilon}}+\eta D_t\mathcal E_t\\
    &=\theta_t'-\eta \frac{\beta_1 m_{t-1}'+(1-\beta_1)\bar \del_t+\frac{\beta_1^2}{1-\beta_1}m_{t-1}'+\beta_1 \bar \del_t}{\sqrt{\hat v_t+\epsilon}}+\eta D_t\mathcal E_t\\
    &=\theta_t'-\eta\frac{\beta_1}{1-\beta_1}\frac{m_{t-1}'}{\sqrt{\hat v_t+\epsilon}}-\eta\frac{\bar \del_t}{\sqrt{\hat v_t+\epsilon}}+\eta D_t\mathcal E_t\\
    &=x_t-\eta\frac{\bar \del_t}{\sqrt{\hat v_t+\epsilon}}+\eta\frac{\beta_1}{1-\beta_1} D_t m_{t-1}'+\eta D_t\mathcal E_t.
\end{align*}
The general idea is to study the convergence of the sequence $x_t$, and show that the difference between $x_t$ and $\theta_t$ (of interest) is small. First, by the smoothness Assumption~\ref{ass:smooth}, we have
\begin{align*}
    f(x_{t+1})\leq f(x_t)+\langle \nabla f(x_t), x_{t+1}-x_t\rangle+\frac{L}{2}\| x_{t+1}-x_t\|^2.
\end{align*}
Taking expectation w.r.t. the randomness at round $t$ and re-arranging terms, we obtain
\begin{align}
    &\mathbb E[f(x_{t+1})]-f(x_t) \nonumber\\
    &\leq -\eta\mathbb E\Big[\big\langle \nabla f(x_t), \frac{\bar \del_t}{\sqrt{\hat v_t+\epsilon}}\big\rangle \Big]+\eta \mathbb E\Big[\big\langle \nabla f(x_t), \frac{\beta_1}{1-\beta_1}D_tm_{t-1}'+D_t\mathcal E_t\big\rangle\Big] \nonumber\\
    &\hspace{2in} +\frac{\eta^2L}{2}\mathbb E\Big[\|\frac{\bar \del_t}{\sqrt{\hat v_t+\epsilon}}-\frac{\beta_1}{1-\beta_1}D_tm_{t-1}'- D_t\mathcal E_t\|^2\Big] \nonumber\\
    &=\underbrace{-\eta\mathbb E\Big[\big\langle \nabla f(\theta_t), \frac{\bar \del_t}{\sqrt{\hat v_t+\epsilon}}\big\rangle\Big]}_{I}+\underbrace{\eta \mathbb E\Big[\big\langle \nabla f(x_t), \frac{\beta_1}{1-\beta_1}D_tm_{t-1}'+D_t\mathcal E_t\big\rangle\Big]}_{II} \nonumber\\
    &\hspace{0.5in} +\underbrace{\frac{\eta^2L}{2}\mathbb E\Big[\|\frac{\bar \del_t}{\sqrt{\hat v_t+\epsilon}}-\frac{\beta_1}{1-\beta_1}D_tm_{t-1}'- D_t\mathcal E_t\|^2\Big]}_{III}+\underbrace{\eta\mathbb E\Big[\big\langle \nabla f(\theta_t)-\nabla f(x_t), \frac{\bar \del_t}{\sqrt{\hat v_t+\epsilon}} \big\rangle\Big]}_{IV}, \label{eq0}
\end{align}

\vspace{0.1in}
\noindent \textbf{Bounding term I.} We have
\begin{align}
    \bm I&=-\eta\mathbb E\Big[\big\langle \nabla f(\theta_t), \frac{\bar \del_t}{\sqrt{\hat v_{t-1}+\epsilon}}\big\rangle\Big]-\eta\mathbb E\Big[\big\langle \nabla f(\theta_t), (\frac{1}{\sqrt{\hat v_t+\epsilon}}-\frac{1}{\sqrt{\hat v_{t-1}+\epsilon}})\bar \del_t \big\rangle \Big] \nonumber\\
    &\leq -\eta\mathbb E\Big[\big\langle \nabla f(\theta_t), \frac{\bar \del_t}{\sqrt{\hat v_{t-1}+\epsilon}}\big\rangle\Big]+\eta \eta_l K G^2\mathbb E[\|D_t\|_1], \label{eq:I0}
\end{align}
where we use Assumption~\ref{ass:boundgrad} on the stochastic gradient magnitude. The last inequality holds by simply bounding the aggregated local model update by $$\|\bar\del_t\|\leq\frac{1}{n}\sum_{i=1}^n\|\eta_l\sum_{k=1}^K g_{t,i}^{(k)}\|\leq \eta_l KG,$$
and the fact that for any vector in $\mathbb R^d$, the $l_2$ norm is upper bounded by the $l_1$ norm.

Regarding the first term in \eqref{eq:I0}, we have
\begin{align*}
    &-\eta\mathbb E\Big[\langle \nabla f(\theta_t), \frac{\bar \del_t}{\sqrt{\hat v_{t-1}+\epsilon}}\Big]
    =-\eta \mathbb E\Big[\langle \frac{\nabla f(\theta_t)}{\sqrt{\hat v_{t-1}+\epsilon}}, \bar\del_t-\eta_l K\nabla f(\theta_t)+\eta_l K\nabla f(\theta_t)\rangle\Big]\\
    &=-\eta \eta_l K\mathbb E\Big[\frac{\|\nabla f(\theta_t)\|^2}{\sqrt{\hat v_{t-1}+\epsilon}}\Big]+\eta \mathbb E\Big[\langle \frac{\nabla f(\theta_t)}{\sqrt{\hat v_{t-1}+\epsilon}},-\bar\del_t+\eta_l K\nabla f(\theta_t) \rangle\Big]\\
    &\overset{(a)}{\leq }-\frac{\eta \eta_l K}{\sqrt{\frac{4\eta_l^2(1+q^2)^3K^2}{(1-q^2)^2}G^2+\epsilon}}\mathbb E\big[\|\nabla f(\theta_t)\|^2\big]+\eta \big\langle \frac{\nabla f(\theta_t)}{\sqrt{\hat v_{t-1}+\epsilon}},\mathbb E\big[-\frac{1}{n}\sum_{i=1}^n \sum_{k=1}^K \eta_l g_{t,i}^{(k)}+\eta_l K\nabla f(\theta_t)\big] \big\rangle\\
    &\overset{(b)}{=} -\frac{\eta \eta_l K}{\sqrt{\frac{4\eta_l^2(1+q^2)^3K^2}{(1-q^2)^2}G^2+\epsilon}}\mathbb E\big[\|\nabla f(\theta_t)\|^2\big] \\
    &\hspace{1.2in} + \eta\underbrace{\big\langle \frac{\sqrt{\eta_l}\nabla f(\theta_t)}{(\hat v_{t-1}+\epsilon)^{1/4}},\mathbb E\Big[\frac{\sqrt{\eta_l}}{n (\hat v_{t-1}+\epsilon)^{1/4}}(-\sum_{i=1}^n \sum_{k=1}^K \nabla f_i(\theta_{t,i}^{(k)})+K\nabla f(\theta_t))\Big] \big\rangle}_{V},
\end{align*}
where (a) uses Lemma~\ref{lemma:bound v_t} and (b) is due to Assumption~\ref{ass:var} that $g_{t,i}^{(k)}$ is an unbiased estimator of $\nabla f_i(\theta_{t,i}^{(k)})$. To bound term \textbf{V}, we use the similar proof structure as in the proof of Lemma~\ref{lemma:inner-product}. Specifically, we have
\begin{align*}
    \bm V&\leq \frac{\eta_l K}{2\sqrt\epsilon}\mathbb E\big[\|\nabla f(\theta_t)\|^2\big]+\frac{\eta_l}{2K\sqrt\epsilon} \mathbb E\big[\| \frac{1}{n}\sum_{i=1}^n\sum_{k=1}^K (\nabla f_i(\theta_{t,i}^{(k)})-\nabla f_i(\theta_t)) \|^2\big]\\
    &\leq \frac{\eta_l K}{2\sqrt\epsilon}\mathbb E\big[\|\nabla f(\theta_t)\|^2\big]+ \frac{\eta_l}{2nK\sqrt\epsilon} \mathbb E\big[ \sum_{i=1}^n \|\sum_{k=1}^K (\nabla f_i(\theta_{t,i}^{(k)})-\nabla f_i(\theta_t)) \|^2\big]\\
    &\leq \frac{\eta_l K}{2\sqrt\epsilon}\mathbb E\big[\|\nabla f(\theta_t)\|^2\big]+ \frac{\eta_l}{2n\sqrt\epsilon} \mathbb E\big[ \sum_{i=1}^n \sum_{k=1}^K \|\nabla f_i(\theta_{t,i}^{(k)})-\nabla f_i(\theta_t) \|^2\big]\\
    &\leq \frac{\eta_l K}{2\sqrt\epsilon}\mathbb E\big[\|\nabla f(\theta_t)\|^2\big]+ \frac{\eta_l L^2}{2n\sqrt\epsilon} \mathbb E\big[ \sum_{i=1}^n \sum_{k=1}^K \|\theta_{t,i}^{(k)}-\theta_t \|^2\big],
\end{align*}
where the last inequality is a result of the $L$-smoothness assumption on the loss function $f_i(x)$. Applying Lemma~\ref{lemma:consensus} to the consensus error, we can further bound term $V$ by
\begin{align*}
    \bm V&\leq \frac{\eta_l K}{2\sqrt\epsilon}\mathbb E\big[\|\nabla f(\theta_t)\|^2\big]+\frac{\eta_l K L^2}{2\sqrt\epsilon}\big[5\eta_l^2 K(\sigma^2+6K\sigma_g^2)+30\eta_l^2 K^2\mathbb E[\|\nabla f(\theta_t)\|^2]\big]\\
    &\leq \frac{47\eta_l K}{64\sqrt\epsilon}\mathbb E\big[\|\nabla f(\theta_t)\|^2\big]+\frac{5\eta_l^3 K^2 L^2}{2\sqrt\epsilon}(\sigma^2+6K\sigma_g^2),
\end{align*}
when we choose $\eta_l\leq \frac{1}{8KL}$. Further, if we set $\eta_l\leq \frac{\sqrt{15}(1-q^2)\sqrt\epsilon}{14 (1+q^2)^{1.5}KG}$, we have
\begin{align*}
    \frac{4\eta_l^2(1+q^2)^3K^2}{(1-q^2)^2}G^2+\epsilon \leq \frac{60}{196}\epsilon+\epsilon = \frac{64}{49}\epsilon.
\end{align*}

Hence, as $\frac{47}{64}<\frac{3}{4}$, we can establish from (\ref{eq:I0}) that
\begin{align}
    \bm I \leq -\frac{\eta\eta_l K}{8 \sqrt\epsilon} \mathbb E\big[\| \nabla f(\theta_t) \|^2\big] +\frac{5\eta\eta_l^3 K^2 L^2}{2\sqrt\epsilon}(\sigma^2+6K\sigma_g^2) + \eta \eta_l K G^2\mathbb E[\|D_t\|_1].  \label{eq:I}
\end{align}

\vspace{0.1in}
\noindent \textbf{Bounding term II.} By Lemma~\ref{lemma:bound big E_t}, we know that $\|\mathcal E_t\|\leq \frac{2\eta_l qKG}{1-q^2}$, and by Lemma~\ref{lemma:m_t,m_t'}, $\|m_t'\|\leq \eta_l KG$. Thus, we have
\begin{align}
    \bm{II}&\leq\eta \Big(\mathbb E\big[\langle  \nabla f(\theta_t),\frac{\beta_1}{1-\beta_1}D_tm_{t-1}'+D_t\mathcal E_t\rangle\big]+\mathbb E\big[\langle  \nabla f(x_t)-\nabla f(\theta_t),\frac{\beta_1}{1-\beta_1}D_tm_{t-1}'+D_t\mathcal E_t\rangle\big] \Big) \nonumber\\
    &\leq \eta\mathbb E\big[\|\nabla f(\theta_t)\|\|\frac{\beta_1}{1-\beta_1}D_tm_{t-1}'+D_t\mathcal E_t\|\big] \nonumber\\
    &\hspace{1.5in}+\eta^2 \ L \mathbb E\big[\|\frac{\frac{\beta_1}{1-\beta_1}m_{t-1}'+\mathcal E_t}{\sqrt{\hat v_{t-1}+\epsilon}}\| \|\frac{\beta_1}{1-\beta_1}D_tm_{t-1}'+D_t\mathcal E_t\|\big] \nonumber\\
    &\leq \eta\eta_l C_1 K G^2 \mathbb E[\|D_t\|_1]+\frac{\eta^2 \eta_l^2 C_1^2 L K^2 G^2}{\sqrt\epsilon}\mathbb E[\|D_t\|_1],  \label{eq:II}
\end{align}
where $C_1\eqdef \frac{\beta_1}{1-\beta_1}+\frac{2q}{1-q^2}$, and the second inequality is due to the smoothness of $f(\theta)$.


\vspace{0.1in}
\noindent \textbf{Bounding term III.} This term can be bounded as follows:
\begin{align}
    \bm{III}&\leq \eta^2 L\mathbb E\big[\|\frac{\bar\del_t}{\sqrt{\hat v_t+\epsilon}}\|^2\big]+\eta^2 L\mathbb E\big[\|\frac{\beta_1}{1-\beta_1}D_tm_{t-1}'- D_t\mathcal E_t\|^2\big] \nonumber\\
    &\leq \frac{\eta^2 L}{\epsilon}\mathbb E\big[\|\bar\del_t\|^2\big]+\eta^2 L\mathbb E\big[\|D_t(\frac{\beta_1}{1-\beta_1}m_{t-1}'-\mathcal E_t)\|^2\big] \nonumber\\
    &\leq  \frac{\eta^2L(2\eta_l^2 K^2+120\eta_l^4 K^4 L^2)}{\epsilon} \mathbb E\big[\|\nabla f(\theta_t)\|^2\big] +\frac{4\eta^2\eta_l^2 KL}{n\epsilon}\sigma^2 \nonumber\\
    &\hspace{1.2in} +\frac{20\eta^2\eta_l^4K^3L^3}{\epsilon}(\sigma^2+6K\sigma_g^2)+\eta^2 \eta_l^2 C_1^2 L K^2 G^2 \mathbb E[\|D_t\|^2],  \label{eq:III}
\end{align}
where we apply Lemma~\ref{lemma:bound delta} and use similar argument as in bounding term II.

\vspace{0.1in}
\noindent \textbf{Bounding term IV.} Lastly, for term IV, we have for some $\rho>0$,
\begin{align}
    \bm{IV}
    &=\eta\mathbb E\Big[\big\langle \nabla f(\theta_t)-\nabla f(x_t), \frac{\bar\del_t}{\sqrt{\hat v_{t-1}+\epsilon}} \big\rangle\Big]+\eta\mathbb E\Big[\big\langle \nabla f(\theta_t)-\nabla f(x_t), (\frac{1}{\sqrt{\hat v_t+\epsilon}}-\frac{1}{\sqrt{\hat v_{t-1}+\epsilon}})\bar \del_t \big\rangle\Big] \nonumber\\
    &\overset{(a)}{\leq}  \frac{\eta \rho}{2\epsilon}\mathbb E\big[\|\bar\del_t\|^2\big]+\frac{\eta}{2\rho}\mathbb E\big[\|\nabla f(\theta_t)-\nabla f(x_t)\|^2\big]+\eta^2 L\mathbb E\Big[\|\frac{\frac{\beta_1}{1-\beta_1}m_{t-1}'+\mathcal E_t}{\sqrt{\hat v_{t-1}+\epsilon}}\|\|D_t \del_t\| \Big] \nonumber\\
    &\overset{(b)}{\leq} \frac{\rho \eta(\eta_l^2K^2+60\eta_l^4 K^4 L^2) }{\epsilon}\mathbb E\big[\|\nabla f(\theta_t)\|^2\big] + \frac{2\rho\eta \eta_l^2 K}{\epsilon n}\sigma^2 + \frac{10\rho\eta\eta_l^4 K^3L^2}{\epsilon}(\sigma^2+6K\sigma_g^2) \nonumber \\
    &\hspace{1.8in} + \frac{\eta^3 L^2}{2\rho}\mathbb E\Big[\|\frac{\frac{\beta_1}{1-\beta_1}m_{t-1}'+\mathcal E_t}{\sqrt{\hat v_{t-1}+\epsilon}}\|^2 \Big]+\frac{\eta^2\eta_l^2 C_1 LK^2G^2}{\sqrt\epsilon} \mathbb E[\|D_t\| ]  \nonumber \\
    &\leq \frac{\rho \eta\eta_l^2K^2(60\eta_l^2 K^2 L^2+1) }{\epsilon}\mathbb E\big[\|\nabla f(\theta_t)\|^2\big] + \frac{2\rho\eta \eta_l^2 K}{\epsilon n}\sigma^2 + \frac{10\rho\eta\eta_l^4 K^3L^2}{\epsilon}(\sigma^2+6K\sigma_g^2) \nonumber \\
    &\hspace{0.8in} + \frac{\eta^3 L^2}{\rho \epsilon}\Big[ \frac{\beta_1^2}{(1-\beta_1)^2}\mathbb E\big[\|m_t'\|^2\big]+ \mathbb E\big[\|\mathcal E_t\|^2\big]\Big]+\frac{\eta^2\eta_l^2 C_1 LK^2G^2}{\sqrt\epsilon} \mathbb E[\|D_t\|_1],  \label{eq:IV}
\end{align}
where (a) is a consequence of Young's inequality ($\rho$ will be specified later) and the smoothness Assumption~\ref{ass:smooth}, and (b) is based on Lemma~\ref{lemma:bound delta}.

After we have bounded all four terms in (\ref{eq0}), the next step is to gather the ingredients by taking the telescope sum over $t=1,...,T$. For the ease of presentation, we first do this for the third term in~(\ref{eq:IV}). According to Lemma~\ref{lemma:m_t,m_t'} and Lemma~\ref{lemma:bound big E_t}, summing over $t=1,...,T$, we conclude
\begin{align}
    &\sum_{t=1}^T\frac{\eta^3 L^2}{\rho\epsilon}\Big[ \frac{\beta_1^2}{(1-\beta_1)^2}\mathbb E[\|m_t'\|^2]+ \mathbb E[\|\mathcal E_t\|^2]\Big] \nonumber\\
    &\leq \frac{\eta^3 \beta_1^2L^2}{\rho(1-\beta_1)^2\epsilon}\Big[ 2\eta_l^2 K^2(60\eta_l^2 K^2 L^2+1)\sum_{t=1}^T \mathbb E\big[\|\nabla f(\theta_t)\|^2\big] +4\frac{T\eta_l^2 K}{n}\sigma^2+20T\eta_l^4K^3L^2(\sigma^2+6K\sigma_g^2) \Big]  \nonumber\\
    &+ \frac{\eta^3q^2 L^2}{\rho(1-q^2)^2\epsilon}\Big[ 8\eta_l^2 K^2 (60\eta_l^2 K^2 L^2+1) \sum_{t=1}^T \mathbb E\big[\|\nabla f(\theta_\tau)\|^2\big] +\frac{16T\eta_l^2 K}{ n}\sigma^2+80T\eta_l^4 K^3L^2(\sigma^2+6K\sigma_g^2) \Big]  \nonumber \\
    &\leq \frac{2\eta^3 \eta_l^2 C_2 K^2 L^2}{\rho\epsilon}(60 \eta_l^2K^2L^2+1) \sum_{t=1}^T \mathbb E\big[\|\nabla f(\theta_t)\|^2\big]  \nonumber\\
    &\hspace{1.5in} + \frac{4T\eta^3 \eta_l^2 C_2KL^2}{\rho n\epsilon}\sigma^2 + \frac{20 T \eta^3\eta_l^4 C_2 K^3 L^4 }{\rho\epsilon}(\sigma^2+6K\sigma_g^2), \label{eq:IV error}
\end{align}
with $C_2\eqdef \frac{\beta_1^2}{(1-\beta_1)^2}+\frac{4q^2}{(1-q^2)^2}$.

\vspace{0.1in}
\noindent \textbf{Putting together.} We are in the position to combine pieces together to get our final result by integrating \eqref{eq:I}, \eqref{eq:II}, \eqref{eq:III}, \eqref{eq:IV} and \eqref{eq:IV error} into \eqref{eq0} and taking the telescoping sum over $t=1,...,T$. After re-arranging terms, when $\eta_l\leq \min\big\{\frac{1}{8KL}, \frac{(1-q^2)\sqrt\epsilon}{4 (1+q^2)^{1.5}KG}\big\}$, we have
\begin{align}
    &\mathbb E[f(x_{T+1})-f(x_1)]  \nonumber\\
    &\leq -\frac{\eta\eta_l K}{8 \sqrt\epsilon} \sum_{t=1}^T\mathbb E\big[\| \nabla f(\theta_t) \|^2\big] +\frac{5T\eta\eta_l^3 K^2 L^2}{2\sqrt\epsilon}(\sigma^2+6K\sigma_g^2) + \eta \eta_l K G^2\sum_{t=1}^T\mathbb E[\|D_t\|_1]  \nonumber\\
    & + \eta\eta_l C_1 K G^2 \sum_{t=1}^T\mathbb E[\|D_t\|_1]+\frac{\eta^2 \eta_l^2 C_1^2 L K^2 G^2}{\sqrt\epsilon}\sum_{t=1}^T\mathbb E[\|D_t\|_1] \nonumber \\
    & + \frac{\eta^2L(2\eta_l^2 K^2+120\eta_l^4 K^4 L^2)}{\epsilon} \sum_{t=1}^T\mathbb E\big[\|\nabla f(\theta_t)\|^2\big] +\frac{4T\eta^2\eta_l^2 KL}{n\epsilon}\sigma^2 \nonumber\\
    &\hspace{1.2in} +\frac{20T\eta^2\eta_l^4K^3L^3}{\epsilon}(\sigma^2+6K\sigma_g^2)+\eta^2 \eta_l^2 C_1^2 L K^2 G^2 \sum_{t=1}^T\mathbb E[\|D_t\|^2] \nonumber \\
    & + \frac{\rho \eta\eta_l^2K^2(60\eta_l^2 K^2 L^2+1) }{\epsilon}\sum_{t=1}^T\mathbb E\big[\|\nabla f(\theta_t)\|^2\big] + \frac{2T\rho\eta \eta_l^2 K}{\epsilon n}\sigma^2 + \frac{10T\rho\eta\eta_l^4 K^3L^2}{\epsilon}(\sigma^2+6K\sigma_g^2) \nonumber \\
    & + \frac{2\eta^3 \eta_l^2 C_2 K^2 L^2}{\rho\epsilon}(60 \eta_l^2K^2L^2+1) \sum_{t=1}^T \mathbb E\big[\|\nabla f(\theta_t)\|^2\big] + \frac{4T\eta^3 \eta_l^2 C_2KL^2}{\rho n\epsilon}\sigma^2 \nonumber\\
    &\hspace{1.5in} + \frac{20 T \eta^3\eta_l^4 C_2 K^3 L^4 }{\rho\epsilon}(\sigma^2+6K\sigma_g^2) +\frac{\eta^2\eta_l^2 C_1 LK^2G^2}{\sqrt\epsilon} \sum_{t=1}^T\mathbb E[\|D_t\|_1] \nonumber \\
    &= \Upsilon_1 \cdot \sum_{t=1}^T\mathbb E\big[\| \nabla f(\theta_t) \|^2\big] + \Upsilon_2\cdot (\sigma^2+6K\sigma_g^2) + \Upsilon_3\cdot \sigma^2 \nonumber \\
    &\hspace{1.5in} + \Upsilon_4\cdot\sum_{t=1}^T\mathbb E[\|D_t\|_1] +\eta^2 \eta_l^2 C_1^2 L K^2 G^2 \sum_{t=1}^T\mathbb E[\|D_t\|^2],  \label{eq:final1}
\end{align}
where
\begin{align}
    \Upsilon_1&= -\frac{\eta\eta_l K}{8 \sqrt\epsilon}+\frac{\eta^2L(2\eta_l^2 K^2+120\eta_l^4 K^4 L^2)}{\epsilon} \nonumber\\
    &\hspace{1.2in} +\frac{\rho \eta\eta_l^2K^2(60\eta_l^2 K^2 L^2+1) }{\epsilon}+\frac{2\eta^3 \eta_l^2 C_2 K^2 L^2}{\rho\epsilon}(60 \eta_l^2K^2L^2+1) \nonumber\\
    &\leq  -\frac{\eta\eta_l K}{8 \sqrt\epsilon}+\frac{2\eta^2\eta_l^2 K^2 L}{\epsilon}+\frac{120\eta^2\eta_l^4 K^4 L^3}{\epsilon} +\frac{2\rho \eta\eta_l^2K^2}{\epsilon}+\frac{4\eta^3 \eta_l^2 C_2 K^2 L^2}{\rho\epsilon}, \label{eq:Upsilon_1}\\
    \Upsilon_2&= \frac{5T\eta\eta_l^3 K^2 L^2}{2\sqrt\epsilon}+\frac{20T\eta^2\eta_l^4K^3L^3}{\epsilon}+\frac{10T\rho\eta\eta_l^4 K^3L^2}{\epsilon}+\frac{20 T \eta^3\eta_l^4 C_2 K^3 L^4 }{\rho\epsilon}, \nonumber\\
    \Upsilon_3&= \frac{4T\eta^2\eta_l^2 KL}{n\epsilon}+\frac{2T\rho\eta \eta_l^2 K}{n\epsilon}+ \frac{4T\eta^3 \eta_l^2 C_2KL^2}{\rho n\epsilon}, \nonumber\\
    \Upsilon_4&= \eta\eta_l (C_1+1) K G^2+\frac{\eta^2 \eta_l^2 C_1^2 L K^2 G^2}{\sqrt\epsilon}+\frac{\eta^2\eta_l^2 C_1 LK^2G^2}{\sqrt\epsilon},  \nonumber
\end{align}
where to bound $\Upsilon_1$ we use the fact that $\eta_l\leq \frac{1}{8KL}$. We now look at the upper bound (\ref{eq:Upsilon_1}) of $\Upsilon_1$ which contains 5 terms. In the following, we choose $\rho\equiv L\eta$ in (\ref{eq:IV}) and (\ref{eq:IV error}). Suppose we choose $\epsilon<1$. Then, when the local learning rate satisfies
\begin{align*}
    \eta_l&\leq \frac{1}{K}\min \Big\{ \frac{1}{8L}, \frac{(1-q^2)\sqrt\epsilon}{4(1+q^2)^{1.5}G}, \frac{\sqrt\epsilon}{128\eta L},  \frac{\sqrt\epsilon}{256C_2\eta L}, \big( \frac{\sqrt{\epsilon}}{7680\eta}\big)^{1/3}\frac{1}{L} \Big\} \\
    &\leq \frac{\sqrt\epsilon}{8KL}\min \Big\{ \frac{1}{\sqrt\epsilon}, \frac{2(1-q^2)L}{(1+q^2)^{1.5}G}, \frac{1}{\max\{16,32C_2\}\eta},   \frac{1}{3\eta^{1/3}} \Big\},
\end{align*}
each of the last four terms can be bounded by $\frac{\eta\eta_l K}{48\sqrt\epsilon}$. Thus, under this learning rate setting,
$$\Upsilon_1\leq -\frac{\eta\eta_l K}{16\sqrt\epsilon}.$$
Taking the above into (\ref{eq:final1}), we arrive at
\begin{align*}
    \frac{\eta\eta_l K}{16\sqrt\epsilon} \sum_{t=1}^T\mathbb E\big[\| \nabla f(\theta_t) \|^2\big]&\leq f(x_1)-\mathbb E[f(x_{T+1})]+ \Upsilon_2 \cdot (\sigma^2+6K\sigma_g^2) \\
    &\hspace{0.8in}+ \Upsilon_3\cdot \sigma^2  + \Upsilon_4\cdot \frac{d}{\sqrt\epsilon}+\frac{\eta^2 \eta_l^2 C_1^2 L K^2 G^2d}{\epsilon},
\end{align*}
where Lemma~\ref{lemma:bound difference} on the difference sequence $D_t$ is applied. Consequently, we have
\begin{align*}
    \frac{1}{T}\sum_{t=1}^T\mathbb E\big[\| \nabla f(\theta_t) \|^2\big]&\lesssim \frac{f(x_1)-\mathbb E[f(x_{T+1})]}{\eta\eta_l TK} + \widetilde\Upsilon_2 \cdot (\sigma^2+6K\sigma_g^2)+ \widetilde\Upsilon_3\cdot\sigma^2 \\
    &\hspace{0.7in} + \frac{(C_1+1)  G^2d}{T\sqrt\epsilon}+\frac{2\eta \eta_l C_1^2 L K G^2d}{T\epsilon} +\frac{\eta\eta_l C_1^2 L K G^2d}{T\epsilon} \\
    &\leq \frac{f(x_1)-\mathbb E[f(x_{T+1})]}{\eta\eta_l TK} + \widetilde\Upsilon_2 \cdot (\sigma^2+6K\sigma_g^2)+  \widetilde\Upsilon_3\cdot\sigma^2  \\
    &\hspace{1.8in} + \frac{(C_1+1)  G^2d}{T\sqrt\epsilon}+\frac{3\eta \eta_l C_1^2 L K G^2d}{T\epsilon},
\end{align*}
where we make simplification at the second inequality using the fact that $C_1\leq C_1^2$ since $C_1\geq 1$. Moreover, $\widetilde\Upsilon_2$ and $\widetilde\Upsilon_3$ is defined as (recall that we have chosen $\rho\equiv L\eta$)
\begin{align*}
    \widetilde\Upsilon_2&= \frac{5\eta_l^2 K L^2}{2\sqrt\epsilon}+\frac{20\eta\eta_l^3 K^2L^3}{\epsilon}+\frac{10\eta\eta_l^3K^2L^3}{\epsilon}+\frac{20\eta\eta_l^3C_2K^2L^3}{\epsilon}\\
    &\leq \frac{5\eta_l^2 K L^2}{2\sqrt\epsilon}+\frac{\eta\eta_l^3(30+20C_2) K^2L^3}{\epsilon},\\
    \widetilde\Upsilon_3&=\frac{4\eta\eta_l L}{n\epsilon}+\frac{2\eta \eta_lL}{n\epsilon}+\frac{4\eta\eta_lC_2L}{n\epsilon}\leq \frac{\eta\eta_l L (6+4C_2)}{n\epsilon}.
\end{align*}
Finally, to connect the virtual iterates $x_t$ with the actual iterates $\theta_t$, note that $x_1=\theta_1$, and $f(x_{T+1})\geq f(\theta^*)$ since $\theta^*=\arg\min_\theta f(\theta)$. Replacing $\widetilde\Upsilon_2$ and $\widetilde\Upsilon_3$ with above upper bounds, this eventually leads to the bound
\begin{align*}
    \frac{1}{T}\sum_{t=1}^T\mathbb E\big[\| \nabla f(\theta_t) \|^2\big]&\lesssim  \frac{f(\theta_1)- f(\theta^*)}{\eta\eta_l TK} +\Big[ \frac{5\eta_l^2 K L^2}{2\sqrt\epsilon}+\frac{\eta\eta_l^3(30+20C_1^2) K^2L^3}{\epsilon} \Big] (\sigma^2+6K\sigma_g^2)   \\
    &\hspace{0.5in} + \frac{\eta\eta_l L (6+4C_1^2)}{n\epsilon}\sigma^2+ \frac{(C_1+1)  G^2d}{T\sqrt\epsilon}+\frac{3\eta \eta_l C_1^2 L K G^2d}{T\epsilon},
\end{align*}
which gives the desired result. Here we use the fact that $C_2\leq C_1^2$. This completes the proof.
\end{proof}



\subsection{Proof of Theorem~\ref{theo:rate SGD}: Fed-EF-SGD}  \label{app sec:SGD}

\begin{proof}

Now, we prove the variant of Fed-EF with SGD as the central server update rule. The proof follows the same routine as the one for Fed-EF-AMS, but is simpler since there are no moving average terms that need to be handled. Note that for this algorithm, we do not need Assumption~\ref{ass:boundgrad} that the stochastic gradients are uniformly bounded.

\vspace{0.1in}
\noindent For Fed-EF-SGD, consider the virtual sequence
\begin{align}
    x_{t+1} &= \theta_{t+1}-\eta \bar e_{t+1} \nonumber\\
    &=\theta_t-\eta\overline{\widetilde\del}_t -\eta\bar e_{t+1} \nonumber\\
    &=\theta_t-\frac{\eta}{n}\sum_{i=1}^n (\tilde\del_{t,i}+e_{t+1,i}) \nonumber\\
    &=\theta_t-\eta\bar\del_t-\eta\bar e_t \nonumber\\
    &=x_t -\eta\bar\del_t, \label{eq:virtual-SGD}
\end{align}
where the second last equality follows from the update rule that $\tilde\del_{t,i}+e_{t+1,i}=\del_{t,i}+e_{t,i}$ for all $i\in [n]$ and $t\in [T]$.

By the smoothness Assumption~\ref{ass:smooth}, we have
\begin{align*}
    f(x_{t+1})\leq f(x_t)+\langle \nabla f(x_t), x_{t+1}-x_t\rangle+\frac{L}{2}\| x_{t+1}-x_t\|^2.
\end{align*}
Taking expectation w.r.t. the randomness at round $t$ gives
\begin{align}
    &\mathbb E[f(x_{t+1})]-f(x_t) \nonumber\\
    &\leq -\eta\mathbb E\big[\langle \nabla f(x_t), \bar \del_t\rangle\big] +\frac{\eta^2L}{2}\mathbb E\big[\|\bar \del_t\|^2\big] \nonumber\\
    &=-\eta\mathbb E\big[\langle \nabla f(\theta_t), \bar \del_t\rangle\big] +\frac{\eta^2L}{2}\mathbb E\big[\|\bar \del_t\|^2\big]+\eta\mathbb E\big[\langle \nabla f(\theta_t)-\nabla f(x_t),\bar\del_t \rangle\big].  \label{sgd-eq0}
\end{align}
We can bound the first term in (\ref{sgd-eq0}) using similar technique as bounding term \textbf{I} in the proof of Fed-EF-AMS. Specifically, we have
\begin{align*}
    -\eta\mathbb E\big[\langle \nabla f(\theta_t), \bar \del_t\rangle\big]&=-\eta\mathbb E\big[\langle \nabla f(\theta_t), \bar\del_t -\eta_l K\nabla f(\theta_t)+\eta_l K\nabla f(\theta_t) \rangle\big] \\
    &=-\eta\eta_l K\mathbb E\big[\| \nabla f(\theta_t) \|^2\big] + \eta \mathbb E\big[\langle \nabla f(\theta_t),-\bar\del_t+\eta_l K\nabla f(\theta_t) \rangle\big].
\end{align*}
With $\eta_l\leq \frac{1}{8KL}$, applying Lemma~\ref{lemma:inner-product}, we have
\begin{align*}
    -\eta\mathbb E\big[\langle \nabla f(\theta_t), \bar \del_t\rangle\big]&\leq -\eta\eta_l K\mathbb E\big[\| \nabla f(\theta_t) \|^2\big]+
    \frac{3\eta\eta_l K}{4}\mathbb E\big[\|\nabla f(\theta_t)\|^2\big]+\frac{5\eta\eta_l^3 K^2 L^2}{2}(\sigma^2+6K\sigma_g^2) \\
    &=-\frac{\eta\eta_l K}{4}\mathbb E\big[\|\nabla f(\theta_t)\|^2\big]+\frac{5\eta\eta_l^3 K^2 L^2}{2}(\sigma^2+6K\sigma_g^2).
\end{align*}
The second term in (\ref{sgd-eq0}) can be bounded using Lemma~\ref{lemma:bound delta} as
\begin{align*}
    \frac{\eta^2L}{2}\mathbb E\big[\|\bar\del_t\|^2\big]&\leq \eta^2\eta_l^2 K^2 L(60\eta_l^2 K^2 L^2+1)\mathbb E\big[\|\nabla f(\theta_t)\|^2\big] \\
    &\hspace{1in} +\frac{2\eta^2\eta_l^2 KL}{n}\sigma^2+10\eta^2\eta_l^4K^3L^3(\sigma^2+6K\sigma_g^2).
\end{align*}
The last term in (\ref{sgd-eq0}) can be bounded similarly as \textbf{VI} in Fed-EF-AMS by
\begin{align}
    &\eta\mathbb E\big[\langle \nabla f(\theta_t)-\nabla f(x_t),\bar\del_t \rangle\big] \nonumber\\
    &\leq \frac{\eta\rho}{2}\mathbb E\big[\|\bar\del_t\|^2\big]+\frac{\eta}{2\rho} \mathbb E\big[\| \nabla f(\theta_t)-\nabla f(x_t) \|^2\big] \nonumber\\
    &\overset{(a)}{\leq} \frac{\eta^2}{2}\mathbb E\big[\|\bar\del_t\|^2\big]+\frac{\eta^2 L^2}{2} \mathbb E\big[\| \bar e_t \|^2\big]  \label{eq:sgd1} \\
    &\overset{(b)}{\leq} \frac{\eta^2 L}{2}\Big[ 2\eta_l^2 K^2(60\eta_l^2 K^2 L^2+1)\mathbb E\big[\|\nabla f(\theta_t)\|^2\big] +4\frac{\eta_l^2 K}{n}\sigma^2+20\eta_l^4K^3L^2(\sigma^2+6K\sigma_g^2) \Big] \nonumber\\
    &+\frac{\eta^2L}{2}\Big[ \frac{4q^2\eta_l^2 K^2 (60\eta_l^2 K^2 L^2+1)}{1-q^2}\sum_{\tau=1}^t (\frac{1+q^2}{2})^{t-\tau} \mathbb E\big[\|\nabla f(\theta_{\tau})\|^2\big] \nonumber \\
    &\hspace{1.8in} + \frac{16\eta_l^2 q^2K}{(1-q^2)^2 n}\sigma^2+\frac{80\eta_l^4q^2 K^3L^2}{(1-q^2)^2}(\sigma^2+6K\sigma_g^2) \Big],  \nonumber
\end{align}
where (a) uses Young's inequality and (b) uses Lemma~\ref{lemma:bound delta} and Lemma~\ref{lemma:bound e_t}. When taking telescoping sum of this term over $t=1,...,T$, again using the geometric summation trick, we further obtain
\begin{align*}
    &\eta\sum_{t=1}^T\mathbb E\big[\langle \nabla f(\theta_t)-\nabla f(x_t),\bar\del_t \rangle\big]\\
    &\leq \eta^2\eta_l^2C_1 K^2L(60\eta_l^2 K^2 L^2+1)\sum_{t=1}^T\mathbb E\big[\|\nabla f(\theta_t)\|^2\big] \\
    &\hspace{1.5in} +\frac{2T\eta^2\eta_l^2C_1 KL}{n}\sigma^2+10T\eta^2\eta_l^4C_1K^3L^3(\sigma^2+6K\sigma_g^2),
\end{align*}
where $C_1=1+\frac{4q^2}{(1-q^2)^2}$. Now, taking the summation over all terms in (\ref{sgd-eq0}), we get
\begin{align*}
    \mathbb E[f(x_{t+1})]-f(x_1)&\leq \Big(-\frac{\eta\eta_l K}{4} + \eta^2\eta_l^2(C_1+1) K^2L(60\eta_l^2 K^2 L^2+1)\Big)\sum_{t=1}^T\mathbb E\big[\|\nabla f(\theta_t)\|^2\big]
    &\hspace{1.5in} +\frac{2T\eta^2\eta_l^2C_1 KL}{n}\sigma^2 \\
    &\hspace{0.5in} +\frac{2T\eta^2\eta_l^2(C_1+1) KL}{n}\sigma^2+10T\eta^2\eta_l^4(C_1+1)K^3L^3(\sigma^2+6K\sigma_g^2).
\end{align*}
Since $\eta_l\leq \frac{1}{8KL}$, we know that $60\eta_l^2 K^2 L^2+1<2$. Therefore, provided that the local learning rate is such that
\begin{align*}
    \eta_l\leq  \frac{1}{2KL\cdot\max\{4,\eta(C_1+1)\}},
\end{align*}
we have
\begin{align*}
    \frac{\eta\eta_lK}{8}\sum_{t=1}^T\mathbb E\big[\|\nabla f(\theta_t)\|^2\big]&\leq f(x_1)-\mathbb E[f(x_{t+1})]  +\frac{2T\eta^2\eta_l^2(C_1+1) KL}{n}\sigma^2 \\
    &\hspace{1.2in} +10T\eta^2\eta_l^4(C_1+1)K^3L^3(\sigma^2+6K\sigma_g^2),
\end{align*}
leading to
\begin{align*}
    \frac{1}{T}\sum_{t=1}^T\mathbb E\big[\|\nabla f(\theta_t)\|^2\big]&\lesssim \frac{f(x_1)-\mathbb E[f(x_{t+1})]}{\eta\eta_l TK}+\frac{2\eta\eta_l(C_1+1) L}{n}\sigma^2 \\
    &\hspace{1.2in} +10\eta\eta_l^3(C_1+1)K^2L^3(\sigma^2+6K\sigma_g^2)\\
    &\leq \frac{f(\theta_1)-f(\theta^*)}{\eta\eta_l TK}+\frac{2\eta\eta_l(C_1+1) L}{n}\sigma^2 \\
    &\hspace{1.2in} +10\eta\eta_l^3(C_1+1)K^2L^3(\sigma^2+6K\sigma_g^2),
\end{align*}
which concludes the proof.
\end{proof}

\subsection{Intermediate Lemmas}\label{app:lemmas}

In our analysis, we will make use of the following lemma on the consensus error. Note that this is a general result holding for algorithms (both Fed-EF-SGD and Fed-EF-AMS) with local SGD steps.

\begin{Lemma}[\cite{reddi2021adaptive}] \label{lemma:consensus}
For $\eta_l\leq \frac{1}{8LK}$, for any round $t$, local step $k\in[K]$ and client $i\in[n]$, under Assumption~\ref{ass:smooth} to Assumption~\ref{ass:var}, it holds that
\begin{align*}
    \mathbb E\big[\|\theta_{t,i}^{(k)}-\theta_t\|^2\big]\leq 5\eta_l^2 K(\sigma^2+6K\sigma_g^2)+30\eta_l^2 K^2\mathbb E\big[\|\nabla f(\theta_t)\|^2\big].
\end{align*}
\end{Lemma}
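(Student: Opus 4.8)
The plan is to prove the bound by a one-step recursion on the per-step consensus error $D_{t,i}^{(k)} \eqdef \E[\|\theta_{t,i}^{(k)}-\theta_t\|^2]$, unrolled over the $K$ local SGD steps. Since each client begins the round at $\theta_{t,i}^{(1)}=\theta_t$ we have $D_{t,i}^{(1)}=0$, and from the local update $\theta_{t,i}^{(k+1)}=\theta_{t,i}^{(k)}-\eta_l g_{t,i}^{(k)}$ I would first use Assumption~\ref{ass:var}(i)--(ii) (unbiasedness and bounded local variance) to split off the stochastic noise via $\E\|X\|^2=\|\E X\|^2+\E\|X-\E X\|^2$:
\[
D_{t,i}^{(k+1)}\leq \E\big[\|\theta_{t,i}^{(k)}-\theta_t-\eta_l\nabla f_i(\theta_{t,i}^{(k)})\|^2\big]+\eta_l^2\sigma^2,
\]
the cross term vanishing in conditional expectation.

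Next I would expand the deterministic part with the relaxed triangle inequality $\|a+b\|^2\leq(1+c)\|a\|^2+(1+1/c)\|b\|^2$ using $c=\Theta(1/K)$, so that the coefficient on $D_{t,i}^{(k)}$ is $1+\Theta(1/K)$ while the gradient term is inflated by only $\Theta(K)$. I then decompose $\nabla f_i(\theta_{t,i}^{(k)})=[\nabla f_i(\theta_{t,i}^{(k)})-\nabla f_i(\theta_t)]+[\nabla f_i(\theta_t)-\nabla f(\theta_t)]+\nabla f(\theta_t)$ and apply $\|x+y+z\|^2\leq 3(\|x\|^2+\|y\|^2+\|z\|^2)$: the first piece is controlled by $L$-smoothness (Assumption~\ref{ass:smooth}) as $3L^2D_{t,i}^{(k)}$, the second is the heterogeneity term, and the third is $3\|\nabla f(\theta_t)\|^2$. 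This yields $D_{t,i}^{(k+1)}\leq(1+\tfrac{1}{2K-1}+6K\eta_l^2L^2)D_{t,i}^{(k)}+6K\eta_l^2\|\nabla f_i(\theta_t)-\nabla f(\theta_t)\|^2+6K\eta_l^2\|\nabla f(\theta_t)\|^2+\eta_l^2\sigma^2$.

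The step-size restriction $\eta_l\leq \tfrac{1}{8LK}$ is exactly what makes $6K\eta_l^2L^2\leq\tfrac{3}{32K}$ negligible relative to $\tfrac{1}{2K-1}$, so the recursion coefficient is $1+O(1/K)$; unrolling $k-1\leq K$ times and using $(1+O(1/K))^K=O(1)$ together with $D_{t,i}^{(1)}=0$ converts the per-step additive terms into the stated $\eta_l^2K(\cdots)$ and $\eta_l^2K^2(\cdots)$ factors, the generous constants $5,30,6$ absorbing the geometric prefactor. Finally I would average the per-client recursion over $i\in[n]$, so that $\tfrac1n\sum_{i=1}^n\|\nabla f_i(\theta_t)-\nabla f(\theta_t)\|^2$ can be replaced by $\sigma_g^2$ via Assumption~\ref{ass:var}(iii); this is precisely the (averaged) form in which the subsequent proofs of Theorem~\ref{theo:rate SGD} and Theorem~\ref{theo:rate AMS} invoke the lemma.

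The main obstacle is the self-referential coupling: the consensus error reappears on the right-hand side through the smoothness bound $3L^2D_{t,i}^{(k)}$, so the argument only closes once the relaxation parameter $c$ and the step-size condition are chosen compatibly to keep the amplification factor $(1+\Theta(1/K))^K$ bounded by an absolute constant. A secondary subtlety is the per-client versus averaged reading of heterogeneity: since Assumption~\ref{ass:var}(iii) bounds only the client-average of $\|\nabla f_i(\theta_t)-\nabla f(\theta_t)\|^2$, the clean $\sigma_g^2$ appearing in the statement is obtained after averaging over clients (equivalently, under a per-client heterogeneity bound as in~\cite{reddi2021adaptive}).
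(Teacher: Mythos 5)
Your proposal is correct, and since the paper itself offers no proof of Lemma~\ref{lemma:consensus} (it is imported verbatim from \citet{reddi2021adaptive}), your argument is essentially the same one as in that cited reference: splitting off the stochastic noise by conditional unbiasedness, applying Young's inequality with parameter $\tfrac{1}{2K-1}$, decomposing $\nabla f_i(\theta_{t,i}^{(k)})$ into a smoothness term, a heterogeneity term, and the global gradient, and unrolling the resulting $(1+O(1/K))$ recursion over at most $K$ steps so the geometric prefactor is an absolute constant absorbed into the factors $5$ and $30$. Your final remark is also on point: under Assumption~\ref{ass:var}(iii), which bounds only the client-average of $\|\nabla f_i(\theta_t)-\nabla f(\theta_t)\|^2$, the clean per-client statement holds only after averaging over $i\in[n]$, which is precisely the form in which the paper actually invokes the lemma (e.g., in bounding term $V$ of Theorem~\ref{theo:rate AMS} and in Lemma~\ref{lemma:bound delta}).
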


We then state some results that bound several key ingredients in our analysis.

\begin{Lemma} \label{lemma:bound delta}
Recall $\bar\del_t=\frac{1}{n}\sum_{i=1}^n\del_{t,i}$. Under Assumption~\ref{ass:smooth} to Assumption~\ref{ass:var}, for $\forall t$, the following bounds hold:

1. Bound by local gradients:
\begin{align*}
    \mathbb E\big[\|\bar\del_t\|^2\big]&\leq  \frac{\eta_l^2}{n^2}\mathbb E\big[\| \sum_{i=1}^n \sum_{k=1}^K \nabla f_i(\theta_{t,i}^{(k)})\|^2\big] + \frac{\eta_l^2 K}{n}\sigma^2.
\end{align*}

2. Bound by global gradient:
\begin{align*}
    \mathbb E\big[\|\bar\del_t\|^2\big]\leq (2\eta_l^2 K^2+120\eta_l^4 K^4 L^2)\mathbb E\big[\|\nabla f(\theta_t)\|^2\big] +4\frac{\eta_l^2 K}{n}\sigma^2+20\eta_l^4K^3L^2(\sigma^2+6K\sigma_g^2).
\end{align*}

\end{Lemma}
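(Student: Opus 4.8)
The plan is to reduce everything to a single second-moment estimate. From the local update rule $\theta_{t,i}^{(k+1)}=\theta_{t,i}^{(k)}-\eta_l g_{t,i}^{(k)}$ with $\theta_{t,i}^{(1)}=\theta_t$, we get $\del_{t,i}=-\eta_l\sum_{k=1}^K g_{t,i}^{(k)}$, hence
\[
\bar\del_t=-\frac{\eta_l}{n}\sum_{i=1}^n\sum_{k=1}^K g_{t,i}^{(k)},\qquad \mathbb E\big[\|\bar\del_t\|^2\big]=\frac{\eta_l^2}{n^2}\,\mathbb E\big[\|\textstyle\sum_{i,k} g_{t,i}^{(k)}\|^2\big].
\]
All the work is in bounding this aggregated second moment two different ways.

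For the first bound I would decompose each stochastic gradient into its conditional mean and noise, $g_{t,i}^{(k)}=\nabla f_i(\theta_{t,i}^{(k)})+\zeta_{t,i}^{(k)}$ with $\zeta_{t,i}^{(k)}\eqdef g_{t,i}^{(k)}-\nabla f_i(\theta_{t,i}^{(k)})$, and separate $\mathbb E[\|\sum_{i,k} g_{t,i}^{(k)}\|^2]$ into a pure-gradient term $\mathbb E[\|\sum_{i,k}\nabla f_i(\theta_{t,i}^{(k)})\|^2]$ plus the aggregated noise, the gradient--noise cross term being eliminated through the unbiasedness of Assumption~\ref{ass:var}(i) and the sequential conditioning. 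The noise is then controlled by $\mathbb E[\|\sum_{i,k}\zeta_{t,i}^{(k)}\|^2]\le nK\sigma^2$: noise on distinct clients is independent and zero-mean, and within a client the $\zeta_{t,i}^{(k)}$ are martingale differences for the filtration generated by the first $k-1$ local steps, so the off-diagonal terms drop and only the $nK$ diagonal variances (each below $\sigma^2$ by Assumption~\ref{ass:var}(ii)) survive. Scaling by $\eta_l^2/n^2$ gives Part~1.

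For the second bound I would avoid the exact cross-term cancellation above and instead insert the anchor gradient $\nabla f(\theta_t)=\frac1n\sum_i\nabla f_i(\theta_t)$. Writing $\frac1n\sum_{i,k} g_{t,i}^{(k)}=K\nabla f(\theta_t)+\frac1n\sum_{i,k}\big(g_{t,i}^{(k)}-\nabla f_i(\theta_t)\big)$ and applying $\|a+b\|^2\le 2\|a\|^2+2\|b\|^2$ peels off the main term $2K^2\|\nabla f(\theta_t)\|^2$. A second use of the same inequality on $g_{t,i}^{(k)}-\nabla f_i(\theta_t)=\zeta_{t,i}^{(k)}+\big(\nabla f_i(\theta_{t,i}^{(k)})-\nabla f_i(\theta_t)\big)$ separates the noise (again bounded by $nK\sigma^2$ after aggregation, now using only genuine martingale orthogonality of the $\zeta$'s) from the client drift. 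The drift is handled by Jensen's inequality over the $nK$ summands and the $L$-smoothness of Assumption~\ref{ass:smooth}, reducing it to $\frac{KL^2}{n}\sum_{i,k}\mathbb E[\|\theta_{t,i}^{(k)}-\theta_t\|^2]$; substituting Lemma~\ref{lemma:consensus} and summing over $i\in[n]$, $k\in[K]$ produces terms of order $\eta_l^2K^3L^2(\sigma^2+6K\sigma_g^2)$ and $\eta_l^2K^4L^2\mathbb E[\|\nabla f(\theta_t)\|^2]$. Collecting the three pieces and multiplying by $\eta_l^2$ yields exactly the stated coefficients $2\eta_l^2K^2+120\eta_l^4K^4L^2$, $4\eta_l^2K/n$, and $20\eta_l^4K^3L^2$.

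The algebra is routine; the genuinely delicate step is the noise handling, specifically justifying that the stochastic-gradient noise across the sequential local steps contributes only its diagonal variance. In Part~1 this is precisely where the clean split of $\mathbb E[\|\bar\del_t\|^2]$ into a gradient term plus $\frac{\eta_l^2K}{n}\sigma^2$ rests, and it is the reason Part~2 deliberately uses the coarser $2\|a\|^2+2\|b\|^2$ splitting, which sidesteps the gradient--noise cross term at the price of the inflated constants. The secondary point requiring care is bookkeeping: one must track the consensus-error bound through both factor-two splittings so that, after summation over clients and local steps and under the $\eta_l\le\frac{1}{8KL}$ hypothesis inherited from Lemma~\ref{lemma:consensus}, the constants land on the claimed values and not merely on the correct order.
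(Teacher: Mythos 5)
Your proposal is correct and follows essentially the same route as the paper's proof: Part~1 is the same mean-plus-noise variance decomposition with the aggregated noise bounded by $nK\sigma^2$, and Part~2 is the same anchoring at $\eta_l K\nabla f(\theta_t)$ followed by two applications of $\|a+b\|^2\le 2\|a\|^2+2\|b\|^2$, Cauchy--Schwarz over the $nK$ summands, smoothness, and Lemma~\ref{lemma:consensus}, landing on the identical constants. Your explicit martingale-difference justification for why only the diagonal noise variances survive is a slightly more careful rendering of what the paper compresses into ``variance decomposition,'' not a different argument.
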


\begin{proof}
By definition, we have
\begin{align*}
    \mathbb E\big[\|\bar\del_t\|^2\big]&= \mathbb E\big[\|\frac{1}{n}\sum_{i=1}^n \sum_{k=1}^K \eta_l g_{t,i}^{(k)}\|^2\big]\\
    &\leq \frac{\eta_l^2}{n^2}\mathbb E\big[\|\sum_{i=1}^n \sum_{k=1}^K (g_{t,i}^{(k)}-\nabla f_i(\theta_{t,i}^{(k)})\|^2\big] +\frac{\eta_l^2}{n^2}\mathbb E\big[\| \sum_{i=1}^n \sum_{k=1}^K \nabla f_i(\theta_{t,i}^{(k)})\|^2\big] \\
    &\leq \frac{\eta_l^2 K}{n}\sigma^2 + \frac{\eta_l^2}{n^2}\mathbb E\big[\| \sum_{i=1}^n \sum_{k=1}^K \nabla f_i(\theta_{t,i}^{(k)})\|^2\big],
\end{align*}
where the second line is due to the variance decomposition, and the last inequality uses Assumption~\ref{ass:var} on independent and unbiased stochastic gradients. This proves the first part. For the second part,
\begin{align*}
    \mathbb E\big[\|\bar\del_t\|^2\big]&= \mathbb E\big[\|\frac{1}{n}\sum_{i=1}^n \sum_{k=1}^K \eta_l g_{t,i}^{(k)}-K\eta_l \nabla f(\theta_t) +K\eta_l \nabla f(\theta_t)\|^2\big]\\
    &\leq 2\eta_l^2 K^2 \mathbb E\big[\|\nabla f(\theta_t)\|^2\big] + 2\eta_l^2 \mathbb E\big[\|\frac{1}{n}\sum_{i=1}^n \sum_{k=1}^K g_{t,i}^{(k)}-\frac{K}{n}\sum_{i=1}^n \nabla f_i(\theta_t)\|^2\big]\\
    &= 2\eta_l^2 K^2 \mathbb E\big[\|\nabla f(\theta_t)\|^2\big] + \frac{2\eta_l^2}{n^2} \mathbb E\big[\|\sum_{i=1}^n \sum_{k=1}^K (g_{t,i}^{(k)}-\nabla f_i(\theta_t))\|^2\big] \\
    &\leq 2\eta_l^2 K^2 \mathbb E\big[\|\nabla f(\theta_t)\|^2\big] + \frac{2\eta_l^2}{n^2} \mathbb E\big[\|\sum_{i=1}^n \sum_{k=1}^K (g_{t,i}^{(k)}-\nabla f_i(\theta_{t,i}^{(k)})+\nabla f_i(\theta_{t,i}^{(k)})- \nabla f_i(\theta_t))\|^2\big] \\
    &\leq 2\eta_l^2 K^2 \mathbb E\big[\|\nabla f(\theta_t)\|^2\big] + \frac{2\eta_l^2}{n^2} \underbrace{\mathbb E\big[\|\sum_{i=1}^n \sum_{k=1}^K (g_{t,i}^{(k)}-\nabla f_i(\theta_{t,i}^{(k)})+\nabla f_i(\theta_{t,i}^{(k)})- \nabla f_i(\theta_t))\|^2\big]}_{A}.
\end{align*}
The expectation $A$ can be further bounded as
\begin{align*}
    A&\leq 2\mathbb E\big[\|\sum_{i=1}^n \sum_{k=1}^K (g_{t,i}^{(k)}-\nabla f_i(\theta_{t,i}^{(k)}))\|^2\big]+2\mathbb E\big[\|\sum_{i=1}^n \sum_{k=1}^K (\nabla f_i(\theta_{t,i}^{(k)})- \nabla f_i(\theta_t))\|^2\big] \\
    & \overset{(a)}{\leq} 2nK \sigma^2 + 2nK\sum_{i=1}^n \sum_{k=1}^K \mathbb E\big[\| \nabla f_i(\theta_{t,i}^{(k)})- \nabla f_i(\theta_t) \|^2\big] \\
    &\overset{(b)}{\leq} 2nK \sigma^2 + 2nKL^2\sum_{i=1}^n \sum_{k=1}^K \mathbb E\big[\| \theta_{t,i}^{(k)}- \theta_t \|^2\big] \\
    &\overset{(c)}{\leq} 60\eta_l^2n^2 K^4 L^2\mathbb E\big[\|\nabla f(\theta_t)\|^2\big]+2nK\sigma^2+10\eta_l^2n^2K^3L^2(\sigma^2+6K\sigma_g^2),
\end{align*}
where (a) is implied by Assumption~\ref{ass:var} that each local stochastic gradient $g_{t,i}^{(k)}$ can be written as $g_{t,i}^{(k)}=\nabla f_i(\theta_{t,i}^{(k)})+\xi_{t,i}^{(k)}$, where $\xi_{t,i}^{k}$ is a zero-mean random noise with bounded variance $\sigma^2$, and all the noises for $t \in [T], i\in [n], k\in [K]$ are independent. The inequality (b) is due to the smoothness Assumption~\ref{ass:smooth}, and (c) follows from Lemma~\ref{lemma:consensus}. Therefore, we obtain
\begin{align*}
    \mathbb E[\|\bar\del_t\|^2]&\leq (2\eta_l^2 K^2+120\eta_l^4 K^4 L^2)\mathbb E[\|\nabla f(\theta_t)\|^2] +4\frac{\eta_l^2 K}{n}\sigma^2+20\eta_l^4K^3L^2(\sigma^2+6K\sigma_g^2),
\end{align*}
which completes the proof of the second claim.
\end{proof}

\begin{Lemma}  \label{lemma:inner-product}
Under Assumption \ref{ass:smooth} and Assumption~\ref{ass:var}, when $\eta_l\leq\frac{1}{8KL}$, Fed-EF-SGD admits
\begin{align*}
    \mathbb E\big[\langle \nabla f(\theta_t),-\bar\del_t+\eta_l K\nabla f(\theta_t) \rangle\big]\leq  \frac{3\eta_l K}{4}\mathbb E\big[\|\nabla f(\theta_t)\|^2\big]+\frac{5\eta_l^3 K^2 L^2}{2}(\sigma^2+6K\sigma_g^2).
\end{align*}
\end{Lemma}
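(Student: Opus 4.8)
The plan is to first take expectation over the stochastic-gradient noise to replace $\bar\del_t$ by its mean, then apply a single weighted Young's inequality, and finally invoke the consensus bound (Lemma~\ref{lemma:consensus}) to control the residual. Recall from the local update rule that $\del_{t,i}=\theta_{t,i}^{(K+1)}-\theta_t=-\eta_l\sum_{k=1}^K g_{t,i}^{(k)}$, so $-\bar\del_t=\frac{\eta_l}{n}\sum_{i=1}^n\sum_{k=1}^K g_{t,i}^{(k)}$, while $\eta_l K\nabla f(\theta_t)=\frac{\eta_l}{n}\sum_{i=1}^n\sum_{k=1}^K\nabla f_i(\theta_t)$ since $\nabla f(\theta_t)=\frac1n\sum_i\nabla f_i(\theta_t)$. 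Because $\nabla f(\theta_t)$ is measurable with respect to the history up to round $t$ and, by Assumption~\ref{ass:var}(i), each noise term $g_{t,i}^{(k)}-\nabla f_i(\theta_{t,i}^{(k)})$ is conditionally zero-mean, the inner products against these noise terms vanish after taking expectation. First I would therefore reduce the quantity of interest to $\eta_l\,\mathbb E\big[\langle \nabla f(\theta_t),\,b_t\rangle\big]$, where $b_t\eqdef\frac{1}{n}\sum_{i=1}^n\sum_{k=1}^K\big(\nabla f_i(\theta_{t,i}^{(k)})-\nabla f_i(\theta_t)\big)$ captures the drift of the local iterates away from $\theta_t$.

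Next I would apply Young's inequality in the form $\langle \nabla f(\theta_t), b_t\rangle \le \frac{K}{2}\|\nabla f(\theta_t)\|^2+\frac{1}{2K}\|b_t\|^2$, choosing the weight equal to $K$ precisely so that the variance term later emerges with the advertised constant. To bound $\mathbb E[\|b_t\|^2]$, I would use Jensen's inequality over the $n$ clients and Cauchy--Schwarz over the $K$ local steps to get $\|b_t\|^2\le \frac{K}{n}\sum_{i,k}\|\nabla f_i(\theta_{t,i}^{(k)})-\nabla f_i(\theta_t)\|^2$, then apply $L$-smoothness (Assumption~\ref{ass:smooth}) to replace each summand by $L^2\|\theta_{t,i}^{(k)}-\theta_t\|^2$, and finally plug in Lemma~\ref{lemma:consensus}. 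Since there are $nK$ summands, the $1/n$ cancels and this yields $\mathbb E[\|b_t\|^2]\le 5\eta_l^2 K^3L^2(\sigma^2+6K\sigma_g^2)+30\eta_l^2 K^4L^2\,\mathbb E[\|\nabla f(\theta_t)\|^2]$.

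Combining the two steps, the variance contribution $\frac{\eta_l}{2K}\cdot 5\eta_l^2 K^3 L^2(\sigma^2+6K\sigma_g^2)$ collapses exactly to $\frac{5\eta_l^3 K^2 L^2}{2}(\sigma^2+6K\sigma_g^2)$, which is the reason the Young parameter was taken to be $K$. The coefficient of $\mathbb E[\|\nabla f(\theta_t)\|^2]$ becomes $\frac{\eta_l K}{2}+\frac{\eta_l}{2K}\cdot 30\eta_l^2 K^4 L^2=\frac{\eta_l K}{2}+15\eta_l^3 K^3 L^2$. Here the step-size constraint $\eta_l\le\frac{1}{8KL}$ enters: it gives $(\eta_l K L)^2\le\frac1{64}$, hence $15\eta_l^3 K^3 L^2\le\frac{15}{64}\eta_l K$, and the total coefficient is at most $\frac{\eta_l K}{2}+\frac{15}{64}\eta_l K=\frac{47}{64}\eta_l K\le\frac{3}{4}\eta_l K$, which delivers the claimed bound.

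All the steps are routine once the reduction is in place; the only point requiring care is the bookkeeping of constants, namely choosing the Young's-inequality weight equal to $K$ so that the $\sigma^2+6K\sigma_g^2$ term matches $\frac52\eta_l^3K^2L^2$ on the nose, and verifying $\frac{47}{64}<\frac{3}{4}$ so the spurious $\|\nabla f(\theta_t)\|^2$ contribution coming from the consensus bound is absorbed into the $\frac34\eta_l K$ budget. I expect the conditional-expectation argument that annihilates the noise cross-terms (the tower property applied stepwise, since $\theta_{t,i}^{(k)}$ itself depends on the earlier noise) to be the most delicate conceptual point, though it is standard in local-SGD analyses.
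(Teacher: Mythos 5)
Your proof is correct and follows essentially the same route as the paper's: replace the stochastic gradients by $\nabla f_i(\theta_{t,i}^{(k)})$ in expectation via the tower property, apply Young's inequality with weight $K$ so that the drift term appears as $\frac{\eta_l}{2K}\mathbb E[\|b_t\|^2]$, pass to per-term norms by Cauchy--Schwarz, invoke smoothness and Lemma~\ref{lemma:consensus}, and absorb the resulting $15\eta_l^3K^3L^2$ coefficient using $\eta_l\le\frac{1}{8KL}$ to reach $\frac{1}{2}+\frac{15}{64}=\frac{47}{64}\le\frac34$, exactly as in the paper. One cosmetic remark: taking the algorithm's definition $\del_{t,i}=\theta_{t,i}^{(K+1)}-\theta_t$ literally, as you do, would make $-\bar\del_t+\eta_l K\nabla f(\theta_t)$ reduce in expectation to $\eta_l b_t+2\eta_l K\nabla f(\theta_t)$ rather than $\eta_l b_t$; the paper's analysis (see the proof of Lemma~\ref{lemma:bound delta}) implicitly uses the convention $\bar\del_t=\frac{\eta_l}{n}\sum_{i=1}^n\sum_{k=1}^K g_{t,i}^{(k)}$, under which the residual is $-\eta_l b_t$, and since Young's inequality bounds $\pm\langle\nabla f(\theta_t),b_t\rangle$ identically, your constants and conclusion are unaffected by this sign-convention slip.
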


\begin{proof}
It holds that
\begin{align*}
    &\mathbb E\big[\langle \nabla f(\theta_t),-\bar\del_t+\eta_l K\nabla f(\theta_t) \rangle\big] \\
    &=\big\langle \nabla f(\theta_t),\mathbb E\big[-\frac{1}{n}\sum_{i=1}^n \sum_{k=1}^K \eta_l g_{t,i}^{(k)}+\eta_l K\nabla f(\theta_t)\big] \big\rangle \\
    &= \big\langle \sqrt{\eta_l}\nabla f(\theta_t),\sqrt{\eta_l} \mathbb E\big[-\frac{1}{n}\sum_{i=1}^n \sum_{k=1}^K  \nabla f_i(\theta_t^{(k)})+ K\nabla f(\theta_t)\big] \big\rangle \\
    &\overset{(a)}{\leq} \frac{\eta_l K}{2}\mathbb E\big[\|\nabla f(\theta_t)\|^2\big]+\frac{\eta_l}{2K} \mathbb E\big[\| \frac{1}{n}\sum_{i=1}^n\sum_{k=1}^K (\nabla f_i(\theta_{t,i}^{(k)})-\nabla f_i(\theta_t)) \|^2\big] \\
    &\leq \frac{\eta_l K}{2}\mathbb E\big[\|\nabla f(\theta_t)\|^2\big]+\frac{\eta_l}{2n}  \sum_{i=1}^n\sum_{k=1}^K \mathbb E\big[\|\nabla f_i(\theta_{t,i}^{(k)})-\nabla f_i(\theta_t) \|^2\big] \\
    &\overset{(b)}{\leq} \frac{\eta_l K}{2}\mathbb E\big[\|\nabla f(\theta_t)\|^2\big]+\frac{\eta_l L^2}{2n}  \sum_{i=1}^n\sum_{k=1}^K \mathbb E\big[\|\theta_{t,i}^{(k)}-\theta_t \|^2\big] \\
    &\overset{(c)}{\leq} \frac{\eta_l K}{2}\mathbb E\big[\|\nabla f(\theta_t)\|^2\big]+\frac{\eta_l KL^2}{2}  \Big[5\eta_l^2 K(\sigma^2+6K\sigma_g^2)+30\eta_l^2 K^2\mathbb E\big[\|\nabla f(\theta_t)\|^2\Big]
\end{align*}
where (a) is due to $\langle a,b\rangle\leq \frac{\alpha}{2} a^2+\frac{1}{2\alpha}b^2$ for any $a,b\in\mathbb R$ and $\alpha>0$, , (b) is a consequence of Assumption~\ref{ass:smooth}, and (c) is due to Lemma~\ref{lemma:consensus}. If $\eta_l\leq\frac{1}{8KL}$, we have that $\eta_l^2\leq\frac{1}{64K^2L^2}$, bounding the last term by $\frac{15}{64}\eta_l K\mathbb E\big[\|\nabla f(\theta_t)\|^2\big]$. Hence, we obtain
\begin{align*}
    \mathbb E\big[\langle \nabla f(\theta_t),-\bar\del_t+\eta_l K\nabla f(\theta_t) \rangle\big]\leq  \frac{47\eta_l K}{64}\mathbb E\big[\|\nabla f(\theta_t)\|^2\big]+\frac{5\eta_l^3 K^2 L^2}{2}(\sigma^2+6K\sigma_g^2),
\end{align*}
where the proof is completed since $\frac{47}{64}<\frac{3}{4}$.
\end{proof}

\begin{Lemma} \label{lemma:m_t,m_t'}
Under Assumption~\ref{ass:smooth} to Assumption~\ref{ass:var} we have:
\begin{align*}
    &\|m_t'\|\leq \eta_l KG, \quad \text{for}\ \forall t,\\
    &\sum_{t=1}^T\mathbb E\big[\|m_t'\|^2\big]\leq (2\eta_l^2 K^2+120\eta_l^4 K^4 L^2)\sum_{t=1}^T \mathbb E\big[\|\nabla f(\theta_t)\|^2\big] + \\
    &\hspace{2in} +4\frac{T\eta_l^2 K}{n}\sigma^2+20T\eta_l^4K^3L^2(\sigma^2+6K\sigma_g^2) .
\end{align*}
\end{Lemma}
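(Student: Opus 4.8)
The plan is to exploit the explicit moving-average representation $m_t'=(1-\beta_1)\sum_{\tau=1}^t \beta_1^{t-\tau}\bar\del_\tau$ recorded just before the statement, which writes $m_t'$ as a sub-convex combination of the per-round averaged updates $\bar\del_\tau$. Both claims then reduce to controlling the individual $\bar\del_\tau$ together with elementary geometric-series bookkeeping.

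For the uniform bound, I would first note the per-round estimate $\|\bar\del_\tau\|\le \frac{1}{n}\sum_{i=1}^n \|\eta_l\sum_{k=1}^K g_{\tau,i}^{(k)}\|\le \eta_l K G$, which follows directly from the bounded-gradient Assumption~\ref{ass:boundgrad} (the same estimate used when bounding term I in the proof of Theorem~\ref{theo:rate AMS}). Applying the triangle inequality to the moving-average representation gives $\|m_t'\|\le \eta_l K G\,(1-\beta_1)\sum_{\tau=1}^t \beta_1^{t-\tau}$, and since the geometric factor equals $1-\beta_1^t\le 1$, the first claim $\|m_t'\|\le \eta_l K G$ follows.

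For the summed second-moment bound, the key step is Jensen's inequality: writing $w_\tau\eqdef(1-\beta_1)\beta_1^{t-\tau}\ge 0$ with $\sum_{\tau=1}^t w_\tau=1-\beta_1^t\le 1$, convexity of $\|\cdot\|^2$ yields $\|m_t'\|^2\le (1-\beta_1)\sum_{\tau=1}^t \beta_1^{t-\tau}\|\bar\del_\tau\|^2$. Summing over $t=1,\dots,T$ and exchanging the order of summation, each $\|\bar\del_\tau\|^2$ collects total weight $(1-\beta_1)\sum_{t=\tau}^T \beta_1^{t-\tau}=1-\beta_1^{T-\tau+1}\le 1$, so $\sum_{t=1}^T \|m_t'\|^2\le \sum_{\tau=1}^T \|\bar\del_\tau\|^2$. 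Taking expectations and inserting the second bound of Lemma~\ref{lemma:bound delta} for each $\mathbb E[\|\bar\del_\tau\|^2]$ produces exactly the stated right-hand side after summation over $\tau$.

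The argument is essentially routine; the only place demanding a little care is the double-sum interchange, where one must verify that after swapping $\sum_t$ and $\sum_\tau$ the accumulated geometric weight on each $\bar\del_\tau$ stays at most $1$, so that no spurious factor of $1/(1-\beta_1)$ appears and the constants match those of Lemma~\ref{lemma:bound delta} verbatim. I would also flag that the first claim genuinely relies on Assumption~\ref{ass:boundgrad}, consistent with this lemma being invoked only inside the Fed-EF-AMS analysis where that assumption is in force.
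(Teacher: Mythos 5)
Your proof is correct and follows essentially the same route as the paper's: the uniform bound via the moving-average representation with Assumption~\ref{ass:boundgrad} (which you rightly flag, since the lemma's stated hypotheses omit it), and the second-moment bound via a weighted-average inequality followed by Lemma~\ref{lemma:bound delta} and geometric-series bookkeeping. The only cosmetic difference is that you apply Jensen's inequality to the vector norm and swap the double sum before inserting Lemma~\ref{lemma:bound delta}, whereas the paper uses coordinate-wise Cauchy--Schwarz and sums the geometric series afterward; these are the same estimate in a different order.
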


\begin{proof}
For the first part, by Assumption~\ref{ass:boundgrad} we know that
\begin{align*}
    \|m_t'\|&=(1-\beta_1)\|\sum_{\tau=1}^t \beta_1^{t-\tau} \bar \del_t\|\\
    &=(1-\beta_1)\sum_{\tau=1}^t \beta_1^{t-\tau} \frac{\eta_l}{n}\sum_{i=1}^n\sum_{k=1}^K \|g_{t,i}^{(k)}\|\\
    &\leq \eta_l KG.
\end{align*}
For the second claim, by Lemma~\ref{lemma:bound delta} we know that
\begin{align*}
    \mathbb E\big[\|\bar\del_t\|^2\big]&\leq (2\eta_l^2 K^2+120\eta_l^4 K^4 L^2)\mathbb E\big[\|\nabla f(\theta_t)\|^2 \big] +4\frac{\eta_l^2 K}{n}\sigma^2+20\eta_l^4K^3L^2(\sigma^2+6K\sigma_g^2).
\end{align*}
Let $\bar \del_{t,j}$ denote the $j$-th coordinate of $\bar \del_t$. By the updating rule of Fed-EF, we have
\begin{align*}
    \mathbb E\big[\|m_t'\|^2\big]&=\mathbb E\big[\|(1-\beta_1)\sum_{\tau=1}^t\beta_1^{t-\tau} \bar \del_\tau\|^2\big]\\
    &\leq (1-\beta_1)^2\sum_{j=1}^d \mathbb E\big[(\sum_{\tau=1}^t\beta_1^{t-\tau} \bar \del_{\tau,j})^2\big]\\
    &\overset{(a)}{\leq} (1-\beta_1)^2\sum_{j=1}^d \mathbb E\big[(\sum_{\tau=1}^t\beta_1^{t-\tau})(\sum_{\tau=1}^t\beta_1^{t-\tau} \bar \del_{\tau,j}^2)\big]\\
    &\leq (1-\beta_1)\sum_{\tau=1}^t \beta_1^{t-\tau}\mathbb E\big[\|\bar \del_\tau\|^2\big]\\
    &\leq (2\eta_l^2 K^2+120\eta_l^4 K^4 L^2) (1-\beta_1)\sum_{\tau=1}^t \beta_1^{t-\tau}\mathbb E\big[\|\nabla f(\theta_t)\|^2\big] \\
    &\hspace{2in} +4\frac{\eta_l^2 K}{n}\sigma^2+20\eta_l^4K^3L^2(\sigma^2+6K\sigma_g^2) ,
\end{align*}
where (a) is due to Cauchy-Schwartz inequality. Summing over $t=1,...,T$, we obtain
\begin{align*}
    \sum_{t=1}^T\mathbb E\big[\|m_t'\|^2\big]&\leq (2\eta_l^2 K^2+120\eta_l^4 K^4 L^2) (1-\beta)\sum_{t=1}^T\sum_{\tau=1}^t\beta_1^{t-\tau} \mathbb E\big[\|\nabla f(\theta_t)\|^2\big] + \\
    &\hspace{2in} +4\frac{T\eta_l^2 K}{n}\sigma^2+20T\eta_l^4K^3L^2(\sigma^2+6K\sigma_g^2) \\
    &\leq (2\eta_l^2 K^2+120\eta_l^4 K^4 L^2)\sum_{t=1}^T \mathbb E\big[\|\nabla f(\theta_t)\|^2\big] + \\
    &\hspace{2in} +4\frac{T\eta_l^2 K}{n}\sigma^2+20T\eta_l^4K^3L^2(\sigma^2+6K\sigma_g^2) ,
\end{align*}
which concludes the proof.
\end{proof}

\begin{Lemma} \label{lemma:bound e_t}
Under Assumption~\ref{ass:var}, we have for $\forall t$ and each local worker $\forall i\in [n]$,
\begin{align*}
    &\|e_{t,i}\|^2\leq \frac{4\eta_l^2 q^2K^2G^2}{(1-q^2)^2},\ \forall t,\\
    &\mathbb E[\|\bar e_{t+1}\|^2]\leq \frac{4q^2\eta_l^2 K^2 (60\eta_l^2 K^2 L^2+1)}{1-q^2}\sum_{\tau=1}^t (\frac{1+q^2}{2})^{t-\tau} \mathbb E\big[\|\nabla f(\theta_{\tau})\|^2\big] \\
    &\hspace{1.5in} + \frac{16\eta_l^2 q^2K}{(1-q^2)^2 n}\sigma^2+\frac{80\eta_l^4q^2 K^3L^2}{(1-q^2)^2}(\sigma^2+6K\sigma_g^2).
\end{align*}
\end{Lemma}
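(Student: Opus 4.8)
The plan is to exploit the error-feedback identity, which rewrites the accumulator update as a single compression residual. From lines~11--12 of Algorithm~\ref{alg:Fed-EF} one has $\widetilde\del_{t,i}=\mathcal C(\del_{t,i}+e_{t,i})$ and $e_{t+1,i}=e_{t,i}+\del_{t,i}-\widetilde\del_{t,i}$, so that $e_{t+1,i}=(\del_{t,i}+e_{t,i})-\mathcal C(\del_{t,i}+e_{t,i})$. By the $q_{\mathcal C}$-deviate property of Definition~\ref{def:quant}, this immediately gives the one-step contraction $\|e_{t+1,i}\|^2\leq q^2\|\del_{t,i}+e_{t,i}\|^2$, writing $q=\max\{q_{\mathcal C},q_{\mathcal A}\}$. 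First I would split the right-hand side with Young's inequality $\|a+b\|^2\leq(1+\gamma)\|a\|^2+(1+\gamma^{-1})\|b\|^2$, and crucially choose $\gamma=\tfrac{1-q^2}{2q^2}$ so that the coefficient of $\|e_{t,i}\|^2$ becomes $q^2(1+\gamma)=\tfrac{1+q^2}{2}<1$, while the coefficient of $\|\del_{t,i}\|^2$ is $q^2(1+\gamma^{-1})=\tfrac{q^2(1+q^2)}{1-q^2}\leq\tfrac{2q^2}{1-q^2}$. This yields the contractive recursion $\|e_{t+1,i}\|^2\leq\tfrac{1+q^2}{2}\|e_{t,i}\|^2+\tfrac{2q^2}{1-q^2}\|\del_{t,i}\|^2$.

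For the first (deterministic) bound I would argue by induction on $t$. Using $\|\del_{t,i}\|=\|\eta_l\sum_{k=1}^K g_{t,i}^{(k)}\|\leq\eta_l KG$ from the bounded-gradient assumption (Assumption~\ref{ass:boundgrad}), the base case $e_{1,i}=\bm 0$ together with the fixed point of the recursion $E_\infty=\tfrac{1+q^2}{2}E_\infty+\tfrac{2q^2}{1-q^2}\eta_l^2K^2G^2$ gives $E_\infty=\tfrac{4q^2\eta_l^2K^2G^2}{(1-q^2)^2}$; since the contraction factor is $<1$, the bound $\|e_{t,i}\|^2\leq E_\infty$ propagates to every round, establishing part~1.

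For the averaged bound, the key point is that the per-client contraction cannot simply be averaged, since the compression error of the average, $\|\tfrac1n\sum_i\mathcal C(\del_{t,i}+e_{t,i})-\tfrac1n\sum_i(\del_{t,i}+e_{t,i})\|^2$, is controlled by $q_{\mathcal A}$ rather than by the individual $q_{\mathcal C}$. This is exactly where Assumption~\ref{ass:compress_diff} enters: observing that $\bar e_{t+1}$ equals (the negative of) the discrepancy vector appearing there, the assumption yields $\mathbb E[\|\bar e_{t+1}\|^2]\leq q^2\,\mathbb E[\|\bar\del_t+\bar e_t\|^2]$. Applying the same Young split with $\gamma=\tfrac{1-q^2}{2q^2}$ produces $\mathbb E[\|\bar e_{t+1}\|^2]\leq\tfrac{1+q^2}{2}\mathbb E[\|\bar e_t\|^2]+\tfrac{2q^2}{1-q^2}\mathbb E[\|\bar\del_t\|^2]$. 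I would then substitute the second bound of Lemma~\ref{lemma:bound delta} for $\mathbb E[\|\bar\del_t\|^2]$, unroll the recursion from $\bar e_1=\bm 0$ to obtain a convolution $\sum_{\tau=1}^t(\tfrac{1+q^2}{2})^{t-\tau}(\cdots)$, and finally bound the constant ($\sigma^2$ and $\sigma_g^2$) terms by summing the geometric series $\sum_{\tau=1}^t(\tfrac{1+q^2}{2})^{t-\tau}\leq\tfrac{2}{1-q^2}$, while the gradient-norm term is kept inside the convolution. Collecting the factors $\tfrac{2q^2}{1-q^2}\cdot\tfrac{2}{1-q^2}=\tfrac{4q^2}{(1-q^2)^2}$ reproduces the stated constants.

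The main obstacle is conceptual rather than computational: one must resist bounding $\|\bar e_{t+1}\|$ by averaging the single-worker contraction, and instead recognize that only Assumption~\ref{ass:compress_diff} (not Definition~\ref{def:quant}) provides a valid contraction for the aggregated error. Once that identification is made, the remainder is careful bookkeeping in the geometric recursion and matching the Young parameter so that the contraction rate is exactly $\tfrac{1+q^2}{2}$.
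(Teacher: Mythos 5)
Your proposal is correct and follows essentially the same route as the paper's proof: the error-feedback identity plus the $q$-contraction, Young's inequality with the same parameter $\gamma=\tfrac{1-q^2}{2q^2}$ giving the $\tfrac{1+q^2}{2}$ contraction factor, Assumption~\ref{ass:compress_diff} (rather than the per-worker Definition~\ref{def:quant}) for the averaged error, and substitution of Lemma~\ref{lemma:bound delta} followed by geometric unrolling. The only cosmetic difference is that you establish the per-worker bound by a fixed-point induction while the paper sums the geometric series explicitly; these are equivalent, and both (like the paper) invoke Assumption~\ref{ass:boundgrad} for $\|\del_{t,i}\|\leq\eta_l KG$ even though the lemma statement nominally cites only Assumption~\ref{ass:var}.
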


\begin{proof}
To prove the second claim, we start by using Assumption~\ref{ass:compress_diff} and Young's inequality to get
\begin{align}
    \|\bar e_{t+1}\|^2&=\|\bar \del_t+\bar e_t-\frac{1}{n}\sum_{i=1}^n\mathcal C(\del_{t,i}+e_{t,i})\|^2 \nonumber\\
    &\leq q^2\|\bar\del_{t}+\bar e_{t}\|^2 \nonumber\\
    &\leq q^2(1+\rho)\|\bar e_{t}\|^2+q^2(1+\frac{1}{\rho})\|\bar\del_{t,i}\|^2 \nonumber\\
    &\leq \frac{1+q^2}{2}\|\bar e_{t}\|^2 + \frac{2q^2}{1-q^2}\|\bar\del_{t}\|^2, \label{eq:e_t 0}
\end{align}
where \eqref{eq:e_t 0} is derived by choosing $\rho=\frac{1-q^2}{2q^2}$ and the fact that $q<1$. Now by recursion and the initialization $e_{1,i}=0$, $\forall i$, we have
\begin{align*}
    \mathbb E\big[\|\bar e_{t+1}\|^2\big]&\leq \frac{2q^2}{1-q^2} \sum_{\tau=1}^t (\frac{1+q^2}{2})^{t-\tau} \mathbb E\big[\|\bar\del_{\tau}\|^2\big]  \\
    &\leq \frac{4q^2\eta_l^2 K^2 (60\eta_l^2 K^2 L^2+1)}{1-q^2}\sum_{\tau=1}^t (\frac{1+q^2}{2})^{t-\tau} \mathbb E\big[\|\nabla f(\theta_{\tau})\|^2\big] \\
    &\hspace{1.5in} + \frac{16\eta_l^2 q^2K}{(1-q^2)^2 n}\sigma^2+\frac{80\eta_l^4q^2 K^3L^2}{(1-q^2)^2}(\sigma^2+6K\sigma_g^2), \nonumber
\end{align*}
which proves the second argument, where we use Lemma~\ref{lemma:consensus} to bound the local model update. In addition, we know that $\|\del_t\|\leq \eta_l KG$ by Assumption~\ref{ass:boundgrad} for any $t$.

The absolute bound $\|e_{t,i}\|^2\leq \frac{4\eta_l^2 q_{\mathcal C}^2K^2G^2}{(1-q_{\mathcal C}^2)^2}$ follows from \eqref{eq:e_t 0} by a similar recursion argument used on local error $e_{t,i}$, and the fact that $q_{\mathcal C}\leq \max\{q_{\mathcal C}, q_{\mathcal A}\}=q$.
\end{proof}

\begin{Lemma} \label{lemma:bound big E_t}
For the moving average error sequence $\mathcal E_t$, it holds that
\begin{align*}
    & \|\mathcal E_{t}\|^2\leq \frac{4\eta_l^2 q^2K^2G^2}{(1-q^2)^2},\quad \text{for}\ \forall t,\\
    &\sum_{t=1}^T \mathbb E\big[\|\mathcal E_t\|^2\big]\leq \frac{8q^2\eta_l^2 K^2 (60\eta_l^2 K^2 L^2+1)}{(1-q^2)^2} \sum_{t=1}^T \mathbb E\big[\|\nabla f(\theta_\tau)\|^2\big] \\
    &\hspace{1.3in}+\frac{16T\eta_l^2 q^2K}{(1-q^2)^2 n}\sigma^2+\frac{80T\eta_l^4q^2 K^3L^2}{(1-q^2)^2}(\sigma^2+6K\sigma_g^2).
\end{align*}
\end{Lemma}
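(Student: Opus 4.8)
The plan is to reduce both claims to the bounds on the averaged error accumulator $\bar e_\tau$ already established in Lemma~\ref{lemma:bound e_t}, exploiting that $\mathcal E_t$ is a geometric (exponential) moving average of the $\bar e_\tau$'s whose weights $(1-\beta_1)\beta_1^{t-\tau}$ sum to $1-\beta_1^t\le 1$. No new probabilistic input is needed; everything follows from convexity and two summation-order swaps.

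For the absolute bound, I would first apply the triangle inequality to the definition $\mathcal E_t=(1-\beta_1)\sum_{\tau=1}^t\beta_1^{t-\tau}\bar e_\tau$, giving $\|\mathcal E_t\|\le (1-\beta_1)\sum_{\tau=1}^t\beta_1^{t-\tau}\|\bar e_\tau\|$. I would then bound $\|\bar e_\tau\|\le \frac1n\sum_{i=1}^n\|e_{\tau,i}\|\le \frac{2\eta_l qKG}{1-q^2}$ using the uniform bound on $\|e_{\tau,i}\|$ from Lemma~\ref{lemma:bound e_t}. Since the weights sum to at most one, this collapses to $\|\mathcal E_t\|\le \frac{2\eta_l qKG}{1-q^2}$, and squaring yields the first claim exactly.

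For the summable bound, the key is to pass from $\|\mathcal E_t\|^2$ to a weighted sum of $\|\bar e_\tau\|^2$. Since $\mathcal E_t$ is a convex-type combination up to the factor $1-\beta_1^t\le 1$, Jensen's inequality (equivalently Cauchy--Schwarz on the weights) gives $\|\mathcal E_t\|^2\le (1-\beta_1)\sum_{\tau=1}^t\beta_1^{t-\tau}\|\bar e_\tau\|^2$. Taking expectations and summing over $t=1,\dots,T$, I would swap the order of summation; the inner sum $(1-\beta_1)\sum_{t=\tau}^T\beta_1^{t-\tau}\le 1$ telescopes away, leaving $\sum_{t=1}^T\mathbb E[\|\mathcal E_t\|^2]\le \sum_{\tau=1}^T\mathbb E[\|\bar e_\tau\|^2]$. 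Because $\bar e_1=0$ at initialization, only $\tau\ge 2$ contributes, and each such term is controlled by the second bound in Lemma~\ref{lemma:bound e_t}.

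Substituting that bound, the constant variance terms simply pick up a factor $T$, immediately producing the second and third terms of the claim. The one step requiring care is the gradient-norm term: after substitution I am left with a double sum $\sum_{\tau}\sum_{s=1}^{\tau-1}(\tfrac{1+q^2}{2})^{\tau-1-s}\mathbb E[\|\nabla f(\theta_s)\|^2]$, which I would again reorganize by swapping the order of summation, so that for each fixed $s$ the remaining geometric series $\sum_{j\ge 0}(\tfrac{1+q^2}{2})^j\le \frac{2}{1-q^2}$ can be summed. This extra factor $\frac{2}{1-q^2}$ combines with the prefactor $\frac{4q^2\eta_l^2K^2(60\eta_l^2K^2L^2+1)}{1-q^2}$ from Lemma~\ref{lemma:bound e_t} to produce exactly the stated coefficient $\frac{8q^2\eta_l^2K^2(60\eta_l^2K^2L^2+1)}{(1-q^2)^2}$. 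The main obstacle is purely bookkeeping: correctly carrying out the two nested summation-order swaps and verifying that the two geometric sums (one over the EMA weights, one over the error-recursion weights) each contribute the right factor, so that the denominators accumulate precisely to $(1-q^2)^2$.
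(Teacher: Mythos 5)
Your proof is correct and takes essentially the same route as the paper's: the uniform bound on $\|e_{t,i}\|$ from Lemma~\ref{lemma:bound e_t} handles the first claim, and Cauchy--Schwarz/Jensen on the exponential-moving-average weights followed by substitution of the second bound in Lemma~\ref{lemma:bound e_t} and two geometric-series summation swaps yields the second claim with exactly the stated $(1-q^2)^{-2}$ coefficients. The only immaterial difference is order of operations: you collapse the $\beta_1$-weighted sum over $t$ before substituting the bound on $\mathbb{E}\big[\|\bar e_\tau\|^2\big]$, whereas the paper substitutes first and then sums.
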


\begin{proof}
The first argument can be easily deduced by the definition of $\mathcal E_t$ that
\begin{align*}
    \|\mathcal E_{t}\|&=(1-\beta_1)\|\sum_{\tau=1}^t\beta_1^{t-\tau}\bar e_t\|\\
    &\leq \|e_{t,i}\|\leq \frac{2\eta_l qKG}{1-q^2}.
\end{align*}
Denote the quantity
$$K_{t}\eqdef \sum_{\tau=1}^t (\frac{1+q^2}{2})^{t-\tau} \mathbb E\big[\|\nabla f(\theta_\tau)\|^2\big].$$
By the same technique as in the proof of Lemma~\ref{lemma:m_t,m_t'}, denoting $\bar e_{t,j}$ as the $j$-th coordinate of $\bar e_{t}$, we can bound the accumulated error sequence by
\begin{align*}
    \mathbb E\big[\|\mathcal E_t\|^2\big]&=\mathbb E\big[\|(1-\beta_1)\sum_{\tau=1}^t\beta_1^{t-\tau} \bar e_\tau\|^2\big]\\
    &\leq (1-\beta_1)^2\sum_{j=1}^d \mathbb E\big[(\sum_{\tau=1}^t\beta_1^{t-\tau} \bar e_{\tau,j})^2\big]\\
    &\overset{(a)}{\leq} (1-\beta_1)^2\sum_{j=1}^d \mathbb E\big[(\sum_{\tau=1}^t\beta_1^{t-\tau})(\sum_{\tau=1}^t\beta_1^{t-\tau} \bar e_{\tau,j}^2)\big]\\
    &\leq (1-\beta_1)\sum_{\tau=1}^t \beta_1^{t-\tau}\mathbb E\big[\|\bar e_\tau\|^2\big]\\
    &\overset{(b)}{\leq} \frac{16\eta_l^2 q^2K}{(1-q^2)^2 n}\sigma^2+\frac{80\eta_l^4q^2 K^3L^2}{(1-q^2)^2}(\sigma^2+6K\sigma_g^2) \\
    &\hspace{1.3in} +\frac{4(1-\beta_1)q^2\eta_l^2 K^2 (60\eta_l^2 K^2 L^2+1)}{1-q^2} \sum_{\tau=1}^t \beta_1^{t-\tau} K_{\tau},
\end{align*}
where (a) is due to Cauchy-Schwartz inequality and (b) is a result of Lemma~\ref{lemma:bound e_t}. Summing over $t=1,...,T$ and using the technique of geometric series summation leads to
\begin{align*}
    \sum_{t=1}^T \mathbb E\big[\|\mathcal E_t\|^2\big]&\leq \frac{16T\eta_l^2 q^2K}{(1-q^2)^2 n}\sigma^2+\frac{80T\eta_l^4q^2 K^3L^2}{(1-q^2)^2}(\sigma^2+6K\sigma_g^2) \\
    &\hspace{1in}+ \frac{4(1-\beta_1)q^2\eta_l^2 K^2 (60\eta_l^2 K^2 L^2+1)}{1-q^2} \sum_{t=1}^T \sum_{\tau=1}^t \beta_1^{t-\tau} K_{\tau}\\
    &\leq \frac{16T\eta_l^2 q^2K}{(1-q^2)^2 n}\sigma^2+\frac{80T\eta_l^4q^2 K^3L^2}{(1-q^2)^2}(\sigma^2+6K\sigma_g^2) \\
    &\hspace{1in}+ \frac{4q^2\eta_l^2 K^2 (60\eta_l^2 K^2 L^2+1)}{1-q^2} \sum_{t=1}^T \sum_{\tau=1}^t (\frac{1+q^2}{2})^{t-\tau} \mathbb E\big[\|\nabla f(\theta_\tau)\|^2\big] \\
    &\leq \frac{16T\eta_l^2 q^2K}{(1-q^2)^2 n}\sigma^2+\frac{80T\eta_l^4q^2 K^3L^2}{(1-q^2)^2}(\sigma^2+6K\sigma_g^2) \\
    &\hspace{1in}+ \frac{8q^2\eta_l^2 K^2 (60\eta_l^2 K^2 L^2+1)}{(1-q^2)^2} \sum_{t=1}^T \mathbb E\big[\|\nabla f(\theta_\tau)\|^2\big].
\end{align*}
The desired result is obtained.
\end{proof}

\newpage
\begin{Lemma} \label{lemma:bound v_t}
It holds that $\forall t\in [T]$, $\forall i\in [d]$, $\hat v_{t,i}\leq \frac{4\eta_l^2(1+q^2)^3K^2}{(1-q^2)^2}G^2$.
\end{Lemma}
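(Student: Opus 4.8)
The plan is to reduce the coordinate-wise bound on $\hat v_{t,i}$ to a uniform bound on the aggregated compressed update $\overline{\widetilde\del}_\tau$, and then to control the latter through the error-feedback recursion. First I would observe that, coordinate-wise, $v_{t,i}=(1-\beta_2)\sum_{\tau=1}^t\beta_2^{t-\tau}\overline{\widetilde\del}_{\tau,i}^2$ is a sub-convex combination (the weights sum to $1-\beta_2^t\le 1$) of past squared entries, so $v_{t,i}\le\max_{\tau\le t}\overline{\widetilde\del}_{\tau,i}^2$. Since $\hat v_t=\max(v_t,\hat v_{t-1})$ is just the running coordinate-wise maximum of $v_1,\dots,v_t$ (with $\hat v_0=0$), it follows that $\hat v_{t,i}\le\max_{\tau\le t}\overline{\widetilde\del}_{\tau,i}^2\le\max_{\tau\le t}\|\overline{\widetilde\del}_\tau\|^2$. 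Hence it suffices to show $\|\overline{\widetilde\del}_\tau\|^2\le\frac{4\eta_l^2(1+q^2)^3K^2}{(1-q^2)^2}G^2$ for every $\tau$.

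To bound $\|\overline{\widetilde\del}_\tau\|$, I would work at the level of a single client and use $\|\overline{\widetilde\del}_\tau\|\le\frac1n\sum_{i=1}^n\|\widetilde\del_{\tau,i}\|$. Writing $y_{\tau,i}=\del_{\tau,i}+e_{\tau,i}$, the $q_{\mathcal C}$-deviate property (Definition~\ref{def:quant}) gives $\|\widetilde\del_{\tau,i}\|=\|\mathcal C(y_{\tau,i})\|\le(1+q_{\mathcal C})\|y_{\tau,i}\|\le(1+q)\|y_{\tau,i}\|$, so the task is reduced to bounding $\|y_{\tau,i}\|$. The same property also gives $\|e_{\tau+1,i}\|=\|y_{\tau,i}-\mathcal C(y_{\tau,i})\|\le q\|y_{\tau,i}\|$, which combined with Assumption~\ref{ass:boundgrad} (yielding $\|\del_{\tau+1,i}\|\le\eta_l KG$) produces the scalar recursion $\|y_{\tau+1,i}\|\le\eta_l KG+q\|y_{\tau,i}\|$. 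Unrolling from $e_{1,i}=0$ and summing the geometric series gives the clean bound $\|y_{\tau,i}\|\le\frac{\eta_l KG}{1-q}$, hence $\|\widetilde\del_{\tau,i}\|\le\frac{(1+q)\eta_l KG}{1-q}$ and therefore $\|\overline{\widetilde\del}_\tau\|\le\frac{(1+q)\eta_l KG}{1-q}$.

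It remains to match the stated constant. Squaring and using $(1-q^2)^2=(1-q)^2(1+q)^2$ gives $\|\overline{\widetilde\del}_\tau\|^2\le\frac{(1+q)^4}{(1-q^2)^2}\eta_l^2 K^2 G^2$, and the elementary inequality $(1+q)^2\le 2(1+q^2)$ (equivalently $(1-q)^2\ge 0$) upgrades this to $\frac{(1+q)^4}{(1-q^2)^2}\le\frac{4(1+q^2)^2}{(1-q^2)^2}\le\frac{4(1+q^2)^3}{(1-q^2)^2}$ since $1+q^2\ge 1$, which is exactly the claimed bound. As none of the estimates depend on $t$ or on the coordinate $i$, the uniform bound on $\hat v_{t,i}$ follows. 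The only genuinely delicate point is the first step---correctly arguing that the $\max$-based $\hat v_t$ together with the bounded exponential-moving-average weights reduces the per-coordinate quantity to $\max_\tau\|\overline{\widetilde\del}_\tau\|^2$; everything after that is a short recursion plus constant bookkeeping, where the only care needed is to route the final simplification through $(1+q)^2\le 2(1+q^2)$ rather than through a looser triangle-inequality bound on $\|\widetilde\del_{\tau,i}\|$ (via $\widetilde\del_{\tau,i}=\del_{\tau,i}+e_{\tau,i}-e_{\tau+1,i}$ and Lemma~\ref{lemma:bound e_t}), which would be tight enough only marginally and is cleaner to avoid.
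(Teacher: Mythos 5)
Your proof is correct, and it lands on exactly the stated constant. The overall skeleton is the same as the paper's: reduce $\hat v_{t,i}$ through the EMA-plus-running-max structure (weights summing to $1-\beta_2^t\le 1$) to a uniform bound on $\|\overline{\widetilde\del}_\tau\|^2$, then control the compressed update via the $q$-deviate property together with $\|\del_{\tau,i}\|\le\eta_l KG$ from Assumption~\ref{ass:boundgrad} and a bound on the error accumulator. Where you diverge is in how $\|\del_{\tau,i}+e_{\tau,i}\|$ is controlled: the paper invokes the absolute bound $\|e_{t,i}\|^2\le \frac{4\eta_l^2q^2K^2G^2}{(1-q^2)^2}$ from Lemma~\ref{lemma:bound e_t} (whose own recursion uses Young's inequality with contraction factor $\frac{1+q^2}{2}$) and then expands $\|\mathcal C(y)\|^2\le 2(1+q^2)\|y\|^2$ at the squared-norm level, whereas you run a self-contained scalar recursion directly on $\|y_{\tau,i}\|$ with the sharper contraction factor $q$, use $\|\mathcal C(y)\|\le(1+q)\|y\|$ at the norm level, and recover the paper's constant via $(1+q)^2\le 2(1+q^2)$. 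Your route is marginally tighter in its intermediate constants (e.g., your implicit error bound $\|e_{\tau,i}\|\le\frac{q\eta_l KG}{1-q}$ improves on the paper's $\frac{2q\eta_l KG}{1-q^2}$) and has the virtue of not depending on Lemma~\ref{lemma:bound e_t} at all; the paper's version buys economy instead, reusing a lemma it needs elsewhere anyway. Your closing remark is also accurate: the alternative decomposition $\widetilde\del_{\tau,i}=\del_{\tau,i}+e_{\tau,i}-e_{\tau+1,i}$ with two applications of Lemma~\ref{lemma:bound e_t} does still fit under the stated constant, but only with thin margin (around $q\approx 0.5$ the two sides nearly coincide), so your cleaner routing through $(1+q)^2\le 2(1+q^2)$ is the right call.
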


\begin{proof}
For any $t$, by Lemma~\ref{lemma:bound e_t} and Assumption~\ref{ass:boundgrad} we have
\begin{align*}
    \|\widetilde \del_t\|^2&=\|\mathcal C(\del_t+e_t)\|^2\\
    &\leq \|\mathcal C(\del_t+e_t)-(\del_t+e_t)+(\del_t+e_t)\|^2\\
    &\leq 2(q^2+1)\|\del_t+e_t\|^2\\
    &\leq 4(q^2+1)(\eta_l^2K^2G^2+\frac{4\eta_l^2 q^2K^2G^2}{(1-q^2)^2})\\
    &=\frac{4\eta_l^2(1+q^2)^3K^2}{(1-q^2)^2}G^2.
\end{align*}
Consider the updating rule of $\hat v_t=\max\{v_t,\hat v_{t-1}\}$. We know that there exists a $j\in[t]$ such that $\hat v_t=v_j$. Thus, we have
\begin{align*}
    \hat v_{t,i}=(1-\beta_2)\sum_{\tau=1}^j \beta_2^{j-\tau} \tilde g_{t,i}^2\leq \frac{4\eta_l^2(1+q^2)^3K^2}{(1-q^2)^2}G^2,
\end{align*}
which proves the claim.
\end{proof}

The next Lemma is analogue to Lemma 5 in \cite{li2022distributed}.

\begin{Lemma}  \label{lemma:bound difference}
Let $D_t\eqdef \frac{1}{\sqrt{\hat v_{t-1}+\epsilon}}-\frac{1}{\sqrt{\hat v_t+\epsilon}}$ be defined as above. Then,
\begin{align*}
    &\sum_{t=1}^T \|D_t\|_1 \leq \frac{d}{\sqrt\epsilon},\quad  \sum_{t=1}^T \|D_t\|^2 \leq \frac{d}{\epsilon}.
\end{align*}
\end{Lemma}

\begin{proof}
By the updating rule of Fed-EF-AMS, $\hat v_{t-1}\leq \hat v_t$ for $\forall t$. Therefore, by the initialization $\hat v_0=0$, we have
\begin{align*}
    \sum_{t=1}^T \|D_t\|_1 &=\sum_{t=1}^T \sum_{i=1}^d (\frac{1}{\sqrt{\hat v_{t-1,i}+\epsilon}}-\frac{1}{\sqrt{\hat v_{t,i}+\epsilon}})\\
    &=\sum_{i=1}^d (\frac{1}{\sqrt{\hat v_{0,i}+\epsilon}}-\frac{1}{\sqrt{\hat v_{T,i}+\epsilon}})\\
    &\leq \frac{d}{\sqrt\epsilon}.
\end{align*}
For the sum of squared $l_2$ norm, note the fact that for $a\geq b>0$, it holds that
\begin{equation*}
    (a-b)^2\leq (a-b)(a+b)=a^2-b^2.
\end{equation*}
Thus,
\begin{align*}
    \sum_{t=1}^T \|D_t\|^2&=\sum_{t=1}^T \sum_{i=1}^d (\frac{1}{\sqrt{\hat v_{t-1,i}+\epsilon}}-\frac{1}{\sqrt{\hat v_{t,i}+\epsilon}})^2\\
    &\leq \sum_{t=1}^T \sum_{i=1}^d (\frac{1}{\hat v_{t-1,i}+\epsilon}-\frac{1}{\hat v_{t,i}+\epsilon})\\
    &\leq \frac{d}{\epsilon},
\end{align*}
which gives the desired result.
\end{proof}

\subsection{Proof of Theorem~\ref{theo:partial-simple}: Partial Participation} \label{app sec:partial-simple}

\begin{proof}
We can use a similar proof structure as previous analysis for full participation Fed-EF-SGD as in Section~\ref{app sec:SGD}. Like before, we first define the following virtual iterates:
\begin{align}
    x_{t+1} = \theta_{t+1}-\eta \frac{1}{m}\sum_{i=1}^n e_{t+1,i} &=  \theta_t-\eta\overline{\widetilde\del}_{t, \mathcal M_t}- \eta\frac{1}{m}\sum_{i\in \mathcal M_t}e_{t+1,i}-\eta\frac{1}{m}\sum_{i\notin \mathcal M_t}e_{t+1,i} \nonumber\\
    &=\theta_t-\eta\bar\del_{t,\mathcal M_t} -\eta\frac{1}{m}\sum_{i\in \mathcal M_t}e_{t,i}-\eta\frac{1}{m}\sum_{i\notin \mathcal M_t}e_{t,i} \label{eq:virtual-SGD-partial-key}\\
    &= \theta_t-\eta\bar\del_{t,\mathcal M_t} -\eta \frac{1}{m}\sum_{i=1}^n e_{t,i}  \nonumber\\
    &=x_t -\eta\bar\del_{t,\mathcal M_t}. \nonumber
\end{align}
Here, (\ref{eq:virtual-SGD-partial-key}) follows from the partial participation setup where there is no error accumulation for inactive clients. The smoothness of loss functions implies
\begin{align*}
    f(x_{t+1})\leq f(x_t)+\langle \nabla f(x_t), x_{t+1}-x_t\rangle+\frac{L}{2}\| x_{t+1}-x_t\|^2.
\end{align*}
Taking expectation w.r.t. the randomness at round $t$, we have
\begin{align}
    &\mathbb E[f(x_{t+1})]-f(x_t) \nonumber\\
    &\leq -\eta\mathbb E\big[\langle \nabla f(x_t), \bar \del_{t,\mathcal M_t}\rangle \big] +\frac{\eta^2L}{2}\mathbb E\big[\|\bar \del_{t,\mathcal M_t}\|^2\big] \nonumber\\
    &=\underbrace{-\eta\mathbb E\big[\langle\nabla f(\theta_t),\bar\del_{t,\mathcal M_t}\rangle\big]}_{I} +  +\underbrace{\frac{\eta^2L}{2}\mathbb E\big[\|\bar \del_{t,\mathcal M_t}\|^2\big]}_{II}+\underbrace{\eta\mathbb E\big[\langle\nabla f(x_t)-\nabla f(\theta_t),\bar\del_{t,\mathcal M_t}\rangle\big]}_{III}. \label{eq0:partial_simple}
\end{align}
Note that the expectation is also with respect to the randomness in the client sampling procedure. For the first term, we may adopt the similar idea of the proof of Lemma~\ref{lemma:inner-product}. Since the client sampling is random, we have $\mathbb E[\bar\del_{t,\mathcal M_t}]=\mathbb E[\bar\del_{t}]$. Thus,
\begin{align*}
    \bm I&=-\eta\mathbb E\big[\langle\nabla f(\theta_t),\bar\del_{t,\mathcal M_t}\rangle\big] \\
    &=-\eta\mathbb E\big[\langle\nabla f(\theta_t),\bar\del_{t}-\eta_l K\nabla f(\theta_t)+\eta_l K\nabla f(\theta_t)\rangle\big] \\
    &=-\eta\eta_l K\mathbb E\big[\| \nabla f(\theta_t) \|^2\big]+\eta\eta_l \big\langle \sqrt K \nabla f(\theta_t), -\frac{1}{n\sqrt K}\sum_{i=1}^n\sum_{k=1}^K \big(\nabla f_i(\theta_{t,i}^{(k)}) + \nabla f(\theta_t)\big) \big\rangle \\
    &\overset{(a)}{=} -\eta\eta_l K\mathbb E\big[\| \nabla f(\theta_t) \|^2\big]+\eta\eta_l \mathbb E\Big[ \frac{K}{2}\|\nabla f(\theta_t) \|^2\\
    &\hspace{1in} +\frac{1}{2Kn^2}\|\sum_{i=1}^n\sum_{k=1}^K (\nabla f_i(\theta_{t,i}^{(k)}) + \nabla f(\theta_t))\|^2 - \frac{1}{2Kn^2}\|\sum_{i=1}^n\sum_{k=1}^K \nabla f_i(\theta_{t,i}^{(k)})\|^2\Big]\\
    &\overset{(b)}{\leq} -\eta\eta_l K\mathbb E\big[\| \nabla f(\theta_t) \|^2\big]+ \frac{\eta\eta_lK}{2}\mathbb E\big[\|\nabla f(\theta_t) \|^2\big]\\
    &\hspace{0.3in} +\frac{\eta\eta_l KL^2}{2}\Big[5\eta_l^2 K(\sigma^2+6K\sigma_g^2)+30\eta_l^2 K^2\mathbb E\big[\|\nabla f(\theta_t)\|^2\big]\Big] - \frac{\eta\eta_l}{2Kn^2}\mathbb E\big[\|\sum_{i=1}^n\sum_{k=1}^K \nabla f_i(\theta_{t,i}^{(k)})\|^2\big]\\
    &\leq -\frac{\eta\eta_l K}{4}\mathbb E\big[\| \nabla f(\theta_t) \|^2\big]+\frac{5\eta\eta_l^3 K^2 L^2}{2}(\sigma^2+6K\sigma_g^2)- \frac{\eta\eta_l}{2Kn^2}\mathbb E\big[\|\sum_{i=1}^n\sum_{k=1}^K \nabla f_i(\theta_{t,i}^{(k)})\|^2\big],
\end{align*}
when $\eta_l\leq \frac{1}{8KL}$, where (a) is a result of the fact that $\langle z_1, z_2\rangle=\|z_1\|^2+\|z_2\|^2-\|z_1-z_2\|^2$, (b) is because of Lemma~\ref{lemma:consensus}.

\vspace{0.1in}
\noindent For term II, we have
\begin{align*}
    \bm{II}&\leq \frac{\eta^2\eta_l^2 KL}{2m}\sigma^2 + \frac{\eta^2\eta_l^2L}{2n(n-1)}\mathbb E\big[\|\sum_{i=1}^n\sum_{k=1}^K \nabla f_i(\theta_{t,i}^{(k)})\|\big]^2\\
    &\hspace{0.2in} + C'\Big[\frac{3\eta^2\eta_l^2K^2L(30\eta_l^2K^2L^2+1)}{2m} \mathbb E\big[\|\nabla f(\theta_t)\|^2\big]+\frac{15\eta^2\eta_l^4K^3L^3}{2m}(\sigma^2+6K\sigma_g^2)+\frac{3\eta^2\eta_l^2K^2L}{2m}\sigma_g^2 \Big],
\end{align*}
with $C'=\frac{n-m}{n-1}$. Furthermore, we have that

\begin{align*}
    \bm{III}&\leq 2\eta^2L \mathbb E\big[\|\frac{1}{m}\sum_{i=1}^n e_{t,i}\|^2 \big]+\frac{\eta^2L}{2} \mathbb E\big[ \|\bar\del_{t,\mathcal M_t}\|^2 \big].
\end{align*}
The second term is the same as term II. We now bound the first term. Denote $\tilde e_{t,i}=e_{t,i}+\del_{t,i}-\widetilde\del_{t,i}$, we have

\begin{align*}
    \mathbb E\big[\|\frac{1}{m}\sum_{i=1}^n e_{t+1,i}\|^2 \big]&=\frac{1}{m^2}\mathbb E_{\bm e_t}\Big[\underbrace{\mathbb E_{\mathcal M_t}\big[\| \sum_{i=1}^n \mathbbm 1\{i \in \mathcal M_t\} \tilde e_{t,i}+ \sum_{i=1}^n \mathbbm 1\{i \notin \mathcal M_t\} e_{t,i} \|^2  \big| \bm e_t}_{A} \big]\Big].
\end{align*}
By the updating rule of $e_{t,i}$, the inner expectation, conditional on $\bm e_t=(e_{t,1},..., e_{t,n})^T$, can be computed as
\begin{align*}
    A&= \frac{m}{n}\sum_{i=1}^n \|\tilde e_{i,t}\|^2 + \frac{m(m-1)}{n(n-1)}\sum_{i\neq j}^{n}\tilde e_{t,i}\tilde e_{t,j} +\frac{n-m}{n}\sum_{i=1}^n \|e_{i,t}\|^2 \\
    &\hspace{1.2in} + \frac{(n-m)(n-m-1)}{n(n-1)}\sum_{i\neq j}^{n}e_{t,i} e_{t,j} + \frac{m(n-m)}{n(n-1)}\sum_{i\neq j}^{n} \tilde e_{t,i}e_{t,j} \\
    &=\frac{m}{n} \| \sum_{i=1}^n\tilde e_{t,i} \|^2-\frac{m(n-m)}{n(n-1)}\sum_{i\neq j}^{n}\tilde e_{t,i}\tilde e_{t,j}+\frac{n-m}{n} \|\sum_{i=1}^n e_{i,t}\|^2 \\
    &\hspace{1.5in} - \frac{m(n-m)}{n(n-1)}\sum_{i\neq j}^{n}e_{t,i} e_{t,j} + \frac{m(n-m)}{n(n-1)}\sum_{i\neq j}^{n} \tilde e_{t,i}e_{t,j}\\
    &=\frac{m}{n} \| \sum_{i=1}^n\tilde e_{t,i} \|^2+\frac{n-m}{n} \|\sum_{i=1}^n e_{i,t}\|^2-\frac{m(n-m)}{n(n-1)}\sum_{i\neq j}^{n} (\tilde e_{t,i}\tilde e_{t,j}+ e_{t,i}e_{t,j}-\tilde e_{t,i}e_{t,j}) \\
    &=\frac{m}{n} \| \sum_{i=1}^n\tilde e_{t,i} \|^2+\frac{n-m}{n} \|\sum_{i=1}^n e_{i,t}\|^2 \\
    &\hspace{1.2in} -\frac{m(n-m)}{n(n-1)}\|\sum_{i=1}^n (\tilde e_{t,i}-e_{t,i})\|^2+\frac{m(n-m)}{n(n-1)}\sum_{i=1}^n(\|\tilde e_{t,i}\|^2 + \|e_{t,i}\|^2) \\
    &\leq \frac{m}{n} \| \sum_{i=1}^n\tilde e_{t,i} \|^2+\frac{n-m}{n} \|\sum_{i=1}^n e_{t,i}\|^2+\frac{m(n-m)}{n(n-1)}\sum_{i=1}^n(\|\tilde e_{t,i}\|^2 + \|e_{t,i}\|^2).
\end{align*}

Therefore, by Definition~\ref{def:quant} and Assumption~\ref{ass:compress_diff} we obtain
\begin{align*}
    &\mathbb E\big[\|\frac{1}{m}\sum_{i=1}^ne_{t+1,i}\|^2\big]\\
    &\leq \frac{mq^2}{n}\mathbb E[\|\frac{1}{m}\sum_{i\in\mathcal G}(e_{t,i}+\del_{t,i})\|^2] + \frac{n-m}{n}\mathbb E[\|\frac{1}{m} \sum_{i=1}^ne_{t,i} \|^2] \\
    &\hspace{1.4in} +\frac{(n-m)}{mn(n-1)}\sum_{i=1}^n ((2q^2+1)\|e_{t,i}\|^2+2q^2\|\del_{t,i}\|^2) \\
    &\leq \frac{m(1+\gamma)q^2+(n-m)}{n} \mathbb E[\|\frac{1}{m}\sum_{i=1}^ne_{t,i}\|^2+\frac{m(1+1/\gamma)q^2}{n}\mathbb E[\|\frac{1}{m}\sum_{i=1}^n\del_{t,i}\|^2]\\
    &\hspace{1.4in} +\frac{(n-m)}{mn(n-1)}\sum_{i=1}^n ((2q^2+1)\|e_{t,i}\|^2+2q^2\|\del_{t,i}\|^2).
\end{align*}
We have by Lemma~\ref{lemma:consensus} that
\begin{align*}
    \mathbb E[\|e_{t,i}\|^2]&\leq \frac{20q^2\eta_l^2 K}{(1-q^2)^2}(\sigma^2+6K\sigma_g^2)+\frac{60\eta_l^2 q^2K^2}{1-q^2}\sum_{\tau=1}^t (\frac{1+q^2}{2})^{t-\tau} \mathbb E\big[\|\nabla f(\theta_{\tau})\|^2\big],\\
    \mathbb E[\|\del_{t,i}\|^2]&\leq 5\eta_l^2 K(\sigma^2+6K\sigma_g^2)+30\eta_l^2 K^2\mathbb E\big[\|\nabla f(\theta_t)\|^2\big],
\end{align*}
which implies
\begin{align*}
    &\frac{(n-m)}{mn(n-1)}\sum_{i=1}^n ((2q^2+1)\|e_{t,i}\|^2+2q^2\|\del_{t,i}\|^2)\\
    &\leq \frac{(n-m)}{m(n-1)}\Big[ \frac{70q^2\eta_l^2 K}{(1-q^2)^2}(\sigma^2+6K\sigma_g^2) +\frac{180\eta_l^2 q^2K^2}{1-q^2}\sum_{\tau=1}^t (\frac{1+q^2}{2})^{t-\tau} \mathbb E\big[\|\nabla f(\theta_{\tau})\|^2\big] \\
    &\hspace{1.5in} + \frac{60\eta_l^2 q^2 K^2}{(1-q^2)^2}\mathbb E\big[\|\nabla f(\theta_t)\|^2\big] \Big].
\end{align*}
Recall $q=\max\{q_{\mathcal A},q_{\mathcal C}\}$. Let $\gamma=(1-q^2)/2q^2$. We have
\begin{align*}
    &\frac{m(1+\gamma)q^2+(n-m)}{n}=1-\frac{(1-q^2)m}{2n}<1, \\
    &\frac{m(1+1/\gamma)q^2}{n}=\frac{m(1+q^2)q^2}{n(1-q^2)}\leq \frac{2mq^2}{n(1-q^2)}.
\end{align*}
By the recursion argument used before, applying Lemma~\ref{lemma:bound delta} (adjusted by an $n^2/m^2$ factor) we obtain
\begin{align*}
    &\mathbb E\big[\|\frac{1}{m}\sum_{i=1}^ne_{t+1,i}\|^2\big]\\
    &\leq \frac{2mq^2}{n(1-q^2)}\sum_{\tau=1}^t\big( 1-\frac{(1-q^2)m}{2n} \big)^{t-\tau} \frac{n^2}{m^2}\big[ \frac{\eta_l^2}{n^2}\mathbb E\big[\| \sum_{i=1}^n \sum_{k=1}^K \nabla f_i(\theta_{\tau,i}^{(k)})\|^2\big] + \frac{\eta_l^2 K}{n}\sigma^2 \big]\\
    &+\frac{2n(n-m)}{(1-q^2)m^2(n-1)}\Big[ \frac{70q^2\eta_l^2 K}{(1-q^2)^2}(\sigma^2+6K\sigma_g^2) +\frac{180\eta_l^2 q^2K^2}{1-q^2}\sum_{\tau=1}^t (\frac{1+q^2}{2})^{t-\tau} \mathbb E\big[\|\nabla f(\theta_{\tau})\|^2\big] \\
    &\hspace{2in} + \frac{60\eta_l^2 q^2 K^2}{(1-q^2)^2}\mathbb E\big[\|\nabla f(\theta_t)\|^2\big] \Big]\\
    &\leq \frac{2\eta_l^2q^2}{(1-q^2)mn} \sum_{\tau=1}^t\big( 1-\frac{(1-q^2)m}{2n} \big)^{t-\tau}\mathbb E\big[\| \sum_{i=1}^n \sum_{k=1}^K \nabla f_i(\theta_{\tau,i}^{(k)})\|^2\big] + \frac{4\eta_l^2q^2Kn}{(1-q^2)^2m^2}\sigma^2\\
    &\hspace{0.2in}+\frac{280\eta_l^2q^2(n-m)K}{(1-q^2)^3m^2}(\sigma^2+6K\sigma_g^2) + \frac{720\eta_l^2q^2(n-m)K^2}{(1-q^2)^2m^2}\sum_{\tau=1}^t (\frac{1+q^2}{2})^{t-\tau} \mathbb E\big[\|\nabla f(\theta_{\tau})\|^2\big]\\
    &\hspace{2in} + \frac{240\eta_l^2q^2(n-m)K^2}{(1-q^2)^3m^2}\mathbb E\big[ \|\nabla f(\theta_t)\|^2 \big].
\end{align*}
Summing over $t=1,...,T$ gives
\begin{align*}
    &\sum_{t=1}^T\mathbb E\big[\|\frac{1}{m}\sum_{i=1}^ne_{t+1,i}\|^2\big]\\
    &\leq \frac{4\eta_l^2q^2}{(1-q^2)^2m^2}\sum_{t=1}^T\mathbb E\big[\| \sum_{i=1}^n \sum_{k=1}^K \nabla f_i(\theta_{t,i}^{(k)})\|^2\big] + \frac{4T\eta_l^2q^2Kn}{(1-q^2)^2m^2}\sigma^2\\
    &\hspace{0.3in}+\frac{280T\eta_l^2q^2(n-m)K}{(1-q^2)^3m^2}(\sigma^2+6K\sigma_g^2) + \frac{1680\eta_l^2q^2(n-m)K^2}{(1-q^2)^3m^2}\sum_{t=1}^T \mathbb E\big[\|\nabla f(\theta_{t})\|^2\big].
\end{align*}
Now we turn back to (\ref{eq0:partial_simple}). By taking the telescoping sum, we have
\begin{align*}
    &\mathbb E[f(x_{t+1})]-f(x_1)\\
    &\leq -\frac{\eta\eta_l K}{4}\sum_{t=1}^T\mathbb E\big[\| \nabla f(\theta_t) \|^2\big]+\frac{5T\eta\eta_l^3 K^2 L^2}{2}(\sigma^2+6K\sigma_g^2)- \frac{\eta\eta_l}{2Kn^2}\sum_{t=1}^T\mathbb E\big[\|\sum_{i=1}^n\sum_{k=1}^K \nabla f_i(\theta_{t,i}^{(k)})\|^2\big] \\
    &\hspace{0.5in} +\frac{T\eta^2\eta_l^2 KL}{m}\sigma^2 + \frac{\eta^2\eta_l^2L}{n(n-1)}\sum_{t=1}^T\mathbb E\big[\|\sum_{i=1}^n\sum_{k=1}^K \nabla f_i(\theta_{t,i}^{(k)})\|\big]^2\\
    &+ \frac{3\eta^2\eta_l^2C'K^2L(30\eta_l^2K^2L^2+1)}{m} \sum_{t=1}^T\mathbb E\big[\|\nabla f(\theta_t)\|^2\big]+\frac{15T\eta^2\eta_l^4C'K^3L^3}{m}(\sigma^2+6K\sigma_g^2)+\frac{3T\eta^2\eta_l^2C'K^2L}{m}\sigma_g^2  \\
    & + \frac{8\eta^2\eta_l^2q^2L}{(1-q^2)^2m^2}\sum_{t=1}^T\mathbb E\big[\| \sum_{i=1}^n \sum_{k=1}^K \nabla f_i(\theta_{t,i}^{(k)})\|^2\big] + \frac{8T\eta^2\eta_l^2q^2KLn}{(1-q^2)^3m^2}\sigma^2\\
    &\hspace{0.2in}+\frac{560T\eta^2\eta_l^2q^2(n-m)KL}{(1-q^2)^3m^2}(\sigma^2+6K\sigma_g^2) + \frac{3360\eta^2\eta_l^2q^2(n-m)K^2L}{(1-q^2)^3m^2}\sum_{t=1}^T \mathbb E\big[\|\nabla f(\theta_{t})\|^2\big].
\end{align*}
When the learning rate is chosen such that
\begin{align*}
    \eta_l\leq \min\Big\{ \frac{1}{6},\frac{m}{96C'\eta},\frac{m^2}{53760(n-m) C_1\eta}, \frac{1}{4\eta}, \frac{1}{32C_1\eta}\Big\}\frac{1}{KL},
\end{align*}
we can get
\begin{align*}
     \frac{1}{T}\sum_{t=1}^T\mathbb E\big[\| \nabla f(\theta_t) \|^2\big]&\lesssim \frac{f(\theta_1)-f(\theta^*)}{\eta\eta_l TK}+ \Big[ \frac{\eta\eta_l L}{m}+\frac{8\eta\eta_l C_1Ln}{m^2} \Big]\sigma^2 + \frac{3\eta\eta_lC'KL}{m}\sigma_g^2\\
    &\hspace{0.1in} + \Big[ \frac{5\eta_l^2KL^2}{2}+\frac{15\eta\eta_l^3C'K^2L^3}{m}+\frac{560\eta\eta_lC_1(n-m)L}{m^2} \Big](\sigma^2+6K\sigma_g^2),
\end{align*}
where $C_1=q^2/(1-q^2)^3$. Denote $B=n/m$. When choosing $\eta=\Theta(\sqrt{Km})$, $\eta_l=\Theta(\frac{1}{K\sqrt{TB}})$, the rate can be further bounded by
\begin{align*}
    \frac{1}{T}\sum_{t=1}^T\mathbb E\big[\| \nabla f(\theta_t) \|^2\big]&=\mathcal O\Big( \frac{\sqrt B (f(\theta_1)-f(\theta^*))}{\sqrt{TKm}} + (\frac{1}{\sqrt{TKmB}} +\frac{\sqrt B}{\sqrt{TKm}})\sigma^2+\frac{\sqrt{K}}{\sqrt{TmB}}\sigma_g^2\\
    &\hspace{0.3in} +(\frac{1}{TKB}+\frac{1}{T^{3/2}B^{3/2}\sqrt{Km}}+\frac{\sqrt B}{\sqrt{TKm}})(\sigma^2+6K\sigma_g^2) \Big),
\end{align*}
which can be further simplified by ignoring smaller terms as
\begin{align*}
    \frac{1}{T}\sum_{t=1}^T\mathbb E\big[\| \nabla f(\theta_t) \|^2\big]&=\mathcal O\Big(\frac{\sqrt n}{\sqrt m}\big( \frac{f(\theta_1)-f(\theta^*)}{\sqrt{TKm}} + \frac{1}{\sqrt{TKm}}\sigma^2 +\frac{\sqrt{K}}{\sqrt{Tm}}\sigma_g^2 \big)\Big).
\end{align*}
This completes the proof.
\end{proof}

\subsection{Convergence of Directly Using Biased Compressors}  \label{app sec: no EF}

\begin{proof}

We first clarify some (slightly modified) notations. Since we use the biased compression directly, there are no error compensation terms as in previous analysis. The update rule is simply
\begin{align*}
    \theta_{t+1} = \theta_t-\eta \overline{\widetilde\del}_t,
\end{align*}
where $\overline{\widetilde\del}_t=\frac{1}{n}\sum_{i=1}^n \widetilde\del_{t,i}\eqdef \frac{1}{n}\sum_{i=1}^n \mathcal C(\del_{t,i})$ is the average of compressed local model updates. Denote $b_{t,i}=\widetilde\del_{t,i}-\del_{t,i}$ as the difference (bias) between the true local model update of client $i$ in round $t$, and $\bar b=\frac{1}{n}\sum_{i=1}^n b_{t,i}$. By Assumption~\ref{ass:compress_diff}, we have
\begin{align}
    \mathbb E[\|\bar b_t\|^2]&=\mathbb E\big[\|\frac{1}{n}\sum_{i=1}^n \widetilde\del_{t,i}-\frac{1}{n}\sum_{i=1}^n \del_{t,i}\|^2 \big] \leq q^2\mathbb E[\|\bar\del_t\|^2].   \label{eqn:bound-b}
\end{align}
Our analysis starts with the smoothness Assumption~\ref{ass:smooth}, where
\begin{align*}
    f(\theta_{t+1})\leq f(\theta_t)+\langle \nabla f(\theta_t), \theta_{t+1}-\theta_t\rangle+\frac{L}{2}\| \theta_{t+1}-\theta_t\|^2.
\end{align*}
Taking expectation w.r.t. the randomness at round $t$ gives
\begin{align}
    &\mathbb E[f(\theta_{t+1})]-f(\theta_t)\leq -\eta\mathbb E\big[\langle \nabla f(\theta_t), \overline{\widetilde\del}_t \rangle\big] +\frac{\eta^2L}{2}\mathbb E\big[\|\overline{\widetilde\del}_t\|^2\big]. \label{sgd-eq0:nocomp}
\end{align}
The second term in (\ref{sgd-eq0:nocomp}) admits the following:
\begin{align*}
    \frac{\eta^2L}{2}\mathbb E\big[\|\overline{\widetilde\del}_t\|^2\big]&=\frac{\eta^2L}{2}\mathbb E\big[\|\bar\del_t+\bar b_t\|^2\big] \\
    &\leq (1+q^2)\eta^2 L\mathbb E[\|\bar\del_t\|^2] \\
    &\leq (1+q^2)\eta^2 L\Big[ (2\eta_l^2 K^2+120\eta_l^4 K^4 L^2)\mathbb E\big[\|\nabla f(\theta_t)\|^2\big]\\
    &\hspace{1.4in} +4\frac{\eta_l^2 K}{n}\sigma^2+20\eta_l^4K^3L^2(\sigma^2+6K\sigma_g^2) \Big],
\end{align*}
where the last inequality uses Lemma~\ref{lemma:bound delta}. We can bound the first term in (\ref{sgd-eq0:nocomp}) by
\begin{align*}
    &-\eta\mathbb E\big[\langle \nabla f(\theta_t), \overline{\widetilde\del}_t \rangle\big]\\
    &=-\eta\mathbb E\big[\langle \nabla f(\theta_t), \bar\del_t + \bar b_t -\eta_l K\nabla f(\theta_t)+\eta_l K\nabla f(\theta_t) \rangle\big] \\
    &=-\eta\eta_l K\mathbb E\big[\| \nabla f(\theta_t) \|^2\big] + \eta \underbrace{\mathbb E\big[\langle \nabla f(\theta_t),-\bar\del_t +\eta_l K\nabla f(\theta_t) \rangle\big]}_{\textbf{VI}}+ \eta \underbrace{\mathbb E\big[\langle \nabla f(\theta_t),-\bar b_t \rangle\big]}_{\textbf{VII}}.
\end{align*}
For the second term in the above, by Lemma~\ref{lemma:inner-product}, with $\eta_l\leq \frac{1}{8KL}$, we have
\begin{align*}
    \textbf{VI} &\leq
    \frac{3\eta_l K}{4}\mathbb E\big[\|\nabla f(\theta_t)\|^2\big]+\frac{5\eta_l^3 K^2 L^2}{2}(\sigma^2+6K\sigma_g^2).
\end{align*}
Regarding term \textbf{VII}, by Assumption~\ref{ass:compress_diff}, Young's inequality and (\ref{eqn:bound-b}), we obtain
\begin{align*}
    \textbf{VII} &\leq \frac{\eta_lK}{16} \mathbb E[\|\nabla f(\theta_t)\|^2]+\frac{16q^2}{\eta_l K} \mathbb E[\|\bar\del_t\|^2] \\
    &\leq \frac{\eta_lK}{16} \mathbb E[\|\nabla f(\theta_t)\|^2]+ 16q^2\Big[ (2\eta_l K+120\eta_l^3 K^3 L^2)\mathbb E\big[\|\nabla f(\theta_t)\|^2\big]\\
    &\hspace{2.2in} +4\frac{\eta_l }{n}\sigma^2+20\eta_l^3K^2L^2(\sigma^2+6K\sigma_g^2) \Big].
\end{align*}
When the learning rates admit $\eta_l\leq \frac{1}{8KL\max\{1,8(1+q^2)\eta\}}$ and $q\leq \frac{1}{32}$, taking the summation over all terms in (\ref{sgd-eq0:nocomp}) we get
\begin{align*}
    &\mathbb E[f(\theta_{t+1})]-f(\theta_t)\\
    &\leq -\frac{\eta\eta_l K}{8}\mathbb E\big[\|\nabla f(\theta_t)\|^2\big]
    +\frac{4\eta^2\eta_l^2(1+q^2) KL}{n}\sigma^2 +20\eta^2\eta_l^4(1+q^2)K^3L^3(\sigma^2+6K\sigma_g^2)\\
    &\hspace{2in} + \frac{64\eta\eta_lq^2}{n}\sigma^2+(320q^2+3)\eta\eta_l^3K^2L^2(\sigma^2+6K\sigma_g^2).
\end{align*}
We now take the telescope summation from round $1$ to $t$ and re-organize terms to obtain
\begin{align*}
    \frac{1}{T}\sum_{t=1}^T\mathbb E\big[\|\nabla f(\theta_t)\|^2\big]
    &\lesssim \frac{f(\theta_1)-\mathbb E[f(\theta_{t+1})]}{\eta\eta_l TK}+\frac{4\eta\eta_l(1+q^2) L}{n}\sigma^2 + \frac{64q^2}{Kn}\sigma^2 \\
    &+ 20\eta\eta_l^3(1+q^2)K^2L^3(\sigma^2+6K\sigma_g^2)+(320q^2+3)\eta_l^2K L^2(\sigma^2+6K\sigma_g^2).
\end{align*}
Set $\eta_l=\Theta(\frac{1}{K\sqrt T})$ and $\eta=\Theta(\sqrt{Kn})$, we have
\begin{align*}
    \frac{1}{T}\sum_{t=1}^T\mathbb E\big[\|\nabla f(\theta_t)\|^2\big]=\mathcal O\Big( \frac{1+q^2}{\sqrt{TKn}}+\frac{1+q^2}{TK}(\sigma^2+K\sigma_g^2)+\frac{q^2\sigma^2}{Kn} \Big).
\end{align*}
This completes the proof. Note that if we consider unbiased compressors, i.e., $\mathbb E[b_{t,i}|\del_{t,i}]=0$, $\forall t,i$, then term \textbf{VII} equals zero which eliminates the bias term in the final convergence rate.
\end{proof}

\end{document}